\newcommand{\I}{\mathcal{I}}
\newcommand{\R}{\mathbb{R}}
\newcommand{\E}{\mathbb{E}}
\newcommand{\Prob}{\mathbb{P}}
\newcommand{\thetabf}{\bm{\theta}}
\newcommand{\MGF}{\mathbb{M}}
\newcommand{\MGFto}{\xrightarrow{\MGF}}
\newcommand{\Lto}{\xrightarrow{\mathcal{L}}}
\newcommand{\thr}{\mathfrak{T}}
\DeclareMathOperator*{\argmax}{argmax}
\DeclareMathOperator*{\argmin}{argmin}
\newcommand\name[1]{\textup{\texttt{#1}}}
\newtheorem{theorem}{Theorem}
\newtheorem{proposition}{Proposition}
\newtheorem{lemma}{Lemma}
\newtheorem{corollary}{Corollary}
\newtheorem{definition}{Definition}
\newtheorem{assumption}{Assumption}
\newtheorem{remark}{Remark}
\newtheorem{example}{Example}
\title{Dual-Directed Algorithm Design for Efficient Pure Exploration}
\author[1]{Chao Qin}
\author[2]{Wei You}
\affil[1]{Stanford Graduate School of Business, Stanford University, \url{chaoqin@stanford.edu}}
\affil[2]{Department of Industrial Engineering and Decision Analytics, The Hong Kong University of Science and Technology, \url{weiyou@ust.hk}}
\begin{document}

\maketitle

\begin{abstract}
    While experimental design often focuses on selecting the single best alternative from a finite set (e.g., in ranking and selection or best-arm identification), many pure-exploration problems pursue richer goals. Given a specific goal, adaptive experimentation aims to achieve it by strategically allocating sampling effort, with the underlying sample complexity characterized by a maximin optimization problem. By introducing dual variables, we derive necessary and sufficient conditions for an optimal allocation, yielding a unified algorithm design principle that extends the top-two approach beyond best-arm identification. This principle gives rise to Information-Directed Selection, a hyperparameter-free rule that dynamically evaluates and chooses among candidates based on their current informational value. We prove that, when combined with Information-Directed Selection, top-two Thompson sampling attains asymptotic optimality for Gaussian best-arm identification, resolving a notable open question in the pure-exploration literature. Furthermore, our framework produces asymptotically optimal algorithms for pure-exploration thresholding bandits and $\varepsilon$-best-arm identification (i.e., ranking and selection with probability-of-good-selection guarantees), and more generally establishes a recipe for adapting Thompson sampling across a broad class of pure-exploration problems. Extensive numerical experiments highlight the efficiency of our proposed algorithms compared to existing methods.
\end{abstract}

\section{Introduction}

A common goal in experimental design is to identify the best-performing option among a finite set of alternatives. Adaptive experiments aim to achieve this goal efficiently by reducing measurement effort on poorly performing options and concentrating resources on more promising ones. This problem is widely known as ranking and selection (R\&S) in the operations research literature, and as best-arm identification (BAI) in the machine learning literature.
The R\&S problem serves as a crucial abstraction for many applications.  
For example, it is used to determine optimal hyperparameters for algorithms \citep{shang2019simple}.  
In the context of large language models, it helps in identifying the most effective beam search direction for automatic prompt optimization \citep{pryzant2023automatic}.  
More recently, the approach has been applied in brain-computer interface systems to interpret neural signals under a variety of stimulus paradigms \citep{zhou2024sequential}.  
For further details on the relationship between BAI and R\&S, see \cite{glynn2018selecting}.

A distinguishing characteristic of R\&S and BAI problems is their emphasis on \textit{end outcomes}.  
In these settings, the decision-maker (DM) is not penalized for losses incurred during the adaptive phase.  
Instead, the primary focus is on the accuracy of the final decision, which can have substantial real-world consequences. 
For instance, the final decision might result in the deployment of a ``winning'' feature, the construction of infrastructure, or the approval of a new drug.
The goal is to accurately find the best alternative by the conclusion of the experiment, particularly since resources are committed to the selected option.  
In contrast to scenarios that seek immediate gains through exploitation of high-performing alternatives, these problems prioritize exploration to secure a well-informed final decision.  
Consequently, such sequential decision problems are often referred to as \textit{pure-exploration problems}.

The study of pure-exploration problems dates back to the 1950s in the context of R\&S \citep{bechhofer1954single}. 
Since then, a wide range of algorithms have been developed to address R\&S in both the simulation and statistical learning communities. 
Examples include elimination algorithms \citep{kim2001fully,fan2016indifference,bubeck2013multiple}, Optimal Computing Budget Allocation algorithms \citep{Chen2000OCBA,gao2015note,wu2018provably}, knowledge-gradient algorithms \citep{frazier2008knowledge}, and UCB-type algorithms \citep{Kalyanakrishnan2012,kaufmann2013information}.
For comprehensive surveys of R\&S, we refer readers to \cite{fu2017history,hong2021review,fan2024review}.

\subsection{Motivations}

\paragraph{A unified approach to general pure-exploration problems.}
While identifying the best alternative (e.g., in R\&S) is a common goal, adaptive experimentation is also well-suited for pure exploration problems that aim to achieve broader goals beyond best-arm identification.
For instance, the DM may want to identify alternatives sufficiently close to the best-performing one, often referred to as R\&S with guarantees on the probability of good selection (PGS) or $\varepsilon$-BAI in the machine learning literature.
Other objectives include identifying all alternatives exceeding a specified performance threshold \citep{locatelli2016optimal}, selecting the best-$k$ alternatives \citep{chen2008efficient}, or determining whether the mean performance lies within a convex feasible set \citep{qiao2024asymptotically}.

Despite various scattered studies in the literature addressing specific exploration tasks, developing a unified and practical algorithmic framework capable of handling these diverse queries remains a significant challenge. To the best of our knowledge, general-purpose algorithms have been discussed in a limited number of works, such as~\cite{wang2021fast,menard2019gradient}; however, these studies approach the problem as a generic optimization problem, whereas we explicitly leverage structural properties specific to pure-exploration problems.

\paragraph{Generalizing top-two algorithms.}
A recent class of algorithms, known as top-two algorithms, initially introduced by \cite{russo_simple_2020} for BAI, has gathered increasing attention. 
These algorithms employ a two-phase approach for selecting the next alternatives to sample.
In the first phase, a pair of top-two candidates---referred to as \textit{leader} and \textit{challenger}---are identified. The second phase, called \textit{selection step}, involves sampling one of these candidates, often based on a potentially biased coin toss.\footnote{We use the term ``sampling'' specifically to refer to the act of picking an alternative to draw a sample from, and reserve the term ``selection'' to describe the step of determining which candidate should be sampled.}
The simplicity and effectiveness of top-two algorithms have led to their widespread adoption, with notable contributions from \cite{Qin2017}, \cite{Shang2019}, and \cite{jourdan2022top}.

However, a notable limitation of these algorithms is their dependence on tuning parameters to achieve optimality. 
The existence of simple, parameter-free algorithms that are provably optimal remains an open question (see Section 8, \citealt{russo_simple_2020}).
Furthermore, the existing literature on top-two algorithms has predominantly focused on BAI, leaving broader pure-exploration tasks largely unexplored.

\paragraph{Generalizing Thompson sampling.}
Thompson sampling (TS) has achieved notable success in minimizing regret for multi-armed bandits, attracting attention from both academia and industry \citep{MAL-070}.
However, its inherent focus on maximizing cumulative rewards leads to excessive exploitation of the estimated best alternative, resulting in insufficient exploration of perceived suboptimal options to confidently verify whether the perceived best alternative is indeed the best. 
Consequently, modifications to TS are necessary for its effective application to pure exploration.
Adapting TS for pure exploration could have an immediate impact in areas where TS is already widely used, such as A/B testing \citep{graepel2010web}, news article recommendation \citep{chapelle2011empirical}, and revenue management \citep{ferreira2018online}.
Several studies have proposed TS variants for pure exploration, beginning with the top-two Thompson sampling algorithm \citep{russo_simple_2020}. 
Other adaptations include the Murphy sampling  \citep{kaufmann2018sequential}, which modifies classical TS to test whether the lowest mean falls below a given threshold. 

However, there is a lack of unified guidelines on systematically adapting TS for general pure-exploration queries. We address this gap with the introduction of Algorithm~\ref{alg:TS_det}.

\paragraph{A unified approach to different problem formulations.}
Various formulations have been developed to evaluate algorithm performance in pure exploration.
The \textit{fixed-budget} problem is often framed as a stochastic dynamic programming problem. 
However, the complexity of solving such problems exactly has led to a focus on approximate solutions, such as restricting analysis to procedures based on \textit{fixed static allocations}. 
Building on OCBA \citep{Chen2000OCBA}, \cite{Glynn2004} provide a large-deviation characterization of the probability of incorrect selection (PICS) for static-allocation procedures, identifying an optimal static allocation that maximizes the exponential decay rate of PICS. 
This approach introduces a notion of problem complexity as the optimal decay exponent. 
Subsequent works \citep{chen2023balancing,avci2023using} build upon this framework by designing dynamic procedures that converge to the optimal static allocation.

The \textit{fixed-precision} formulation, also referred to as the fixed-confidence setting in the bandit literature, traces its origins to \cite{chernoff1952sequential}'s seminal work on sequential binary hypothesis testing.
R\&S and BAI extend this to multiple hypotheses with sequentially chosen experiments. 
In this setting, the DM determines when to terminate the experiment and propose an estimated top performer. 
The goal is to minimize the expected sample size while guaranteeing a pre-specified probability of correct selection (PCS). 
\cite{GarivierK16} provided matching lower and upper bounds on the sample complexity, framing it as an optimization problem that prescribes the optimal sample allocation ratio.

The \textit{posterior convergence rate} setting \citep{russo_simple_2020} maintains a posterior distribution over the unknown parameters and evaluates the algorithms by the posterior PICS.  
This approach operates within a frequentist framework, where the problem instance is fixed, yet it adopts a Bayesian objective to evaluate performance.
\cite{russo_simple_2020} identifies the optimal decay exponent for the posterior PICS as the complexity measure.

Remarkably, for Gaussian models, the problem complexity across the three formulations are identical, see Section~\ref{sec:other_settings}.
Despite extensive research on individual formulations of R\&S and BAI, a unified algorithmic framework capable of addressing all three major formulations simultaneously remains an open challenge.

\subsection{Contributions and Organization}
This paper proposes a unified algorithm design framework that accommodates diverse problem formulations and a wide range of pure-exploration queries. Our main contributions are summarized as follows.

\begin{enumerate}
    \item \textit{We identify common structural elements across various problem formulations and exploration queries, formulating the problem complexity into a distinctive non-smooth maximin convex optimization problem.} 
    This formulation characterizes the inherent difficulty of each query and serves as the foundation for our unified algorithmic framework. 
    Guided by the shared structure of these problem complexity optimization problems, our framework seamlessly adapts to diverse pure-exploration tasks.

    \item \textit{We establish necessary and sufficient conditions for the optimality of problem complexity optimization using dual variables, offering a fresh perspective on existing algorithms and enabling generalizations.} Notably, our use of dual variables uncovers the foundation of top-two algorithms and TS variants in the complementary slackness condition of the Karush-Kuhn-Tucker conditions. This provides a natural generalization of the top-two principle to diverse pure-exploration problems and introduces an adaptive, provably optimal selection step, addressing an open question in \citet{russo_simple_2020}.

    \item \textit{Our design principle is highly flexible and easily adaptable to practical settings with minimal modifications.}
    We demonstrate the versatility of our algorithm framework through its application to Gaussian noises with unequal and unknown variances, correlated rewards using common random numbers, and general pure-exploration problems.
    Additionally, we offer a systematic and accessible approach to tailoring Thompson sampling for general pure-exploration problems in Algorithm~\ref{alg:TS_det}.
    
    \item \textit{We provide theoretical guarantees for the optimality of our algorithms in various formulations and exploration queries.}
    In particular, we show that our algorithm is optimal for R\&S problems under Gaussian models under all three major formulations---fixed-budget, fixed-precision, and posterior convergence rate.
    We further establish optimality of our framework in pure-exploration thresholding bandits and $\varepsilon$-BAI.

    \item \textit{We conduct extensive numerical studies to showcase the compelling performance of our unified algorithm across various settings.} These experiments encompass a wide range of scenarios, including the fixed-confidence and fixed-budget settings, large-scale problems, applications with common random numbers, and general exploration tasks beyond conventional R\&S and BAI. This comprehensive evaluation highlights the performance and robustness of our unified algorithm.
\end{enumerate}

\paragraph{Organization.} The paper is organized as follows. 
In Section~\ref{sec:IZF_RS}, we present an illustrative study on indifference-zone-free R\&S problems in Gaussian models, demonstrating its optimality in posterior convergence rate and static fixed-budget settings. 
Section~\ref{sec:PAN} introduces the problem formulation for general pure exploration and provides detailed explanations of our unified algorithm design principle, where we also present optimality for Gaussian BAI in the fixed-confidence setting.
In Section~\ref{sec:extensions}, we discuss extensions beyond the base cases, including exploration queries with multiple correct answers and Gaussian best-arm identification with unknown variances. 
Finally, in Section~\ref{sec:numerical}, we conduct an extensive numerical study and discuss practical considerations for algorithm implementation.

\section{An Illustrative Example: Indifference-Zone-Free Ranking and Selection in Gaussian Models}\label{sec:IZF_RS}

To illustrate our algorithm design, we study the canonical indifference-zone-free R\&S problems in Gaussian models with \textit{known but potentially unequal} variances, commonly known as BAI in the machine learning literature. Our analysis leads to optimal algorithms aimed at maximizing the posterior convergence rate \citep{russo_simple_2020}. The unknown-variance case and additional extensions are discussed further in Section~\ref{sec:extensions}.

\subsection{Problem Formulation}

Consider a set of $K$ systems, each with an unknown mean performance. For each system $i \in [K] \triangleq \{1,2,\dots,K\}$, the mean performance is characterized by an \emph{unknown} parameter $\theta_i \in \mathbb{R}$. At each time step $t \in \mathbb{N}_0 \triangleq \{0,1,\dots\}$, a decision-maker (DM) selects a system $I_t \in [K]$ and observes its random performance: $Y_{t+1, I_t} = \theta_{I_t} + W_{t,I_t},$ with $W_{t,I_t} \sim N(0,\sigma_{I_t}^2),$
where $W_{t,I_t}$ is a Gaussian noise with \textit{known variance} $\sigma_{I_t}^2$, independent across time and systems. The true mean vector $\bm{\theta} \triangleq (\theta_1,\dots,\theta_K)$ is fixed but \emph{unknown}. We assume there is a unique best system, denoted by $I^*(\bm{\theta}) \triangleq \argmax_{i\in[K]}\theta_i$, or simply $I^*$ when the dependence on $\bm{\theta}$ is clear. The DM's goal is to identify this unknown best system $I^*$ by sequentially allocating a finite sampling budget $T$ among the $K$ systems.

Although this problem is set in a \emph{frequentist} framework, with both the mean vector $\bm{\theta}$ and best system $I^*(\bm{\theta})$ fixed but unknown, we focus on minimizing the \emph{posterior probability} that a suboptimal system is mistakenly identified as the best, a performance criteria popularized by \citet{russo_simple_2020}.

Our performance criteria utilizes a prior distribution $\Pi_0$ with density $\pi_0$. We focus on \emph{uninformative priors} for the mean vector, which are mutually independent Gaussian distributions with infinite variance. Given the history $\mathcal{H}_t \triangleq (I_0, Y_{1,I_0}, \dots, I_{t-1}, Y_{t,I_{t-1}})$ collected by any allocation rule, the posterior distribution of each mean $\theta_i$ is normal, $N(\theta_{t,i}, \sigma^2_{t,i})$, with posterior mean and variance given by:
$
\theta_{t,i} \triangleq \frac{1}{N_{t,i}}\sum_{\ell=0}^{t-1}{\bm{1}(I_{\ell} = i)Y_{\ell+1, I_{\ell}}}$ and $
\sigma^2_{t,i} \triangleq \frac{\sigma_i^{2}}{N_{t,i}},
$
where $N_{t,i} \triangleq \sum_{\ell = 0}^{t-1} \bm{1}(I_{\ell} = i)$ is the number of samples allocated to system $i$ up to time $t$.

Define the set of mean vectors for which system $I^*=I^*(\thetabf)$ is uniquely the best as:
\[
    \Theta_{I^*}\triangleq \bigl\{(\vartheta_1,\ldots,\vartheta_K) \in \mathbb{R}^K: \vartheta_{I^*} > \vartheta_j, \,\, \forall j\neq I^* \bigr\}.
\]
Our goal is to minimize the posterior probability $\Pi_t(\Theta^c_{I^*})$, the probability of misidentifying the best system, where $\Theta^c_{I^*}$ is the complement of $\Theta_{I^*}$. Typically, this probability decays exponentially fast as $t\to\infty$, motivating the asymptotic objective to maximize the exponential decay rate \[\liminf_{t\to\infty}-\frac{1}{t}\log\Pi_t(\Theta^c_{I^*}).\]
This criterion leads us to define optimality for any fixed problem instance $\bm{\theta}$ as follows.

\begin{definition}[Optimal allocation rules]
\label{def:optimal allocation rule}
    An allocation rule $\name{A}^*$ is said to be asymptotically optimal for $\bm{\theta}$ if, for every other non-anticipating allocation rule $\name{A}$,
    \[
    \liminf_{t\to\infty} \frac{-\log\Pi_t^{\name{A}^*}\!(\Theta^c_{I^*})}{-\log\Pi_t^{\name{A}}(\Theta^c_{I^*})} \geq 1 \quad \text{with probability one,}
    \]
    where the superscript indicates dependence on the allocation rule that determines data collection.
\end{definition}

We now present a sufficient condition for asymptotic optimality in terms of long-run allocation proportions. Let $\bm{p}_t = (p_{t,1},\dots, p_{t,K})$ denote the empirical allocation vector, with $p_{t,i}$ being the fraction of samples allocated to system $i$ by time $t$. The proof of Theorem~\ref{thm:sufficient_for_fixed_budget_optimality} is collected in Appendix~\ref{app:proof for posterior converegence optimality}.

\begin{theorem}[A sufficient condition for optimality]
\label{thm:sufficient_for_fixed_budget_optimality}
    An allocation rule is asymptotically optimal for $\bm{\theta}$ if its empirical allocation vector $\bm{p}_t$ converges almost surely to the unique component-wise strictly positive probability vector $\bm{p}^* = (p^*_1,\dots,p^*_K)$ satisfying:
\begin{enumerate}
\item \textbf{Information balance}:
\begin{equation}\label{eq:information_balance_BAI}
    \frac{(\theta_{I^*} - \theta_i)^2}{\sigma_{I^*}^2 / p^*_{I^*} + \sigma_i^2 / p^*_{i}}
    = \frac{(\theta_{I^*} - \theta_{j})^2}{\sigma_{I^*}^2 / p^*_{I^*} + \sigma_{j}^2 / p^*_{j}},\quad\forall i,j\neq I^*.
\end{equation}
\item \textbf{Stationarity}:
\begin{equation}
\label{eq:KKT_equiv_stationarity}
    p^*_{I^*} = \sum_{j \neq I^*} h^{j}_{I^*}(\bm p^*) \cdot \frac{p_j^* }{1 - h^{j}_{I^*}(\bm p^*)}, \quad \text{where} \quad h^{j}_{I^*}(\bm p^*) \triangleq \frac{\sigma_{I^*}^2/p^*_{I^*}}{\sigma_{I^*}^2/p^*_{I^*} + \sigma_j^2/p^*_j}, \quad\forall j\neq I^*.
\end{equation}
\end{enumerate}
\end{theorem}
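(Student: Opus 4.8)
The plan is to reduce the claim to two largely independent ingredients: a large-deviation characterization of the posterior misidentification probability in terms of the realized allocation, and a convex-analytic fact identifying $\bm p^*$ as the maximizer of the resulting rate function. First I would fix the decomposition $\Theta^c_{I^*} = \bigcup_{j\neq I^*}\{\bm\vartheta: \vartheta_j \ge \vartheta_{I^*}\}$ and analyze each pairwise event under the Gaussian posterior. Since $\theta_{I^*}-\theta_j$ is a posterior $N(\theta_{t,I^*}-\theta_{t,j},\,\sigma^2_{t,I^*}+\sigma^2_{t,j})$ variable, the pairwise probability equals $\Phi(-z_{t,j})$ with $z_{t,j} = (\theta_{t,I^*}-\theta_{t,j})/\sqrt{\sigma^2_{t,I^*}+\sigma^2_{t,j}}$. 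Using $t\sigma^2_{t,i} = \sigma_i^2/p_{t,i}$ and the Mills-ratio expansion $-\log\Phi(-z) = z^2/2 + O(\log z)$, I obtain
\[
-\tfrac1t\log\Pi_t(\vartheta_j\ge\vartheta_{I^*}) = \frac{(\theta_{t,I^*}-\theta_{t,j})^2}{2(\sigma^2_{I^*}/p_{t,I^*}+\sigma^2_j/p_{t,j})} + o(1).
\]
Sandwiching the union between its largest term and $(K-1)$ times that term costs only an $O(\log K/t)$ additive error, so $-\tfrac1t\log\Pi_t(\Theta^c_{I^*}) = \min_{j\neq I^*}\Gamma_j(\bm p_t;\bm\theta_t)+o(1)$, where $\Gamma_j(\bm p;\bm\theta) \triangleq \tfrac12(\theta_{I^*}-\theta_j)^2/(\sigma^2_{I^*}/p_{I^*}+\sigma^2_j/p_j)$ is exactly the quantity balanced in \eqref{eq:information_balance_BAI}. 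Because $\bm p_t\to\bm p^*$ with a strictly positive limit, every arm is sampled infinitely often, the strong law gives $\bm\theta_t\to\bm\theta$ almost surely, and hence the rule attains the limit rate $\Gamma^*\triangleq\min_{j\neq I^*}\Gamma_j(\bm p^*;\bm\theta)$, which by information balance equals the common value of all the $\Gamma_j(\bm p^*;\bm\theta)$.

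Second I would show that $\bm p^*$ maximizes $\bm p\mapsto\min_{j\neq I^*}\Gamma_j(\bm p;\bm\theta)$ over the simplex, so that no allocation — and hence no rule — can exceed $\Gamma^*$. Each $\Gamma_j$ depends only on $(p_{I^*},p_j)$ through the reciprocal of the convex map $\bm p\mapsto\sigma^2_{I^*}/p_{I^*}+\sigma^2_j/p_j$; this harmonic-type function is concave, so the pointwise minimum is concave and its maximizer is interior (any $p_i\to0$ forces some $\Gamma_j\to0$). Writing the problem in epigraph form $\max\{z: \Gamma_j(\bm p)\ge z\ \forall j,\ \bm p\in\Delta\}$, the Karush-Kuhn-Tucker conditions furnish multipliers $w_j\ge0$ with $\sum_j w_j=1$ and a simplex multiplier $\lambda$. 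Complementary slackness forces all binding constraints to equal the common value $z=\Gamma^*$, which is precisely the information balance \eqref{eq:information_balance_BAI}; eliminating $\lambda$ and the $w_j$ from the stationarity relations $w_j\,\partial_{p_j}\Gamma_j=\lambda$ and $\sum_j w_j\,\partial_{p_{I^*}}\Gamma_j=\lambda$ gives $\sum_{j\neq I^*}(\partial_{p_{I^*}}\Gamma_j)/(\partial_{p_j}\Gamma_j)=1$, which upon substituting the gradients becomes $\sum_{j\neq I^*}\sigma^2_{I^*}p_j^2/(\sigma^2_j p_{I^*}^2)=1$ — exactly \eqref{eq:KKT_equiv_stationarity} after rewriting through $h^j_{I^*}$. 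Concavity makes these conditions sufficient; uniqueness follows because for each fixed $p_{I^*}$ the balance condition pins down the challenger proportions uniquely and the induced value is strictly unimodal in $p_{I^*}$.

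Finally I would combine the pieces against an arbitrary competitor $\name{A}$ with empirical allocation $\bm q_t$. If some arm is under-sampled, its pairwise posterior probability stays bounded away from $0$, so the rate is $0\le\Gamma^*$; otherwise $\bm\theta_t\to\bm\theta$ and the rate equals $\min_{j\neq I^*}\Gamma_j(\bm q_t;\bm\theta)+o(1)\le\Gamma^*$ along every subsequence, by the optimization step. Hence $\limsup_t -\tfrac1t\log\Pi_t^{\name{A}}(\Theta^c_{I^*})\le\Gamma^*=\lim_t -\tfrac1t\log\Pi_t^{\name{A}^*}(\Theta^c_{I^*})$, and dividing the two limits yields the ratio $\ge1$ required by Definition~\ref{def:optimal allocation rule}.

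The main obstacle I anticipate is not the optimization algebra but the analytic bookkeeping in the first step: making the $o(1)$ in the tail expansion genuinely uniform and almost sure, controlling the regime where a general rule fails to sample every arm linearly often (so that $\bm\theta_t$ need not converge), and ensuring the subsequential argument for non-convergent $\bm q_t$ is airtight. Verifying that the abstract stationarity condition collapses to the specific algebraic form \eqref{eq:KKT_equiv_stationarity} is routine but should be carried out explicitly to confirm the match.
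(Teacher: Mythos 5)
Your proposal is correct and takes essentially the same route as the paper: the paper's own proof consists of two citations, deferring the almost-sure posterior rate characterization (both the upper bound $\limsup_t -\frac{1}{t}\log\Pi_t(\Theta^c_{I^*}) \le \Gamma^*_{\thetabf}$ for arbitrary rules and its attainment when $\bm p_t \to \bm p^*$) to Theorem~2 of \citet{Qin2017}, and the identification of information balance plus stationarity (equivalently, the sum-of-squares condition) with the unique, strictly positive maximizer of $\max_{\bm p}\min_{j\neq I^*}C_j(\bm p)$ to Lemma~4 and Theorem~5 of \citet{GarivierK16}, both extended to unequal variances. Your two ingredients---the Gaussian-tail/union-bound analysis of the posterior and the KKT reduction on the epigraph formulation---are precisely the content of those cited results, so your write-up is a self-contained rendering of the paper's argument, with the caveats you flag (uniformity of the $o(1)$ terms and the sublinearly-sampled-arm regime) being exactly the bookkeeping those references carry out.
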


\begin{remark}[Equivalent form of stationarity condition]
\label{remark:equivalent form of stationarity condition}
    Substituting $\bigl\{h^j_{I^*}(\bm{p}^*)\bigr\}_{j\neq I^*}$ into the stationarity condition in~\eqref{eq:KKT_equiv_stationarity} recovers the conventional \emph{balance-of-sum-of-squares} condition:
    \begin{equation}\label{eq:overall_balance_GaussianBAI}
        p^*_{I^*} 
        = \frac{\sigma^2_{I^*}}{p^*_{I^*}}\sum_{j\neq I^*}\frac{(p^*_{j})^2}{\sigma^2_{j}}
        \quad\iff\quad
        \frac{(p^*_{I^*})^2}{\sigma^2_{I^*}} = \sum_{j \neq I^*} \frac{(p^*_{j})^2}{\sigma^2_{j}}.
    \end{equation}
    This condition has been previously observed in the static fixed-budget setting by \citet[Theorem~1]{Glynn2004} and \citet[Lemma~1]{chen2023balancing}, and in the fixed-confidence setting by \citet[Theorem~5]{GarivierK16} and \citet[Proposition~2.1]{bandyopadhyay2024optimal}. Although these two conditions are mathematically equivalent, the stationarity condition in~\eqref{eq:KKT_equiv_stationarity} explicitly guides our algorithm design principle; see Section~\ref{sec:IDS_BAI}.
\end{remark}

\subsection{Top-Two Design Principle as Information Balance Condition}\label{sec:TTTS_IDS}

We demonstrate that the information balance condition in~\eqref{eq:information_balance_BAI} naturally leads to the well-known top-two design principle introduced by \citet{russo_simple_2020}, where one identifies the two most promising candidates and then carefully selects from these two options. To illustrate this connection, we analyze the \emph{top-two Thompson sampling} (\name{TTTS}) algorithm proposed in their work. This condition also underpins various related ``top-two'' algorithms \citep{Qin2017, jourdan2022top, bandyopadhyay2024optimal}.

\subsubsection{Top-Two Thompson Sampling}\label{sec:TTTS}

The \name{TTTS} algorithm starts with uninformative priors and maintains a posterior distribution $\Pi_t$ over the true mean vector $\bm\theta$.
In each iteration $t$, \name{TTTS} first generates a sample $\widetilde{\bm\theta} = (\widetilde{\theta}_1,\ldots,\widetilde{\theta}_K)$ from the posterior distribution $\Pi_t$ and identifies a best system under this sample, denoted by $I_t^{(1)} \in \argmax_{i\in[K]} \widetilde\theta_{i}$. The system $I_t^{(1)}$ is referred to as \textit{leader}.
\name{TTTS} then repeatedly samples $\widetilde{\bm\theta}' = (\widetilde{\theta}'_1,\ldots,\widetilde{\theta}'_K)$ from $\Pi_t$ until another system $I_t^{(2)} \in \argmax_{i\in[K]} \widetilde{\theta}'_i$ that differs from $I_t^{(1)}$ is identified. 
We refer to this system~$I_t^{(2)}$ as \textit{challenger}, and the pair $(I_t^{(1)},I_t^{(2)})$ is referred to as \textit{top-two candidates}.

\begin{algorithm}[hbtp]
	\caption{Top-two Thompson sampling \citep{russo_simple_2020}}
	\label{alg:TTTS}
	\begin{algorithmic}[1]
		\renewcommand{\algorithmicrequire}{\textbf{Input:}}
		\Require{Posterior distribution $\Pi_t$.}
		\State{Sample $\widetilde{\bm\theta} =  \bigl(\widetilde{\theta}_1,\ldots,\widetilde{\theta}_K\bigr)\sim \Pi_t$ and define $I_t^{(1)} \in \argmax_{i\in[K]} \widetilde{\theta}_i$.}
		\State{Repeatedly sample $\widetilde{\bm\theta}' = \bigl(\widetilde{\theta}'_1,\ldots,\widetilde{\theta}'_K\bigr) \sim \Pi_t$ until $I_t^{(2)} \in \argmax_{i\in[K]} \widetilde{\theta}'_i$ differs from $I_t^{(1)}$.}
		\State{\Return $\bigl(I_t^{(1)},I_t^{(2)}\bigr)$.}
	\end{algorithmic}
\end{algorithm}

Given the top-two candidates $(I_t^{(1)}, I_t^{(2)})$, \name{TTTS} selects exactly one candidate to measure using a potentially biased coin toss. Specifically, \name{TTTS} measures system $I_t^{(1)}$ with probability $\beta$ and system $I_t^{(2)}$ with probability $1 - \beta$, where $\beta$ represents the probability of selecting the leader. A common choice is $\beta = 0.5$. The detailed discussion regarding an optimal selection of the coin bias $\beta$ is provided in Section~\ref{sec:IDS_BAI}.

\subsubsection{Top-Two Algorithms and Information Balance Condition}
\label{sec:top_two_CS}

The idea of \name{TTTS} and other top-two algorithms arises naturally from~\eqref{eq:information_balance_BAI}.
Recall that the key step in \name{TTTS}  is sampling repeatedly from the posterior to obtain a challenger. 
We motivate \name{TTTS} by heuristically arguing that a system $j$ appears as a challenger, given that a system $I^*$ is the leader, asymptotically only when the following is minimized
\begin{equation}\label{eq:C_j_GaussianBAI}
    C_{j}(\bm p) = C_{j}(\bm p; \bm\theta) \triangleq \frac{(\theta_{I^*} - \theta_j)^2}{2\bigl( \sigma_{I^*}^2 / p_{I^*} + \sigma_j^2 / p_{j}\bigr)},
\end{equation}
where $\bm p = (p_1,\ldots,p_K)$ is the proportion of samples allocated among the systems. When the context is clear, we will omit the dependence of $C_j(\bm p;\bm\theta)$ on the mean vector $\bm\theta$.

To see this, suppose $i$ is the leader, then $j$ appears as the challenger only when the posterior sample satisfies $\widetilde{\theta}'_j > \widetilde{\theta}'_i$.
By \citet[Proposition 5]{russo_simple_2020}, this occurs under the posterior distribution with an asymptotic probability proportional to $\exp\left(-t C_{j}(\bm p)\right)$.
The challenger is the system with the highest posterior sample, which corresponds to the system with the smallest $\exp\left(-t C_{j}(\bm p)\right)$ when $t$ is large. Thus, \name{TTTS} asymptotically chooses the $j$ with the smallest $C_{j}(\bm p)$ as the challenger.
Since $C_{j}(\bm p)$ is a monotonically increasing function in $\bm p$, \name{TTTS} satisfies the information balance condition by maintaining equality among the $C_{j}(\bm p)$'s.

\subsection{Information-Directed Selection among Top-Two via Stationarity Condition}
\label{sec:IDS_BAI}

More importantly, we demonstrate how the stationarity condition in~\eqref{eq:KKT_equiv_stationarity} allows us to address a significant open problem identified in \citet{russo_simple_2020}: achieving exact optimality rather than merely $\beta$-optimality.
Specifically, the original \name{TTTS} algorithm adopts a non-adaptive tuning parameter $\beta$ to decide which of the top-two candidates to select in each round. An additional tuning step is required to guarantee that $\beta$ converges to the unknown optimal $\beta^*$, which depends on the unknown mean vector $\bm{\theta}$.

We propose the \textit{information-directed selection} (\name{IDS}) for top-two algorithms under Gaussian distributions. 
IDS takes the current sample allocation $\bm p_t$ and the current top-two candidates $(I_t^{(1)},I_t^{(2)})$ as input and returns a selected candidate. \name{IDS} is \textit{hyperparameter-free} and guarantees asymptotic optimality.

\begin{algorithm}[hbtp]
\caption{Information-directed selection}
\label{alg:IDS}
    \begin{algorithmic}[1]
        \renewcommand{\algorithmicrequire}{\textbf{Input:}}
        
        \Require{Sample allocation $\bm p$ and top-two candidates $\left(i, j\right)$.}
        
        \State{\Return $i$ with probability $h_i^j(\bm p) = \frac{\sigma_i^2/p_{i}}{\sigma_i^2/p_{i} + \sigma_j^2/p_{j}}$ and $j$ with probability $h_j^j(\bm p) = \frac{\sigma_j^2/p_{j}}{\sigma_i^2/p_{i} + \sigma_j^2/p_{j}}$.}
    \end{algorithmic}
\end{algorithm}

We now argue that \name{IDS} naturally arises from the stationarity condition in~\eqref{eq:KKT_equiv_stationarity}.
To see this, note that \name{TTTS} selects a system using a two-step procedure: it first identifies a pair of candidates and then measures one from the pair. 
This procedure naturally provides a law of total probability formula by conditioning on the choice of the challenger.
In particular, the left-hand side of~\eqref{eq:KKT_equiv_stationarity}, representing the proportion of samples allocated to the best system $I^*$, can be decomposed into a summation of terms $h^{j}_{I^*}(\bm p^*) \cdot \frac{p_j^* }{1 - h^{j}_{I^*}(\bm p^*)}$.
By \name{IDS}, $h^{j}_{I^*}(\bm p^*)$ is interpreted as the probability that system $I^*$ is sampled, given that $j$ is proposed as the challenger.
Let $\mu^*_j$ denote the probability that $j$ is chosen as the challenger.
We argue that 
\begin{equation}\label{eq:dual}
    \mu^*_j = \frac{p_j^* }{1 - h^{j}_{I^*}(\bm p^*)}, \quad\forall j \neq I^*.
\end{equation}
Specifically, system $j$ is sampled only if it appears as the challenger (with probability $\mu^*_j$) and is selected by \name{IDS} (with probability $1 - h^{j}_{I^*}(\bm p^*)$). Therefore, the relationship $p_j^*=\mu^*_j(1 - h^{j}_{I^*}(\bm p^*))$ must hold.
As a sanity check, condition~\eqref{eq:overall_balance_GaussianBAI} ensures that $\bm\mu$ is a proper probability vector: $\mu_j^*>0$ for all $j \neq I^*$ and
\begin{align*}
    \sum_{j \neq I^*} \mu^*_{j} 
    = \sum_{j \neq I^*} \frac{p^*_j}{1-h^j_{I^*}(\bm p^*)} 
	= \sum_{j \neq I^*} \frac{p^*_j}{\frac{\sigma_j^2/p^*_{j}}{\sigma_i^2/p^*_{i} + \sigma_j^2/p^*_{j}}}  
	= \sum_{j \neq I^*} p^*_j + p^*_{I^*}\cdot\sum_{j \neq I^*} \frac{(p^*_j)^2/\sigma^2_j}{(p^*_{I^*})^2/\sigma^2_{I^*}} 
	= \sum_{i \in [K]} p_i = 1,
\end{align*}
where the second-to-last inequality follows from~\eqref{eq:overall_balance_GaussianBAI}.

\begin{remark}[Selection versus tuning] 
    Our \name{IDS} algorithm fundamentally differs from the standard tuning rules presented in \cite{russo_simple_2020}. In their tuning rules, the same tuning parameter is applied irrespective of the proposed top-two candidates, given the history. In contrast, our \name{IDS} method explicitly selects between proposed candidates using a candidate-specific selection function $h^{I_t^{(2)}}_{I_t^{(1)}}(\bm p_t)$, computed based on the information collected from these candidates.
\end{remark}

\subsection{Optimality of Top-Two Algorithms with Information-Directed Selection}

Among the top-two algorithms, the analysis of \name{TTTS} (Algorithm~\ref{alg:TTTS}) is arguably the most complex, as both the leader and the challenger are random variables that depend on the history. Therefore, in this section, we focus on the most challenging \name{TTTS-IDS} algorithm.
Specifically, we apply Algorithm~\ref{alg:IDS} to select one of the top-two candidates proposed by Algorithm~\ref{alg:TTTS}. We prove that \name{TTTS-IDS} is asymptotically optimal, as defined in Definition~\ref{def:optimal allocation rule}, by verifying that it satisfies the sufficient conditions established in Theorem~\ref{thm:sufficient_for_fixed_budget_optimality}. This resolves an open problem in \citet[Section~8]{russo_simple_2020}.

\begin{restatable}{theorem}{asconvergence}
    \label{thm:as_convergence}
    Suppose that $\thetabf$ satisfies $\theta_i\neq \theta_j$ for all $i\neq j$. Then, \name{TTTS-IDS} ensures that $\bm{p}_t$ converges almost surely to the optimal allocation $\bm{p}^*$, which verifies the sufficient condition for optimality in Theorem~\ref{thm:sufficient_for_fixed_budget_optimality}.
\end{restatable}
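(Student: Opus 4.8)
The goal reduces, via Theorem~\ref{thm:sufficient_for_fixed_budget_optimality}, to establishing that the empirical allocation $\bm{p}_t$ converges almost surely to the unique vector $\bm{p}^*$ characterized by the information balance~\eqref{eq:information_balance_BAI} and stationarity~\eqref{eq:KKT_equiv_stationarity} conditions; the optimality conclusion is then immediate. The plan is to exploit the two mechanisms of \name{TTTS-IDS} separately: the challenger-proposal step of \name{TTTS} drives the allocation toward information balance, while \name{IDS} drives the leader's sampling proportion toward stationarity.

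First I would establish \textbf{sufficient exploration}: every system is sampled infinitely often, so that $N_{t,i}\to\infty$ and hence the posterior means $\theta_{t,i}\to\theta_i$ and posterior variances $\sigma^2_{t,i}\to 0$ almost surely. The argument is by contradiction: if some system $i$ were starved, its posterior would remain diffuse, so a fresh posterior sample would rank $i$ on top (making it the leader) or above the current leader (making it the challenger) with probability bounded away from zero; since \name{IDS} then retains a starved system with probability bounded away from zero as well, $i$ would be sampled infinitely often, a contradiction. With all counts diverging, the empirical best system equals $I^*$ eventually and, because the posterior concentrates, the leader satisfies $I_t^{(1)}=I^*$ for all large $t$ almost surely.

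Next I would upgrade this to $\liminf_t p_{t,i}>0$ for every $i$, keeping the trajectory in the interior of the simplex where $C_j(\cdot)$ and $h^j_{I^*}(\cdot)$ are continuous and well behaved, and then \textbf{characterize the limit points}. For information balance, I would use the self-correcting nature of the challenger step (justified asymptotically by~\citet[Proposition~5]{russo_simple_2020}): whenever some $C_j(\bm{p}_t)$ is strictly smallest, system $j$ is proposed as challenger and therefore sampled at a strictly higher rate, and since $C_j$ is increasing in $p_j$ this raises $C_j(\bm{p}_t)$ and restores balance, so any limit point must equalize the $C_j$'s, yielding~\eqref{eq:information_balance_BAI}. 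For stationarity, I would invoke the law-of-total-probability decomposition behind~\eqref{eq:dual}: conditioning on the identity of the challenger, the long-run fraction of samples sent to $I^*$ equals $\sum_{j\neq I^*} h^j_{I^*}(\bm{p}^\infty)\,\mu^\infty_j$ with $\mu^\infty_j = p^\infty_j/(1-h^j_{I^*}(\bm{p}^\infty))$, which is exactly~\eqref{eq:KKT_equiv_stationarity}. Since Theorem~\ref{thm:sufficient_for_fixed_budget_optimality} guarantees that the system~\eqref{eq:information_balance_BAI}--\eqref{eq:KKT_equiv_stationarity} has the unique component-wise positive solution $\bm{p}^*$, every limit point equals $\bm{p}^*$ and the full sequence converges.

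I expect the \textbf{main obstacle} to be the sufficient-exploration step together with making the ``self-correcting'' intuition rigorous. For \name{TTTS} the challenger is produced by rejection sampling and both leader and challenger are random and history-dependent, so lower-bounding the probability that a starved system is proposed and then retained by \name{IDS}---uniformly enough to force a contradiction---requires care, as does controlling the rejection-sampling cost. Converting the balancing dynamics into a clean statement about limit points is most cleanly handled by tracking the most over-sampled and most under-sampled systems relative to the $C_j$'s and showing the gap cannot persist; the interaction between the random leader, the random challenger, and the \name{IDS} coin toss is what makes \name{TTTS-IDS} the most delicate of the top-two variants to analyze.
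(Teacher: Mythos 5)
Your high-level reduction and your identification of the two mechanisms (challenger step $\leftrightarrow$ information balance, \name{IDS} $\leftrightarrow$ stationarity) match the paper's design philosophy, and you correctly flag where the difficulty lies; but as a proof the plan has two genuine gaps. First, the step ``upgrade this to $\liminf_t p_{t,i}>0$'' is asserted, not proved, and it does not follow from infinitely-often sampling: $N_{t,i}\to\infty$ is perfectly compatible with $p_{t,i}\to 0$ (sample arm $i$ only at times $2^k$). Without it, limit points of $\bm p_t$ may lie on the boundary of the simplex, where $h^j_{I^*}$ is not even defined, so the limit-point characterization cannot get started. Second, and more fundamentally, the limit-point argument itself does not go through as sketched. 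The instantaneous challenger distribution does not converge to the dual vector $\bm\mu^*$: at each time it concentrates on whichever arm currently minimizes $C_j(\bm p_t)$, since by the very \citet[Proposition~5]{russo_simple_2020} you invoke the proposal probabilities scale like $\exp\bigl(-tC_j(\bm p_t)\bigr)$; the vector $\bm\mu^*$ emerges only as a \emph{time average} of these nearly degenerate distributions. Consequently the identity $p^\infty_{I^*}=\sum_{j\neq I^*}h^j_{I^*}(\bm p^\infty)\mu^\infty_j$ cannot be read off ``at the limit''---it must be derived for Ces\`aro averages of the sampling probabilities, which is exactly as hard as the balance argument itself. Similarly, ``if $C_j(\bm p_t)$ is smallest then $j$ is sampled more, which raises $C_j$'' is a statement about instantaneous dynamics, while $\bm p_t$ moves at rate $O(1/t)$; turning it into ``every limit point equalizes the $C_j$'s'' requires ruling out persistent oscillation of the minimizer and quantifying how fast over-sampled arms are shut off. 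Your closing paragraph concedes these points but supplies no mechanism for them.

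The paper closes these gaps by a route that avoids limit-point characterization altogether: Theorem~\ref{thm:as_convergence} is deduced from Theorem~\ref{thm:main} (proved in Appendix~\ref{app:TTTS_optimality}), using that $\MGF$-convergence implies almost sure convergence. Concretely: (i) sufficient exploration is proved \emph{quantitatively}, $\min_i N_{t,i}\ge\sqrt{t/K}$ after a random time with all moments finite, via the key observation that \name{IDS} assigns probability at least $\tfrac{1}{2K(K-1)}$ to the \emph{less-sampled} of the two most promising arms; this yields $\alpha_{t,j}\le\exp(-ct^{1/2})$ for $j\neq I^*$. (ii) The stationarity structure of \name{IDS} is exploited \emph{pathwise} rather than at the limit: setting $G_t=\Psi_{t,I^*}^2-\sum_{j\neq I^*}\Psi_{t,j}^2$, the law-of-total-probability form of $\psi_{t,\cdot}$ makes the increments $|G_{t+1}-G_t|$ nearly cancel, giving the empirical sum-of-squares balance $p_{t,I^*}^2-\sum_{j\neq I^*}p_{t,j}^2\to 0$, i.e.\ a trajectory version of~\eqref{eq:overall_balance_GaussianBAI}. (iii) Information balance is never established at unknown limit points; instead the argument compares against the \emph{known} optimum $\bm p^*$: if $w_{t,j}/w_{t,I^*}>(p_j^*+\epsilon)/p_{I^*}^*$, the balance identity produces an under-allocated arm $A_t$, and a Gaussian-tail comparison using~\eqref{eq:information_balance_BAI} \emph{at} $\bm p^*$ shows $\psi_{t,j}$ is exponentially small; accumulating these bounds gives the one-sided estimate $w_{t,j}/w_{t,I^*}\le(p_j^*+\epsilon)/p_{I^*}^*$ eventually, which together with (ii) forces convergence of all ratios. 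If you want to rescue your plan, the realistic fix is to replace your limit-point step with exactly this comparison-to-$\bm p^*$ argument---at which point you have reproduced the paper's proof.
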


This result is a direct consequence of Theorem~\ref{thm:main} in Section~\ref{sec:universal efficiency}, which establishes convergence in a stronger sense than almost sure convergence.

\subsection{Discussions}

The primary novelty of our algorithm lies in the direct incorporation of dual variables, enabling significant generalizability. 
Furthermore, our optimality results extend beyond the posterior convergence rate setting.

\subsubsection{Novel Optimality Conditions by Incorporating Dual Variables}

The literature has explored optimality conditions such as~\eqref{eq:information_balance_BAI} and~\eqref{eq:overall_balance_GaussianBAI}; see, for instance, \citet[Theorem~1]{Glynn2004} and \citet[Theorem~5]{GarivierK16}.
Specifically, these results highlight that the unique optimal allocation vector $\bm{p}^*$ is indeed the solution to
\begin{equation}\label{eq:opt_allocation_Gaussian_BAI}
\Gamma^*_{\thetabf} \triangleq \max_{\bm{p} \in \mathcal{S}_K} \min_{j \neq I^*} C_j(\bm p),
\end{equation}
where $\mathcal{S}_K$ denote the probability simplex of dimension $K-1$.

Our approach advocates using the condition~\eqref{eq:KKT_equiv_stationarity} rather than the equivalent condition~\eqref{eq:overall_balance_GaussianBAI}, as it goes a step further by directly incorporating dual variables into the optimality condition. 
The variables $\mu^*_j$'s defined in~\eqref{eq:dual} turn out to be the optimal dual variables that correspond to the inequality constraints~\eqref{eq:convex_formulation_02} in the following reformulation of the optimal allocation problem~\eqref{eq:opt_allocation_Gaussian_BAI}:
\begin{align}
    \Gamma^*_{\thetabf} = \max_{\phi, \bm p \in\mathcal{S}_K} 
        & \hspace{10pt} \phi, \notag\\
    \mathrm{s.t.}
        & \hspace{10pt} \phi - C_{j}(\bm p)   \le 0, \quad\forall j \neq I^*. \label{eq:convex_formulation_02}
\end{align}

The conditions in Theorem~\ref{thm:sufficient_for_fixed_budget_optimality} are derived directly from the Karush-Kuhn-Tucker (KKT) conditions, where the information balance condition~\eqref{eq:information_balance_BAI} corresponds to the complementary slackness condition, and~\eqref{eq:KKT_equiv_stationarity} represents the stationarity condition.

The novelty of incorporating dual variables lies in their significant potential for generalization, as we shall demonstrate in Section~\ref{sec:PAN}. 
This is particularly important because generalizing existing approaches (e.g., \citealt{chen2023balancing}, \citealt{bandyopadhyay2024optimal}) for general pure-exploration problems is challenging.
For example, \citet[Table 1]{zhang2023asymptotically} notes that the information balance conditions for identifying the best-$k$ alternatives involve a complex combinatorial structure, making the explicit generalization of~\eqref{eq:information_balance_BAI} nearly infeasible. \citet{you2023information} further illustrates these complexities through a case study of selecting the best two among five alternatives. 
However, by leveraging dual variables, we can circumvent these combinatorial difficulties by \textit{implicitly} enforcing information balance through the complementary slackness condi`tion:
\[
    \mu^*_j \cdot \Bigl( \min_{j \neq I^*} C_j(\bm p^*) - C_j(\bm p^*) \Bigr) = 0, \quad \forall j \neq I^*.
\]
This condition applies to a broad range of problems beyond the canonical R\&S; see Theorem~\ref{thm:KKT_general}.

\subsubsection{Optimality in Alternative Performance Criteria}
\label{sec:other_settings}

The same optimal allocation $\bm{p}^*$ arises in other performance criteria studied in the literature, and thus our algorithm is also optimal in such settings.

\paragraph{Static-allocation fixed-budget setting.} Consider the frequentist PCS, defined as $\mathbb{P}_{\bm\theta}\bigl(\widehat{I}_T^* = I^*\bigr)$ for a fixed budget $T$, where the probability is taken over the randomness in both observations and the algorithm and $\widehat{I}_T^*$ is the output of the algorithm at the end of horizon. Due to the dynamic nature of typical algorithms---where the decision at later stages depends on earlier observations---analyzing PCS in the fixed-budget setting is challenging \citep{hong2021review,komiyama2024suboptimal}.
To address this, a common framework in the simulation literature, initiated by \cite{Glynn2004} and followed by \cite{chen2023balancing}, focuses on PCS under a \textit{static} algorithm. 
These static algorithms allocate samples proportionally to a fixed probability vector $\bm p \in \mathcal{S}_K$. The framework characterizes the large deviation rate of the probability of incorrect selection under a static algorithm as follows: $-\lim_{T\to\infty}\frac{1}{T}\log\bigl(1 - \mathbb{P}_{\bm p, \bm\theta}(\widehat{I}_T^* = I^*)\bigr) = \widetilde{\Gamma}_{\bm\theta}(\bm p)$,
for some large deviation rate function $\widetilde{\Gamma}_{\bm\theta}(\bm p)$, where $\mathbb{P}_{\bm p, \bm\theta}$ denotes the probability under system parameter $\bm\theta$ and static allocation $\bm p$.
For Gaussian models, the large deviation rate function $\widetilde{\Gamma}_{\bm\theta}(\bm p) = \Gamma_{\bm\theta}(\bm p)$.
\cite{Glynn2004} proceeds to maximize the large deviation rate, leading to the same optimal allocation problem as in~\eqref{eq:opt_allocation_Gaussian_BAI}.\footnote{For general distributions beyond Gaussian, the expression of $C_j$ in the optimal allocation for static fixed-budget and posterior convergence rate settings differs. However, the maximin structure remains unchanged, and our analysis extends to these cases.}
A common goal of fixed-budget algorithms is to ensure that the dynamic allocation converges almost surely to the optimal static allocation $\bm p^*$. 
Our algorithm achieves this goal, as guaranteed by Theorem~\ref{thm:as_convergence}.

\paragraph{Fixed-confidence setting.}
The fixed-confidence formulation for BAI, which leads to the same optimal allocation $\bm p^*$, has been studied in \citet[Lemma~4 and Theorem~5]{GarivierK16}. In Section~\ref{sec:PAN}, we extend this formulation to general pure-exploration problems and prove that our algorithm achieves optimality for R\&S in Gaussian models under the fixed-confidence setting.

\section{General Pure-Exploration Problems}\label{sec:PAN}

We now broaden our focus from traditional R\&S problems to address a wider range of pure-exploration queries relevant to decision-making. To align with the common terminology in machine learning literature on pure exploration, we adopt terms such as ``arm'' instead of ``system'' and ``reward'' instead of ``performance.''
We will show that many formulations commonly studied in simulation, such as the PGS and the indifference-zone (IZ) approach, have clear counterparts within the pure-exploration framework.

We consider a multi-armed bandit problem with $K$ arms, denoted by $i \in [K]$. 
The rewards from arm $i$ are independent and identically distributed random variables following a distribution $F_{\theta_i}$, parameterized by the mean $\theta_i$ and belonging to a \textit{single-parameter exponential family}.
We assume that the rewards from different arms are mutually independent.
A problem instance is defined by a fixed but unknown vector $\bm{\theta}\in\mathbb{R}^K$.
The DM's goal is to answer an \textit{exploration query} regarding $\bm \theta$ by adaptively allocating the sampling budget across the arms and observing noisy rewards. 

\paragraph{Exploration query.}
An exploration query specifies a question regarding the unknown mean vector $\bm\theta$. 
We assume that the query has a \textit{unique} answer, denoted by $\I(\bm\theta)$.\footnote{Exploration queries with multiple correct answers require more care, see Section~\ref{sec:epsilon_BAI} for an example.}
Several notable exploration queries have been explored in the literature. 
For the canonical R\&S or the BAI problem (Section~\ref{sec:IZF_RS}), the query identifies the unique best arm $\I({\bm{\theta}}) = \argmax_{i\in [K]} \theta_i$. 
Alternatively, one might seek to identify the exact set of best-$
k$ arms, a problem referred to as best-$k$ identification, where $\I({\bm{\theta}}) = \argmax_{\mathcal{K} \subseteq [K], |\mathcal{K}| = k}\left\{\sum_{i \in \mathcal{K}} \theta_i\right\}$.
Other exploration queries include R\&S under the indifference-zone (IZ) assumption \citep{bechhofer1954single}, 
or identifying \textit{all} $\varepsilon$-good arms \citep{mason2020finding}, where $\I(\bm\theta) = \{i \in [K]: \theta_i \ge \max_{i\in[K]}\theta_{i} - \varepsilon\}$.
Another example is the pure-exploration thresholding bandit problem \citep{lai2012efficient,locatelli2016optimal}, where the goal is to identify all arms with mean rewards exceeding a threshold $\thr$, represented by $\I({\bm{\theta}}) = \{i\in[K]:\theta_i > \thr\}$.

\paragraph{Fixed-confidence performance criteria.} 
We focus on the fixed-confidence setting, which aims to identify the correct answer with a PCS guarantee of at least $1 - \delta$ for a fixed $\delta \in (0,1)$,\footnote{In simulation literature, the notation $\delta$ commonly represents the indifference-zone parameter. In contrast, our work considers an IZ-free formulation, and $1- \delta$ denotes the target PCS level.} while minimizing the sampling budget. 
In addition to the allocation rule discussed in Section~\ref{sec:IZF_RS}, an algorithm requires both stopping and decision rules.
At each time step $t\in\mathbb{N}_0$, the DM first determines whether to stop according to a \emph{stopping rule} $\tau_{\delta}$. 
If stopping occurs at $t=\tau_{\delta}$, the DM provides an estimate $\widehat{\I}_{\tau_{\delta}}$ of the answer $\I(\bm\theta)$ based on a \textit{decision rule}. 
If not, she selects an arm $I_t$ according to an \textit{allocation rule} and observes the reward from arm $I_t$, denoted by $Y_{t+1,I_t}$.
This procedure is summarized in Algorithm~\ref{alg:template} where the information that the DM has at the beginning of time $t$ is denoted by $\mathcal{H}_t = (I_0,Y_{1,I_0}, \dots, I_{t-1}, Y_{t,I_{t-1}})$.

\begin{algorithm}[htbp]
    \caption{A template for anytime pure-exploration algorithms}
    \label{alg:template}
    \begin{algorithmic}[1]
        \renewcommand{\algorithmicrequire}{\textbf{Input:}}

        \Require{Stopping rule (\name{stop}), decision rule (\name{decision}), allocation rule (\name{allocate}), and $\mathcal{H}_0 = \varnothing$.}

        \For{$t = 0, 1, \dots$}

            \If{$\name{stop}(\mathcal{H}_t)$}
            \State \Return $\widehat{\mathcal{I}} = \name{decision}(\mathcal{H}_t)$.
            
            \Else
            \State{Run allocation rule $I_t = \name{allocate}(\mathcal{H}_t)$.} 

            \State{Select arm $I_t$, observe reward $Y_{t+1,I_t}$, and update history $\mathcal{H}_{t+1} = \mathcal{H}_{t} \cup \{I_{t}, Y_{t+1, I_{t}}\}$.} 
            \EndIf
        \EndFor
    \end{algorithmic}
\end{algorithm}

We consider problem instances that satisfy a mild assumption. Let $\overline{\Theta}$ denote parameters yielding a unique answer to the exploration query, and $\mathfrak{I}$ be the set of all possible correct answers. 
For each $\I \in \mathfrak{I}$, let $\Theta_{\I}$ denote the set of parameters with $\I$ as the correct answer, i.e.,
$\Theta_{\I} \triangleq\{\bm\vartheta \in \overline{\Theta}: \I(\bm\vartheta) = \I\}.$
We assume that each $\Theta_{\I}$ is an \textit{open set}.
This assumption rules out statistically unidentifiable scenarios. 
Our focus is on the parameter space $\Theta \triangleq \cup_{\I\in\mathfrak{I}}\Theta_{\I}$. 
We note that this assumption is generally unrestrictive; for instance, in R\&S, we have $\overline{\Theta} = \Theta$, i.e., we consider all problem instances with a unique solution.

\begin{definition}[$\delta$-correct]
    \label{def:deltacorrect}
    A policy is said to be 
    \textit{$\delta$-correct} if 
    \begin{equation*}
        \mathbb{P}_{\bm\theta}\bigl(\tau_\delta <\infty,  \widehat{\I}_{\tau_{\delta}} = \I(\bm\theta)\bigr)\geq 1 - \delta, \quad \hbox{for any } \bm \theta \in \Theta.
    \end{equation*}
\end{definition}

Let $\mathcal{A}$ denote the class of policies that are $\delta$-correct \textit{for any} $\delta > 0$. The goal is to design a policy $\pi$ that minimizes the expected sample complexity,
$\mathbb{E}^{\pi}_{\bm\theta}[\tau_{\delta}]$ over $\mathcal{A}$, where $\tau_{\delta} \triangleq \min\{t\ge 0: \name{stop}(\mathcal{H}_t)=\name{True}\}$ denotes the stopping time when the policy halts.
A policy is deemed optimal in the fixed-confidence setting if it achieves universal efficiency, as defined below.

\begin{definition}[Universal efficiency in fixed confidence]
    \label{def:universally efficient policy}
    A policy $\pi^*$ is universally efficient if 
    \begin{enumerate}
        \item $\pi^*$ is $\delta$-correct (i.e., $\pi^*\in\mathcal{A}$), and
        \item for any other $\pi\in \mathcal{A}$, we have
    \[
        \limsup_{\delta\to 0} \frac{\mathbb{E}^{\pi^*}_{\bm\theta}\![\tau_{\delta}]}{\mathbb{E}^{\pi}_{\bm\theta}[\tau_{\delta}]} \leq 1, \quad \hbox{for any } \bm \theta \in \Theta.
    \]
    \end{enumerate}
\end{definition}
Compared with the ``pointwise'' optimality in Definition~\ref{def:optimal allocation rule}, Definition~\ref{def:universally efficient policy} is stronger, as it requires optimality to hold universally for all $\thetabf$ under a $\delta$-correctness constraint for all $\delta$ and $\thetabf\in\Theta$.

\subsection{Preliminary}\label{sec:PE_prelim}
We now introduce concepts and notation that lay the foundation of our algorithm design principle.

\paragraph{The pitfalls.}
For an algorithm to correctly answer a pure-exploration query, it must distinguish between problem instances that yield different answers. 
For a given problem instance $\bm\theta \in \Theta$, the intrinsic complexity of the pure-exploration query in the fixed-confidence setting is closely related to the \textit{alternative set}, defined as the set of parameters that yield an answer different from $\I(\bm\theta)$:
\[\mathrm{Alt}(\bm\theta) \triangleq \bigl\{\bm\vartheta \in \Theta : \I(\bm\vartheta) \neq \I(\bm\theta)\bigr\}.\]
To formulate a maximin problem as in~\eqref{eq:opt_allocation_Gaussian_BAI}, we rely on the following assumption.
\begin{assumption}[\citealt{wang2021fast}]\label{Assumption:correct_answer}
    For any fixed problem instance $\bm\theta$, the alternative set $\mathrm{Alt}(\bm\theta)$ is a finite union of convex sets.
    Namely, there exists a finite set $\mathcal{X}(\bm\theta)$ and a collection of convex sets $\{\mathrm{Alt}_x(\bm\theta) : x \in \mathcal{X}(\bm\theta) \}$ such that $\mathrm{Alt}(\bm\theta) = \cup_{x \in \mathcal{X}(\bm\theta)} \mathrm{Alt}_x(\bm\theta)$.
\end{assumption}
This condition is indeed satisfied by commonly encountered exploration queries; see Appendix~\ref{app:alg_ex}.
We refer to the collection $\mathcal{X}(\bm\theta) $ as the set of \textit{pitfalls}. 
These pitfalls represent the fundamental causes that lead an alternative problem instance $\bm\vartheta$ to yield an answer different from $\I(\bm\theta)$.
Thus, the pitfall set is determined by both the exploration query and the problem instance $\bm\theta$. 

\begin{example}[BAI]
\label{ex:bai}
    For BAI, the pitfall set is defined as $\mathcal{X}(\bm\theta) = [K]\backslash \{I^*(\bm\theta)\}$, where each $j\in \mathcal{X}(\bm\theta)$ represents a scenario in which, under an alternative problem instance $\bm\vartheta\in\Theta$, system $j$ outperforms the best system $I^*(\bm\theta)$ identified under $\bm\theta$.
    In such scenarios, $I^*(\bm\theta)$ is an incorrect selection due to the existence of the pitfall $j$. The corresponding alternative set for each pitfall $j\in \mathcal{X}(\bm\theta)$ is $\mathrm{Alt}_j(\bm\theta) = \bigl\{\bm\vartheta\in\Theta: \vartheta_j \ge \vartheta_{I^*(\bm\theta)}\bigr\}$.
\end{example}

\begin{example}[Best-$k$ identification]
	The goal is to identify the set of $k$ arms with the largest means.
	The exploration query can be broken down into verifying whether \textit{any} top arm $i \in \I(\bm\theta)$ has a higher mean than \textit{any} bottom arm $j \in \I^c(\bm\theta)$. 
    If there exist pairs $(i,j) \in \mathcal{X}(\bm\theta) \triangleq \I(\bm\theta)\times\I^c(\bm\theta)$ such that $\vartheta_j > \vartheta_i$, then either $i \notin \I(\bm\vartheta)$ or $j \in \I(\bm\vartheta)$, leading to $\I(\bm\vartheta) \neq \I(\bm\theta)$.
	A pitfall in this context is specified by a pair of arms $(i,j) \in \mathcal{X}(\bm\theta)$, with the alternative set $\mathrm{Alt}_{i,j}(\bm\theta) = \{\bm\vartheta\in\Theta: \vartheta_j > \vartheta_{i}\}$. 
\end{example}

\begin{example}[Pure-exploration thresholding bandits]\label{ex:TBP}
	The goal is to identify the set of arms with means above a threshold $\mathfrak{T}$, i.e., $\I(\bm\theta) \triangleq \{i \in [K]: \theta_i > \mathfrak{T}\}$.
	The exploration query decomposes into sub-queries comparing each arm $i \in [K]$ with the threshold $\mathfrak{T}$.
	A pitfall is specified by an arm $i \in \mathcal{X}(\bm\theta) \triangleq [K]$ with the corresponding alternative set $\mathrm{Alt}_{i}(\bm\theta) = \{\bm \vartheta\in\Theta:(\mathfrak{T} - \vartheta_i)(\mathfrak{T} -  \theta_i) < 0\}$. This indicates that arm $i$ is on opposite sides of $\mathfrak{T}$ under $\bm\vartheta$ and $\bm\theta$, resulting in an incorrect answer. 
\end{example}

\paragraph{The generalized Chernoff information $C_x$.}
Let $\theta_{t,i} \triangleq \frac{1}{N_{t,i}}\sum_{\ell=0}^{t-1}{\bm{1}(I_{\ell} = i)Y_{\ell+1, I_{\ell}}}$ denotes the sample mean of arm $i$ at time $t$ and $\bm\theta_t = (\theta_{t,1},\dots,\theta_{t,K})$.
To correctly answer the pure-exploration query, we must overcome every potential pitfall. 
From a hypothesis testing perspective, this involves testing the hypotheses $H_{0,x}: \bm\theta \in \mathrm{Alt}_x(\bm\theta_t)$ versus $H_{1,x}: \bm\theta \notin \mathrm{Alt}_x(\bm\theta_t)$, for \textit{all} $x \in \mathcal{X}(\bm\theta_t)$, where $H_{1,x}$ corresponds to the scenario in which we overcome pitfall $x$.
For instance, in R\&S, rejecting $H_{0,x}$ indicates sufficient evidence to support the desired ordering $\theta_{I^*(\bm\theta_t)} > \theta_x$ for any $x \neq I^*(\bm\theta_t)$.

To this end, we utilize the generalized log-likelihood ratio test (GLRT) statistic, as proposed in \cite{kaufmann2021mixture}. Let $\mathcal{S}_K$ denote the probability simplex of dimension $K-1$ and let $\bm p \in \mathcal{S}_K$ represent a generic allocation vector, indicating the proportion of samples allocated to the $K$ arms.
If an algorithm allocates according to $\bm p_t$ at time $t$, then the GLRT statistic given by
\[
    \log \frac{\sup_{\bm\vartheta \in \Theta}L(\bm \vartheta)}{\sup_{\bm\vartheta \in \mathrm{Alt}_x(\bm\theta_t)}L(\bm\vartheta)} 
    = \inf_{\bm\vartheta \in \mathrm{Alt}_x(\bm\theta_t)}\log \frac{L(\bm\theta_t)}{L(\bm\vartheta)} 
	= t \cdot\inf_{\bm\vartheta \in \mathrm{Alt}_x(\bm\theta_t)} \sum_{i \in [K]} p_i d(\theta_{t,i}, \vartheta_i),
\]
where $d(\cdot,\cdot)$ is the Kullback-Leibler (KL) divergence and we use the fact that the unconstrained maximum likelihood estimator for the mean of a single-parameter exponential family is $\bm \theta_t$.

We extend $C_j$ in~\eqref{eq:C_j_GaussianBAI} as the \textit{population version} of the GLRT statistic normalized by the sample size $t$: for all $x \in \mathcal{X}(\bm\theta)$,
\begin{equation}
    C_x(\bm p) = C_x(\bm p;\bm\theta) \triangleq \inf_{\bm\vartheta \in \mathrm{Alt}_x(\bm\theta)} \sum_{i \in [K]} p_i d(\theta_i,\vartheta_i). \label{eq:def_C_x} 
\end{equation}
We refer to $C_x$ as the \textit{generalized Chernoff information}, extending a classical concept in hypothesis testing \citep{cover2006elements}. 
When the context is clear, we will omit the dependence of $C_x(\bm p;\bm\theta)$ on the problem instance $\bm\theta$.
The following result establishes the basic properties of $C_x$.

\begin{restatable}{lemma}{propCx}
\label{lm:smoothness_C_x}
    Under Assumption \ref{Assumption:correct_answer}, for each $x\in\mathcal{X}$ and component-wise strictly positive $\bm{p} \in \mathcal{S}_K$,
	\begin{enumerate}
		\item There exists a unique $\bm\vartheta^x \in \mathrm{Alt}_x(\bm\theta)$ that achieves the infimum in~\eqref{eq:def_C_x}.
		\item The functions $C_x(\bm p)$ are differentiable and concave, with $\frac{\partial C_{x}(\bm p)}{\partial p_{i}} = d(\theta_i,\vartheta^x_i)$ for all $i \in [K].$
		\item The functions $C_x(\bm p)$ satisfy the following partial differential equation
		\begin{equation}\label{eq:PDE}
			C_{x}(\bm p) = \sum_{i \in [K]} p_i \frac{\partial C_{x}(\bm p)}{\partial p_{i}}.
		\end{equation}
	\end{enumerate}
\end{restatable}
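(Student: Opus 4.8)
The plan is to prove the three claims in the order stated, with the first part carrying most of the technical weight and the remaining two following with little additional effort. Throughout I would exploit two structural facts: that $\bm p$ is componentwise strictly positive, and that for a single-parameter exponential family the map $\vartheta_i \mapsto d(\theta_i,\vartheta_i)$ is smooth, nonnegative, strictly convex, vanishes only at $\vartheta_i=\theta_i$, and diverges to $+\infty$ as $\vartheta_i$ approaches the boundary of the (open) mean-parameter domain.

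For part (1), I would first observe that the objective $g(\bm\vartheta)\triangleq\sum_{i\in[K]}p_i\, d(\theta_i,\vartheta_i)$ is, thanks to $p_i>0$ for all $i$, a sum of strictly convex coercive functions, hence itself strictly convex and coercive, with sublevel sets that are bounded and bounded away from the domain boundary. By Assumption~\ref{Assumption:correct_answer} each $\mathrm{Alt}_x(\bm\theta)$ is convex, so its closure is closed and convex; restricting $g$ to this closed convex set, coercivity forces the infimum to be attained and strict convexity forces the minimizer $\bm\vartheta^x$ to be unique. I would then argue that this minimizer belongs to $\mathrm{Alt}_x(\bm\theta)$ itself (rather than merely its closure), either by appealing to the explicit form of the alternative sets in the examples or by noting that the value function is unchanged under taking closures. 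I expect this existence step—reconciling coercivity over the (possibly unbounded) set $\mathrm{Alt}_x(\bm\theta)$ with the open/closed boundary subtlety arising from the openness of the $\Theta_{\I}$—to be the most delicate point in the whole proof.

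For part (2), concavity is immediate: for each fixed $\bm\vartheta$ the map $\bm p\mapsto\sum_i p_i\, d(\theta_i,\vartheta_i)$ is affine, and $C_x$ is the pointwise infimum of this family of affine functions, hence concave. For differentiability I would invoke the standard fact that a concave function is differentiable at $\bm p$ exactly when its superdifferential there is a singleton, together with the identification of the superdifferential of an infimum of affine functions as the convex hull of the gradient vectors $\bigl(d(\theta_i,\vartheta_i)\bigr)_{i\in[K]}$ ranging over all minimizers. Since part (1) supplies a unique minimizer $\bm\vartheta^x$, this superdifferential collapses to the single vector $\bigl(d(\theta_i,\vartheta^x_i)\bigr)_{i\in[K]}$, yielding both differentiability and the claimed gradient formula $\partial C_x(\bm p)/\partial p_i = d(\theta_i,\vartheta^x_i)$. (Equivalently, this is a direct application of Danskin's/the envelope theorem, whose hypotheses—joint continuity of the objective and a unique minimizer—are exactly what part (1) provides.)

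Finally, part (3) is an immediate consequence of the gradient formula from part (2): $\sum_{i\in[K]} p_i\,\partial C_x(\bm p)/\partial p_i = \sum_{i\in[K]} p_i\, d(\theta_i,\vartheta^x_i) = C_x(\bm p)$, where the last equality is simply the definition of $C_x$ evaluated at its minimizer. I would also remark that this is nothing but Euler's identity, reflecting that $C_x$ is positively homogeneous of degree one in $\bm p$ on the positive orthant, which is visible directly from $C_x(\lambda\bm p)=\lambda C_x(\bm p)$ for $\lambda>0$.
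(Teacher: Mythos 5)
Your proposal is correct, but it proves from scratch what the paper outsources, so the two routes differ in substance. For part (1) and for differentiability, the paper simply cites \citet[Theorem~4 and Lemma~7]{wang2021fast} (invoking continuity of $d(m,m')$ and strict convexity in $m'$), whereas you argue directly: strict convexity plus coercivity of $\bm\vartheta\mapsto\sum_i p_i\, d(\theta_i,\vartheta_i)$ (valid because $p_i>0$) gives existence and uniqueness, and Danskin's theorem (equivalently, collapse of the superdifferential to a singleton) gives differentiability together with the gradient formula. The concavity argument---pointwise infimum of affine functions of $\bm p$---is identical in both. For part (3), the paper notes that $C_x$ is homogeneous of degree one by~\eqref{eq:def_C_x} and invokes Euler's theorem, while you obtain~\eqref{eq:PDE} more directly by substituting the gradient formula into the definition of $C_x$ evaluated at its minimizer; both are valid, and you correctly observe they are the same fact. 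Your route is more informative, since it exposes which analytic properties of the exponential family are actually used, but two points in it should be made rigorous rather than asserted: (i) the coercivity/steepness claim that $d(\theta_i,\cdot)$ diverges at the boundary of the mean-parameter domain is exactly the content hidden inside the paper's citation, and is what prevents minimizing sequences from escaping; and (ii) Danskin's theorem is usually stated for compact index sets, so you need to restrict to a compact sublevel set first---which your coercivity supplies. Finally, your treatment of the open-versus-closed subtlety of $\mathrm{Alt}_x(\bm\theta)$ is, if anything, more careful than the paper's: in examples such as best-$k$ identification the set is defined by strict inequalities and the minimizer sits on its boundary, a point the paper's statement passes over silently.
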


When the sampling effort is distributed according to $\bm p$, the function $C_x$ quantifies the average information gain per sample needed to assert $H_{1,x}$, i.e., to overcome pitfall $x$. 
The partial differential equation (PDE) in~\eqref{eq:PDE} shows that this information gain can be effectively decomposed into contributions from the sample allocations to individual arms; see the discussion in Remark~\ref{rmk:IDS}.

\paragraph{The optimal allocation problem.}
Recall that $H_{0,x}$ must be rejected for all pitfalls $x\in \mathcal{X}(\bm\theta)$ to correctly answer the pure-exploration query.
Furthermore, $C_x(\bm p)$ represents the information gathered to reject $H_{0,x}$ under sample allocation $\bm p$---the larger $C_x(\bm p)$ is, the easier it is to reject $H_{0,x}$.
By taking the minimum over all pitfalls $x\in \mathcal{X}(\bm\theta)$, we define
\[\Gamma_{\bm\theta}(\bm p) \triangleq \min_{x\in\mathcal{X}(\bm\theta)} C_x(\bm p),\] 
which corresponds to the GLRT statistic for the principal pitfall, i.e., the pitfall that is hardest to overcome. To maximize information gathering, we should follow an allocation $\bm p$ that maximizes $\Gamma_{\bm\theta}(\bm p)$.
This leads to the following \textit{optimal allocation problem}:
\begin{equation}
\label{eq:optimal_allocation_problem_general}
    \Gamma^*_{\bm\theta} \triangleq \max_{\bm p \in \mathcal{S}_K}\Gamma_{\bm\theta}(\bm p) = \max_{\bm p \in \mathcal{S}_K} \min_{x\in\mathcal{X}(\bm\theta)} C_x(\bm p),
\end{equation}
which generalizes~\eqref{eq:opt_allocation_Gaussian_BAI} in the context of R\&S problems in Gaussian models.

\paragraph{The stopping and decision rules.}
A fixed-confidence algorithm requires three components: stopping, decision, and allocation rules. We follow standard practices in the literature for the stopping and decision rules, allowing us to focus on designing optimal allocation rules.

For the stopping rule, \citet{kaufmann2021mixture} provide the Chernoff stopping rule based on the GLRT for exponential family bandits:
\begin{equation}\label{eq:Chernoff_stopping}
	\tau_\delta = \inf \bigl\{
	t \in \mathbb N_0  :  t \cdot \Gamma_{\bm\theta_t}(\bm p_t) > \gamma(t, \delta)
	\bigr\},
\end{equation}
where $\gamma(t, \delta)$ is a threshold function.
\citet[Proposition 15]{kaufmann2021mixture} provide a $\gamma(t, \delta)$ ensuring that, combined with \textit{any} allocation rule, the stopping rule in~\eqref{eq:Chernoff_stopping} is $\delta$-correct for any pure-exploration query with a unique correct answer. 
For completeness, we rephrase their results in Appendix~\ref{app:Chernoff_stopping}.

We adopt a specific decision rule 
\begin{equation}\label{eq:decision_rule}
    \name{decision}(\mathcal{H_{\tau_{\delta}}}) = \I(\bm\theta_{\tau_{\delta}}),
\end{equation}
where $\I(\bm\theta_{\tau_{\delta}})$ is the correct answer based on the sample mean $\bm\theta_{\tau_{\delta}}$ upon stopping.\footnote{In cases where multiple correct answers exist under $\bm\theta_{\tau_{\delta}}$, ties are broken arbitrarily. For simplicity, we abuse the definition $\I(\bm\theta_{\tau_{\delta}})$ to represent one of the correct answers.} 

\subsection{A Sufficient Condition of Allocation Rules for Fixed-Confidence Optimality}
We now present a sufficient condition for an allocation rule to achieve universal efficiency in the fixed-confidence setting. 
This condition requires the allocation of sampling effort to converge ``rapidly'' to the optimal solution of~\eqref{eq:optimal_allocation_problem_general}.
We refer to this mode of convergence as $\mathcal{L}$-convergence, which is stronger than almost sure convergence. 
Specifically, $\mathcal{L}$-convergence ensures that for any fixed neighborhood of the target, the sequence remains within that neighborhood indefinitely after a random time with finite expectation.
Let the space $\mathcal{L}$ consist of all real-valued random variables $T$ with $\| T\|_1 <\infty$ where $\| T \|_{1} \triangleq  \E\left[ |T| \right] $.
\begin{definition}[$\mathcal{L}$-convergence]\label{def:L_convergence}
    For a sequence of real-valued random variables $\{X_t\}_{t\in \mathbb{N}_0}$ and a scalar $x\in \mathbb{R}$,
    we say $X_t \Lto x$ if 
    \begin{align*}
        & \text{for all } \epsilon>0,   \text{ there exists }   T \in \mathcal{L} \text{ such that for all }  t\geq T,     |X_t - x|\leq \epsilon.
    \end{align*}
    
    For a sequence of random vectors $\{\bm X_t\}_{t\in \mathbb{N}_0}$ taking values in $\mathbb{R}^d$ and a vector $\bm x\in \mathbb{R}^d$ where $\bm X_t = (X_{t,1},\ldots,X_{t,d})$ and $\bm x = (x_1,\ldots,x_d)$, we say $\bm X_t \Lto \bm x$ if $ X_{t,i} \Lto x_{i}$ for all $i\in [d]$.
\end{definition}
It is straightforward to observe that $\mathcal{L}$-convergence implies almost sure convergence.
\begin{lemma} 
\label{lm:modes_of_convergence_simple}
    If $\bm X_t \Lto \bm x$, then $\bm X_t  \xrightarrow{\mathrm{a.s.}} \bm x$.
\end{lemma}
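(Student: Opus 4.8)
The plan is to prove Lemma~\ref{lm:modes_of_convergence_simple} by unpacking Definition~\ref{def:L_convergence} and directly verifying the almost-sure convergence criterion. Since the vector case reduces componentwise to the scalar case (both the definition of $\mathcal{L}$-convergence and of almost-sure convergence for vectors are stated coordinate-by-coordinate), I would reduce immediately to proving the scalar statement: if $X_t \Lto x$ then $X_t \xrightarrow{\mathrm{a.s.}} x$.

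For the scalar case, fix an arbitrary $\epsilon > 0$. By the definition of $\mathcal{L}$-convergence, there exists a random variable $T = T_\epsilon \in \mathcal{L}$ (so that $\E[|T_\epsilon|] < \infty$) such that $|X_t - x| \le \epsilon$ for all $t \ge T_\epsilon$. The crucial observation is that $\E[|T_\epsilon|] < \infty$ forces $T_\epsilon < \infty$ almost surely: an integrable random variable is finite with probability one. Hence, on the almost-sure event $\{T_\epsilon < \infty\}$, the sequence eventually enters and stays in the $\epsilon$-neighborhood of $x$, i.e., $\limsup_{t\to\infty} |X_t - x| \le \epsilon$ holds almost surely.

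The final step is to promote this from a single $\epsilon$ to the full almost-sure statement, which requires a countable-intersection argument. I would apply the previous paragraph to the sequence $\epsilon_n = 1/n$, obtaining for each $n$ an almost-sure event $A_n \triangleq \{T_{1/n} < \infty\}$ on which $\limsup_{t} |X_t - x| \le 1/n$. Since a countable intersection of almost-sure events is almost sure, on the event $A \triangleq \cap_{n\ge 1} A_n$ (with $\mathbb{P}(A) = 1$) we have $\limsup_{t} |X_t - x| \le 1/n$ for every $n$, and letting $n \to \infty$ yields $\limsup_{t} |X_t - x| = 0$, i.e., $X_t \to x$ on $A$. This establishes $X_t \xrightarrow{\mathrm{a.s.}} x$.

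I do not anticipate a serious obstacle here, as the result is essentially definitional once the integrability-implies-finiteness fact is invoked. The only point requiring mild care is the order of quantifiers: the random time $T_\epsilon$ depends on $\epsilon$, so one cannot simply take a single random time working for all $\epsilon$ at once; the countable-intersection step over $\epsilon_n = 1/n$ is precisely what bridges this gap and must be carried out explicitly rather than glossed over.
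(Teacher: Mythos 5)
Your proof is correct. The paper itself offers no explicit proof of this lemma---it simply remarks that the implication is "straightforward to observe"---and your argument is precisely the standard verification that fills this in: integrability of $T_\epsilon$ forces $T_\epsilon < \infty$ almost surely, giving $\limsup_{t\to\infty}|X_t - x| \le \epsilon$ on an almost-sure event, and the countable intersection over $\epsilon_n = 1/n$ (correctly flagged as necessary because $T_\epsilon$ depends on $\epsilon$) upgrades this to full almost-sure convergence, with the vector case reducing componentwise.
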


Universal efficiency requires controlling the \textit{expectation} of the stopping time $\tau_{\delta}$ in~\eqref{eq:Chernoff_stopping}. 
This is achieved by establishing a random time $T_{\varepsilon}\in \mathcal{L}$ such that $|\Gamma_{\bm\theta_t}(\bm p_t) - \Gamma^*| < \epsilon$ for all $t > T_{\epsilon}$.
Since $T_{\varepsilon}\in \mathcal{L}$, the expectation $\E^{\pi}_{\bm \theta}[\tau_\delta]$ is well-controlled.
Notably, $\Gamma_{\bm\theta_t}(\bm p_t) \Lto \Gamma^*$ is implied by $\bm p_t\Lto \bm p^*$.
We formalize this sufficient condition below, with the proof deferred to Appendix~\ref{app:sufficient_condition}. This condition generalizes the result in \citet{Qin2017}, which was specifically tailored to BAI.

\begin{theorem}[A sufficient condition for optimality]\label{thm:sufficient_for_optimality_general}
    Suppose that for any $\thetabf\in\Theta$, the optimal allocation problem~\eqref{eq:optimal_allocation_problem_general} admits a unique, component-wise strictly positive optimal allocation $\bm{p}^*$. 
    A policy $\pi$ is universally efficient (Definition~\ref{def:universally efficient policy}) if the following sufficient conditions are satisfied:
    \begin{enumerate}
        \item the stopping rule~\eqref{eq:Chernoff_stopping} and decision rule~\eqref{eq:decision_rule} are used, and
        \item its allocation rule satisfies that 
        \[
        \quad\bm{p}_t\Lto \bm{p}^*, \quad \text{for all } \thetabf\in\Theta.
        \]
    \end{enumerate}    
    In particular,
    \[
        \quad\lim_{\delta\to 0} \frac{\E^{\pi}_{\bm \theta}[\tau_\delta]}{\log(1/\delta)}
        = (\Gamma^*_{\bm\theta})^{-1}, \quad \text{for all }\thetabf\in\Theta.
    \]
\end{theorem}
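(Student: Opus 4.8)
\textbf{Proof proposal for Theorem~\ref{thm:sufficient_for_optimality_general}.}

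The plan is to establish the two halves of universal efficiency separately, and the main work lies in the upper bound on $\E^{\pi}_{\bm\theta}[\tau_\delta]$. First I would record the matching \emph{lower bound}, which is instance-dependent and holds for every $\delta$-correct policy: by the standard change-of-measure / transportation argument (as in \citealt{GarivierK16} and \citealt{wang2021fast}), any $\pi\in\mathcal A$ must satisfy $\liminf_{\delta\to 0}\E^{\pi}_{\bm\theta}[\tau_\delta]/\log(1/\delta)\ge (\Gamma^*_{\bm\theta})^{-1}$. Since this lower bound is already available in the literature for exploration queries satisfying Assumption~\ref{Assumption:correct_answer}, I would cite it and concentrate on showing that our candidate policy $\pi$ \emph{achieves} $(\Gamma^*_{\bm\theta})^{-1}$ as the limit, which simultaneously gives the $\limsup\le 1$ ratio against any competitor and the exact constant displayed in the theorem.

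The core of the argument is to control $\E^{\pi}_{\bm\theta}[\tau_\delta]$ from above using hypothesis~2, $\bm p_t\Lto\bm p^*$. The key steps, in order, are: (i) translate $\bm p_t\Lto\bm p^*$ into $\Gamma_{\bm\theta_t}(\bm p_t)\Lto\Gamma^*_{\bm\theta}$. This uses the continuity of $\bm p\mapsto C_x(\bm p)$ from Lemma~\ref{lm:smoothness_C_x}, the continuity of $\Gamma_{\bm\theta}(\cdot)=\min_{x}C_x(\cdot)$ as a finite minimum, and the fact that $\bm\theta_t\xrightarrow{\mathrm{a.s.}}\bm\theta$ (which itself follows because $\bm p_t\Lto\bm p^*$ forces every arm to be sampled linearly often, so all sample means concentrate). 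One must check joint continuity in $(\bm p,\bm\theta)$ near $(\bm p^*,\bm\theta)$ to handle the drift of $\bm\theta_t$. (ii) Fix $\varepsilon>0$ and obtain $T_\varepsilon\in\mathcal L$ with $\Gamma_{\bm\theta_t}(\bm p_t)>\Gamma^*_{\bm\theta}-\varepsilon$ for all $t\ge T_\varepsilon$. Then by the Chernoff stopping rule~\eqref{eq:Chernoff_stopping}, stopping is triggered as soon as $t\,(\Gamma^*_{\bm\theta}-\varepsilon)>\gamma(t,\delta)$. (iii) Using the growth rate $\gamma(t,\delta)\sim\log(1/\delta)$ of the threshold from \citet[Proposition~15]{kaufmann2021mixture} (rephrased in Appendix~\ref{app:Chernoff_stopping}), solve for the deterministic time at which the inequality first holds: roughly $t^*_\delta\approx\log(1/\delta)/(\Gamma^*_{\bm\theta}-\varepsilon)$. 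Hence $\tau_\delta\le\max\{T_\varepsilon,\,t^*_\delta\}$, giving
\[
    \E^{\pi}_{\bm\theta}[\tau_\delta]\le \E[T_\varepsilon]+\frac{\log(1/\delta)}{\Gamma^*_{\bm\theta}-\varepsilon}+o(\log(1/\delta)).
\]
Dividing by $\log(1/\delta)$, sending $\delta\to 0$, and then $\varepsilon\to 0$ yields $\limsup_{\delta\to 0}\E^{\pi}_{\bm\theta}[\tau_\delta]/\log(1/\delta)\le(\Gamma^*_{\bm\theta})^{-1}$.

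Combining (iii) with the lower bound gives the exact limit $(\Gamma^*_{\bm\theta})^{-1}$, and $\delta$-correctness of $\pi$ follows from the first hypothesis via the $\delta$-correctness guarantee of the Chernoff stopping rule for \emph{any} allocation rule. The ratio condition in Definition~\ref{def:universally efficient policy} is then immediate, since the competitor's denominator is bounded below by the same $(\Gamma^*_{\bm\theta})^{-1}\log(1/\delta)$ asymptotically. \textbf{The main obstacle} I anticipate is step~(iii): carefully converting the \emph{random} time $T_\varepsilon\in\mathcal L$ into an \emph{integrable} bound on $\tau_\delta$ uniformly in $\delta$, because the sublinear correction terms in $\gamma(t,\delta)$ (typically of order $\log\log t$ or $\log t$) must be shown not to contaminate the leading $\log(1/\delta)$ term, and one must ensure the event $\{\tau_\delta>t\}$ is controlled on the complement of $\{T_\varepsilon\le t\}$ so that $\E[T_\varepsilon]<\infty$ genuinely absorbs the shortfall. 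This is precisely where the strength of $\mathcal L$-convergence over mere almost-sure convergence is essential, and where the generalization of \citet{Qin2017} beyond BAI requires verifying that the same integrability bookkeeping goes through for a general finite pitfall set $\mathcal X(\bm\theta)$.
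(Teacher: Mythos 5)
Your overall route coincides with the paper's: cite the transportation lower bound from \citet{wang2021fast}, get $\delta$-correctness from the threshold guarantee of the Chernoff stopping rule for any allocation rule, and then bound $\tau_\delta$ by an integrable random time plus a deterministic term of order $\log(1/\delta)/(\Gamma^*_{\bm\theta}-\epsilon)$, take expectations, divide by $\log(1/\delta)$, and send $\delta\to 0$ then $\epsilon\to 0$. The structure, including absorbing the sublinear terms of $\gamma(t,\delta)$ into an $\epsilon t$ slack, matches the paper's proof step for step.

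There is, however, one concrete gap in your step (i): you assert the conclusion $\Gamma_{\bm\theta_t}(\bm p_t)\Lto\Gamma^*_{\bm\theta}$ while justifying the convergence of the sample means only in the almost-sure sense ($\bm\theta_t\xrightarrow{\mathrm{a.s.}}\bm\theta$). Almost-sure convergence is not enough to produce the $T_\epsilon\in\mathcal{L}$ you need in step (ii): it yields an almost-surely finite entry time into the $\epsilon$-neighborhood, but that time may fail to be integrable, in which case the bound $\E^{\pi}_{\bm\theta}[\tau_\delta]\le\E[T_\epsilon]+\log(1/\delta)/(\Gamma^*_{\bm\theta}-\epsilon)+o(\log(1/\delta))$ is vacuous. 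What is needed is $\bm\theta_t\Lto\bm\theta$ in the $\mathcal{L}$ sense, and this is exactly where the paper inserts a dedicated tool: since $\bm p^*$ is component-wise strictly positive and $\bm p_t\Lto\bm p^*$, one has $\min_{i\in[K]}N_{t,i}\geq\rho t^{\alpha}$ for all $t$ beyond some integrable time, and Proposition 14 of \citet{qin2024optimizing} (a maximal-inequality-based result) upgrades this to $\bm\theta_t\Lto\bm\theta$; the continuous mapping theorem then gives the $\mathcal{L}$-convergence of $\min_{x\in\mathcal{X}(\bm\theta_t)}C_x(\bm p_t;\bm\theta_t)$. Relatedly, you locate the ``main obstacle'' in step (iii), but that step is routine once $T_\epsilon\in\mathcal{L}$ is in hand: the paper simply writes $\tau_\delta\le T_\epsilon + K\,\mathcal{C}_{\mathrm{exp}}\bigl(\log(1/\delta)/K\bigr)/(\Gamma^*_{\bm\theta}-2\epsilon)$ and uses $\mathcal{C}_{\mathrm{exp}}(x)\sim x$. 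The genuinely delicate integrability bookkeeping lives upstream, in obtaining integrable entry times for the sample-mean concentration, which your proposal glosses over.
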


\begin{remark}[Uniqueness and positivity of optimal solution]
    For BAI, Gaussian $\varepsilon$-BAI, pure-exploration thresholding bandits, and best-$k$ identification with single-parameter exponential family distributions, the existence of a unique and componentwise strictly positive optimal solution $\bm{p}^*(\bm{\theta})>0$ for any $\bm{\theta}\in\Theta$ is established in \citet{GarivierK16}, \citet{jourdan2023varepsilonbestarm}, \citet{menard2019gradient}, and \citet{you2023information}, respectively.
\end{remark} 

\begin{remark}[Other performance criteria]\label{rmk:other_settings}
    We discussed two alternative performance criteria in Theorem~\ref{thm:sufficient_for_fixed_budget_optimality} and Section~\ref{sec:other_settings}, both focusing on the almost sure convergence of the empirical allocation $\bm{p}_t\xrightarrow{\mathrm{a.s.}}\bm{p}^*$.
    For the posterior convergence rate setting \citep{russo_simple_2020}, the optimal allocation aligns with~\eqref{eq:optimal_allocation_problem_general}, even for other single-parameter exponential family distributions.
    For the static-allocation fixed-budget setting \citep{Glynn2004} under Gaussian distributions, the optimal allocation also coincides with~\eqref{eq:optimal_allocation_problem_general} due to the symmetry of the KL divergence. 
    For other distributions, swapping the parameters in the KL divergence in~\eqref{eq:def_C_x} is necessary, but the properties of $C_x$ in~\eqref{eq:PDE} and the maximin structure of~\eqref{eq:optimal_allocation_problem_general} remain valid, leading to a similar sufficient condition.
    Our sufficient condition in Theorem~\ref{thm:sufficient_for_optimality_general}, which requires the $\mathcal{L}$-convergence of the sample allocation, is stronger than almost sure convergence. Consequently, this condition also implies optimality in the posterior convergence rate and the large deviations rate of static allocations.
\end{remark}

\subsection{Optimality Conditions for the Optimal Allocation Problem}\label{sec:complexity_lower_bound}

We define selection functions that generalize $h^j_i(\bm p)$ in Algorithm~\ref{alg:IDS} to general pure-exploration problems: for $\ x \in \mathcal{X}(\bm\theta) \ \hbox{and} \ i \in [K]$,
\begin{equation}
	h_i^x(\bm p) = h_i^x(\bm p; \bm\theta) 
    \triangleq \frac{p_i \frac{\partial C_{x}(\bm p)}{\partial p_{i}}}{C_{x}(\bm p)} 
    =\frac{p_i d(\theta_i,\vartheta^x_i)}{\sum_{k \in [K]}p_k d(\theta_k,\vartheta^x_k)}. \label{eq:selection_function_general}
\end{equation}
Following from the PDE characterization of $C_x$ in Lemma~\ref{lm:smoothness_C_x}, $\bm h^x(\bm p) = \{h_i^x(\bm p): i \in [K]\} \in \mathcal{S}_K$ forms a valid probability vector for any $x \in \mathcal{X}(\bm\theta)$.
In fact, $\bm h^x(\bm p)$ can be interpreted as the probability distribution over the arms to be sampled when $x$ is identified as the principal pitfall.

We derive the following optimality conditions for~\eqref{eq:optimal_allocation_problem_general} in the cases where the optimal solution $\bm p^*$ is component-wise strictly positive. 
These conditions extend those presented in Theorem~\ref{thm:sufficient_for_fixed_budget_optimality}. 
The detailed proof can be found in Appendix~\ref{app:proof_general_KKT}.
\begin{theorem}
	\label{thm:KKT_general}
	Assume that $\bm p^*$ is component-wise strictly positive. 
	A feasible solution $(\phi, \bm p)$ to (\ref{eq:optimal_allocation_problem_general}) is optimal if and only if $\phi= \min_{x \in \mathcal{X}(\bm\theta)} C_{x}(\bm p)$ and there exist dual variables $\bm\mu \in \mathcal{S}_{|\mathcal{X}(\bm\theta)|}$ such that 
	\begin{subequations}
		\label{eq:KKT_general}
		\begin{align}
			& p_i = \sum_{x\in\mathcal{X}(\bm\theta)} \mu_{x} h_{i}^x, 
			\quad \forall  i\in [K], \label{eq:KKT_general_stationarity} \\
			& \mu_{x}(\phi - C_{x}(\bm p))=0, \quad \forall x\in \mathcal{X}(\bm\theta).\label{eq:KKT_general_cs}
		\end{align}
	\end{subequations}
\end{theorem}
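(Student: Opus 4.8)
The plan is to recognize the reformulation of~\eqref{eq:optimal_allocation_problem_general} as a convex program and read off its Karush--Kuhn--Tucker (KKT) conditions, then recast them into the stated form using the structural properties of $C_x$ from Lemma~\ref{lm:smoothness_C_x}. Introducing the epigraph variable $\phi$, the problem maximizes the linear objective $\phi$ subject to $\phi - C_x(\bm p) \le 0$ for each $x \in \mathcal{X}(\bm\theta)$, the equality $\sum_{i} p_i = 1$, and $p_i \ge 0$. Since each $C_x$ is concave by Lemma~\ref{lm:smoothness_C_x}, each constraint $\phi - C_x(\bm p)$ is convex, so this is a convex program with a linear objective and affine simplex constraints. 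Slater's condition holds---take any interior $\bm p$ together with $\phi$ strictly below $\min_x C_x(\bm p)$---so the KKT conditions are both necessary and sufficient for optimality.

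Next I would form the Lagrangian with multipliers $\mu_x \ge 0$ for the pitfall constraints, $\lambda$ for the simplex equality, and $\nu_i \ge 0$ for $p_i \ge 0$. Stationarity in $\phi$ yields $\sum_x \mu_x = 1$, which with $\mu_x \ge 0$ gives $\bm\mu \in \mathcal{S}_{|\mathcal{X}(\bm\theta)|}$. Because $\bm p^*$ is component-wise strictly positive, complementary slackness forces $\nu_i = 0$, so stationarity in $p_i$ reduces to $\sum_x \mu_x \, \partial C_x(\bm p)/\partial p_i = \lambda$ for every $i$. The complementary slackness $\mu_x(\phi - C_x(\bm p)) = 0$ is already the stated condition~\eqref{eq:KKT_general_cs}.

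The core step is converting the $p_i$-stationarity into~\eqref{eq:KKT_general_stationarity}. Multiplying $\sum_x \mu_x \, \partial C_x/\partial p_i = \lambda$ by $p_i$ and recalling from~\eqref{eq:selection_function_general} that $p_i \, \partial C_x/\partial p_i = h_i^x \, C_x$ gives $\sum_x \mu_x h_i^x C_x = \lambda p_i$. On the support of $\bm\mu$, complementary slackness lets me replace $C_x$ by $\phi$ while off-support terms drop out, so $\phi \sum_x \mu_x h_i^x = \lambda p_i$. It remains to identify $\lambda = \phi$: multiplying the $p_i$-stationarity by $p_i$, summing over $i$, and applying the Euler-type relation~\eqref{eq:PDE}, namely $\sum_i p_i \, \partial C_x/\partial p_i = C_x$, gives $\sum_x \mu_x C_x = \lambda$, and complementary slackness with $\sum_x \mu_x = 1$ turns the left side into $\phi$. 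Since $\bm p > 0$ forces $\phi = \min_x C_x(\bm p) > 0$, I may divide to obtain $p_i = \sum_x \mu_x h_i^x$, which is~\eqref{eq:KKT_general_stationarity}. Sufficiency follows by reversing these manipulations: given $\bm\mu \in \mathcal{S}_{|\mathcal{X}(\bm\theta)|}$ satisfying~\eqref{eq:KKT_general_stationarity}--\eqref{eq:KKT_general_cs}, dividing~\eqref{eq:KKT_general_stationarity} by $p_i > 0$ and using complementary slackness recovers the Lagrangian stationarity $\sum_x \mu_x \, \partial C_x/\partial p_i = \phi$, so the full KKT system holds and convexity certifies optimality.

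The main obstacle I anticipate is the identification $\lambda = \phi$, which is precisely where the homogeneity relation~\eqref{eq:PDE} is indispensable; without it the stationarity condition would carry the unknown dual $\lambda$ rather than the clean self-referential form in~\eqref{eq:KKT_general_stationarity}. A secondary point requiring care is justifying the replacement of $C_x$ by $\phi$ inside the weighted sums, valid only because terms with $\mu_x > 0$ satisfy $C_x = \phi$ by complementary slackness while terms with $\mu_x = 0$ contribute nothing; this, together with verifying $\phi > 0$ so that division is legitimate, completes the argument.
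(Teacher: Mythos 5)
Your proposal is correct and follows essentially the same route as the paper's proof: the epigraph reformulation with Slater's condition, the Lagrangian KKT system, multiplication of the $p_i$-stationarity by $p_i$ to invoke $p_i\,\partial C_x/\partial p_i = h_i^x C_x$, the complementary-slackness swap of $C_x$ for $\phi$ on the support of $\bm\mu$, and the Euler relation~\eqref{eq:PDE} to pin down $\lambda = \phi$. The only differences are cosmetic (the order in which $\lambda=\phi$ is established and your explicit remark that $\phi>0$ legitimizes the division), so no further changes are needed.
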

The condition~\eqref{eq:KKT_general} is an equivalent form of the KKT condition for optimality, where~\eqref{eq:KKT_general_stationarity} is the stationarity condition and~\eqref{eq:KKT_general_cs} is the complementary slackness condition.

The following result extends the information balance for BAI in~\eqref{eq:information_balance_BAI}. It is a direct corollary of~\eqref{eq:KKT_general} and the assumption that $\bm p^*$ is component-wise strictly positive. 
We defer the proof to Appendix~\ref{app:proof_crude_information_balance}.
\begin{corollary}[Crude information balance]\label{cor:crude_information_balance}
    Assume that $\bm p^*$ is component-wise strictly positive, then
    \begin{equation*}
        \Gamma_{\bm\theta}^* = \min_{x\in \mathcal{X}_i} C_x(\bm p^*), \quad \hbox{for all } i \in [K], \quad \text{where} \quad \mathcal{X}_{i} \triangleq \Bigl\{x \in \mathcal{X}(\bm\theta): \frac{\partial C_{x}(\bm p)}{\partial p_{i}}>0\Bigr\}.
    \end{equation*}
\end{corollary}
Surprisingly, this information balance condition proves too crude to guarantee optimality in pure-exploration problems beyond BAI. 
Even when combined with the correct overall balance condition, it may fail; see \citet[Example 4]{you2023information} for a counterexample. 
To address this challenge, we introduce dual variables that implicitly encode a more refined information balance.

\subsection{Dual-Directed Algorithm Design}\label{sec:algorithm_design_framework}
Based on Theorem~\ref{thm:KKT_general}, an allocation rule can be immediately derived by tracking the two sets of conditions in~\eqref{eq:KKT_general}.
Our general algorithmic design framework follows the same idea as that discussed in Section~\ref{sec:TTTS_IDS} and Section~\ref{sec:IDS_BAI}. 
In particular, our algorithm takes two steps: it first identifies a principal pitfall; and then selects an arm that is guided by the information gain relevant to the detected principal pitfall.

We first discuss how the principal pitfall is detected.
Recall that the smaller the value of $C_x$, the less information we have to rule out $x$ as the root cause of a potential incorrect answer.
Additionally, the dual variable $\mu_x$ can be interpreted as the proportion of time that $x$ is detected as the principal pitfall.
In our algorithm, the pitfall $x$ with the smallest (empirical estimation of) $C_x$ is detected as the principal pitfall $x_t$.
This step is guided by the complementary slackness conditions in~\eqref{eq:KKT_general_cs}, which indicate that only the pitfall $x$ with the smallest $C_x$ has $\mu_x > 0$ and therefore requires attention.

Once $x_t$ has been detected as the principal pitfall, the algorithm employs an information-directed selection (\name{IDS}) rule to select an arm for sampling, specified by $\bm h^{x_t}$ defined in~\eqref{eq:selection_function_general}.
This procedure, formalized in Algorithm~\ref{alg:IDS_PE}, extends the \name{IDS} rule for BAI (Algorithm~\ref{alg:IDS}) to general pure-exploration problems.
The \name{IDS} rule is designed to depend on the detected principal pitfall $x_t$, aiming to guide the average information gain required to overcome the principal pitfall.
In particular, this process satisfies the stationarity conditions in~\eqref{eq:KKT_general_stationarity}, where the sampling probability of arm $i$ is decomposed into a law-of-total-probability formula. Here, $h^{x}_i$ specifies the probability of sampling $i$ given that $x$ is detected as the principal pitfall. 

\begin{algorithm}[hbtp]
    \caption{Information-directed selection for general pure exploration}
    \label{alg:IDS_PE}
    \begin{algorithmic}[1]
    \renewcommand{\algorithmicrequire}{\textbf{Input:}}
    \Require{Sample allocation $\bm p$, estimate $\widehat{\bm\theta}$ and the detected principal pitfall $x$.}
    \State{\Return $I \in [K]$ sampled from the distribution $\bm h^{x}(\bm p; \widehat{\bm\theta})$.}
    \end{algorithmic}
\end{algorithm}

\begin{remark}[Information-directed selection]\label{rmk:IDS}
    $C_{x}(\bm p)$ is defined as the population version of the GLRT statistic, normalized by the sample size $t$, for testing $H_{0,x}: \bm\theta \in \mathrm{Alt}_x(\bm\theta_t)$ versus $H_{1,x}: \bm\theta \notin \mathrm{Alt}_x(\bm\theta_t)$.
    Thus, it quantifies the information gathered per unit sample under the allocation vector $\bm p$ to determine whether the pitfall $x$ is responsible for an incorrect answer. 
    The PDE characterization of $C_{x}(\bm p)$ in~\eqref{eq:PDE} reveals a decomposition of this information.
    To interpret, $\frac{\partial C_{x}(\bm p)}{\partial p_{i}}$ represents the information gain per unit allocation to arm $i$, while $p_{i}\frac{\partial C_{x}(\bm p)}{\partial p_{i}}$ quantifies the information gain per unit sample from allocating a proportion $p_{i}$ of samples to arm $i$.
    The selection function $h^{x}_{i}(\bm p) = \frac{p_{i} \frac{\partial C_{x}(\bm p)}{\partial p_{i}}}{C_{x}(\bm p)}$ suggests that sampling should be done \textit{proportionally} to the information gain contributed by each arm when determining the allocation among the candidates. 
\end{remark}

Moreover, the selection functions $h^{x}_{i}(\bm p)$ identify a set of candidates whose allocation probabilities are positive for a given pitfall $x$ under allocation $\bm p$.
\begin{definition}[Active candidate set]\label{def:candidate}
    The active candidate set for a pitfall $x$ under allocation $\bm p$ is
    \[
        \mathfrak{C}_x \triangleq \Bigl\{i \in [K] : \frac{\partial C_{x}(\bm p)}{\partial p_{i}} > 0\Bigr\}.
    \]
\end{definition}
The active candidate set is essential, as only arms in $\mathfrak{C}_x$ contribute to the information gain that is critical for overcoming pitfall $x$. For the specific case of best-arm identification, the cardinality of $\mathfrak{C}_x$ is always two, leading to the well-known class of top-``two'' algorithms. However, for general pure-exploration tasks, the cardinality of the candidate set may depend on the pitfall $x$ and can differ from two; see examples in Appendix~\ref{app:alg_ex}.

\begin{remark}[The role of the dual variables]
    The novelty of our algorithm design framework lies in the incorporation of dual variables. Existing constant-optimal algorithms typically fall into two categories. The first solves the optimal allocation problem using plug-in estimators at each iteration and tracks the resulting solutions, as pioneered by \cite{GarivierK16} and extended in works such as \cite{kato2024role,al2022complexity}. The second tracks optimality conditions solely based on the allocation vector $\bm p$ (e.g.,~\eqref{eq:information_balance_BAI} and~\eqref{eq:KKT_equiv_stationarity}), as demonstrated in \cite{chen2023balancing,avci2023using,bandyopadhyay2024optimal}. Our approach is more fundamental and flexible due to the introduction of dual variables, which encode the information balance structure described in Corollary~\ref{cor:crude_information_balance}.
    This enables us to avoid explicit characterization of the optimal information balance structure, which is often impractical without knowledge of the true parameter $\bm\theta$ \citep{zhang2023asymptotically}. Furthermore, dual variables facilitate a law-of-total-probability-type stationarity condition~\eqref{eq:KKT_general_stationarity}, naturally leading to the \name{IDS} rule that achieves optimality across various pure-exploration tasks.
\end{remark}

\subsection{A Modular Template for Proposed Algorithms} \label{sec:template_PAN}

Our algorithm is designed to nominate relevant candidates tailored to the detected principal pitfall (as defined in Definition \ref{def:candidate}), from which we extract the information necessary to overcome that pitfall effectively. 
Accordingly, we name it the Pitfall-Adapted Nomination (\name{PAN}) algorithm.

Our primary focus is on \name{PAN} allocation rules, which determine the arm to be sampled at each time step based on the observed history. These rules are independent of the total budget of samples $T$ or the confidence parameter $\delta$, making them anytime.
In the fixed-confidence setting, a \name{PAN} allocation rule is coupled with a stopping rule to ensure $\delta$-correctness. As detailed in Section~\ref{sec:PE_prelim}, we adopt standard stopping and decision rules from the literature for this setting.
In contrast, the fixed-budget and posterior convergence rate settings terminate at a pre-specified budget $T$ and are evaluated based on their respective PCS measures. 
This versatility demonstrates the adaptability of the \name{PAN} framework to various performance criteria.

We propose \name{PAN} allocation rules that follow three steps: first, estimate (\name{est}) the unknown parameters using $\widehat{\bm\theta}$, which also provides the most plausible answer $\mathcal{I}(\widehat{\bm\theta})$ to the exploration query; next, detect (\name{det}) the principal pitfall $x$ that could make $\mathcal{I}(\widehat{\bm\theta})$ incorrect; and finally, select (\name{sel}) one arm to sample based on the detected principal pitfall $x$. 
A template for our allocation rule is summarized in Algorithm~\ref{alg:PAN_template}.

\begin{algorithm}
    \caption{Pitfall-Adapted Nomination (\name{PAN}) allocation rule for general pure exploration}
    \label{alg:PAN_template}
    \begin{algorithmic}[1]
        \renewcommand{\algorithmicrequire}{\textbf{Input:}}

        \Require{Current history $\mathcal{H}$, estimation rule ($\name{est}$), detection rule ($\name{det}$), and selection rule ($\name{sel}$).}

        \State{Run estimation rule to obtain  $\widehat{\bm\theta} = \name{est}(\mathcal{H})$.}

        \State{Run detection rule to obtain the principal pitfall $x = \name{det}(\widehat{\bm\theta}, \mathcal{H})$.} 

        \State{\Return $I = \name{sel}\bigl(x,\widehat{\bm\theta},\mathcal{H} \bigr)$.} 
    \end{algorithmic}
\end{algorithm}
We will follow the naming convention of $\name{est-det-sel}$ for our proposed \name{PAN} algorithms. 
Below, we elaborate on the options for the three subroutines required by \name{PAN}. 
We provide explicit formulas of $\mathcal{X}(\bm\theta), C_x(\bm p)$ and $\bm h^x(\bm p)$ for a wide range of pure-exploration problems in Appendix~\ref{app:alg_ex}.

\subsubsection{Estimation (\name{est}) Rules}\label{sec:est rule}
We consider two estimation rules. The empirical-best ($\name{EB}$) rule estimates the means using the empirical mean $\bm\theta_t$ based on the history $\mathcal{H}_t$ at time $t$. The Thompson-sampling ($\name{TS}$) rule estimates the means by drawing a sample $\widetilde{\bm\theta}_t$ from the posterior distribution $\Pi_t$ at time $t$.

The estimate $\widehat{\bm\theta}$ returned by the estimation rule determines the perceived best answer $\mathcal{I}(\widehat{\bm\theta})$, the pitfall set $\mathcal{X}(\widehat{\bm\theta})$, and the decomposed alternative set $\mathrm{Alt}_x(\widehat{\bm\theta})$.
Along with the sample allocation $\bm p$, this determines the estimated generalized Chernoff information $C_{x}(\bm p, \widehat{\bm\theta})$.

\subsubsection{Detection (\name{det}) Rules} \label{sec:det rule}
We propose three detection rules. 
\begin{enumerate}
    \item The \name{KKT} detection rule (Algorithm~\ref{alg:KKT_det})  explicitly follows the complementary slackness conditions~\eqref{eq:KKT_general_cs}, and identifies the principal pitfall as the $x$ that minimizes $C_{x}(\bm p, \widehat{\bm\theta})$.
    \begin{algorithm}[hbtp]
    	\caption{\name{KKT} detection rule for general pure exploration}
    	\label{alg:KKT_det}
    	\begin{algorithmic}[1]
    		\renewcommand{\algorithmicrequire}{\textbf{Input:}}

    		\Require{Current sample allocation $\bm p$ and an estimate $\widehat{\bm\theta}$.}

            \State{\Return principal pitfall $x = \argmin_{x\in\mathcal {X}(\widehat{\bm\theta})} C_{x}(\bm p, \widehat{\bm\theta})$, breaking ties arbitrarily.} 
    	\end{algorithmic}
    \end{algorithm}
    \item The \name{TS} detection rule (Algorithm~\ref{alg:TS_det}) generalizes the \name{TTTS} algorithm (Algorithm~\ref{alg:TTTS}) for BAI to more general pure-exploration problems. It identifies an alternative problem instance with a different answer (and hence, a pitfall) that is most likely to emerge from the posterior distribution. 
    This framework offers a systematic approach to adapting the classical TS algorithm for pure-exploration problems.
    \begin{algorithm}[hbtp]
        \caption{\name{TS} detection rule for general pure exploration}
        \label{alg:TS_det}
        \begin{algorithmic}[1]
            \renewcommand{\algorithmicrequire}{\textbf{Input:}}

            \Require{Estimation $\widehat{\bm\theta}$ and posterior distribution $\Pi$.}

            \State{Repeatedly sample $\widetilde{\bm\theta} \sim \Pi$ until $\widetilde{\bm\theta} \in \mathrm{Alt}(\widehat{\bm\theta})$.}
                \State{\Return principal pitfall $x \in \bigl\{x' \in \mathcal {X}(\widehat{\bm\theta}) : \widetilde{\bm\theta} \in \mathrm{Alt}_{x'}(\widehat{\bm\theta})\bigr\}$, breaking ties arbitrarily.} 
        \end{algorithmic}
    \end{algorithm}
    \item The Posterior-Probability-Sampling (\name{PPS}) detection rule (Algorithm~\ref{alg:PPS_det}) samples the principal pitfall $x$ with probability proportional to $\Pi(\mathrm{Alt}_x(\widehat{\bm\theta}))$, where $\Pi$ denotes the current posterior distribution.
    \begin{algorithm}[hbtp!]
        \caption{\name{PPS} detection rule for general pure exploration}
        \label{alg:PPS_det}
        \begin{algorithmic}[1]
            \renewcommand{\algorithmicrequire}{\textbf{Input:}}

            \Require{Estimation $\widehat{\bm\theta}$ and posterior distribution $\Pi$.}

            \State{\Return principal pitfall $x \in \mathcal {X}(\widehat{\bm\theta})$ sampled with probability proportional to $\Pi\bigl(\mathrm{Alt}_{x}(\widehat{\bm\theta})\bigr)$.} 
        \end{algorithmic}
    \end{algorithm}
\end{enumerate}

Following the same rationale as in Section~\ref{sec:top_two_CS}, \citet[Proposition 5]{russo_simple_2020} establish that the \name{TS} and \name{PPS} detection rules are asymptotically equivalent to the \name{KKT} rule, as $t\to\infty$.

\subsubsection{Selection (\name{sel}) Rule}\label{sec:sel rule}
For the selection rule, we follow the \name{IDS} rule (Algorithm~\ref{alg:IDS_PE}). 

\begin{remark}[When does the design framework work?] The derivation of the \name{PAN} algorithm relies only on two ingredients that lead directly to the KKT conditions in Theorem~\ref{thm:KKT_general}: (i) Assumption \ref{Assumption:correct_answer} induces the maximin convex optimization problem central to our framework, which is satisfied by many common exploration queries, as demonstrated in Appendix~\ref{app:alg_ex}; and (ii) by definition~\eqref{eq:def_C_x} using the GLRT, the generalized Chernoff information is homogeneous of degree one, satisfying $C_x(t \cdot \bm p) = t\cdot C_x(\bm p)$ for any $t > 0$, which is equivalent to the PDE characterization in~\eqref{eq:PDE} by Euler's theorem for homogeneous functions; see \citet[p. 287]{apostol1969calculus2} for example.
\end{remark}

\begin{remark}[Extensions]
    Our analysis extends beyond the base case to accommodate: (i) reward distributions outside the single-parameter exponential family, and (ii) exploration queries with multiple correct answers. We present these extensions in Section~\ref{sec:extensions}.
    Specifically, in Section~\ref{sec:epsilon_BAI}, we consider the problem of identifying one of the $\varepsilon$-good alternatives, known as $\varepsilon$-best-arm identification ($\varepsilon$-BAI,  \citealt{eckman2018guarantees,jourdan2023varepsilonbestarm}).
    This problem aligns with R\&S under a PGS guarantee. The reason $\varepsilon$-BAI is compatible with our framework is that it can be reformulated as a modified pure-exploration problem with a unique answer.
    In Section~\ref{sec:unknown_variance}, we examine the Gaussian case with unknown variances---a scenario in which the reward distribution no longer belongs to a single-parameter exponential family. Similar analyses are possible for other distributions; for instance, \cite{agrawal2020optimal} provides an analysis for heavy-tailed distributions. 
    Importantly, our algorithm can be readily extended to accommodate these cases.
\end{remark}

\subsection{Universal Efficiency}
\label{sec:universal efficiency}

In this section, we apply Algorithm~\ref{alg:PAN_template} to Gaussian BAI (Example~\ref{ex:bai}) and to pure-exploration thresholding bandits (Example~\ref{ex:TBP}), demonstrating that two specialized versions of Algorithm~\ref{alg:PAN_template} are universally optimal for each problem, respectively.

\subsubsection{Gaussian BAI}
For BAI presented in Example~\ref{ex:bai} with Gaussian rewards, we focus on the \name{TTTS} sub-routine (Algorithm~\ref{alg:TTTS}) combined with the information-directed selection sub-routine (Algorithm~\ref{alg:IDS}).
Under the naming convention of Section~\ref{sec:template_PAN}, this is the \name{TS-TS-IDS} algorithm.
To honor the original naming convention proposed in \citet{russo_simple_2020}, we refer to this algorithm as \name{TTTS-IDS}.
As a theoretical guarantee, we prove that for Gaussian BAI, \name{TTTS-IDS} satisfies the sufficient condition for optimal allocation rules in Theorem~\ref{thm:sufficient_for_optimality_general}. This resolves an open problem in the pure-exploration literature raised by \citet[Section 8]{russo_simple_2020}.
The proof is deferred to Appendix~\ref{app:TTTS_optimality}.

\begin{theorem}
\label{thm:main}
	\name{TTTS-IDS} ensures that, for any $\thetabf$ satisfying $\theta_i\neq \theta_j$ for all $i\neq j$, we have
    $
        \bm{p}_t \Lto \bm{p}^*,
    $
    where the optimal allocation $\bm{p}^*$ is identified in Theorem~\ref{thm:sufficient_for_fixed_budget_optimality},
    which verifies the sufficient condition for optimal allocation rules in Theorem~\ref{thm:sufficient_for_optimality_general}.
\end{theorem}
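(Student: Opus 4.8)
The plan is to verify the sufficient condition of Theorem~\ref{thm:sufficient_for_optimality_general} by establishing $\bm p_t \Lto \bm p^*$, where $\bm p^*$ is the unique component-wise strictly positive allocation pinned down jointly by the information balance condition~\eqref{eq:information_balance_BAI} and the stationarity condition~\eqref{eq:KKT_equiv_stationarity} of Theorem~\ref{thm:sufficient_for_fixed_budget_optimality}. Because $\Lto$ is strictly stronger than almost sure convergence (Lemma~\ref{lm:modes_of_convergence_simple}), every step of the argument must be quantitative: for each $\epsilon>0$ I must exhibit a random time $T_\epsilon\in\mathcal{L}$, that is, with $\E[T_\epsilon]<\infty$, after which $\|\bm p_t-\bm p^*\|\le\epsilon$. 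I would decompose the proof into four stages---sufficient exploration, leader fixation, information balance induced by the challenger, and stationarity enforced by \name{IDS}---and in each stage upgrade a mere ``eventually'' statement into one holding after a stabilization time lying in $\mathcal{L}$.

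First I would establish that no arm is starved. The key is a self-correcting mechanism: when $N_{t,i}$ is small relative to $t$, the posterior variance $\sigma_{t,i}^2=\sigma_i^2/N_{t,i}$ is large, so arm $i$ is disproportionately likely to be realized as the maximizing arm of a posterior draw---hence proposed as leader or challenger---and, through the selection probabilities $h_i^j(\bm p_t)$ of Algorithm~\ref{alg:IDS}, actually sampled. Making this quantitative yields a lower bound of the form $N_{t,i}\gtrsim\sqrt{t}$ for every $i$, valid after a time in $\mathcal{L}$. This forces the concentration $\bm\theta_t\Lto\bm\theta$ of the sample means; and since the $\theta_i$ are distinct, both the empirical best arm and the maximizing arm of a posterior draw coincide with $I^*$ after a time in $\mathcal{L}$, so that the leader satisfies $I_t^{(1)}=I^*$ in the $\Lto$ sense.

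Conditioning on the leader being $I^*$, I would next control the challenger distribution. Following the heuristic of Section~\ref{sec:top_two_CS} made rigorous via \citet[Proposition~5]{russo_simple_2020}, arm $j\neq I^*$ is proposed as challenger with probability governed asymptotically by $\exp(-t\,C_j(\bm p_t))$, so the challenger concentrates on the arm minimizing $C_j(\bm p_t)$; since each $C_j$ in~\eqref{eq:C_j_GaussianBAI} is strictly increasing in $p_j$ and in $p_{I^*}$, this proposal mechanism exerts a drift that equalizes the $C_j(\bm p_t)$'s, which is exactly the information balance condition~\eqref{eq:information_balance_BAI}. In parallel, \name{IDS} samples the leader $I^*$ given challenger $j$ with probability $h_{I^*}^j(\bm p_t)$, so by the law of total probability the instantaneous sampling rate of $I^*$ equals $\sum_{j\neq I^*}\mu_{t,j}\,h_{I^*}^j(\bm p_t)$, where $\mu_{t,j}$ is the empirical challenger frequency. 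Identifying the limiting $\mu_j^*$ with the dual variables in~\eqref{eq:dual} shows that this relation converges to the stationarity condition~\eqref{eq:KKT_equiv_stationarity}. Since~\eqref{eq:information_balance_BAI} and~\eqref{eq:KKT_equiv_stationarity} have the unique joint solution $\bm p^*$, the two drifts together admit $\bm p^*$ as their only fixed point.

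The hard part is converting these coupled, history-dependent drifts into the $\mathcal{L}$-convergence $\bm p_t\Lto\bm p^*$ rather than into mere almost sure convergence, precisely because both the leader and the challenger are random and depend on the entire past through $(\bm p_t,\bm\theta_t)$. My plan is to show that once $\bm p_t$ is $\epsilon$-away from $\bm p^*$, the combined challenger proposal and \name{IDS} selection produce a strict expected decrease in a suitable potential---for instance $\max_{j}C_j(\bm p_t)-\min_{j}C_j(\bm p_t)$ together with the stationarity residual---with a drift bounded below uniformly on the $\epsilon$-shell. Converting this uniform drift into exponentially decaying tail bounds for the deviation probabilities, and then applying a last-exit-time argument, would furnish the stabilization time $T_\epsilon\in\mathcal{L}$ and hence the claim. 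I expect the crux to be the uniform exponential control of the challenger frequencies over the relevant range of allocations, since the resampling probabilities can become exponentially small whenever some $C_j(\bm p_t)$ is large; taming this rare-challenger regime while preserving a finite-expectation bound is the most delicate estimate.
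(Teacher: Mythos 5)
Your stages 1--3 (sufficient exploration, posterior concentration, challenger asymptotics via \citealt[Proposition~5]{russo_simple_2020}) line up with the preliminary steps of the paper's proof, but the stage where you place all of the real work --- converting the ``drifts'' into $\mathcal{L}$-convergence via a potential function with ``a drift bounded below uniformly on the $\epsilon$-shell'' --- is both the crux and the point where your plan breaks down. One round of \name{TTTS-IDS} changes $\bm p_t$ by $O(1/t)$, so any expected decrease of a potential such as $\max_j C_j(\bm p_t)-\min_j C_j(\bm p_t)$ is necessarily $O(1/t)$ as well; a uniform (constant-order) drift lower bound on a shell cannot exist, and with $1/t$-size drifts the passage to exponentially decaying deviation probabilities and a last-exit time in $\mathcal{L}$ is exactly the open difficulty, not a routine conversion. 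Moreover, tracking the stationarity condition~\eqref{eq:KKT_equiv_stationarity} through empirical challenger frequencies $\mu_{t,j}$ is circular: those frequencies are only well behaved once the allocation is already known to converge.

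The paper's proof avoids both problems with an idea your sketch is missing: the stationarity condition is never enforced by a restoring drift at all. In its sum-of-squares form~\eqref{eq:overall_balance_GaussianBAI} it is an (approximately) \emph{conserved quantity} of the \name{IDS} dynamics. Writing $G_t=\Psi_{t,I^*}^2-\sum_{j\neq I^*}\Psi_{t,j}^2$, the specific form of the selection probabilities $h^j_{I^*}$ makes the leading term of $G_{t+1}-G_t$ proportional to $p_{t,j}\Psi_{t,I^*}-p_{t,I^*}\Psi_{t,j}$, which is $\widetilde{O}(t^{0.6})$ because $N_{t,i}$ and $\Psi_{t,i}$ differ by at most $W_2t^{0.6}$ (Lemma~\ref{lem:psi_bounds}, Corollary~\ref{cor:W2}); summing increments gives $G_t=o(t^2)$, i.e.\ the empirical overall balance of Proposition~\ref{prop:empirical overall balance}, with no drift argument needed. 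This identity then powers a \emph{one-sided} self-correction that replaces your potential argument: if some ratio satisfies $w_{t,j}/w_{t,I^*}>(p^*_j+\epsilon)/p^*_{I^*}$, the balance identity guarantees the existence of an \emph{under-sampled} comparator arm $A_t$ with $p_{t,A_t}/p_{t,I^*}\le p^*_{A_t}/p^*_{I^*}$, and comparing the Gaussian tails $\Phi(-\sqrt{2tC_{t,j}})$ against $\Phi(-\sqrt{2tC_{t,A_t}})$ shows $\psi_{t,j}$ is exponentially small (Lemma~\ref{lem:over sampled implies exponentially small}); summing these exponentially small terms bounds the total excess allocation by a constant $\kappa_\epsilon$, yielding a stabilization time in $\MGF$ (Lemma~\ref{lem:all arms not over sampled}), and upper bounds on all ratios plus the balance identity force convergence. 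Your own diagnosis of the ``rare-challenger regime'' as the delicate point is exactly right, but resolving it requires the under-sampled comparator furnished by the conserved balance --- a mechanism your proposal does not supply.
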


In conjunction with Theorem~\ref{thm:as_convergence} on the fixed-budget posterior convergence rate and the discussion of the static fixed-budget setting in Remark~\ref{rmk:other_settings}, the \name{TTTS-IDS} is optimal across all three settings.

\begin{remark}[Fixed-confidence optimality of other top-two algorithms]\label{rmk:optimality_other_alg}
	The analysis of \name{TTTS-IDS} is arguably the most complex among the top-two algorithms, as both the leader and challenger are random variables conditioned on the history. Our proof strategy can be extended to other top-two algorithms combined with \name{IDS}, such as \name{TS-KKT-IDS}, \name{TS-PPS-IDS}, and those presented in \cite{russo_simple_2020, Qin2017, Shang2019, jourdan2022top}.
\end{remark}

\subsubsection{Pure-Exploration Thresholding Bandits}\label{sec:TBP}

For pure-exploration thresholding bandits discussed in Example~\ref{ex:TBP}, we prove the universal efficiency of the \name{EB-KKT-IDS} algorithm.
For this problem, we have $\mathcal{X} = [K]$ and the optimal allocation problem simplifies to
\begin{align*}
    \Gamma^{*}_{\thetabf}
        & = \max_{\bm p \in \mathcal{S}_K} \min_{i \in \mathcal{X}}\inf_{\bm\vartheta \in \mathrm{Alt}_{i}} \sum_{i} p_i d(\theta_i, \vartheta_i)  = \max_{\bm p \in \mathcal{S}_K} \min_{i \in \mathcal{X}}\inf_{(\thr - \vartheta_i)(\thr -  \theta_i) < 0} p_i d(\theta_i, \vartheta_i) 
		= \max_{\bm p \in \mathcal{S}_K} \min_{i \in [K]} p_i d(\theta_i, \thr),
\end{align*}
where the second equality follows from the fact that we can always set $\vartheta_j = \theta_j$ for $j \neq i$, and the last equality follows from the monotonicity of the KL divergence.
We apply Lemma~\ref{lm:smoothness_C_x} to obtain that $\frac{\partial C_i}{\partial p_i} = d(\theta_{i}, \thr)$ and $\frac{\partial C_i}{\partial p_j} = 0$ for any $j \neq i$. Hence, we have $h^{i}_{i}(\bm p) = \frac{p_{i} d(\theta_{i}, \thr)}{C_{i}(\bm p)} = 1$, and $h^{i}_{j}(\bm p) = 0$, for all $j \neq i$  and $i \in [K]$.

With the explicit forms of $\mathcal{X}$, $C_x$, and $\bm h^{x}$ derived, we can immediately implement \name{EB-KKT-IDS} algorithm.
Theorem~\ref{thm:main_TBP} establishes that \name{EB-KKT-IDS} is universally efficient. The proof deferred to Appendix~\ref{app:TBP_optimality}.

\begin{theorem}
\label{thm:main_TBP}
\name{EB-KKT-IDS} ensures that for any $\thetabf\in\Theta$, $\bm{p}_t \Lto \bm{p}^*$.
\end{theorem}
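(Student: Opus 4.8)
The plan is to verify the sufficient condition of Theorem~\ref{thm:sufficient_for_optimality_general} directly, i.e.\ to establish $\bm p_t \Lto \bm p^*$ for \name{EB-KKT-IDS}. I would first exploit the fully decoupled structure of the thresholding problem. As computed in the excerpt via Lemma~\ref{lm:smoothness_C_x}, $C_i(\bm p) = p_i\, d(\theta_i, \thr)$ depends on $\bm p$ only through $p_i$, with $\frac{\partial C_i}{\partial p_i} = d(\theta_i,\thr)$, so the selection probabilities collapse to $h^i_i(\bm p)=1$. Consequently \name{EB-KKT-IDS} is a purely deterministic rule given the data: it detects the principal pitfall $i_t = \argmin_{i\in[K]} p_{t,i}\, d(\theta_{t,i}, \thr) = \argmin_{i\in[K]} N_{t,i}\, d(\theta_{t,i},\thr)$ and samples arm $i_t$. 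Writing $c_i \triangleq d(\theta_i,\thr) > 0$, Corollary~\ref{cor:crude_information_balance} forces $p_i^*\, c_i = \Gamma^*_{\thetabf}$ for every $i$ (here $\mathcal{X}_i=\{i\}$), so the target is the unique allocation equalizing the products $p_i c_i$, namely $p_i^* = (1/c_i)/\sum_{k}(1/c_k)$.

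The backbone of the argument is a balancing (apportionment) analysis. If the true coefficients $c_i$ were known and the rule sampled $\argmin_i N_{t,i} c_i$, then incrementing the minimizer keeps the band $\max_i N_{t,i}c_i - \min_i N_{t,i}c_i \le \max_k c_k$ invariant for all $t$; dividing by $t$ gives $|p_{t,i}c_i - p_{t,j}c_j| \le \max_k c_k / t$, which together with $\sum_i p_{t,i}=1$ pins down $p_{t,i}\to p_i^*$ at the deterministic rate $O(1/t)$. The actual rule replaces $c_i$ by the plug-in estimate $\widehat c_{t,i} = d(\theta_{t,i},\thr)$; a short perturbation of the same inequality shows that if $|\widehat c_{t,i}-c_i|\le\epsilon$ for all $i$, then the selected arm's \emph{true} product is within $\max_k c_k + 2\epsilon t$ of the minimum, so after dividing by $t$ the band of true products carries an additive $O(\epsilon)$ slack. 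The remaining work is therefore to certify that the estimation error perturbs this clean balancing only over a random time lying in $\mathcal{L}$.

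To this end I would proceed in two stages. First, sufficient exploration: I would show there exist $\alpha>0$ and a random time $T_0\in\mathcal{L}$ with $N_{t,i}\ge \alpha t$ for all $i$ and $t\ge T_0$. The mechanism is that a starved arm has a bounded (frozen) product $N_{t,i}\widehat c_{t,i}$, whereas the sampled arms accumulate linearly; once the running minimum over sampled arms exceeds the frozen value, the starved arm is selected. Converting ``eventually sampled'' into a bound on the \emph{expected} delay requires a concentration/union-bound argument ensuring $\sum_t \Prob(T_0>t)<\infty$. Second, given linear growth of the counts, exponential-family deviation bounds yield $|\widehat c_{t,i} - c_i|\le \epsilon$ for all $t$ past some $T_1\in\mathcal{L}$ (again via summability of the per-$t$ tail probabilities, now bounded by $2e^{-\alpha t I(\epsilon)}$); on this event the perturbed balancing of the previous paragraph delivers $|p_{t,i}-p_i^*|\le \epsilon'$ for all $t\ge T_2\in\mathcal{L}$. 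Since $\epsilon$ (hence $\epsilon'$) is arbitrary, this is exactly $\bm p_t \Lto \bm p^*$.

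The hard part will be the first stage---establishing sufficient exploration in the $\mathcal{L}$ sense rather than merely almost surely. The difficulty is a feedback loop specific to plug-in detection: an arm whose mean is transiently over-estimated (so that $d(\theta_{t,i},\thr)$ is inflated) is assigned a large product $N_{t,i}\widehat c_{t,i}$ and hence avoided, which in turn prevents its estimate from being corrected. I would break this loop by arguing that, whatever the bounded frozen product of a starved arm, the linearly growing products of the remaining arms force a pull within a delay whose tail is summable, so that the transient large deviations contribute only an integrable expected delay. Controlling this delay uniformly while the target $\bm p^*$ itself depends on the unknown $c_i$ is the crux of the proof, and is precisely where the decoupled structure ($h^i_i=1$, hence no randomized selection step to track) makes the analysis tractable.
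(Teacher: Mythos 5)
Your overall skeleton --- (i) sufficient exploration, (ii) uniform consistency of the plug-in coefficients $\widehat c_{t,i}=d(\theta_{t,i},\thr)$, (iii) a self-correction argument that pins down the proportions --- matches the paper's proof in Appendix~\ref{app:TBP_optimality}, but your step (iii) is genuinely different. The paper proves exploration at rate $\min_i N_{t,i}\geq\sqrt{t/K}$ (via the under-/well-sampled sets $U_t^s,V_t^s$ and the argument of Shang et al.), deduces $C_{t,i}/D_{t,i}\MGFto 1$, and then runs a soft over-sampling argument: if $w_{t,i}>p_i^*+\epsilon$ at large $t$, some other arm is under-sampled, the ratio of plug-in products exceeds one, hence the over-sampled arm has $\psi_{t,i}=0$; self-correction plus sum-to-one closes the proof. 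You instead exploit the determinism and full decoupling of \name{EB-KKT-IDS} through the band-invariance (apportionment) estimate $\max_i N_{t,i}c_i-\min_i N_{t,i}c_i\leq\max_k c_k$ for the exact rule and its $2\epsilon t$-perturbed version for the plug-in rule. Both your induction on the band and the approximate-argmin perturbation check out, and your route is more quantitative: it yields an explicit $O(1/t)+O(\epsilon)$ rate and the closed form $p_i^*=(1/c_i)/\sum_k(1/c_k)$ (correctly extracted from Corollary~\ref{cor:crude_information_balance}, since $\mathcal{X}_i=\{i\}$ here), neither of which the paper makes explicit. The paper's softer argument, in exchange, recycles the machinery built for the \name{TTTS-IDS} analysis rather than introducing a new combinatorial estimate.

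The one genuine defect is the ordering of your two stages. As stated, stage~1 demands linear growth $N_{t,i}\geq\alpha t$ \emph{before} any consistency is available, yet your own mechanism for stage~1 (``the sampled arms accumulate linearly'') presupposes that the plug-in coefficients $\widehat c_{t,j}$ of the sampled arms are bounded below, which presupposes their estimates have left a neighborhood of $\thr$, which presupposes sampling --- a circle. The fix is what the paper does: prove a \emph{weaker} exploration rate first, using only the crude two-sided bounds that the uniform maximal inequality for $W_1$ (Lemma~\ref{lem:W1 and W2}) supplies with no consistency at all. Concretely, any starved arm's frozen product is at most $N_{t,i}\cdot\mathrm{poly}(W_1)$; any arm that has received $N^*(W_1)$ pulls satisfies $\widehat c_{t,j}\geq d(\theta_j\mp\Delta_{\min}/2,\thr)>0$ \emph{uniformly in $t$ thereafter} (this uniformity is exactly what $W_1\in\MGF$ buys and what a per-$t$ union bound does not); and a counting argument then bounds the time until every arm has $N^*(W_1)$ pulls by an $\MGF$ (hence $\mathcal{L}$) random time. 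Only after this bootstrap do you get uniform $\epsilon$-accuracy of all $\widehat c_{t,i}$, at which point your band argument delivers the conclusion --- with linear growth of every $N_{t,i}$ emerging as an \emph{output} of stage~2, not an input to stage~1. The paper's $\sqrt{t/K}$ threshold plays precisely this bootstrap role; with that reordering, your proposal is a complete and valid alternative proof.
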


\section{Extensions}\label{sec:extensions}

In this section, we briefly discuss several important problems that the main framework introduced in Section~\ref{sec:PAN} does not cover.
The first is related to alternative performance criteria, specifically the probability of good selection (PGS), also known as $(\epsilon,\delta)$-PAC (probably approximately correct) in the learning literature. 
The second concerns a practical scenario in which Gaussian distributions have unknown variances.

\subsection{Optimal Probability of Good Selection Guarantee: Gaussian \texorpdfstring{$\varepsilon$}{Epsilon}-Best-Arm Identification}\label{sec:epsilon_BAI}
This section demonstrates that our framework can be extended to develop optimal algorithms for a variant of the R\&S problem, specifically R\&S with a PGS guarantee. 
This approach is related to, but distinct from, the PCS guarantee in the indifference-zone (IZ) formulation \citep{perng1969comparison,jennison1982asymptotically}. 
For a detailed discussion, see \citet{eckman2018guarantees} and \citet{hong2021review}.
In the bandit literature, this problem is referred to as the $\varepsilon$-best-arm identification problem \citep{doi:10.1080/07474946.2021.1847965, jourdan2023varepsilonbestarm}, or $(\epsilon,\delta)$-PAC learning. 

Consider the same $K$-armed bandits.
Let $\I_{\varepsilon}(\bm\theta) \triangleq \{i \in [K]: \theta_i \ge \theta_{I^*} - \varepsilon\}$ denote the set of ``good enough arms'', whose gap from the best arm $I^* = \argmax_{i\in[K]} \theta_i$ (assuming uniqueness) is within a specified tolerance $\varepsilon$. 
The $\varepsilon$-BAI problem aims to find \textit{any} arm $i \in \I_{\varepsilon}(\bm\theta)$.

To apply our framework, the first step is to establish a lower bound on the problem complexity. Notice that the $\varepsilon$-BAI problem differs significantly from the BAI problem in that the correct answer is non-unique; 
any arm $i \in \I_{\varepsilon}$ can serve as a correct answer. 
When $\varepsilon$ exceeds the gap between the best and second-best arm, there may be \textit{multiple correct answers}. 
For a discussion on pure-exploration problems with multiple correct answers, see \cite{Degenne2019}.

In this section, we restrict our focus to Gaussian bandits with a homogeneous variance of $\sigma^2 = 1$.
For Gaussian bandits, \citet[Lemma~9]{jourdan2023varepsilonbestarm} establishes the lower bound
\[
	\lim_{\delta\to 0} \frac{\E_{\bm \theta}[\tau_\delta]}{\log(1/\delta)} \ge (\Gamma^*_{\bm\theta,\varepsilon})^{-1}, \text{ where }\Gamma^*_{\bm\theta,\varepsilon} 
    = \max_{i \in \I_{\varepsilon}(\bm\theta)}\max_{\bm p \in \mathcal{S}_K} \min_{j \neq i} \frac{(\theta_i - \theta_j + \varepsilon)^2}{2\left(\frac{1}{p_i} + \frac{1}{p_j}\right)}
    = \max_{\bm p \in \mathcal{S}_K} \min_{j \neq I^*} \frac{(\theta_{I^*} - \theta_j + \varepsilon)^2}{2\left(\frac{1}{p_{I^*}} + \frac{1}{p_j}\right)}.
\]
To interpret the complexity lower bound, note that finding any $\varepsilon$-good arm can be achieved by asking, for each arm $i$, the question: Is arm $i$ an $\varepsilon$-good arm?
This question can be further decomposed into smaller hypothesis tests $H_{0,i,j}: \theta_j - \varepsilon \ge \theta_i$ versus $H_{1,i,j}: \theta_j - \varepsilon < \theta_i$ for any $j \neq i$.
To see this, if there exists another arm $i \neq j$ for which $H_{0,i,j}$ holds, then arm $i$ cannot be $\varepsilon$-good, as $\theta_j - \theta_i > \varepsilon$.
Therefore, to assert that arm $i$ is $\varepsilon$-good, we need to reject $H_{0,i,j}$ for all $j \neq i$.
The term $\frac{(\theta_i - \theta_j + \varepsilon)^2}{2\left(1/p_i + 1/p_j\right)}$ represents the test statistic, where a higher value increases the likelihood of rejection. 
Consequently, the quantity $\min_{j \neq i} \frac{(\theta_i - \theta_j + \varepsilon)^2}{2\left(1/p_i + 1/p_j\right)}$ corresponds to the information needed to assert that $i$ is $\varepsilon$-good. 
The problem complexity depends on maximizing this information across all arms under their optimal allocations. 
For the last equality, we observe that the best arm $I^*$ is always the easiest to confirm as $\varepsilon$-good.

With the maximin complexity lower bound, the $\varepsilon$-BAI problem falls within the scope of our framework. 
Specifically, we can adapt the \name{KKT} detection rule (Algorithm~\ref{alg:KKT_det}) by substituting in
\[
    C_j(\bm p; \bm\theta) = \frac{(\theta_{I^*(\bm\theta)} - \theta_j + \varepsilon)^2}{2\bigl(1/p_{I^*(\bm\theta)} + 1/p_j\bigr)} \quad \text{and} \quad
    h^j_i(\bm p) = \frac{p_j}{p_i + p_j}, \quad h^j_j(\bm p) = \frac{p_i}{p_i + p_j} \quad \text{for all} \quad j \neq I^*(\bm\theta).
\]
This approach yields the \name{EB-TC}$_{\varepsilon}$ algorithm studied in \citet{jourdan2023varepsilonbestarm}.
Following our proof techniques for Theorem~\ref{thm:main}, \cite{jourdan2023varepsilonbestarm} establishes the fixed-confidence optimality of \name{EB-TC}$_{\varepsilon}$, when combined with \name{IDS} and recommending the empirical best arm upon stopping.

\begin{proposition}[Lemma~24 in \citealt{jourdan2023varepsilonbestarm}]
    Under \name{EB-TC}$_{\varepsilon}$\name{-IDS}, we have $\bm{p}_t \Lto \bm{p}^*.$
\end{proposition}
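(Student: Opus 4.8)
The plan is to verify the sufficient condition for universal efficiency established in Theorem~\ref{thm:sufficient_for_optimality_general}, namely that the empirical allocation satisfies $\bm{p}_t\Lto\bm{p}^*$, by adapting the argument behind Theorem~\ref{thm:main} to the simpler \name{EB-TC}$_{\varepsilon}$\name{-IDS} dynamics. Here the leader is the empirical best arm $I^*(\bm\theta_t)$, which is deterministic given $\mathcal{H}_t$, and the challenger is the pitfall $\argmin_{j\neq I^*(\bm\theta_t)} C_j(\bm p_t;\bm\theta_t)$, so the only randomness in the selection step comes from the \name{IDS} coin toss with bias $h^j_{I^*}(\bm p_t)=\frac{p_{t,j}}{p_{t,I^*}+p_{t,j}}$. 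Because this removes the repeated-posterior-sampling layer present in \name{TTTS}, the argument is a streamlined version of the one in Appendix~\ref{app:TTTS_optimality}.

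First I would establish sufficient exploration: a random time $T_1\in\mathcal{L}$ after which every arm has been sampled enough that the empirical means $\bm\theta_t$ are uniformly close to $\bm\theta$, the empirical best arm coincides with the true $I^*$, and the empirical quantities $C_j(\cdot;\bm\theta_t)$ and $h^j_i(\cdot;\bm\theta_t)$ are uniformly close to their population counterparts. The self-correcting mechanism driving this is structural: since $C_j(\bm p)\propto \frac{p_{I^*}p_j}{p_{I^*}+p_j}$ vanishes as $p_j\to 0$, a severely under-allocated arm $j\neq I^*$ attains the smallest $C_j$ and is therefore chosen as the challenger, while $h^j_{I^*}(\bm p)=\frac{p_j}{p_{I^*}+p_j}\to 1$ forces sampling of the leader whenever $p_{I^*}$ is small. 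One converts this qualitative pull into an $\mathcal{L}$-bound via the standard concentration of Gaussian sample means together with a deviation argument controlling $\min_i N_{t,i}$.

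Next, on the event $\{t\ge T_1\}$, I would track the two optimality conditions of Theorem~\ref{thm:KKT_general} simultaneously. The \name{KKT}/transportation-cost detection rule, by always attacking the smallest $C_j$, drives the values $\{C_j(\bm p_t)\}_{j\neq I^*}$ toward equality, enforcing the analogue of the information-balance condition~\eqref{eq:information_balance_BAI} with gaps $\theta_{I^*}-\theta_j+\varepsilon$; the \name{IDS} bias $h^j_{I^*}$ realizes the law-of-total-probability stationarity condition~\eqref{eq:KKT_general_stationarity}, with the empirical challenger frequencies playing the role of the dual variables $\mu^*_j$ in~\eqref{eq:dual}. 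The key device is an over-/under-allocation argument: denoting by $\bm{p}^*$ the unique component-wise strictly positive solution guaranteed by \citet{jourdan2023varepsilonbestarm}, I would show that whenever $\bm{p}_t$ lies outside an $\epsilon$-neighborhood of $\bm{p}^*$, the most over-allocated arm receives strictly diminished selection probability and the most under-allocated arm strictly increased probability, so that the gap $\|\bm{p}_t-\bm{p}^*\|$ contracts after a further time of finite expectation; uniqueness of the KKT point then pins down the limit.

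The hard part will be obtaining the sufficient-exploration time $T_1$ with an $\mathcal{L}$ (finite-expectation) guarantee rather than mere almost-sure exploration. Because only the leader and a single challenger are candidates at each round, ruling out the pathological scenario in which some arm is starved requires carefully coupling the concentration of $\bm\theta_t$, the deterministic challenger choice, and the randomized \name{IDS} toss, and then integrating the resulting tail bound to control $\E[T_1]$. Once $T_1\in\mathcal{L}$ is in hand, the contraction in the second stage is comparatively mechanical, and invoking Theorem~\ref{thm:sufficient_for_optimality_general} completes the proof of universal efficiency.
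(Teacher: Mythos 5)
You should first note how the paper itself handles this statement: it does not prove it at all, but imports it verbatim as Lemma~24 of \citet{jourdan2023varepsilonbestarm}, remarking that that external proof follows the techniques developed for Theorem~\ref{thm:main} in Appendix~\ref{app:TTTS_optimality}. Your plan---adapting exactly those techniques to the simpler \name{EB-TC}$_{\varepsilon}$\name{-IDS} dynamics---is therefore the intended route, and several of your structural observations are correct: the leader and challenger are deterministic given $\mathcal{H}_t$, the only algorithmic randomness is the \name{IDS} coin, and the exploration pull you describe ($C_j\to 0$ as $p_j\to 0$, while $h^j_{I^*}\to 1$ as $p_{I^*}\to 0$) is the right self-correction mechanism for sufficient exploration.

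The genuine gap is in your second stage. The technique you are adapting never proves a Euclidean contraction; its linchpin is the \emph{empirical overall balance} (the analogue of Proposition~\ref{prop:empirical overall balance}), i.e., an increment analysis of $G_t=\Psi_{t,I^*}^2-\sum_{j\neq I^*}\Psi_{t,j}^2$ showing that \name{IDS} automatically keeps the sum-of-squares/stationarity condition approximately satisfied along the path, together with strict boundedness of the leader's share (Lemma~\ref{lem:strict boundedness}). Your replacement claim---that outside an $\epsilon$-neighborhood of $\bm p^*$ the most under-allocated arm receives strictly increased selection probability---is false pointwise: at each step only the leader and the single challenger $\argmin_{j\neq I^*(\bm\theta_t)} C_j(\bm p_t;\bm\theta_t)$ can be sampled, and the challenger is chosen by minimal transportation cost, which weighs the gaps $(\theta_{I^*}-\theta_j+\varepsilon)^2$ as well as the allocations; an arm that is most under-allocated in $\|\cdot\|$ need not be the current challenger, in which case its selection probability is exactly zero. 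What the technique actually yields is one-sided: arms $j\neq I^*$ that are over-allocated \emph{in ratio} get vanishing selection probability (the analogue of Lemma~\ref{lem:over sampled implies exponentially small}), hence $\limsup_t w_{t,j}/w_{t,I^*}\le p^*_j/p^*_{I^*}$ (the analogue of Lemma~\ref{lem:all arms not over sampled}). These one-sided bounds alone cannot pin down the limit---they are perfectly consistent with the leader being permanently over-sampled and every ratio sitting strictly below its target. It is precisely the balance identity $\sum_{j\neq I^*}\bigl(w_{t,j}/w_{t,I^*}\bigr)^2 \MGFto 1$ (Corollary~\ref{cor:overall_balance_psi}), which for $\varepsilon$-BAI with unit variances takes the same form $p_{I^*}^2=\sum_{j\neq I^*}p_j^2$ because \name{IDS} with $h^j_{I^*}=p_j/(p_{I^*}+p_j)$ enforces~\eqref{eq:KKT_general_stationarity}, that converts the one-sided bounds into convergence of every ratio. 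Without proving this balance along the sample path your argument cannot close; with it, your contraction step is unnecessary. Relatedly, the integrable-time ($\mathcal{L}$, in fact $\MGF$) guarantees you flag as ``the hard part'' are exactly what the $W_1,W_2$ maximal-inequality machinery of Appendix~\ref{app:TTTS_optimality} supplies, and any complete write-up must route through it rather than generic concentration.
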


\subsection{Dealing with Unknown Variances}\label{sec:unknown_variance}

Gaussian distributions with unknown variances no longer belong to the single-parameter exponential family. 
Fortunately, our framework can still be applied by making appropriate modifications to the GLRT statistic.

Let $(\bm\theta,\bm\sigma^2)$ denote the unknown mean and variance vectors. 
The alternative set is given by $\mathrm{Alt}(\bm\theta,\bm\sigma^2) = \{(\bm\vartheta,\bm\varsigma^2) \in \Theta \times \mathbb{R}_{+}^K: I^*(\bm\vartheta)\neq I^*(\bm\theta)\},$ where $I^*(\bm\theta)$ is the index of the best arm under $\bm\theta$. 
This set can be decomposed into a union of convex sets, satisfying Assumption \ref{Assumption:correct_answer}, $\mathrm{Alt}(\bm\theta,\bm\sigma^2) = \cup_{x\in \mathcal{X}(\bm\theta,\bm\sigma^2)} \mathrm{Alt}_x(\bm\theta,\bm\sigma^2),$
where $\mathcal{X}(\bm\theta,\bm\sigma^2) \triangleq [K]\backslash\{I^*(\bm\theta)\}$ and $\mathrm{Alt}_x(\bm\theta,\bm\sigma^2) = \{(\bm\vartheta,\bm\varsigma^2) \in \Theta \times \mathbb{R}_{+}^K: \vartheta_x > \vartheta_{I^*(\bm\theta)}\}.$
The generalized Chernoff information is similarly defined as the population version of the GLRT statistic, with the key distinction that the likelihood ratio now involves the KL divergence for a Gaussian distribution with two parameters: for all $x \in \mathcal{X}(\bm\theta,\bm\sigma^2)$,
\begin{equation}
    C_x(\bm p;\bm\theta,\bm\sigma^2) \triangleq \inf_{(\bm\vartheta,\bm\varsigma^2)\in \mathrm{Alt}_x(\bm\theta,\bm\sigma^2)} \sum_{i \in [K]} p_i d\bigl((\theta_i,\sigma^2_i),(\vartheta_i,\varsigma^2_i)\bigr),\label{eq:def_C_x_unknown_var}
\end{equation}
where $d\bigl((\theta,\sigma^2),(\vartheta,\varsigma^2)\bigr) = \frac{1}{2}\bigl[(\theta-\vartheta)^2/\varsigma^2 + \sigma^2/\varsigma^2 - 1 - \log(\sigma^2/\varsigma^2)\bigr].$
Following Lemma~11 of \citet{jourdan2023dealing},
\[
    C_x(\bm p;\bm\theta,\bm\sigma^2) 
    = \inf_{\vartheta \in \left[\theta_x,\theta_{I^*(\bm\theta)}\right]} \frac{1}{2}\left[{p_{I^*(\bm\theta)}} \log\left(1+ \frac{(\theta_{I^*(\bm\theta)} - \vartheta)^2}{\sigma^2_{I^*(\bm\theta)}}\right) + {p_{x}} \log\left(1+\frac{(\theta_{x} - \vartheta)^2}{\sigma^2_{x}}\right)\right].
\]
Added complexity arises because the minimizer $\vartheta$ cannot be determined in closed form. 
However, it can be efficiently computed using a bisection search. 

Compared to the known-variance case, the generalized Chernoff information takes a distinct form, highlighting a fundamental difference between the two problems. 
Notably, the generalized Chernoff information under unknown variances is \textit{strictly smaller} than that for known variances, and there are problem instances where the ratio between the two can be arbitrarily small (\citealt[Lemma~4 and 12]{jourdan2023dealing}). Crucially, substituting empirical variances into the complexity measures derived for known-variance cases, such as those in \cite{Glynn2004,GarivierK16}, does not produce optimal results.

Fortunately, the key components of our algorithmic framework remain intact. 
The first component is the maximin formulation of the optimal allocation problem, which now reads:
\[
    \Gamma_{\bm\theta,\bm\sigma^2}^* = \max_{\bm p\in\mathcal{S}_k}\min_{x\in\mathcal{X}(\bm\theta,\bm\sigma^2)}C_x(\bm p;\bm\theta,\bm\sigma^2).
\]
The second component is the PDE characterization of the $C_x(\bm p;\bm\theta,\bm\sigma^2)$ function in $\eqref{eq:PDE}$, which allows us to formulate the equivalent KKT conditions.
In particular, the vector $\bm{h}^x(\bm p; \bm\theta,\bm\sigma^2)$, defined by 
\begin{equation}\label{eq:selection_function_unknown_var}
	h_i^x(\bm p; \bm\theta,\bm\sigma^2) \triangleq \frac{p_i \frac{\partial}{\partial p_{i}}C_{x}(\bm p; \bm\theta,\bm\sigma^2)}{C_{x}(\bm p; \bm\theta,\bm\sigma^2)}, \quad \text{for all } x \in \mathcal{X}(\bm\theta) \text{ and }i \in [K],
\end{equation}
is still a valid probability vector since $C_{x}(\bm p; \bm\theta,\bm\sigma^2)$ is homogeneous of degree one by definition. 
The KKT condition retains the same form as in Theorem~\ref{thm:KKT_general}, with appropriate substitutions to account for the unknown variances. Specifically, $C_x(\bm p;\bm\theta,\bm\sigma^2)$ defined in~\eqref{eq:def_C_x_unknown_var} replaces $C_x(\bm p;\bm\theta)$, and $\mathcal{X}(\bm\theta,\bm\sigma^2)$ replaces $\mathcal{X}(\bm\theta)$.  Additionally, the new \name{IDS} rule is applied, using $h_i^x(\bm p; \bm\theta,\bm\sigma^2)$ as defined in~\eqref{eq:selection_function_unknown_var}. 
With these specifications, a \name{KKT} detection rule can be implemented directly. 
Similarly, Bayesian detection rules, such as \name{TS} and \name{PPS}, are straightforward to implement when paired with an appropriate conjugate prior.

\section{Numerical Experiments}\label{sec:numerical}

For the numerical study, we focus on the following problem instances of the best-$k$ identification problem. 
For each case, we examine both Bernoulli and Gaussian bandits (with $\sigma^2 = 1$) and adopt an uninformative prior for our algorithms.
Additional numerical experiments are collected in Appendix~\ref{app:add_numerical}. 
Implementation of our algorithms is available at \url{https://github.com/cnyouwei/PE}.
\vskip 1em
\begin{center}
	\small{
    \begin{tabular}{l|l|l}
        \toprule
        Case ID  & $K,k$  & $\bm\theta$ \\ 
        \midrule
        1   & $5,2$  & $(0.1, 0.2, 0.3, 0.4, 0.5)$ \\
        2   & $20,5$ & $\theta_i = 0.05*i, \forall i \in [20]$ \\
        3   & $15,1$ & $\theta_i = 0.3, \forall i \in [14]; \theta_{15} = 0.7$  \\
        4   & $100,10$ & $\theta_i = 0.3, \forall i \in [90]; \theta_{i} = 0.7, \forall i\in[100]\backslash [90]$  \\
        5 &  $50,25$ & $\theta_i = 0.2, \forall i \in [10]; \theta_i = 0.5, \forall i \in [25]\backslash[10]; \theta_i = 0.8, \forall i\in[50]\backslash [25]$ \\ 
        \bottomrule
    \end{tabular}
    }
\end{center}

\subsection{Convergence}
We established in Theorem~\ref{thm:main} that the empirical allocation under our proposed algorithms converges strongly to the optimal allocation for Gaussian BAI. 
In Figure~\ref{fig:convergence}, we present the empirical convergence rate of $\mathcal{O}(1/\sqrt{t})$ to the optimal value for our algorithms in Case 1, where $\bm\theta = (0.5,0.4,0.3,0.2,0.1)$, for identifying the best-$2$ under both Gaussian and Bernoulli reward distributions. 
These results suggest that this convergence behavior extends to more general settings.
We record the trajectories of $\Gamma_{\bm\theta}(\bm p_t)$ over $ 10^3$ independent replications and plot the mean trajectories, with shaded areas representing the first and third quartiles.

\begin{figure}[htbp]
    \centering
    \includegraphics[width=0.495\textwidth]{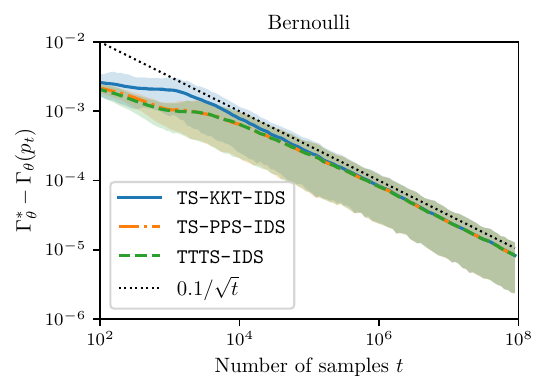}
    \includegraphics[width=0.495\textwidth]{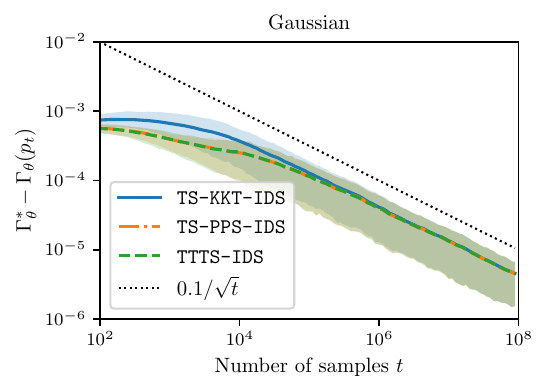}
    \caption{$\mathcal{O}(t^{-1/2})$ convergence to the optimal value, with shaded areas representing the first and third quartiles.}
    \label{fig:convergence}
\end{figure}

\subsection{Fixed-Confidence Setting}

For benchmarks, we compare our proposed algorithms with existing ones, which are briefly summarized in Appendix~\ref{app:existing_algs}.
The $m$-LinGapE and MisLid algorithms, implemented by \cite{tirinzoni2022elimination}, are designed for linear bandits and therefore are not implemented for Bernoulli bandits.
To ensure fair comparisons, we set the exploration rate as $\beta({t, \delta}) = \log \bigl((\log (t) + 1)/\delta\bigr)$ in UGapE and KL-LUCB, along with their respective stopping rules. 
Table~\ref{tab:fixed_confidence} presents the fixed-confidence sample complexity, highlighting the superior performance of our proposed algorithms. 
The FWS algorithm also performs competitively, though with significantly higher computational complexity.
Table~\ref{tab:fixed_confidence} also reports computation times for selected cases, demonstrating our algorithm's efficiency. 
Notably, computation time is primarily constrained by the stopping rule calculation, which scales as $\mathcal{O}(k(K-k))$; see Section~\ref{sec:computation_time} for further discussion.

\begin{table}[htb]
    \centering
    \resizebox{\columnwidth}{!}{
            \begin{tabular}{l c l r|rrrrrrrrrr}\toprule
            \multirow{2}{*}{} & Case & $\delta$  & LB   & \name{TTTS-IDS}  & \name{TS-KKT-IDS} &  \name{TS-PPS-IDS}  & \name{FWS}  & $m$-\name{LinGapE} & \name{MisLid}  & \name{LMA}  & \name{KL-LUCB} & \name{UGapE}  & \name{Uniform} \\ 
            \midrule 
    
            \multirow{10}{*}{\raisebox{-30pt}{\rotatebox[origin=c]{90}{Bernoulli}}}
            & \multirow{2}{*}{1} & 0.1    & 487    & 838$\pm$12    & \textbf{763}$\pm$14   & 857$\pm$13    & 852$\pm$14    &---           &---           & 1131$\pm$13   & 1643$\pm$26   & 1639$\pm$25   & 1322$\pm$13   \\
            &  & 0.01   & 974    & 1349$\pm$19   & \textbf{1287}$\pm$19  & 1414$\pm$20   & 1323$\pm$20   &---           &---           & 1683$\pm$18   & 2687$\pm$38   & 2666$\pm$38   & 2330$\pm$17   \\   
            \cmidrule(lr){2-14}
            & \multirow{2}{*}{2} & 0.1    & 1637   & 3417$\pm$27   & \textbf{2779}$\pm$25  & 3462$\pm$26   & 2964$\pm$25   &---           &---           & 6329$\pm$25   & 5453$\pm$48   & 5726$\pm$50   & 18465$\pm$42  \\
            &  & 0.01   & 3275   & 5007$\pm$38   & \textbf{4406}$\pm$37  & 5067$\pm$38   & 4565$\pm$36   &---           &---           & 9314$\pm$39   & 8856$\pm$74   & 8851$\pm$73   & 31236$\pm$60  \\   
            \cmidrule(lr){2-14}
            & \multirow{2}{*}{3} & 0.1    & 155    & 350$\pm$1     & \textbf{282}$\pm$1    & 354$\pm$1     & 284$\pm$1     &---           &---           & 517$\pm$1     & 533$\pm$4     & 550$\pm$2     & 610$\pm$1     \\
            &  & 0.01   & 311    & 504$\pm$2     & \textbf{435}$\pm$2    & 506$\pm$2     & 443$\pm$1     &---           &---           & 725$\pm$1     & 843$\pm$6     & 901$\pm$2     & 899$\pm$1     \\  
            \cmidrule(lr){2-14}
            & \multirow{2}{*}{4} & 0.1    & 1114   & 3536$\pm$2    & \textbf{2755}$\pm$1   & 3668$\pm$1    & 2834$\pm$3    &---           &---           & 7350$\pm$1    & 3859$\pm$3    & 4562$\pm$2    & 8992$\pm$1    \\
            &  & 0.01   & 2228   & 4644$\pm$2    & \textbf{3522}$\pm$2   & 4716$\pm$2    & 3868$\pm$3    &---           &---           & 9772$\pm$1    & 6048$\pm$5    & 7161$\pm$2    & 11607$\pm$1   \\  
            \cmidrule(lr){2-14}
            & \multirow{2}{*}{5} & 0.1    & 934    & 2868$\pm$2    & \textbf{2121}$\pm$2   & 2951$\pm$2    & 2536$\pm$5    &---           &---           & 4513$\pm$2    & 3870$\pm$48   & 3419$\pm$4    & 6703$\pm$2    \\
            &  & 0.01   & 1869   & 3759$\pm$3    & \textbf{2869}$\pm$3   & 3820$\pm$3    & 3341$\pm$5    &---           &---           & 6053$\pm$3    & 5203$\pm$5    & 5566$\pm$22   & 8745$\pm$2    \\  
            \midrule  
            \multirow{10}{*}{\raisebox{-46pt}{\rotatebox[origin=c]{90}{Gaussian}}}
            & \multirow{2}{*}{1} & 0.1    & 2159   & 3919$\pm$58   & \textbf{3667}$\pm$61  & 3991$\pm$61   & 3684$\pm$60   & 4327$\pm$58   & 4411$\pm$65   & 5809$\pm$63   & 7775$\pm$121  & 7618$\pm$117  & 5975$\pm$58   \\
            &  & 0.01   & 4318   & 6230$\pm$85   & \textbf{5949}$\pm$89  & 6169$\pm$85   & 5974$\pm$89   & 6717$\pm$84   & 6786$\pm$91   & 8431$\pm$78   & 11932$\pm$168 & 11956$\pm$167 & 10162$\pm$80  \\
            \cmidrule(lr){2-14}
            & \multirow{2}{*}{2} & 0.1    & 9459   & 19424$\pm$151 & \textbf{16371}$\pm$146& 19282$\pm$148 & 17385$\pm$153 & 24526$\pm$162 & 24534$\pm$182 & 34280$\pm$152 & 32650$\pm$284 & 32828$\pm$284 & 101990$\pm$225\\
            &  & 0.01   & 18918  & 29214$\pm$218 & \textbf{26274}$\pm$215& 29685$\pm$225 & 27227$\pm$227 & 33563$\pm$215 & 36828$\pm$270 & 46383$\pm$220 & 52323$\pm$436 & 50977$\pm$417 & 178697$\pm$322\\
            \cmidrule(lr){2-14}
            & \multirow{2}{*}{3} & 0.1    & 647    & 1484$\pm$5    & \textbf{1181}$\pm$5   & 1455$\pm$5    & 1240$\pm$7    & 1720$\pm$4    & 1964$\pm$7    & 1959$\pm$5    & 2497$\pm$44   & 2331$\pm$7    & 2617$\pm$5    \\
            &  & 0.01   & 1294   & 2152$\pm$7    & \textbf{1827}$\pm$7   & 2110$\pm$7    & 1904$\pm$7    & 2477$\pm$5    & 2883$\pm$9    & 2768$\pm$6    & 3627$\pm$25   & 3757$\pm$10   & 3933$\pm$6    \\
            \cmidrule(lr){2-14}
            & \multirow{2}{*}{4} & 0.1    & 4605   & \textbf{14756}$\pm$8 & 19101$\pm$161* & 15035$\pm$6 & 17251$\pm$52 & 19585$\pm$5  & 25815$\pm$15  & 24967$\pm$5  & 53498$\pm$441* & 66248$\pm$463* & 38579$\pm$4  \\
            &  & 0.01   & 9210   & \textbf{19543}$\pm$12& 20279$\pm$87*  & 19742$\pm$8  & 21669$\pm$53 & 25595$\pm$6  & 32946$\pm$16  & 32672$\pm$6  & 29321$\pm$128* & 31197$\pm$76*  & 49312$\pm$5   \\
            \cmidrule(lr){2-14}
            & \multirow{2}{*}{5} & 0.1    & 4177   & \textbf{12994}$\pm$14& 16492$\pm$191* & 13113$\pm$11 & 19577$\pm$102& 15660$\pm$11 & 20502$\pm$20  & 18676$\pm$10  & 55535$\pm$552* & 56074$\pm$480* & 31350$\pm$8   \\
            &  & 0.01   & 8355   & \textbf{17226}$\pm$15& 17340$\pm$111* & 17291$\pm$14 & 21875$\pm$96 & 20333$\pm$14 & 27519$\pm$28  & 25050$\pm$13  & 25373$\pm$138* & 25896$\pm$88*  & 40921$\pm$9   \\
            \midrule\midrule
    
            \multirow{5}{*}{} & Case & $\delta$ & ($K,k$) & \multicolumn{10}{c}{Execution time per iteration in microseconds ($\upmu$s)}  \\ 
            \midrule
            \multirow{4}{*}{\rotatebox[origin=c]{90}{Gaussian}}
            & \multirow{2}{*}{1} & 0.1    & (5,2)   & 90.3  & 8.5   & 7.5   & 405.8  & 12.3   & 17.5  & 12.9  & 1.2   & 3.6   & 3.2   \\
            &  & 0.01   & (5,2)   & 738.7 & 8.8   & 8.0   & 385.6  & 12.2   & 16.1  & 12.2  & 1.1   & 3.9   & 3.1   \\
            \cmidrule(lr){2-14}
            & \multirow{2}{*}{4} & 0.1    & (100,10)& 279.4 & 657.5 & 530.3 & 31295.7& 12998.1& 9844.7& 4440.2& 10.4  & 153.9 & 217.1 \\
            &  & 0.01   & (100,10)& 591.4 & 680.4 & 531.2 & 32563.8& 13660.1& 10323.8& 4555.7& 10.3  & 167.6 & 218.9 \\
            \bottomrule
        \end{tabular}
    }
    \caption{Fixed-confidence sample complexity, averaged over $1000$ replications. Cases where the algorithm did not stop within $10^6$ steps for some replications are marked with $*$, with problem complexity recorded as $10^6$ in these cases. The $95\%$ confidence interval is provided after the ``$\pm$'' sign.}
    \label{tab:fixed_confidence}
\end{table}

\subsection{Advantage of \name{IDS} over \texorpdfstring{$\beta$}{beta}-Tuning}

To isolate the effect of the selection rule, we implement \name{PAN} in Algorithm~\ref{alg:PAN_template} with the Thompson sampling estimation rule and the \name{PPS} detection rule, comparing the \name{IDS} selection rule in Algorithm~\ref{alg:IDS} with $0.5$-tuning, i.e., set $h^{j}_i = h^{j}_j = 0.5$.
We consider the slippage configuration where $\bm\theta$ is given by $\theta_{i} = 0.75, i \in [k]$, and $\theta_{j} = 0.5, j \in [K]\backslash [k]$ for both Gaussian and Bernoulli bandits. Instances are labeled by $(K,k)$-\name{distribution}.
Table~\ref{tab:num_IDS_vs_tuning} reports the significant reduction in sample complexity achieved by using \name{IDS}.
\begin{table}[hbtp]
	\centering
	\resizebox{\columnwidth}{!}{
	\begin{tabular}{
			lccccccccc}
		\toprule
		$(K,k)$-dist & $(500,1)$-G & $(500,2)$-G & $(500,3)$-G & $(500,5)$-G  &  $(500,1)$-B & $(500,2)$-B & $(500,3)$-B & $(500,5)$-B  \\ 
		\midrule
		\name{TS-PPS-0.5} & 285823 & 285591  & 283401 & 283180  &  62596 & 61914 & 62660 & 62734  \\
		\name{TS-PPS-IDS} & 199374 & 216668  & 223860 & 241869 &  44640 & 47044 & 50436 & 52767 \\
		\midrule
		Reduction & 30.2\% & 24.1\% & 21.0\% & 14.6\% & 28.7\% & 24.0\% & 19.5\% & 15.9\% \\
		\bottomrule
	\end{tabular}
	}
	\caption{Fixed-confidence sample complexity with $\delta = 0.001$, averaged over 100 replications.}\label{tab:num_IDS_vs_tuning}
\end{table}

\subsection{Large-Scale Experiments}
The simulation literature has recently seen a surge of interest in large-scale R\&S problems, which involve a substantial number of alternative systems. For those interested, we recommend the recent survey by \citet{fan2024review}, which provides a comprehensive overview of large-scale simulation optimization.
In this section, we evaluate the performance of our proposed algorithms on several large-scale problem instances, considering both fixed-confidence and fixed-budget settings.

\subsubsection{Fixed-Confidence Setting} \label{sec:num_large_scale_C}
In this experiment, we consider the large-scale slippage configuration studied in \citet{zhong2022knockout}, where the means of the systems are set as $\mu_0 = 0.1$ and $\mu_i = 0.0$ for $i = 2,\dots,K$. 
The noisy performances of the systems are independent and follows Gaussian distributions with a known variance of $\sigma^2 = 1$.
We set the desired confidence level to $\delta = 0.05$.

In Table~\ref{tab:large_scale_fixedC}, we compare the fixed-confidence performance of Procedure $\mathcal{KT}_0$ \citep{zhong2022knockout}, Procedure $\mathcal{KN}$ \citep{kim2001fully}, and two proposed algorithms, \name{TS-KKT-IDS} and \name{TTTS-IDS}. 
The simulation results for Procedures $\mathcal{KT}_0$ and $\mathcal{KN}$ are taken from \citet[Table 2]{zhong2022knockout}.
Both $\mathcal{KT}_0$ and $\mathcal{KN}$ require the specification of an indifference zone parameter, which is set to the (unknown) true gap of $0.1$ in this experiment. 
In contrast, our algorithms do not require parameter tuning and are therefore entirely parameter-free.
For our fixed-confidence algorithms, the stopping rule~\eqref{eq:Chernoff_stopping} is applied using a heuristic threshold $\gamma(t,\delta) = \Phi^{-1}(1-\delta)$, i.e., the $1-\delta$ quantile of the standard normal distribution. All results are averaged over $1000$ independent replications.

\begin{table}[ht]
    \centering
	\resizebox{\columnwidth}{!}{
    \begin{tabular}{cccccccccccccccc}
        \toprule
        & & \multicolumn{2}{c}{\name{TS-KKT-IDS}} & & \multicolumn{2}{c}{\name{TTTS-IDS}} & & \multicolumn{2}{c}{\name{EB-TS-IDS}} & & \multicolumn{2}{c}{Procedure $\mathcal{KT}_0$} & & \multicolumn{2}{c}{Procedure $\mathcal{KN}$} \\
        \cmidrule{3-4}\cmidrule{6-7}\cmidrule{9-10} \cmidrule{12-13} \cmidrule{15-16} 
       $K$    &  & PCS      & $\E[\tau_{\delta}]/K$        &  & PCS      & $\E[\tau_{\delta}]/K$        &  & PCS     & $\E[\tau_{\delta}]/K$       &  & PCS     & $\E[\tau_{\delta}]/K$  &  & PCS     & $\E[\tau_{\delta}]/K$  \\
       \midrule
       $10^2$ &  & $0.999$  & $616\pm 16$  &  & $1.0$  & $758\pm 14$   &  & $0.999$ & $727\pm 12$  &   & $0.963$ & $1462\pm 3$ &  & $0.978$ & $884\pm 8$ \\
       $10^3$ &  & $1.0$    & $764\pm 14$  &  & $1.0$    & $864\pm 12$   &  & $1.0$ & $788\pm 7$  &   & $0.965$ & $1512\pm 1$ &  & $0.989$ & $1165\pm 8$ \\
       $10^4$ &  & $1.0$    & $937\pm 13$  &  & $1.0$    & $1015 \pm 12$ &  & $0.999$ & $898\pm 5$  &    & $0.956$ & $1527\pm 1$ &  & $0.994$ & $1419\pm 9$ \\
       $10^5$ &  & $1.0$    & $1069\pm 14$ &  &  $1.0$    &  $1154\pm 13$      &  & $1.0$ & $1024\pm 3$   &    & $0.955$ & $1528\pm 1$ &  & $0.994$ & $1686\pm 8$ \\
       \bottomrule
    \end{tabular}
    }
    \caption{Sample complexity and estimated PCS are compared for the proposed algorithms, Procedure $\mathcal{KT}_0$ and Procedure $\mathcal{KN}$ across different numbers of systems in a slippage configuration with known and equal variances. The $95\%$ confidence interval is provided after the ``$\pm$'' sign, and the estimated PCS is shown in parentheses.}
    \label{tab:large_scale_fixedC}
\end{table}

The theoretical stopping rule is designed to ensure PCS for high-confidence levels, i.e., as $\delta \to 0$, but empirically, it tends to be overly conservative for moderate PCS level guarantees. 
Thus, we implemented a heuristic stopping rule better suited to larger $\delta$.
Our results show that this heuristic rule remains conservative, with an estimated PCS exceeding 0.999 in all cases. 
However, the asymptotically optimal allocation compensates for the conservatism of the stopping rule, achieving smaller sample complexities and higher PCS than competitors.
Future work should focus on developing stopping rules that are less conservative for moderate $\delta$.
Recent studies, such as \citet{wang2024bonferroni} and \citet{tirinzoni2022elimination}, propose elimination-based stopping rules that could reduce conservatism and improve computational efficiency. 
We leave this direction for future research and present this numerical study to illustrate the potential of asymptotically optimized algorithms in large-scale experiments.

\subsubsection{Fixed-Budget Setting}\label{sec:num_large_scale_B}
In this experiment, we evaluate the performance of the proposed algorithms under configurations studied by \citet{li2025surprising}, including the slippage configuration ($\mu_0 = 0.1$ and $\mu_i = 0.0$ for $i = 2,\dots,K$ with $\sigma_i^2$ = 1), and the equally-spaced configuration with increasing variances ($\mu_0 = 0.1$ and $\mu_i = -i/K$ for $i = 2,\dots,K$ with $\sigma_i^2 = 1+i/K$). We assume that system variances are known.

A key insight from \citet{li2025surprising} is that allocating samples greedily---sampling the system with the highest empirical mean---keeps PCS bounded away from zero as the total sampling budget scales linearly with the number of systems, i.e., $B = cK$ for a fixed constant $c$ as $K\to \infty$.
This suggests that a degree of greediness is essential in large-scale R\&S to promptly eliminate an empirical best system that is not the true best. They introduced the Explore-First Greedy (EFG) procedure to ensure consistency.

Inspired by their work, we test our proposed algorithms with the \name{EB} estimation rule, i.e., the \name{EB-*-IDS} algorithms.
Figure~\ref{fig:large_scale_fixed_budget} presents the estimated PCS for the proposed algorithms and benchmarks, including EFG, Greedy, and uniform-allocation procedures. 
PCS is estimated from $10^4$ independent replications, with confidence intervals nearly invisible and thus omitted. 
As expected, \name{EB-*-IDS} algorithms appear to be sample-optimal in the sense described by \cite{hong2022solving,li2025surprising}, as the estimated PCS stabilizes at a positive value as $K\to \infty$.
This sample optimality is anticipated since the \name{EB-*-IDS} algorithms allocate sufficient samples to the empirical best. 
We leave a rigorous investigation of this sample optimality for future work. 
The advantage of our proposed algorithms becomes more evident as the average budget per system increases, demonstrating that a dynamic allocation strategy informed by asymptotic analysis can enhance performance under moderate to low PCS levels.

\begin{figure}[ht]
    \centering
    \includegraphics[width=0.9\textwidth]{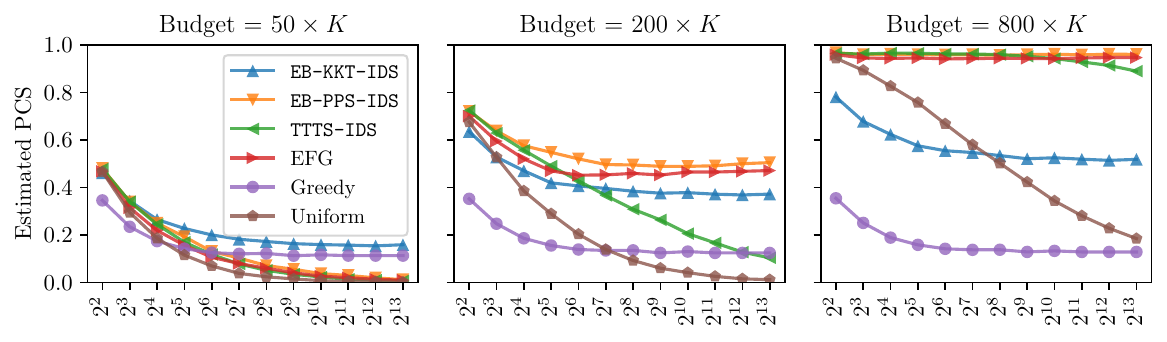}
    \includegraphics[width=0.9\textwidth]{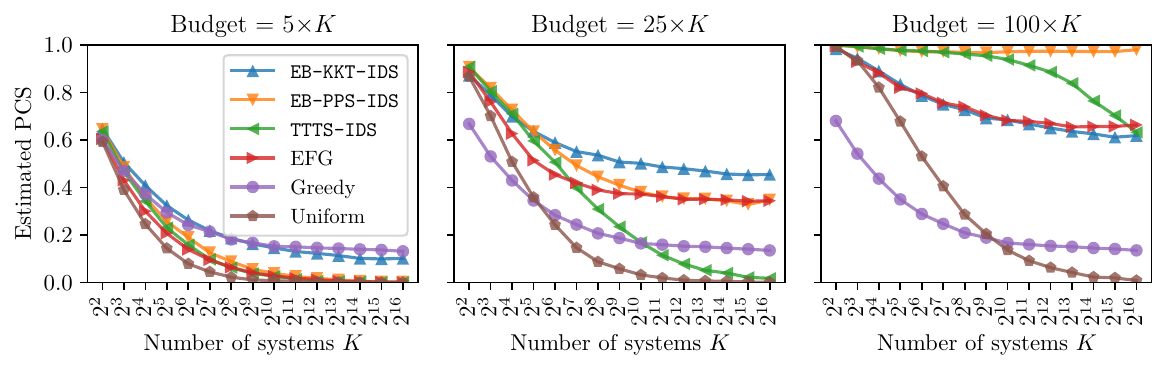}
    \caption{Estimated PCS are compared for the proposed algorithms and benchmarks. Top row is the slippage configuration with equal variances, bottom row is the equally-spaced configuration with increasing variances.}
    \label{fig:large_scale_fixed_budget}
\end{figure}

\subsubsection{Computation Time}\label{sec:computation_time} 
To compare the computation time of the proposed algorithms, we consider the slippage configuration, where system means are set as $\mu_0 = 0.1$ and $\mu_i = 0.0$ for $i = 2,\dots,K$. 
The noisy performance of each system is independent and follows Gaussian distributions with a known variance of $\sigma^2 = 1$.
We vary the number of systems from $K = 10^1$ to $10^4$. For each instance, we run the algorithm for  $100K$ or $1000K$ time steps and perform $10^3$ independent repetitions to estimate the sample mean of the running time along with its 95\% confidence interval.

Table~\ref{tab:running_time} reports the execution time of our proposed algorithms, measured in average microseconds ($\upmu$s) per iteration.
We observe that both \name{TS-KKT-IDS} and \name{TS-PPS-IDS} are invariant to budget changes, while \name{TTTS-IDS} slows down as the time period grows.
However, this computational inefficiency in \name{TTTS-IDS} is minimal when PCS is below $0.97$. Additionally, the slowdown is less pronounced for larger $K$ as sampling an alternative best system becomes easier with more alternatives available.

All three algorithms scale nearly linearly with the number of systems $K$, as expected. For instance, in \name{TS-KKT-IDS}, the bottleneck is searching for the minimum among the $K-1$ number of $C_j$ functions.
Notably, when an algorithm targets \textit{constant optimality}, such scaling is inevitable---after all, even the computation complexity of calculating the problem complexity lower bound scales linearly with $K$.

\begin{table}[ht]
    \centering
	\resizebox{\columnwidth}{!}{
    \begin{tabular}{crrrrrrrr}
        \toprule
                  & \multicolumn{2}{c}{\name{TTTS-IDS}} & & \multicolumn{2}{c}{\name{TS-PPS-IDS}} & & \multicolumn{2}{c}{\name{TS-KKT-IDS}}  \\
        \cmidrule{2-3}\cmidrule{5-6}\cmidrule{8-9} 
       Budget     & \multicolumn{1}{c}{$100K$} & \multicolumn{1}{c}{$1000K$} &  & \multicolumn{1}{c}{$100K$} & \multicolumn{1}{c}{$1000K$} &  & \multicolumn{1}{c}{$100K$} & \multicolumn{1}{c}{$1000K$}  \\
       \midrule
       $K = 10^1$ &   $0.66\pm0.01(0.409)$ &   $5.93\pm1.11(0.974)$ &  &    $0.84\pm0.01(0.411)$ &    $1.04\pm0.01(0.979)$ &  &  $0.35\pm0.00(0.443)$ &  $0.39\pm0.01(0.971)$ \\       
       $K = 10^2$ &   $3.38\pm0.12(0.161)$ &   $7.86\pm0.59(0.991)$ &  &    $4.75\pm0.17(0.179)$ &    $5.16\pm0.18(0.980)$ &  &  $1.92\pm0.07(0.204)$ &  $1.99\pm0.07(0.988)$ \\    
       $K = 10^3$ &  $22.59\pm0.98(0.065)$ &  $25.85\pm0.99(0.978)$ &  &   $32.94\pm1.43(0.059)$ &   $33.51\pm1.44(0.975)$ &  & $13.16\pm0.57(0.057)$ & $13.13\pm0.57(0.983)$ \\    
       $K = 10^4$ & $170.13\pm7.96(0.026)$ & $172.56\pm7.92(0.964)$ &  & $297.96\pm14.27(0.018)$ & $297.52\pm14.22(0.955)$ &  & $98.79\pm4.62(0.019)$ & $98.62\pm4.61(0.964)$ \\  
       \bottomrule
    \end{tabular}
    }
    \caption{Computation time for the proposed algorithms, measured in average microseconds ($\upmu$s) per iteration. The $95\%$ confidence interval is provided after the ``$\pm$'' sign, and the estimated PCS is shown in parentheses.}
    \label{tab:running_time}
\end{table}

\subsection{Practical Considerations in Algorithm Implementation}

\paragraph{Exploration versus exploitation in pure exploration.}
The exploration-exploitation trade-off arises even in pure-exploration problems. 
For instance, the estimation rules \name{EB} and \name{TS} aim to identify the best arm. \name{EB} exploits by selecting the arm with the highest sample mean, while \name{TS} explores by sampling from the posterior. 
Asymptotic performance is largely unaffected by the choice of estimation rule, but practical performance can differ under moderate confidence levels or limited budgets. 
\name{EB} converges faster when the greedy guess is correct, but it risks poor performance when the guess is wrong, as allocations are based on a misspecified problem instance. 
In contrast, \name{TS} explores more to avoid getting stuck with incorrect estimates but requires additional samples when the estimate is already accurate. 
Thus, \name{EB-*-IDS} algorithms\footnote{Except for \name{EB-KKT-IDS}, where both estimation and detection are greedy, leading to significant outliers in slippage configurations.} typically achieve better average performance but are more susceptible to extreme outliers due to inaccurate initial estimates.
In contrast, \name{TS-*-IDS} algorithms generally exhibit slightly lower average performance but fewer outliers.  
This behavior is discussed in \cite{wu2018analyzing,jourdan2022top} and conceptually linked to \cite{simchi2024simple}.
As a side note, \cite{li2025surprising} observes that \name{EB} generally stabilizes the PCS at positive levels as the number of alternatives grows, while \name{TS} does not.

A similar trade-off exists in the choice of the detection rule (\name{det} in Section~\ref{sec:det rule}). 
While \name{KKT} greedily selects the $x$ with the smallest $C_x$ as the principal pitfall, \name{TS} and \name{PPS} encourage exploration of other pitfalls. 
For instance, comparing \name{TTTS-IDS} with \name{TS-KKT-IDS} in Table~\ref{tab:fixed_confidence} and Table~\ref{tab:large_scale_fixedC}, we find that \name{*-KKT-IDS} algorithms perform better on average but are more prone to extreme outliers (e.g., Table~\ref{tab:fixed_confidence}, Case 4 and 5, Gaussian). 
In contrast, \name{*-TS-IDS} algorithms exhibit slightly worse average performance but no outliers. 
However, in most cases, exploration in the estimation rule alone suffices to mitigate outliers.

\paragraph{Common Random Numbers (CRN).}
In simulation studies, the performances of different systems are often simulated using CRN, introducing positive correlations between the noisy performances. 
This typically reduces sampling complexity by decreasing the variances of the differences in sample means, see \citet{nelson1995using,zhong2022knockout}.

The main challenge in fully leveraging CRN lies in the need to allow noisy performances of different systems to be mutually correlated. 
For simplicity of exposition, we introduced our framework under the assumption that the rewards are i.i.d. random variables from a single-parameter exponential family. 
However, our framework can be extended to handle correlated observations.

To demonstrate, consider the case where performances follow a multivariate normal distribution with means $\mu_i$, variances $\sigma_i^2$, and pairwise correlation $\rho$, where both $\sigma_i^2$ and $\rho$ are known.\footnote{Extensions to unknown covariance matrices are possible using the multiparameter exponential family, as in Section~\ref{sec:unknown_variance}.} 
The generalized Chernoff information is modified to account for the correlation as: for $j \in [K]\backslash\{I^*\}$,
\begin{equation*}
    C_{j}(\bm p;\bm\theta) = \frac{(\theta_{I^*} - \theta_j)^2}{2\Bigl( \frac{\sigma_{I^*}^2}{p_{I^*}} + \frac{\sigma_j^2}{p_{j}} - \frac{2\rho\sigma_{I^*}\sigma_j}{\sqrt{ p_{I^*}p_{j}}}\Bigr)}.
\end{equation*}
It can be verified that~\eqref{eq:PDE} holds, along with Theorem~\ref{thm:KKT_general}, allowing the immediate application of an \name{PAN} algorithm. 
For the fixed-confidence setting, the modified $C_{j}$ must replace the original in the stopping rule.
Although the universal efficiency of \name{PAN} under CRN remains an open question, numerical evidence in Table~\ref{tab:CRN} demonstrates its effectiveness. 

\begin{table}[hbtp]
    \centering
    \small{
    \begin{tabular}{cccccccc}
        \toprule
        & & & \multicolumn{4}{c}{\name{TS-KKT-IDS}} \\
       \cmidrule{4-7}
       Case & $K$ &  & $\rho = 0.0$ & $\rho = 0.25$ & $\rho = 0.5$ & $\rho = 0.75$ \\
       \midrule
       \multirow{3}{*}{A} & $10^2$ & & 72$\pm$4(0.999) & 62$\pm$4(0.988) & 52$\pm$3(0.979) & 35$\pm$2(0.957)\\
        & $10^3$ & & 87$\pm$3(1.000) & 80$\pm$3(0.999) & 72$\pm$3(0.997) & 55$\pm$2(0.984)\\
        & $10^4$ & & 112$\pm$1(1.000) & 106$\pm$2(1.000) & 94$\pm$2(1.000) & 75$\pm$2(0.995)\\
        \midrule
       \multirow{3}{*}{B} & $10^2$ & & 616$\pm$16(1.000) & 537$\pm$16(0.998) & 423$\pm$17(0.992) & 254$\pm$10(0.965)\\
        & $10^3$ & & 743$\pm$14(1.000) & 670$\pm$16(1.000) & 554$\pm$17(0.996) & 336$\pm$12(0.992)\\
        & $10^4$ & & 937$\pm$13(1.000) & 832$\pm$17(1.000) & 693$\pm$18(1.000) & 455$\pm$14(0.998)\\
       \bottomrule
    \end{tabular}
    }
    \caption{Sample complexity per arm and estimated PCS are compared for the \name{TS-KKT-IDS} algorithm with varying $\rho$. Case A corresponds to the slippage configuration with equal variances (Section~\ref{sec:num_large_scale_C}), and Case B to the equally-spaced configuration with increasing variances (Section~\ref{sec:num_large_scale_B}). The $95\%$ confidence interval is shown after ``$\pm$'' signs, followed by the estimated PCS.
    }
    \label{tab:CRN}
\end{table}

\section{Conclusion}

This paper introduces a unified algorithm design principle for general pure-exploration problems by linking asymptotic lower bounds on problem complexities to maximin optimal allocation problems. 
This connection yields reformulated Karush-Kuhn-Tucker conditions involving dual variables, offering a novel interpretation of the top-two algorithm principle as a method for tracking complementary slackness conditions. 
Furthermore, the derived stationarity conditions motivate the information-directed selection rule, which, when integrated with \name{TTTS} algorithm, achieves optimal performance for Gaussian BAI in posterior convergence, fixed-confidence, and fixed-budget settings. 

Our framework demonstrates broad applicability, optimally extending to settings such as $\varepsilon$-BAI and pure-exploration thresholding bandits, notably without relying on forced exploration. 
We propose practical modifications to Thompson sampling, simplifying its implementation for practitioners addressing various pure-exploration problems. 
Experimental results confirm our algorithms' superior performance, highlighting improvements in both fixed-budget probability of incorrect selection and fixed-confidence sample complexity, with robust results across diverse and large-scale problem configurations.

Future research directions include refining stopping rules to enhance efficiency in finite-sample settings or under moderate confidence requirements \citep{simchowitz2017simulator}, and extending theoretical results to broader settings, such as heavy-tailed distributions \citep{agrawal2020optimal}. Another promising avenue is developing parallelizable algorithm variants suited for large-scale simulations \citep{avci2023using} and incorporating elimination procedures to reduce parallelization overhead \citep{zhong2022knockout}.

\section*{Acknowledgments}

We express our sincere gratitude to the anonymous reviewers and the Associate Editor for their valuable feedback, which significantly improved the quality of the paper.
We are also grateful to Daniel Russo, Sandeep Juneja, Po-An Wang, Shane Henderson, Xiaowei Zhang, Jun Luo, the participants of the 2023 INFORMS Applied Probability Society Conference, the 2023 INFORMS Annual Meeting, and the 2024 INFORMS MSOM Conference for their insightful comments on this work. 
A preliminary version of this paper appeared as an extended abstract in the Proceedings of the 36th Annual Conference on Learning Theory, COLT'23, with the title ``Information-Directed Selection for Top-Two Algorithms.''

\bibliography{refs}
\bibliographystyle{plainnat}

\clearpage

\appendix
\part*{Appendix}
\addcontentsline{toc}{part}{Appendix}
\etocsetnexttocdepth{subsubsection}

\localtableofcontents

\clearpage

\section{Proposed Algorithms for Selected Pure-Exploration Queries}\label{app:alg_ex}

In this section, we collect a wide variety of examples of pure-exploration queries that fall within our algorithmic framework and provide explicit formulas for implementing our algorithms.

To implement the \name{KKT} detection rule (Algorithm~\ref{alg:KKT_det}) and the \name{IDS} selection rule (Algorithm~\ref{alg:IDS_PE}), we require expressions for the pitfall set $\mathcal{X}(\bm\theta)$, the generalized Chernoff information $C_x(\bm p)$, and the selection functions $h_i^x(\bm p)$.
To implement the \name{TS} detection rule (Algorithm~\ref{alg:TS_det}), we require an explicit formula for the pitfall set $\mathcal{X}(\bm\theta)$, the alternative set $\mathrm{Alt}(\bm\theta)$, and an oracle to sample from the posterior distribution (this is usually straightforward in the case of a single-parameter exponential family). Moreover, to implement the \name{PPS} detection rule (Algorithm~\ref{alg:PPS_det}), we need to calculate the posterior probability of the alternative set, which is also made possible by the explicit formulas for the alternative sets.

To enable direct implementation of our algorithms, we explicitly derive the following components for each pure-exploration problem:
\begin{enumerate}
	\item Correct answer $\I(\bm\theta)$.
	\item The alternative set $\mathrm{Alt}(\bm\theta)$, the pitfall set $\mathcal{X}(\bm\theta)$, and the decomposition of $\mathrm{Alt}(\bm\theta)$.
	\item The unique solution $\bm\vartheta^x$ in Lemma~\ref{lm:smoothness_C_x}.
	\item The generalized Chernoff information $C_x(\bm p)$.
	\item The partial derivative of $C_x(\bm p)$, from which the selection functions $h_i^x(\bm p)$ can be derived.
\end{enumerate}

\subsection{Variants of BAI}
\paragraph{BAI or IZ-free R\&S.} This is studied in detail in Section~\ref{sec:IZF_RS}.
The goal is to find the arm with the highest mean.
\begin{enumerate}
    \item Correct answer: $\I(\bm\theta) = \{I^*(\bm\theta)\}$, where $I^*(\bm\theta) = \argmax_i\{\theta_i\}$.
    \item $\mathrm{Alt}(\bm\theta) = \cup_{j\in \mathcal{X}(\bm\theta)}\mathrm{Alt}_{j}(\bm\theta)$ with $\mathcal{X}(\bm\theta) = [K]\backslash \I(\bm\theta)$ and $\mathrm{Alt}_{j}(\bm\theta) = \{\bm\vartheta: \vartheta_j > \vartheta_{I^*(\bm\theta)}\}$.
    \item $\bm\vartheta^j$ is given by $\vartheta^{j}_{I^*(\bm\theta)} = \vartheta^{j}_{j} = \frac{p_{I^*(\bm\theta)}\theta_{I^*(\bm\theta)} + p_{j}\theta_j}{p_{I^*(\bm\theta)} + p_j}$ and $\vartheta^{j}_{k} = \theta_k$ for all $j \in \mathcal{X}(\bm\theta)$ and $k \neq I^*(\bm\theta), j$.\footnote{
    Our algorithm relies on the vector $\bm\vartheta^{j}$ as defined in the function $\bm h^x(\bm p)$ used in \name{IDS}. Note that $\bm\vartheta^{j}$ is undefined when $p_{I^*} = p_j = 0$. In these cases, we assign $\bm h^x(\bm p) = \frac{1}{K} \bm 1,$
    which corresponds to the uniform vector. This choice is made without loss of generality, since beginning with one allocation per arm ensures that the empirical allocations will never become exactly zero, no matter how small the allocation may get. A similar argument applies to the best-$k$ identification, pure-exploration thresholding bandit (selecting the arm closest to a threshold), and all-$\varepsilon$-best-arm identification scenarios, so we do not repeat this discussion for those cases.}
    \item $C_{j}(\bm p) = p_{I^*(\bm\theta)} d(\theta_{I^*(\bm\theta)}, \vartheta^{j}_{I^*(\bm\theta)}) + p_j d(\theta_{j}, \vartheta^{j}_{j})$ for all $j \in \mathcal{X}(\bm\theta)$.
    \item $\frac{\partial C_j(\bm p)}{\partial p_i} = d(\theta_{i}, \vartheta^{j}_{i})$ for all $j \in \mathcal{X}(\bm\theta)$, most of them are zero, except for $i = I^*(\bm\theta)$ or $j$.
\end{enumerate}

\paragraph{Best-\texorpdfstring{$k$}{k} Identification.}
The goal is to find $k$ arms with the highest means.
\begin{enumerate}
    \item Correct answer: $\I(\bm\theta) = \argmax_{\mathcal{S} \subseteq [K], |\mathcal S| = k}\left\{\sum_{i \in \mathcal S} \theta_i\right\}$.
    \item $\mathrm{Alt}(\bm\theta) = \cup_{(i,j)\in \mathcal{X}(\bm\theta)}\mathrm{Alt}_{i,j}(\bm\theta)$ with $\mathcal{X}(\bm\theta) = \I(\bm\theta)\times\I(\bm\theta)^c$ and $\mathrm{Alt}_{i,j}(\bm\theta) = \{\bm\vartheta: \vartheta_j > \vartheta_{i}\}$.    
    \item $\bm\vartheta^{(i,j)}$ is given by $\vartheta^{(i,j)}_{i} = \vartheta^{(i,j)}_{j} = \frac{p_{i}\theta_{i} + p_{j}\theta_j}{p_{i} + p_j}$ and $\vartheta^{(i,j)}_{k} = \theta_k$ for all $(i,j)\in\mathcal{X}(\bm\theta)$ and $k \neq i, j$.
    \item $C_{i,j}(\bm p) = p_{i} d(\theta_{i}, \vartheta^{(i,j)}_{i}) + p_j d(\theta_{j}, \vartheta^{(i,j)}_{j})$ for all $(i,j)\in\mathcal{X}(\bm\theta)$.
    \item $\frac{\partial C_{i,j}(\bm p)}{\partial p_k} = d(\theta_{k}, \vartheta^{(i,j)}_{k})$ for all $(i,j) \in \mathcal{X}(\bm\theta)$, most of them are zero, except for $k = i$ or $j$.
\end{enumerate}

\paragraph{Pairwise Pure Exploration.}
The BAI and best-$k$ identification problems can be further generalized to the so-called pairwise pure exploration \citep{wu2025cost}. Let $\mathcal{R} \subset \{(i,j): i \neq j\}$
be any subset of (ordered) pairs.\footnote{The set $\mathcal{R}$ must be admissible in the sense that $(i,j)$ and $(j,i)$ are not simultaneously in $\mathcal{R}$.} A pairwise pure exploration query partitions the permutation group of the arm set $[K]$ into mutually exclusive and collectively exhaustive subsets $\{\mathcal{R}_1, \dots, \mathcal{R}_M\}$, and the goal is to determine which of the subsets $\mathcal{R}_m$ is consistent with the true ranking of $\bm\theta$.

\begin{enumerate}
    \item Correct answer: $\I(\bm\theta) = \{m^*(\bm\theta)\}$, where $m^*(\bm\theta)$ is the unique index such that $\bm\theta$ satisfies all constraints in $\mathcal{R}_{m^*(\bm\theta)}$.
    \item $\mathrm{Alt}(\bm\theta) = \cup_{(i,j)\in \mathcal{X}(\bm\theta)}\mathrm{Alt}_{i,j}(\bm\theta)$ with $\mathcal{X}(\bm\theta) = \mathcal{R}_{m^*(\bm\theta)}$ and $\mathrm{Alt}_{i,j}(\bm\theta) = \{\bm\vartheta: \vartheta_j > \vartheta_{i}\}$.  
        \item $\bm\vartheta^{(i,j)}$ is given by $\vartheta^{(i,j)}_{i} = \vartheta^{(i,j)}_{j} = \frac{p_{i}\theta_{i} + p_{j}\theta_j}{p_{i} + p_j}$ and $\vartheta^{(i,j)}_{k} = \theta_k$ for all $(i,j)\in\mathcal{X}(\bm\theta)$ and $k \neq i, j$.
    \item $C_{i,j}(\bm p) = p_{i} d(\theta_{i}, \vartheta^{(i,j)}_{i}) + p_j d(\theta_{j}, \vartheta^{(i,j)}_{j})$ for all $(i,j)\in\mathcal{X}(\bm\theta)$.
    \item $\frac{\partial C_{i,j}(\bm p)}{\partial p_k} = d(\theta_{k}, \vartheta^{(i,j)}_{k})$ for all $(i,j) \in \mathcal{X}(\bm\theta)$, most of them are zero, except for $k = i$ or $j$.
\end{enumerate}

\paragraph{All-\texorpdfstring{$\varepsilon$}{Epsilon}-Best-Arm Identification.}

The goal is to find the set of \textit{all} arms with means larger than $\max_{j\in[K]} \theta_j - \varepsilon$; see \cite{mason2020finding}.

\begin{enumerate}
    \item Correct answer: $\I(\bm\theta) = \{i: \theta_i \ge \max_{j\in[K]} \theta_j - \varepsilon\}$.
    \item $\mathrm{Alt}(\bm\theta) = \cup_{(i,j) \in \mathcal X} \mathrm{Alt}_{i,j}(\bm\theta)$, where $\mathcal{X}(\bm\theta) = \left\{(i,j)\in [K]\times [K]: i \in \I(\bm\theta), j \neq i\right\}$, and for any $(i,j)\in\mathcal{X}(\bm\theta)$, $\mathrm{Alt}_{i,j}(\bm\theta) = \{\bm\vartheta: \vartheta_j - \varepsilon > \vartheta_i\}$. To interpret, if there exists an arm $j \neq i$ with $\theta_j - \theta_i > \epsilon$, then arm $i$ cannot be $\varepsilon$-good. Notice the subtle difference between the pitfall set $\mathcal{X}(\bm\theta)$ here and that for the best-$k$ identification problem.
    \item $\bm\vartheta^{(i,j)}$ is given by $\vartheta^{(i,j)}_{i} = \frac{p_i(\theta_i+\varepsilon) + p_j \theta_j}{p_i + p_j} - \varepsilon$, $\vartheta^{(i,j)}_{j} = \frac{p_i(\theta_i+\varepsilon) + p_j \theta_j}{p_i + p_j}$ and $\vartheta^{(i,j)}_{k} = \theta_k$ for all $(i,j)\in\mathcal{X}(\bm\theta)$ and $k \neq i, j$.
    \item $C_{i,j}(\bm p) = p_{i} d(\theta_{i}, \vartheta^{(i,j)}_{i}) + p_j d(\theta_{j}, \vartheta^{(i,j)}_{j})$ for all $(i,j)\in\mathcal{X}(\bm\theta)$.
    \item $\frac{\partial C_{i,j}(\bm p)}{\partial p_k} = d(\theta_{k}, \vartheta^{(i,j)}_{k})$ for all $(i,j) \in \mathcal{X}(\bm\theta)$, most of them are zero, except for $k = i$ or $j$.
\end{enumerate}

\subsection{Variants of Pure-Exploration Thresholding Bandits}

\paragraph{Pure-Exploration Thresholding Bandits.}
This is studied in Section~\ref{sec:TBP}. The goal is to find the set of \textit{all} arms whose mean rewards are above a given threshold $\thr\in \mathbb{R}$.
\begin{enumerate}
	\item Correct answer: $\I(\bm\theta) = \{k\in [K]: \theta_k > \thr\}$.
	\item $\mathrm{Alt}(\bm\theta) = \cup_{j\in \mathcal{X}}\mathrm{Alt}_{j}(\bm\theta)$ where $\mathcal{X} = [K]$ and $\mathrm{Alt}_{j}(\bm\theta) = \{\bm \vartheta:(\thr - \vartheta_j)(\thr -  \theta_j)  < 0 \}$.
	\item $\bm\vartheta^j$ is given by $\vartheta^{j}_{j} = \thr$ and $\vartheta^j_k = \theta_k$ for all $j \in\mathcal{X}$ and $k\neq j$.
	\item $C_j(\bm p) = \sum_{i\in [K]}p_i d(\theta_i, \vartheta^j_i) = p_j d(\theta_j, \thr)$.
	\item $\frac{\partial C_j(\bm p)}{\partial p_i} = d(\theta_{i}, \vartheta^{j}_{i})$ for all $j \in \mathcal{X}$, most of them are zero, except for  $i=j$.
\end{enumerate}

\paragraph{Pure-Exploration Thresholding Bandit (Selecting One Arm Closest to a Threshold).}
The goal is to find \textit{the} arm whose mean reward is the closest to a given threshold $\thr\in \mathbb{R}$.
\begin{enumerate}
	\item Correct answer: $\I(\bm\theta) = \{I^*(\bm\theta)\}$, where $I^*(\bm\theta) = \argmin_{i\in[K]} |\theta_i - \thr|$.
	\item $\mathrm{Alt}(\bm\theta) = \cup_{j\in \mathcal{X}} \mathrm{Alt}_{j}(\bm\theta)$, where $\mathcal{X}(\bm\theta) = [K]\backslash \I(\bm\theta)$ and $\mathrm{Alt}_{j}(\bm\theta) = \{\bm \theta:  |\theta_j - \thr|< |\theta_{I^*(\bm\theta)} - \thr| \}$.
	\item $\bm\vartheta^j$ is given by
	$
	 \vartheta^j_{I^*(\bm\theta)} =  \vartheta^j_{j} = \frac{p_{I^*(\bm\theta)} \theta_{I^*(\bm\theta)} + p_j \theta_j}{p_{I^*(\bm\theta)}+p_j} $, and $\vartheta^j_{k} = \theta_k $ for all $k \neq I^*(\bm\theta),j.$
	
	\item $C_j(\bm p) = \sum_{i\in [K]} p_i d(\theta_i, \vartheta^j_i)$ for all $j \in \mathcal{X}$.
	\item $\frac{\partial C_j(\bm p)}{\partial p_i} = d(\theta_{i}, \vartheta^{j}_{i})$ for all $j \in \mathcal{X}$, most of them are zero, except for $i = I^*(\bm\theta)$ and $j$.
\end{enumerate}

\paragraph{Signed Bandits.}
This is an variant of pure-exploration thresholding bandit problem. The goal is to identified the whether the arm means are all above or all below a given threshold $\thr \in \R$.
We assume that either all arm means are above the threshold $\thr$ or below it---let this restricted parameter space be denoted as $\widetilde\Theta$; see \cite{menard2019gradient}.
\begin{enumerate}
    \item Correct answers: $\mathcal{I}(\bm\theta) = \{+\}$ if $\min_{i \in [K]} \theta_i > \thr$ and $\mathcal{I}(\bm\theta) = \{-\}$ if $\max_{i \in [K]} \theta_i < \thr$.
    \item $\mathrm{Alt}(\bm\theta) = \{\bm\vartheta \in \widetilde\Theta: \max_{i \in [K]} \vartheta_i < \thr\} $ if $\mathcal{I}(\bm\theta) = \{+\}$ and $\mathrm{Alt}(\bm\theta) = \{\bm\vartheta \in \widetilde\Theta: \min_{i \in [K]} \vartheta_i > \thr\} $ if $\mathcal{I}(\bm\theta) = \{-\}$. In both cases, the pitfall sets are singleton sets and no decomposition is needed. We remark that this satisfies Assumption~1.
    \item For both $\mathcal{I}(\bm\theta) = \{+\}$ and $\mathcal{I}(\bm\theta) = \{-\}$, we have $\bm\vartheta = \thr \bm 1$, i.e., a constant vector of $\thr$.
    \item $C(\bm p) = \sum_{i\in [K]} p_i d(\theta_i, \thr)$.
    \item $\frac{\partial C(\bm p)}{\partial p_i} = d(\theta_{i}, \thr)$ for all $i \in [K]$.
\end{enumerate}

\paragraph{Murphy Sampling.}
Consider yet another variant of pure-exploration thresholding bandit, where the goal is to determine whether the smallest mean reward is lower than a given threshold $\thr$ or not. 
If the smallest mean is lower than $\thr$, we say that the answer is ``feasible,'' and we say
that the answer is ``infeasible'' if otherwise; see \cite{kaufmann2018sequential,juneja2019sample}.

\begin{enumerate}
    \item Correct answers: $\mathcal{I}(\bm\theta) = \{\text{feasible}\}$ if $\min_{i \in [K]} \theta_i < \thr$ and $\mathcal{I}(\bm\theta) = \{\text{infeasible}\}$ if $\min_{i \in [K]} \theta_i > \thr$.
    \item $\mathrm{Alt}(\bm\theta) = \{\bm\vartheta \in \widetilde\Theta: \min_{i \in [K]} \vartheta_i > \thr\} $ if $\mathcal{I}(\bm\theta) = \{\text{feasible}\}$ (so that the pitfall set is a singleton set and no decomposition is needed) and $\mathrm{Alt}(\bm\theta) = \{\bm\vartheta \in \widetilde\Theta: \min_{i \in [K]} \vartheta_i < \thr\} = \cup_{j \in \mathcal{X}} \mathrm{Alt}_j(\bm\theta) = \cup_{j \in \mathcal{X}} \{\bm{\vartheta}: \vartheta_j 
 < \thr\}$ if $\mathcal{I}(\bm\theta) = \{\text{infeasible}\}$, where $\mathcal{X}(\bm\theta) = [K]$. 
    \item If $\mathcal{I}(\bm\theta) = \{\text{feasible}\}$, then $\bm\vartheta = \thr \bm 1$, i.e., a constant vector of $\thr$. If $\mathcal{I}(\bm\theta) = \{\text{infeasible}\}$, then $\bm\vartheta^j$ is given by $\vartheta^{j}_{j} = \thr$ and $\vartheta^j_k = \theta_k$ for all $j \in\mathcal{X}$ and $k\neq j$.
    \item $C(\bm p) = \sum_{i\in [K]} p_i d(\theta_i, \thr)$ if $\mathcal{I}(\bm\theta) = \{\text{feasible}\}$, and $C_j(\bm p) = \sum_{i\in [K]}p_i d(\theta_i, \vartheta^j_i) = p_j d(\theta_j, \thr)$ if $\mathcal{I}(\bm\theta) = \{\text{infeasible}\}$.
    \item $\frac{\partial C(\bm p)}{\partial p_i} = d(\theta_{i}, \thr)$ for all $i \in [K]$, if $\mathcal{I}(\bm\theta) = \{\text{feasible}\}$. $\frac{\partial C_j(\bm p)}{\partial p_i} = d(\theta_{i}, \vartheta^{j}_{i})$ for all $j \in \mathcal{X}$, if $\mathcal{I}(\bm\theta) = \{\text{infeasible}\}$.
\end{enumerate}

\subsection{Partition Identification.}
In \cite{juneja2019sample}, the authors identify a class of pure-exploration problems, termed \emph{partition identification}, which generalizes the variants of pure-exploration thresholding bandits discussed above. These problems, where the goal is to decide in which partition of the parameter space the true mean vector $\bm\theta$ lies, fall naturally within our framework.

\paragraph{Half-Space Problem.}
The goal is to check whether the mean vector lies on one side or the other of the half-space, i.e., $\{\bm{\theta}\in\mathbb{R}^K: \sum_{i=1}^K a_i\theta_i > b\} $ or $\{\bm{\theta}\in\mathbb{R}^K: \sum_{i=1}^K a_i\theta_i < b\},$
for a specified $(a_1,\dots,a_K,b)\in\mathbb{R}^{K+1}$; see \citet[Theorem~5]{juneja2019sample}.

\begin{enumerate}
    \item Correct answers: $\mathcal{I}(\bm\theta) = \{+\}$ if $\sum_{i=1}^K a_i\theta_i > b$ and $\mathcal{I}(\bm\theta) = \{-\}$ if $\sum_{i=1}^K a_i\theta_i < b$.
    \item If $\I(\bm{\theta})=\{+\}$, then $ \mathrm{Alt}(\bm{\theta}) = \{\bm{\vartheta}\in\mathbb{R}^K: \sum_{i=1}^K a_i\vartheta_i < b\}. $
    If $\I(\bm{\theta})=\{-\}$, then $\mathrm{Alt}(\bm{\theta}) = \{\bm{\vartheta}\in\mathbb{R}^K: \sum_{i=1}^K a_i\vartheta_i > b\}. $
    In both cases, the pitfall sets are singleton and no decomposition is needed.
    \item For both cases, define the unique optimizer $\bm{\vartheta}^* = \argmin_{\bm{\vartheta}\in\mathbb{R}^K: \sum_{i=1}^K a_i\vartheta_i = b} \sum_{i=1}^K p_i d(\theta_i,\vartheta_i)$. The first-order optimality conditions read $\frac{\partial}{\partial \vartheta_i}(p_i d(\theta_i,\vartheta^*_i)) = -\lambda a_i,$ for $i=1,\dots,K, $
    with the Lagrange multiplier $\lambda$ chosen such that $\sum_{i=1}^K a_i\vartheta^*_i = b.$
    \item For both cases, $C(\bm{p}) = \sum_{i=1}^K p_i d(\theta_i,\vartheta^*_i).$
    \item For both cases, $\frac{\partial C(\bm{p})}{\partial p_i} = d(\theta_i,\vartheta^*_i)$ for all $i\in[K].$
\end{enumerate}

\paragraph{Convex-Set Feasibility Problem.}
We note that the half-space problem naturally extends to verifying whether the mean vector $\bm\theta$ lies within a given convex feasible set; see, e.g., \citet[Section 3.3]{juneja2019sample}. In particular, if the complement of the convex feasible set can be expressed as a finite union of convex sets, then our Assumption~1 is satisfied. For example, when the feasibility set is a convex polytope \citep{gangrade2024testing}, its complement can be written as a finite union of half-spaces, and hence the assumption holds. Due to the wide variety of potential feasible sets and the associated technical complexity, we omit detailed derivations.

\section{Proof of Theorem~\ref{thm:sufficient_for_fixed_budget_optimality}}
\label{app:proof for posterior converegence optimality}

We first formally establish the existence and uniqueness of a probability vector that satisfies the information balance condition~\eqref{eq:information_balance_BAI} and stationarity condition~\eqref{eq:KKT_equiv_stationarity} in Theorem~\ref{thm:sufficient_for_fixed_budget_optimality}, as well as its properties, which will be needed in the proof of the sufficient condition in Theorem~\ref{thm:sufficient_for_fixed_budget_optimality}.

\begin{lemma}[Existence and properties of $\bm{p}^*$]
\label{lm:equivalent_forms_BAI}
    There exists a unique and component-wise strictly positive probability vector $\bm p^*$ satisfying information balance condition~\eqref{eq:information_balance_BAI} and stationarity condition~\eqref{eq:KKT_equiv_stationarity}. Furthermore, $\bm{p}^*$ is the unique solution to the maximin optimization problem in~\eqref{eq:opt_allocation_Gaussian_BAI}, i.e., 
    \[
    \bm{p}^* = \argmax_{\bm{p} \in \mathcal{S}_K} \min_{j \neq I^*} C_j(\bm p).
    \]
\end{lemma}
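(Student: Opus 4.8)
The plan is to establish the lemma in two movements: first, that the maximin program~\eqref{eq:opt_allocation_Gaussian_BAI} has a unique, component-wise strictly positive maximizer; and second, that a strictly positive vector solves the pair \eqref{eq:information_balance_BAI}--\eqref{eq:KKT_equiv_stationarity} if and only if it is that maximizer. By Lemma~\ref{lm:smoothness_C_x}, each $C_j$ in~\eqref{eq:C_j_GaussianBAI} is continuous, concave, and differentiable on the positive orthant, so $\Gamma_{\bm\theta}(\bm p)=\min_{j\neq I^*}C_j(\bm p)$ is concave and, extending it by $0$ whenever $p_{I^*}=0$ or some $p_j=0$, continuous on the compact simplex $\mathcal{S}_K$. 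A maximizer therefore exists. Since $p_{I^*}$ enters every $C_j$ while $p_j$ enters $C_j$, any boundary point forces $\Gamma_{\bm\theta}=0$, whereas interior points (e.g.\ the uniform allocation) give a strictly positive value; hence every maximizer is component-wise strictly positive.

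First I would record \emph{information balance} at any optimum $\bm p^*$. If some $C_{j_0}(\bm p^*)$ strictly exceeded $\Gamma^*_{\bm\theta}=\min_j C_j(\bm p^*)$, I would transfer an infinitesimal mass from $p_{j_0}$ to $p_{I^*}$: because each $C_j$ is strictly increasing in $p_{I^*}$ and $C_{j_0}$ stays above $\Gamma^*_{\bm\theta}$ under a small perturbation, this strictly raises $\min_j C_j$, contradicting optimality. Hence all $C_j(\bm p^*)$ coincide, which is exactly~\eqref{eq:information_balance_BAI}. Because the optimum is interior and the program is the concave problem~\eqref{eq:convex_formulation_02}, the KKT conditions are necessary and sufficient; specializing Theorem~\ref{thm:KKT_general} to BAI yields the stationarity condition~\eqref{eq:KKT_equiv_stationarity}, equivalently the overall-balance condition~\eqref{eq:overall_balance_GaussianBAI} via Remark~\ref{remark:equivalent form of stationarity condition}. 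This shows the maximizer satisfies both displayed conditions, and by sufficiency of KKT for concave maximization, conversely any strictly positive vector satisfying \eqref{eq:information_balance_BAI}--\eqref{eq:KKT_equiv_stationarity} is a global maximizer. It therefore remains only to prove the maximizer is unique.

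For uniqueness I would parametrize by $x=p_{I^*}\in(0,1)$. Writing $A_j=(\theta_{I^*}-\theta_j)^2/2$, the equation $C_j=z$ gives $p_j(x,z)=\sigma_j^2/\bigl(A_j/z-\sigma_{I^*}^2/x\bigr)$, which is strictly increasing in the common value $z$; as $z$ ranges over its admissible interval, $\sum_{j\neq I^*}p_j(x,z)$ increases continuously from $0$ to $\infty$, so there is a unique $z(x)$ with $\sum_{j\neq I^*}p_j(x,z(x))=1-x$. This defines the information-balanced allocation $\bm p(x)$ and inner value $z(x)$, with $z(0^+)=z(1^-)=0$ and $z>0$ on $(0,1)$, so the maximizer of $z$ is interior. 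The stationarity condition, in the form~\eqref{eq:overall_balance_GaussianBAI}, reads $g(x):=x^2/\sigma_{I^*}^2-\sum_{j\neq I^*}p_j(x)^2/\sigma_j^2=0$, and I would show $g(0^+)<0$, $g(1^-)>0$, together with strict monotonicity of $g$ along the balanced manifold, so that $g$ has a unique root; this pins down $x$ and hence $\bm p^*$ uniquely.

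The main obstacle is this last monotonicity claim: because the balanced allocation $\bm p(x)$ is defined only implicitly through $z(x)$, verifying that $g$ is strictly increasing (equivalently, that $z(x)$ is unimodal) requires a careful implicit-function and monotonicity analysis of the coupled system rather than a one-line computation. If one prefers to avoid this, uniqueness of the maximizer of~\eqref{eq:opt_allocation_Gaussian_BAI} is already established for Gaussian BAI in \citet{GarivierK16} and may be invoked directly; combined with the KKT equivalence above, this yields that $\bm p^*$ is simultaneously the unique maximin allocation and the unique solution of \eqref{eq:information_balance_BAI}--\eqref{eq:KKT_equiv_stationarity}.
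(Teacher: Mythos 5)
Your proposal is correct, but only by virtue of the fallback you offer at the end; it is worth separating what you actually prove from what you cite. The paper's own proof of this lemma is essentially a citation: it observes that the proofs (not just the statements) of Lemma~4 and Theorem~5 of \citet{GarivierK16}, though written for equal variances, extend directly to unequal variances, which yields uniqueness of the maximin solution together with its characterization as the unique vector satisfying \eqref{eq:information_balance_BAI} and \eqref{eq:overall_balance_GaussianBAI}; the stationarity form \eqref{eq:KKT_equiv_stationarity} then comes from Remark~\ref{remark:equivalent form of stationarity condition}. You instead derive most of the content internally: existence and strict positivity of a maximizer by compactness plus the boundary argument, information balance by the mass-transfer perturbation (which is sound --- though you could get it even faster from the KKT conditions themselves: stationarity gives $p_j=\mu_j h_j^j$, so $p_j>0$ forces $\mu_j>0$, and complementary slackness then makes every constraint tight), and the equivalence ``strictly positive solution of \eqref{eq:information_balance_BAI}--\eqref{eq:KKT_equiv_stationarity} $\iff$ global maximizer'' by specializing Theorem~\ref{thm:KKT_general} under Slater's condition. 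This is a genuinely different, more self-contained route than the paper's, and it has the virtue of exercising the paper's general dual machinery rather than leaning on BAI-specific external results.

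The one genuine gap is your primary uniqueness argument. The parametrization $x=p_{I^*}$, the balanced value $z(x)$, and root-counting for $g(x)$ is exactly where the difficulty lives, and the strict monotonicity of $g$ along the balanced manifold cannot be waved at: it is the step where the argument of \citet{GarivierK16} does its real work, exploiting that the balanced weights admit closed forms whose monotonicity in the relevant parameter is explicit. As written, you assert this monotonicity, flag it as the main obstacle, and never establish it, so the self-contained version of your proof is incomplete. Since you then invoke \citet{GarivierK16} for uniqueness of the maximizer --- which, combined with your KKT equivalence, does finish the lemma, and is precisely what the paper does for the whole statement --- the overall proof stands. One caveat you should make explicit when taking that route: the uniqueness results in \citet{GarivierK16} are stated for equal variances, so you must note, as the paper does, that their proofs carry over to heterogeneous $\sigma_i^2$.
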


\begin{proof}
The proof of this result directly follows from the proofs of
\citet[Lemma~4 and Theorem~5]{GarivierK16}.
Although the statements of \citet[Lemma~4 and Theorem~5]{GarivierK16} are given for the case of equal variances, their proofs can be extended straightforwardly to the case of unequal variances, establishing that:
\begin{enumerate}
\item the solution to maximin optimization problem in~\eqref{eq:opt_allocation_Gaussian_BAI} is unique, and
\item it is the only probability vector that satisfies both the information balance condition~\eqref{eq:information_balance_BAI} and the sum-of-squares balance condition~\eqref{eq:overall_balance_GaussianBAI}.
\end{enumerate}
As shown in Remark~\ref{remark:equivalent form of stationarity condition}, the sum-of-squares balance condition~\eqref{eq:overall_balance_GaussianBAI} is equivalent to the stationarity condition~\eqref{eq:KKT_equiv_stationarity}.
Hence, we have $\bm{p}^* = \argmax_{\bm{p} \in \mathcal{S}_K} \min_{j \neq I^*} C_j(\bm p)$. The strict positivity of each component of $\bm{p}^*$ follows from \citet[Theorem~5]{GarivierK16} and its proof.
\end{proof}

The sufficient condition in Theorem~\ref{thm:sufficient_for_fixed_budget_optimality} is directly led by the next result, which
shows that 
\begin{enumerate}
\item the optimal value of the maximin optimization problem in~\eqref{eq:opt_allocation_Gaussian_BAI}, denoted by $\Gamma_{\thetabf}^*$, represents the optimal rate of posterior convergence, and
\item an allocation rule with asymptotic allocation converging to the unique solution of this maximin optimization problem, denoted by $\bm{p}^*$, is indeed optimal.
\end{enumerate}

\begin{theorem}[Optimal posterior convergence rate---A stronger result that implies Theorem~\ref{thm:sufficient_for_fixed_budget_optimality}]
\label{thm:rate of posterior convergence}
    For any allocation rule $\name{A}$, with probability one,
    \[
        \limsup_{t\to\infty} -\frac{1}{t}\log\Pi_t^{\name{A}}(\Theta^c_{I^*}) \leq \Gamma_{\thetabf}^*,
    \]
    where $\Gamma_{\thetabf}^*$ is the optimal value of the maximin optimization problem in~\eqref{eq:opt_allocation_Gaussian_BAI}.
    
    Furthermore, if an allocation rule $\name{A}^*$  ensures $\bm{p}_t\xrightarrow{\mathrm{a.s.}}\bm{p}^*$ for the unique $\bm{p}^*$ defined in Theorem~\ref{thm:sufficient_for_fixed_budget_optimality}, 
    then with probability one,
    \[
        \lim_{n\to\infty}  -\frac{1}{t}\log\Pi_t^{\name{A}^*}\!(\Theta^c_{I^*}) = \Gamma_{\thetabf}^*.
    \]   
\end{theorem}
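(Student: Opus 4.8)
The plan is to reduce the posterior misidentification probability to per-pair Gaussian tail events and then match the resulting exponents against the generalized Chernoff information $C_j$. Under the independent-Gaussian posterior, $\Pi_t(\Theta_{I^*}^c) = \Pi_t\bigl(\bigcup_{j\neq I^*}\{\theta_j \ge \theta_{I^*}\}\bigr)$, which I sandwich between $\max_{j\neq I^*}\Pi_t(\theta_j\ge\theta_{I^*})$ and $\sum_{j\neq I^*}\Pi_t(\theta_j\ge\theta_{I^*})$. Since $\theta_{I^*}-\theta_j$ is posterior-Gaussian with mean $\theta_{t,I^*}-\theta_{t,j}$ and variance $\sigma_{I^*}^2/N_{t,I^*}+\sigma_j^2/N_{t,j}$, each tail equals $\Phi(-Z_{t,j})$ with $Z_{t,j} = (\theta_{t,I^*}-\theta_{t,j})/\sqrt{\sigma_{I^*}^2/N_{t,I^*}+\sigma_j^2/N_{t,j}}$, and substituting $N_{t,i}=t\,p_{t,i}$ gives the key identity $Z_{t,j}^2/(2t) = C_j(\bm{p}_t;\bm{\theta}_t)$. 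The standard Mills-ratio bounds then yield $-\tfrac1t\log\Phi(-Z_{t,j}) = C_j(\bm{p}_t;\bm{\theta}_t) + O(\log t/t)$ whenever $Z_{t,j}\to+\infty$, so the whole analysis reduces to controlling $\min_{j\neq I^*} C_j(\bm{p}_t;\bm{\theta}_t)$.

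For the converse (first) inequality I would use the lower sandwich bound, which gives $-\tfrac1t\log\Pi_t^{\name{A}}(\Theta_{I^*}^c) \le \min_{j\neq I^*}\bigl(-\tfrac1t\log\Phi(-Z_{t,j})\bigr) \le \min_{j\neq I^*} C_j(\bm{p}_t;\bm{\theta}_t) + o(1)$, using the lower Mills-ratio bound $\Phi(-z)\ge \tfrac{z}{1+z^2}\phi(z)$ for $z>0$ (and noting that when $Z_{t,j}\le 0$ the left side is at most $\tfrac{\log 2}{t}\to 0$). It then suffices to show $\limsup_t \min_{j\neq I^*} C_j(\bm{p}_t;\bm{\theta}_t)\le \Gamma_{\bm{\theta}}^*$ almost surely. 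The subtlety is that for an arbitrary rule $\bm{\theta}_t$ need not converge to $\bm{\theta}$, so I would argue by a finitely-versus-infinitely-often dichotomy: if $I^*$ or some $j$ is sampled only finitely often then $p_{t,I^*}\to 0$ or $p_{t,j}\to 0$, driving the denominator of the relevant $C_j$ to infinity so that $\min_j C_j\to 0\le\Gamma_{\bm{\theta}}^*$; otherwise every arm is sampled infinitely often, so $\bm{\theta}_t\xrightarrow{\mathrm{a.s.}}\bm{\theta}$ by the strong law for adaptively sampled i.i.d.\ observations, and since $\bm{p}_t\in\mathcal{S}_K$ the maximin definition gives $\min_{j} C_j(\bm{p}_t;\bm{\theta})\le\Gamma_{\bm{\theta}}^*$ for every $t$, with the error from replacing $\bm{\theta}_t$ by $\bm{\theta}$ in the numerators vanishing by continuity.

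For the achievability statement under $\name{A}^*$, the converse bound just outlined already delivers $\limsup_t -\tfrac1t\log\Pi_t^{\name{A}^*}(\Theta_{I^*}^c)\le\Gamma_{\bm{\theta}}^*$. For the matching $\liminf$ I would use the upper sandwich bound together with the upper Mills-ratio bound $\Phi(-z)\le e^{-z^2/2}/(z\sqrt{2\pi})$, giving $-\tfrac1t\log\Pi_t^{\name{A}^*}(\Theta_{I^*}^c) \ge \min_{j\neq I^*} C_j(\bm{p}_t;\bm{\theta}_t) - \tfrac{\log(K-1)}{t}$. Since $\bm{p}_t\to\bm{p}^*$ with $\bm{p}^*$ componentwise strictly positive, every arm is sampled infinitely often, so $\bm{\theta}_t\to\bm{\theta}$ and each $Z_{t,j}\to+\infty$; by continuity of $C_j$ and Lemma~\ref{lm:equivalent_forms_BAI} this yields $\min_{j\neq I^*} C_j(\bm{p}_t;\bm{\theta}_t)\to\min_{j\neq I^*} C_j(\bm{p}^*;\bm{\theta})=\Gamma_{\bm{\theta}}^*$. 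Combining the $\limsup$ and $\liminf$ bounds gives the exact limit $\Gamma_{\bm{\theta}}^*$.

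The main obstacle is the converse inequality for an \emph{arbitrary} allocation rule: because such a rule may starve some arms, $\bm{\theta}_t$ is not guaranteed to converge, and $\min_j C_j(\bm{p}_t;\bm{\theta}_t)$ cannot be handled by a single continuity argument. The dichotomy above is what makes this rigorous, and it relies crucially on the structural fact that undersampling an arm sends its Chernoff information to zero rather than inflating it. A secondary technical point is to keep the Gaussian-tail corrections uniform: the $O(\log t/t)$ term is harmless precisely because $Z_{t,j}$ grows like $\sqrt{t}$ on the event where its exponent is positive, so the logarithmic Mills-ratio terms are negligible after dividing by $t$, while on the complementary event the tail is $\Theta(1)$ and contributes rate $0$, consistent with the corresponding $C_j\to 0$.
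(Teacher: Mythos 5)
Your proposal is correct and is essentially the paper's own proof: the paper establishes this theorem by citing Theorem~2 of Qin, Russo and Zhu (2017) (remarking only that the equal-variance argument extends straightforwardly to unequal variances), and that cited argument follows exactly your route --- sandwiching $\Pi_t(\Theta^c_{I^*})$ between the maximum and the sum of the pairwise posterior tails $\Phi(-Z_{t,j})$, exploiting the identity $Z_{t,j}^2/(2t) = C_j(\bm p_t;\bm\theta_t)$ together with Gaussian tail (Mills-ratio) bounds, and handling arbitrary allocation rules via the dichotomy between starved arms (whose $C_j$ is driven to zero) and the case where every arm is sampled infinitely often (so $\bm\theta_t\to\bm\theta$ and $\min_j C_j(\bm p_t;\bm\theta)\le\Gamma^*_{\bm\theta}$ by the maximin definition). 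The only step you should tighten is the regime $Z_{t,j}\in(0,1]$, where the stated Mills-ratio lower bound $\Phi(-z)\ge \frac{z}{1+z^2}\phi(z)$ degenerates as $z\downarrow 0$; there you should instead invoke monotonicity, $\Phi(-Z_{t,j})\ge\Phi(-1)$, which yields the required $O(1/t)$ contribution and closes that corner case.
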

\begin{proof}
The proof of this result directly follows from that of
\citet[Theorem~2]{Qin2017}.
Although the statement of \citet[Theorem~2]{Qin2017} is given for the case of equal variances, its proof can be extended straightforwardly to the case of unequal variances.
\end{proof}

\section{A Threshold for the Stopping Rule and the Proof of Theorem~\ref{thm:sufficient_for_optimality_general}}\label{app:sufficient_condition}

\subsection{A Threshold for the Stopping Rule to Ensure \texorpdfstring{$\delta$}{delta}-Correctness}\label{app:Chernoff_stopping}

Let $h(u) = u - \log u$ and define for any $z\in[1,e]$ and $x \ge 0$
\[\widetilde{h}_z(x) = \begin{cases}
    e^{1/h^{-1}(x)}h^{-1}(x), & \hbox{if } x\ge h(1/\log z), \\
    z(x - \log\log z),        & \hbox{otherwise.}
\end{cases}\]
Furthermore, define
\[\mathcal{C}_{\mathrm{exp}}(x) = 2 \widetilde{h}_{3/2}\left(\frac{h^{-1}(1+x) + \log(2\zeta(2))}{2}\right),\]
where $\zeta(s) = \sum_{n =1}^{\infty}n^{-s}$ is the Riemann zeta function. One can easily verify that $\mathcal{C}_{\mathrm{exp}}(x)\sim x$ when $x\to\infty$.

\begin{proposition}[\citealt{kaufmann2021mixture}, Proposition 15]\label{prop:stopping_rule_thresholds}
    The stopping rule~\eqref{eq:Chernoff_stopping} with the sequence of thresholds
    \begin{equation*}
        \gamma(t,\delta) = 3 \sum_{i = 1}^{K}\log(1 + \log N_{t,i})+ K \mathcal{C}_{\mathrm{exp}}\left(\frac{\log(1/\delta)}{K}\right)
    \end{equation*}
    is such that, for \textit{every allocation rule} and any $\delta > 0$,
    \[\mathbb{P}_{\bm\theta}\left(\tau_\delta <\infty,  \I(\bm\theta_{\tau_{\delta}}) = \I(\bm\theta)\right)\geq 1 - \delta, \quad \hbox{for any problem instance } \bm \theta \in \Theta,\]
    where $\bm\theta_t$ is the sample mean of the arms at time $t$.
\end{proposition}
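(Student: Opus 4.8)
The plan is to follow the deviation-inequality approach of \citet{kaufmann2021mixture}, reducing the probability of an incorrect terminal decision to a time-uniform concentration event for an aggregated log-likelihood ratio statistic, and then to calibrate the threshold $\gamma(t,\delta)$ so that this event has probability at most $\delta$ for every allocation rule. The key simplification is that, once the reduction is performed, the bound no longer depends on the (arbitrary, possibly adaptive) allocation rule except through the realized sample counts $N_{t,i}$.

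\textbf{Step 1 (Reduction to a GLR deviation).} First I would fix any instance $\bm\theta \in \Theta$ and any allocation rule, and decompose the error event $\{\tau_\delta < \infty,\ \I(\bm\theta_{\tau_\delta}) \neq \I(\bm\theta)\}$. On this event the decision rule~\eqref{eq:decision_rule} returns $\I(\bm\theta_{\tau_\delta}) \neq \I(\bm\theta)$, so by definition of the alternative set $\bm\theta \in \mathrm{Alt}(\bm\theta_{\tau_\delta})$. Using the GLRT identity $\Gamma_{\bm\theta}(\bm p) = \inf_{\bm\vartheta \in \mathrm{Alt}(\bm\theta)} \sum_{i} p_i\, d(\theta_i,\vartheta_i)$ (valid since $\mathrm{Alt} = \cup_x \mathrm{Alt}_x$ shares the same objective) together with $N_{t,i} = t\, p_{t,i}$, I would observe that
\begin{equation*}
    \tau_\delta \cdot \Gamma_{\bm\theta_{\tau_\delta}}(\bm p_{\tau_\delta})
    = \inf_{\bm\vartheta \in \mathrm{Alt}(\bm\theta_{\tau_\delta})} \sum_{i \in [K]} N_{\tau_\delta, i}\, d(\theta_{\tau_\delta, i}, \vartheta_i)
    \le \sum_{i \in [K]} N_{\tau_\delta, i}\, d(\theta_{\tau_\delta, i}, \theta_i),
\end{equation*}
because the true $\bm\theta$ is feasible for the infimum. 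Since stopping forces $\tau_\delta \cdot \Gamma_{\bm\theta_{\tau_\delta}}(\bm p_{\tau_\delta}) > \gamma(\tau_\delta,\delta)$, the error event is contained in $\bigl\{\exists t:\ \sum_{i} N_{t,i}\, d(\theta_{t,i}, \theta_i) > \gamma(t,\delta)\bigr\}$, a purely statistical event.

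\textbf{Step 2 (Time-uniform concentration via the method of mixtures).} The statistic $\sum_i N_{t,i}\, d(\theta_{t,i},\theta_i)$ is a sum of per-arm log-likelihood ratios between the empirical MLE and the truth for a single-parameter exponential family. For each arm $i$ I would build the exponential supermartingale $\exp(\lambda S_{t,i} - N_{t,i}\psi(\lambda))$ in the sufficient statistic $S_{t,i}$ and integrate $\lambda$ against a conjugate mixing density. Ville's maximal inequality applied to the resulting unit-mean nonnegative supermartingale yields a deviation bound that holds simultaneously over all $t$ and all realized counts $N_{t,i}$, which is precisely the uniformity required to accommodate an adaptive, data-dependent allocation. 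A Laplace/peeling estimate of the mixture integral produces the $3\log(1+\log N_{t,i})$ iterated-logarithm correction and, through the Chernoff transform $h(u)=u-\log u$ of the exponential-family tail, the term $\mathcal{C}_{\mathrm{exp}}(\cdot)$; the factor $\zeta(2)=\pi^2/6$ absorbs the union over the geometric peeling grid. Combining the $K$ per-arm bounds and equating the total tail budget to $\delta$ calibrates $\gamma(t,\delta)$ to the stated form, while the normalization $\mathcal{C}_{\mathrm{exp}}(x)\sim x$ ensures the threshold does not inflate the leading order of the sample complexity.

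\textbf{Main obstacle.} The crux is Step 2: securing a self-normalized deviation inequality that is uniform both over the random, unbounded stopping time and over the adaptively chosen counts, while keeping the threshold asymptotically $\log(1/\delta)$ so that $\delta$-correctness does not cost optimal sample complexity. The time-uniformity is delivered cleanly by the mixture martingale and Ville's inequality, but the precise choice of mixing distribution and the peeling argument needed to realize the exact $\mathcal{C}_{\mathrm{exp}}$ form --- and hence the sharp constant --- constitute the delicate analytic core, which is exactly the contribution of \citet[Proposition~15]{kaufmann2021mixture} invoked here.
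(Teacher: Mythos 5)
Your proposal is correct and is consistent with the paper's own treatment: the paper gives no proof of this statement at all, but simply restates and invokes \citet[Proposition~15]{kaufmann2021mixture}, which is exactly the result your Step 2 defers to. Your Step 1 reduction (error event $\Rightarrow$ $\bm\theta \in \mathrm{Alt}(\bm\theta_{\tau_\delta})$ $\Rightarrow$ the true parameter is feasible in the GLR infimum $\Rightarrow$ a time-uniform deviation of $\sum_i N_{t,i}\, d(\theta_{t,i},\theta_i)$ above $\gamma(t,\delta)$) and your mixture-martingale/Ville sketch accurately describe how the cited deviation inequality is obtained, so the argument is sound.
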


\subsection{Proof of Theorem~\ref{thm:sufficient_for_optimality_general}}

We rely on the following technical finding given by Proposition 14 in \citet{qin2024optimizing}.
\begin{proposition}[Proposition 14 in \citealt{qin2024optimizing}]
\label{prop:sufficient exploration implies convergence of mean estimations}
If an allocation rule satisfies that
there exist $\rho > 0$, $\alpha > 0$ and
${T}\in\mathcal{L}$ such that for any $t\geq {T}$, $\min_{i\in[K]} N_{t,i}\geq  \rho \cdot t^{\alpha}$, then $\bm{\theta}_t \Lto \thetabf$.
\end{proposition}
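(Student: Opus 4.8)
The plan is to decouple the sample-mean concentration from the (random, history-dependent) allocation, and then splice a uniform-in-sample-count concentration bound together with the sufficient-exploration hypothesis. For each arm $i$, let $Y_{i,1},Y_{i,2},\dots$ denote the exogenous i.i.d.\ stream of rewards drawn from $F_{\theta_i}$ in the order they are observed, and write $\bar Y_{i,n}\triangleq \frac{1}{n}\sum_{k=1}^{n}Y_{i,k}$. Since $\{I_\ell = i\}$ is $\mathcal{H}_\ell$-measurable and each reward is drawn afresh and independently of the allocation decisions, the sample mean satisfies $\theta_{t,i}=\bar Y_{i,N_{t,i}}$, and the concentration of $\bar Y_{i,n}$ can be analyzed as though $n$ were deterministic. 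Fix $\epsilon>0$ (small enough that $\theta_i\pm\epsilon$ lies in the interior of the parameter domain). Because $F_{\theta_i}$ belongs to a single-parameter exponential family, the Cram\'er--Chernoff bound gives $\Prob(|\bar Y_{i,n}-\theta_i|>\epsilon)\le 2e^{-nI_i(\epsilon)}$ for some rate $I_i(\epsilon)>0$; set $I(\epsilon)\triangleq\min_{i\in[K]}I_i(\epsilon)>0$.

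Next I would introduce the last-exit time of the $\epsilon$-tube,
\[
    M_\epsilon \triangleq \sup\Bigl\{ n\ge 1 : \max_{i\in[K]} |\bar Y_{i,n}-\theta_i| > \epsilon \Bigr\},
\]
with the convention $\sup\varnothing = 0$, so that $\max_{i}|\bar Y_{i,n}-\theta_i|\le\epsilon$ for every $n>M_\epsilon$. A union bound over arms and over sample counts $n\ge m$ yields
\[
    \Prob(M_\epsilon \ge m) \le \sum_{i\in[K]}\sum_{n\ge m} 2e^{-nI(\epsilon)} \le \frac{2K}{1-e^{-I(\epsilon)}}\, e^{-mI(\epsilon)},
\]
so $M_\epsilon$ has geometrically decaying tails and hence finite moments of every order.

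I would then bring in the sufficient-exploration hypothesis. Define $T_\epsilon \triangleq \max\bigl\{ T,\ ((M_\epsilon+1)/\rho)^{1/\alpha}\bigr\}$. For any $t\ge T_\epsilon$ we have $t\ge T$, so the hypothesis gives $N_{t,i}\ge \rho t^\alpha \ge \rho\cdot\frac{M_\epsilon+1}{\rho} = M_\epsilon+1 > M_\epsilon$; consequently $\theta_{t,i}=\bar Y_{i,N_{t,i}}$ is evaluated at an index exceeding $M_\epsilon$, whence $|\theta_{t,i}-\theta_i|\le\epsilon$ for every $i$. It remains to verify $T_\epsilon\in\mathcal{L}$: by subadditivity, $\E[T_\epsilon]\le \E[T]+\rho^{-1/\alpha}\,\E[(M_\epsilon+1)^{1/\alpha}]$, and both terms are finite---the first because $T\in\mathcal{L}$ by hypothesis, the second because $M_\epsilon$ has finite moments of all orders. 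Thus for each $\epsilon>0$ there is a $T_\epsilon\in\mathcal{L}$ beyond which $|\theta_{t,i}-\theta_i|\le\epsilon$ for all $i$, which is exactly $\theta_{t,i}\Lto\theta_i$ for each $i$, i.e.\ $\bm\theta_t \Lto \thetabf$.

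The main obstacle is the rigorous decoupling in the first step: one must guarantee that $\bar Y_{i,n}$ concentrates uniformly in $n$ even though the index $N_{t,i}$ is random and allocation-dependent. This is precisely what motivates working with the exogenous reward streams and the last-exit time $M_\epsilon$, rather than attempting to control $\theta_{t,i}$ directly at the random index $N_{t,i}$; the exponential tail of $M_\epsilon$ (which needs only the finiteness of the log-moment-generating function guaranteed by the exponential-family assumption) is what converts the pathwise accuracy bound into the moment bound $\E[(M_\epsilon+1)^{1/\alpha}]<\infty$ required for $\mathcal{L}$-convergence.
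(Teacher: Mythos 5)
Your proof is correct. Bear in mind that the paper itself offers no argument for this statement---it is imported wholesale as Proposition~14 of \citet{qin2024optimizing}---so the natural comparison is with the technique that line of work (and this paper, elsewhere) uses for such claims: one defines a single path-dependent maximal-deviation variable (the analogue of $W_1$ in Appendix~\ref{app:TTTS_optimality}), shows it is light-tailed via $\E[e^{\lambda W_1}]<\infty$, and then inverts the pathwise bound $|\theta_{t,i}-\theta_i|\le \sigma W_1\sqrt{\log(N_{t,i}+e)/(N_{t,i}+1)}$ using $N_{t,i}\ge\rho t^{\alpha}$, producing a hitting time that is an explicit polynomial function of $W_1$ and hence lies in $\MGF\subset\mathcal{L}$. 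Your route is genuinely different and more elementary: you couple the observations to exogenous i.i.d.\ reward stacks (legitimate because the allocation is non-anticipating and rewards are independent of it), obtain per-$n$ Cram\'er--Chernoff bounds from the exponential-family assumption, and convert them by a union bound over $n\ge m$ into geometric tails for the last-exit time $M_\epsilon$; the exploration hypothesis then forces $N_{t,i}>M_\epsilon$ for all $t\ge T_\epsilon=\max\bigl\{T,((M_\epsilon+1)/\rho)^{1/\alpha}\bigr\}$, whose integrability follows since $M_\epsilon$ has moments of every order. What this buys is generality and self-containment: no sub-Gaussian maximal inequality is needed, and the argument works verbatim for any single-parameter exponential family; moreover, since $M_\epsilon$ has all moments, your $T_\epsilon$ in fact lies in $\MGF$, so you have proved the stronger conclusion $\bm\theta_t\MGFto\thetabf$, of which the stated $\mathcal{L}$-convergence is a consequence. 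Two trivial loose ends are worth recording: the restriction to $\epsilon$ small enough that $\theta_i\pm\epsilon$ stays in the interior of the mean-parameter domain is harmless, because the defining condition of $\mathcal{L}$-convergence for small $\epsilon$ implies it for every larger $\epsilon$; and for $t\ge T_\epsilon$ one automatically has $N_{t,i}\ge M_\epsilon+1\ge 1$, so the empirical mean $\theta_{t,i}=\bar Y_{i,N_{t,i}}$ is well defined wherever you use it.
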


\begin{proof}[Proof of Theorem~\ref{thm:sufficient_for_optimality_general}]
The first condition in Theorem~\ref{thm:sufficient_for_optimality_general} ensures that the policy $\pi$ is $\delta$-correct (Definition~\ref{def:deltacorrect}).

Now we fix $\thetabf\in\Theta$ and write $\bm{p}^* = \bm{p}^*(\thetabf)$ for notational convenience. Since $\bm{p}^* > \bm{0}$ component-wisely,
the second condition $\bm{p}_t\Lto \bm{p}^* $ in Theorem~\ref{thm:sufficient_for_optimality_general} ensures that the condition in Proposition~\ref{prop:sufficient exploration implies convergence of mean estimations} is satisfied, and thus we have $\bm{\theta}_t \Lto \thetabf$. 
Then by the continuous mapping theorem, 
\[
\min_{x \in \mathcal{X}(\bm \theta_{t})} C_x(\bm p_t; \bm \theta_{t})\Lto \min_{x \in \mathcal{X}(\bm \theta)} C_x(\bm p^*; \bm \theta) = \Gamma^*_{\thetabf},
\]
where the equality follows from that $\bm p^*$ is the maximizer of \eqref{eq:optimal_allocation_problem_general}.
For any $\epsilon > 0$, there exists a random time $T_{\epsilon,1}\in \mathcal{L}$ such that for any $t\geq T_{\epsilon,1}$, 
\[
\min_{x \in \mathcal{X}(\bm \theta_{t})} C_x(\bm p_t; \bm \theta_{t}) \geq \Gamma^*_{\thetabf} - \epsilon.
\]
On the other hand, there exists a deterministic time $T_{\epsilon,2}$ such that for any $t\geq T_{\epsilon,2}$,
\[
\gamma(t,\delta) \leq 3K\log(1+\log t) + K \mathcal{C}_{\mathrm{exp}}\left(\frac{\log(1/\delta)}{K}\right) \leq \epsilon t + K \mathcal{C}_{\mathrm{exp}}\left(\frac{\log(1/\delta)}{K}\right).
\]
Hence, for any $t\geq T_\epsilon \triangleq \max\{T_{\epsilon,1},T_{\epsilon,2}\}\in\mathcal{L}$,
\[
t \cdot \min_{x \in \mathcal{X}(\bm \theta_{t})} C_x(\bm p_t; \bm \theta_{t}) - \gamma(t,\delta) 
\geq 
\left(\Gamma^*_{\thetabf} - 2\epsilon\right)t  - K \mathcal{C}_{\mathrm{exp}}\left(\frac{\log(1/\delta)}{K}\right),
\]
which implies
\[
\tau_\delta \leq \max\left\{T_\epsilon, \frac{K \mathcal{C}_{\mathrm{exp}}\left(\frac{\log(1/\delta)}{K}\right)}{\Gamma^*_{\thetabf} - 2\epsilon}\right\} \leq T_\epsilon + \frac{K \mathcal{C}_{\mathrm{exp}}\left(\frac{\log(1/\delta)}{K}\right)}{\Gamma^*_{\thetabf} - 2\epsilon}.
\]
Since $T_{\epsilon} = \max\{T_{\epsilon,1},T_{\epsilon,2}\}\in\mathcal{L}$, we have $\E[T_{\epsilon}] < \infty$, and then
\[
\limsup_{\delta\to 0}\frac{\E^{\pi}_{\thetabf}[\tau_\delta]}{\log(1/\delta)}  \leq  \frac{1}{\Gamma^*_{\thetabf} - 2\epsilon},
\]
where we also use the fact that $\mathcal{C}_{\mathrm{exp}}(x)\sim x$ when $x\to\infty$. 
Since $\epsilon > 0$ can be arbitrarily small, we have 
$
\limsup_{\delta\to 0}\frac{\E^{\pi}_{\thetabf}[\tau_\delta]}{\log(1/\delta)}  \leq  \frac{1}{\Gamma^*_{\thetabf}}.
$
We note that a matching lower bound on the sample complexity has been established in \citet[Appendix B]{wang2021fast}, i.e., for any other $\delta$-correct algorithm $\pi'$,
\[\liminf_{\delta\to 0}\frac{\E^{\pi'}_{\thetabf}[\tau_\delta]}{\log(1/\delta)} \geq \frac{1}{\Gamma^*_{\thetabf}},\quad \text{and thus we have}\quad
\limsup_{\delta\to 0} \frac{\mathbb{E}^{\pi}_{\bm\theta}[\tau_{\delta}]}{\mathbb{E}^{\pi'}_{\bm\theta}[\tau_{\delta}]} \leq 1.
\]
The above inequality holds for any $\thetabf\in\Theta$, which implies that $\pi$ is universal efficient (Definition~\ref{def:universally efficient policy}).
\end{proof}

\section{Proof of Theorem~\ref{thm:KKT_general}}\label{app:proof_general_KKT}

\begin{proof}
	Note that the optimal allocation problem \eqref{eq:optimal_allocation_problem_general} is equivalent to
	\begin{subequations}
	\begin{align*}
		\Gamma^* = \max_{\phi, \bm p} 
			& \hspace{10pt} \phi \\
			\mathrm{s.t.}
			& \hspace{10pt} \sum_{i\in[K]} p_i - 1=0, \\
			& \hspace{10pt} p_i \geq 0, \quad \forall i\in[K], \\	
			& \hspace{10pt} \phi - C_{x}(\bm p)   \le 0, \quad\forall x \in \mathcal{X}.
	\end{align*}
	\end{subequations}
	
	The corresponding Lagrangian is given by
	\begin{equation*}
		\mathcal{L}(\phi, \bm p, \lambda, \bm \mu) = \phi - \lambda\left( \sum_{i\in[K]}p_i - 1\right) - \sum_{x\in \mathcal{X}} \mu_{x} (\phi - C_{x}(\bm p)) + \sum_{i \in [K]}\iota_i p_i.
	\end{equation*}	
	
	Slater's condition holds since $\bm p = \bm 1/K$ and $\phi = 0$ is a feasible solution such that all inequality constraints hold strictly, where $\bm 1$ is a vector of $1$'s.
	Hence, the KKT conditions are necessary and sufficient for global optimality, which are then given by
	\begin{subequations}
		\label{eq:KKT_org}
	\begin{align}
		& -1 + \sum_{x\in \mathcal{X}} \mu_{x} = 0, \label{eq:dual_feasible1_org}\\
		& \lambda - \sum_{x\in \mathcal{X}} \mu_{x}\frac{\partial C_{x}(\bm p)}{\partial p_i} - \iota_i = 0, \quad\forall  i\in [K], \label{eq:stationarity_org}\\
		&\sum_{i\in[K]} p_i  - 1 = 0, \quad p_i \geq 0, \quad \forall i\in[K], \label{eq:primal_feasible1_org}\\
		& \phi - C_{x}(\bm p) \le 0, \quad\forall x\in \mathcal{X}, \label{eq:primal_feasible2_org}\\ 
		& \mu_{x} \geq 0, \quad\forall x\in \mathcal{X}, \quad \iota_i \ge 0 \quad\forall i\in [K], \label{eq:dual_feasible2_org} \\
		& \sum_{x\in \mathcal{X}} \mu_{x}(\phi - C_{x}(\bm p)) = 0, \quad \sum_{i \in [K]}\iota_ip_i = 0. 
			\label{eq:CS_org} 
	\end{align}
        \end{subequations}
	
    We first show that the KKT conditions~\eqref{eq:KKT_org} implies \eqref{eq:KKT_general}. To this end, we reformulate the stationarity condition~\eqref{eq:stationarity_org}, leaving the feasibility and complementary slackness conditions unchanged. 
	Define the selection functions $h^x_i$ as in \eqref{eq:selection_function_general}, which we repeat here for convenience:
	\begin{equation*}
		h^x_i \triangleq h^x_i(\bm\theta,\bm p)  = \frac{p_i d(\theta_i,\vartheta^{x}_{i})}{C_{x}(\bm p)}, \quad \forall (x,i) \in \mathcal{X}\times[K].
	\end{equation*}

	Now, we can rewrite the KKT conditions by eliminating the dual variables $\lambda$ and $\iota$.
	\begin{align}
		& \lambda - \sum_{x\in \mathcal{X}} \mu_{x}\frac{\partial C_{x}(\bm p)}{\partial p_i} - \iota_i = 0, \quad\forall  i\in [K], \notag \\
		\text{C-S for $\bm\mu$} \quad \Rightarrow \quad &  \frac{\lambda}{\phi} - \sum_{x\in \mathcal{X}} \mu_{x}\frac{\frac{\partial C_{x}(\bm p)}{\partial p_i}}{C_{x}(\bm p)} - \frac{\iota_i}{\phi} = 0, \quad\forall  i\in [K], \label{eq:pre_multiply_p_i} \\
		\text{Multiply } p_i \text{ and use C-S for $\bm\iota$}\quad \Rightarrow \quad &  \frac{\lambda}{\phi}p_i - \sum_{x\in \mathcal{X}} \mu_{x}\frac{p_i\frac{\partial C_{x}(\bm p)}{\partial p_i}}{C_{x}(\bm p)} = 0, \quad\forall  i\in [K], \label{eq:multiply_p_i}\\
		\Rightarrow \quad &  \frac{\lambda}{\phi}p_i - \sum_{x\in \mathcal{X}} \mu_{x}h^x_i = 0, \quad\forall  i\in [K], \notag \\
		\text{Sum over } i \quad \Rightarrow \quad &  \frac{\lambda}{\phi} - \sum_{x\in \mathcal{X}} \mu_{x} \sum_{i \in [K]}h^x_i = 0, \quad\forall  i\in [K], \notag \\
		\text{Primal and dual feasibility} \quad \Rightarrow \quad &  \frac{\lambda}{\phi}  = 1, \quad\forall  i\in [K]. \notag
	\end{align}
	Substituting $\lambda = \phi$ into~\eqref{eq:multiply_p_i} yields the desired result.
        Note that~\eqref{eq:multiply_p_i} is equivalent to~\eqref{eq:pre_multiply_p_i} if and only if $\bm p^*$ is strictly positive.

        Next, we show that \eqref{eq:KKT_general} implies the KKT conditions in~\eqref{eq:KKT_org} when the optimal solution is strictly positive.
        First, when $\bm p^*$ is strictly positive, we have $\bm \iota = \bm 0$, then~\eqref{eq:dual_feasible1_org},~\eqref{eq:primal_feasible2_org},~\eqref{eq:dual_feasible2_org}, and~\eqref{eq:CS_org} hold automatically.
        Let $\lambda = \phi$, we have~\eqref{eq:stationarity_org} using~\eqref{eq:multiply_p_i},~\eqref{eq:CS_org} and $\bm\iota = \bm 0$. Finally,~\eqref{eq:primal_feasible1_org} is obtained by  summing \eqref{eq:KKT_general_stationarity} over all $i\in[K]$ and noting that $\bm\mu \in \mathcal{S}_{|\mathcal{X}|}$.
	This completes the proof. 
\end{proof}

\section{Proof of Theorem~\ref{thm:main}}
\label{app:TTTS_optimality}
For clarity of exposition, our analysis focuses on the case of a common variance. The arguments readily generalize to the heterogeneous variance setting by substituting the common variance $\sigma^2$ with either $\sigma^2_{\min} = \min_{i\in [K]} \sigma^2_i$ or $\sigma^2_{\max} = \max_{i\in[K]} \sigma^2_i$, depending on the direction of the inequality in question.

\subsection{Preliminary}
\subsubsection{Notation}
	\paragraph{Minimum and maximum gaps.} We define the minimum and maximum values between the expected rewards of two different arms:
	\[
		\Delta_{\min} \triangleq \min_{i\neq j} \left\lvert\theta_i-\theta_j \right\rvert \quad\text{and}\quad \Delta_{\max} \triangleq \max_{i,j\in [K]} \left(\theta_i-\theta_j \right).
	\] 
	Since the arm means are unique,
	we have $\Delta_{\max}\geq\Delta_{\min} > 0$.
	
	\paragraph{Another measure of cumulative and average effort.}
	We have defined one measure of cumulative and average effort: the number of samples and the proportion of total samples allocated to arm $i\in [K]$ before time $t\in\mathbb{N}_0$ are given by
	\[
		N_{t,i} = \sum_{\ell = 0}^{t-1} \mathds{1}(I_{\ell} = i) 
		\quad\text{and}\quad
		p_{t,i} = \frac{N_{t,i}}{t}.
	\]
	To analyze randomized algorithms (such as \name{TTTS}), 
	we introduce an alternative notion based on sampling probabilities. Specifically, let
	\[
		\psi_{t,i} \triangleq \mathbb{P}(I_t = i |  \mathcal{H}_{t}),
	\]
    denote the probability of sampling arm $i$ at time $t$, conditional on the history 
    \[
        \mathcal{H}_{t}=\left(I_0, Y_{1,I_0},\ldots, I_{t-1}, Y_{t, I_{t-1}}\right).
    \] 
    We then define the corresponding cumulative and average effort as
	\[
		\Psi_{t,i} \triangleq \sum_{\ell=0}^{t-1} \psi_{\ell, i}
		\quad\text{and}\quad
		w_{t,i}\triangleq \frac{\Psi_{t,i}}{t}.
	\]
	
	\paragraph{Posterior means and variances under independent uninformative priors.}
	Under the independent uninformative prior, the posterior mean and variance of arm $i$'s unknown mean at time $t$ are given by:
	\[
	\theta_{t,i} = \frac{1}{N_{t,i}}\sum_{\ell = 0}^{t-1} \mathds 1(I_{\ell} = i)Y_{\ell+1,I_{\ell}}
	\quad\text{and}\quad
	\sigma_{t,i}^2 = \frac{\sigma^2}{N_{t,i}}.
	\]

    \paragraph{Posterior probability of being the best arm.} To analyze the \name{TTTS} algorithm, we consider the posterior probability at time $t$ that arm $i$ is optimal, defined as
    \[
        \alpha_{t,i} \triangleq \mathbb{P}_{\widetilde{\bm\theta} \sim \Pi_t}\!\left(\widetilde{\theta}_i = \max_{j \in [K]} \widetilde{\theta}_j\right).
    \]
    
	\subsubsection{Definition and Properties of Strong Convergence}
	The proof of our Theorem~\ref{thm:main} involves bounding the time it takes for certain quantities to approach their limit values. 
	Following \citet{qin2022electronic}, we first introduce a class of random variables that are ``light-tailed.''
	Let the space $\mathcal{L}_p$ consist of all measurable $T$ with $\| T\|_p <\infty$ where $\|  T \|_{p} \triangleq \left( \E\left[ |T|^p \right]   \right)^{1/p}$ is the $p$-norm. 
	
    \begin{definition}[Definition 1 in \citealt{qin2022electronic}]
        \label{def:MGF}
        For a real valued random variable $T$, we say $T\in \MGF$ if and only if $\| T\|_p <\infty$ for all $p\geq 1$. Equivalently, $\MGF=\cap_{p\geq 1}  \mathcal{L}_p$. 
    \end{definition}
    
	We then introduce the concept of \emph{strong convergence} for random variables---a stronger notion than $\mathcal{L}$-convergence (Definition~\ref{def:L_convergence}) and almost sure convergence. 
	While the sufficient condition for allocation rules to be optimal in Theorem~\ref{thm:sufficient_for_optimality_general} requires only $\mathcal{L}$-convergence, the proof of Theorem~\ref{thm:main} establishes this stronger $\mathbb{M}$-convergence for our allocation rule.
	
	\begin{definition}[Definition 2 in \citealt{qin2022electronic}: $\mathbb{M}$-convergence]
 \label{def: strong convergence}
		For a sequence of real valued random variables $\{X_t\}_{t\in \mathbb{N}_0}$ and a scalar $x\in \mathbb{R}$,
		we say $X_t \MGFto x$ if 
		\[
		\text{for all } \epsilon>0   \text{ there exists }   T \in \MGF   \text{ such that for all } t\geq T,     |X_t - x|\leq \epsilon.
		\]
		We say $X_t \MGFto \infty$ if
		\[
		\text{for all } c>0   \text{ there exists }   T \in \MGF   \text{ such that for all } t\geq T,     X_t \geq c,
		\]
		and similarly, we say $X_t \MGFto -\infty$ if $-X_t \MGFto \infty$. \\
		For a sequence of random vectors $\{\bm X_t\}_{t\in \mathbb{N}_0}$ taking values in $\mathbb{R}^d$ and a vector $\bm x\in \mathbb{R}^d$ where $\bm X_t = (X_{t,1},\ldots,X_{t,d})$ and $\bm x = (x_1,\ldots,x_d)$, we say $\bm X_t \MGFto \bm x$ if $ X_{t,i} \MGFto x_{i}$ for all $i\in [d]$.
	\end{definition}

    \begin{corollary}\label{cor:modes_of_convergence}
    If $\bm X_t \MGFto \bm x$, then $\bm X_t \Lto \bm x$ and $\bm X_t  \xrightarrow{\mathrm{a.s.}} \bm x$.
    \end{corollary}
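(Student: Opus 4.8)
The plan is to reduce the corollary to a single set inclusion, $\MGF \subseteq \mathcal{L}$, combined with the already-established implication in Lemma~\ref{lm:modes_of_convergence_simple} that $\mathcal{L}$-convergence implies almost sure convergence. Since both $\MGF$-convergence (Definition~\ref{def: strong convergence}) and $\mathcal{L}$-convergence (Definition~\ref{def:L_convergence}) are defined componentwise for random vectors, it suffices to treat a single scalar sequence $\{X_t\}_{t\in\mathbb{N}_0}$ with limit $x\in\mathbb{R}$ and then read off the vector statement coordinate by coordinate.

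First I would record the inclusion $\MGF \subseteq \mathcal{L}$. By Definition~\ref{def:MGF} we have $\MGF = \cap_{p\geq 1}\mathcal{L}_p$, so taking $p=1$ in the intersection gives $\MGF \subseteq \mathcal{L}_1$; and by construction $\mathcal{L}$ is exactly the space $\mathcal{L}_1$ of integrable random variables, so $\MGF \subseteq \mathcal{L}$. (Equivalently, the Lyapunov inequality $\|T\|_1 \leq \|T\|_p$ for $p\geq 1$ yields $\mathcal{L}_p \subseteq \mathcal{L}_1$ for every such $p$, hence the same inclusion after intersecting.)

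Next, fix $\epsilon>0$. By the hypothesis $X_t \MGFto x$ and Definition~\ref{def: strong convergence}, there exists a random time $T\in\MGF$ such that $|X_t - x|\leq \epsilon$ for all $t\geq T$. By the inclusion just established, $T\in\mathcal{L}$, so the very same $T$ witnesses the defining condition of $\mathcal{L}$-convergence in Definition~\ref{def:L_convergence}. As $\epsilon>0$ was arbitrary, this shows $X_t \Lto x$, and therefore $\bm X_t \Lto \bm x$ componentwise. Finally, invoking Lemma~\ref{lm:modes_of_convergence_simple}, which asserts that $\bm X_t \Lto \bm x$ implies $\bm X_t \xrightarrow{\mathrm{a.s.}} \bm x$, completes the argument.

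I do not expect a substantive obstacle here: the corollary is essentially a bookkeeping consequence of the nested structure $\MGF = \cap_{p\geq 1}\mathcal{L}_p \subseteq \mathcal{L}_1 = \mathcal{L}$ of the light-tailed function classes. The only point that needs care is verifying that the witnessing random time $T$ retains integrability when we pass from the $\MGF$ class to the $\mathcal{L}$ class, and this is immediate from the intersection definition of $\MGF$.
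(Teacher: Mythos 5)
Your proof is correct and is precisely the argument the paper treats as immediate: the corollary is stated without proof because it follows from the inclusion $\MGF = \cap_{p\geq 1}\mathcal{L}_p \subseteq \mathcal{L}_1 = \mathcal{L}$, whence the witnessing random time in Definition~\ref{def: strong convergence} also witnesses Definition~\ref{def:L_convergence} (componentwise), and Lemma~\ref{lm:modes_of_convergence_simple} then gives almost sure convergence. Your write-up simply makes this implicit bookkeeping explicit, so it matches the paper's intended reasoning.
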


	\citet{qin2022electronic} introduces a number of properties of the class $\MGF$ (Definition~\ref{def:MGF}) and strong convergence (Definition~\ref{def: strong convergence}).
	
	\begin{lemma}[Lemma~1 in \citealt{qin2022electronic}: Closedness of $\MGF$]
		\label{lem: closedness of MGF}
		Let $X$ and $Y$ be non-negative random variables in $\MGF$.
		\begin{enumerate}
			\item $aX+bY\in \MGF$ for any scalars $a,b\in \mathbb{R}$.
			\item $X Y \in \MGF$.
			\item $\max\{X,Y\} \in \MGF$. 
			\item $X^q \in \MGF$, for any $q\geq 0$.
			\item If $g: \mathbb{R} \to \mathbb{R}$ satisfies $\sup_{x\in -[c,c]} |g(x)| < \infty$ for all $c\geq 0$ and $|g(x)| = \mathcal{O}(|x|^q)$ as $|x|\to \infty$ for some $q\geq 0$, then $g(X)\in \MGF$. 
			\item If $g: \mathbb{R} \to \mathbb{R}$ is continuous and $|g(x)| = \mathcal{O}(|x|^q)$ as $|x|\to \infty$ for some $q\geq 0$, then $g(X)\in \MGF$.
		\end{enumerate}
	\end{lemma}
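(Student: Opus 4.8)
The plan is to exploit the characterization $\MGF = \cap_{p \geq 1}\mathcal{L}_p$, so that establishing membership $Z \in \MGF$ reduces to verifying $\|Z\|_p < \infty$ for every $p \geq 1$; each of the six claims then follows from a standard $L^p$ inequality applied uniformly in $p$. Throughout I would use that $X, Y \in \MGF$ furnishes finite moments $\E[X^r], \E[Y^r] < \infty$ for every $r \geq 1$, together with the monotonicity of the $L^p$ norm under domination: if $0 \leq Z \leq Z'$ pointwise and $Z' \in \MGF$, then $\E[Z^p] \leq \E[Z'^p] < \infty$ for all $p$, whence $Z \in \MGF$. This last principle is what makes parts 3, 5, and 6 essentially immediate once the relevant dominating variable is identified.

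For the first three parts I would invoke the classical inequalities directly. Part 1 is immediate from Minkowski's inequality, $\|aX + bY\|_p \leq |a|\,\|X\|_p + |b|\,\|Y\|_p < \infty$ for all $p \geq 1$. For Part 2 I would apply Cauchy--Schwarz to the $p$-th moment,
\[
\E\bigl[(XY)^p\bigr] = \E\bigl[X^p Y^p\bigr] \leq \bigl(\E[X^{2p}]\bigr)^{1/2}\bigl(\E[Y^{2p}]\bigr)^{1/2},
\]
which is finite since $2p \geq 1$ and $X, Y \in \MGF$, so $\|XY\|_p < \infty$. Part 3 follows from non-negativity: $\max\{X,Y\} \leq X + Y$, and $X + Y \in \MGF$ by Part 1, so the domination principle closes the argument.

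Part 4 requires slightly more care because the exponent $qp$ may dip below $1$. Writing $\|X^q\|_p = (\E[X^{qp}])^{1/p}$, I would split on whether $qp \geq 1$ or $qp < 1$. When $qp \geq 1$ the moment $\E[X^{qp}]$ is finite directly from $X \in \MGF$. When $0 \leq qp < 1$, a direct appeal fails, so I would use the elementary bound $x^s \leq 1 + x$ valid for $x \geq 0$ and $s \in [0,1]$, giving $\E[X^{qp}] \leq 1 + \E[X] < \infty$. In either case $\|X^q\|_p < \infty$ for every $p$, hence $X^q \in \MGF$.

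Parts 5 and 6 reduce to a polynomial envelope. The hypotheses---local boundedness $\sup_{|x|\leq c}|g(x)| < \infty$ for every $c$, together with $|g(x)| = \mathcal{O}(|x|^q)$ as $|x|\to\infty$---yield constants $M, B, C \geq 0$ with $|g(x)| \leq B$ on $\{|x|\leq M\}$ and $|g(x)| \leq C|x|^q$ on $\{|x| > M\}$, hence a single global bound $|g(x)| \leq B + C|x|^q$ for all $x$. Evaluating at $X$ gives $|g(X)| \leq B + C|X|^q$, whose right-hand side lies in $\MGF$ by Parts 1 and 4 (a constant trivially has all moments), so domination yields $g(X) \in \MGF$. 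Part 6 is then a corollary: a continuous $g$ is Borel measurable, so $g(X)$ is a genuine random variable, and by the extreme value theorem $g$ is bounded on each compact $[-c,c]$, so the hypotheses of Part 5 hold. I expect no serious obstacle, as every step is a textbook inequality; the only two points genuinely demanding attention are the sub-unit-exponent case $qp < 1$ in Part 4, and the implicit measurability of $g$ in Part 5, which is precisely what continuity supplies in Part 6.
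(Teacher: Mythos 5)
Your proof is correct, but note that the paper itself never proves this lemma: it is imported verbatim, with a citation, as Lemma~1 of \citet{qin2022electronic}, so there is no internal argument to compare against. Your self-contained verification via the characterization $\MGF = \cap_{p\geq 1}\mathcal{L}_p$ is exactly the standard route one would expect that reference to take: Minkowski for part~1, Cauchy--Schwarz applied to $X^p$ and $Y^p$ for part~2, domination $\max\{X,Y\}\leq X+Y$ for part~3, the split on $qp\geq 1$ versus $qp<1$ with the bound $x^s\leq 1+x$ for part~4, and the polynomial envelope $|g(x)|\leq B + C|x|^q$ plus domination for parts~5 and~6. The two subtleties you flag are the right ones: the sub-unit exponent case in part~4 is where a naive appeal to $X\in\MGF$ fails, and measurability of $g$ is indeed implicit in part~5 (the statement's hypothesis $\sup_{x\in -[c,c]}|g(x)|<\infty$ should be read as boundedness on $[-c,c]$) and is supplied by continuity in part~6. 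The only cosmetic remark is that in part~1 the variables $aX+bY$ may be signed, but since $\|\,|aX+bY|\,\|_p\leq |a|\|X\|_p+|b|\|Y\|_p$ this costs nothing. Your argument would serve as a complete substitute for the external citation.
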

	
	As mentioned in \citet{qin2022electronic}, many of the above properties can be extended to any finite collection of random variables in $\MGF$. A notable one is that if $X_i \in \MGF$  for each $i\in [d]$ then
	\[
	X_1+\cdots + X_d \in \MGF \quad \text{and} \quad  \max\{X_1, \ldots, X_d \} \in \MGF.
	\]
	
	\citet{qin2022electronic} also shows an equivalence between pointwise convergence and convergence in the maximum norm and a continuous mapping theorem 
	under strong convergence (Definition~\ref{def: strong convergence}).
	
	\begin{lemma}[Lemma~2 in \citealt{qin2022electronic}]
		\label{lem:pointwise and maximum norm convergences}
		For a sequence of random vectors $\{X_n\}_{n\in \mathbb{N}}$ taking values in $\mathbb{R}^d$ and a vector $x\in \mathbb{R}^d$,
		if $X_n \MGFto x$, then $\| X_n - x\|_{\infty} \MGFto 0$. Equivalently, if $X_{n,i}\MGFto x_i$ for all $i\in [d]$, then for all $\epsilon>0$, there exists $N\in \MGF$ such that $n\geq N$ implies $|X_{n,i}-x_i|\leq \epsilon$ for all $i\in [d]$.  
	\end{lemma}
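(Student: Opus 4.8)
The plan is to prove the ``equivalently'' direction directly: assuming $X_{n,i}\MGFto x_i$ for each coordinate $i\in[d]$, I will exhibit, for an arbitrary $\epsilon>0$, a single random time $N\in\MGF$ after which all $d$ coordinate bounds hold simultaneously. Since $\|X_n-x\|_\infty = \max_{i\in[d]}|X_{n,i}-x_i|$, simultaneous control of the coordinates is exactly control of the sup-norm, so this yields $\|X_n-x\|_\infty\MGFto 0$, which is the definition of $X_n\MGFto x$ in the maximum-norm sense.

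First I would fix $\epsilon>0$ and invoke the coordinatewise hypothesis. For each $i\in[d]$, the definition of $\MGF$-convergence (Definition~\ref{def: strong convergence}) furnishes a random time $T_i\in\MGF$ such that $|X_{n,i}-x_i|\le\epsilon$ for all $n\ge T_i$. Replacing $T_i$ by $\max\{T_i,0\}$ if necessary---which remains in $\MGF$ and does not weaken the threshold since the time index set is $\mathbb{N}_0$---I may assume each $T_i$ is non-negative.

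Next I would merge these $d$ thresholds into a single one. Set $N\triangleq\max\{T_1,\ldots,T_d\}$. By the closedness of $\MGF$ under finite maxima (Lemma~\ref{lem: closedness of MGF}, item~3, extended to finite collections as noted immediately after its statement), $N\in\MGF$. For any $n\ge N$ we have $n\ge T_i$ for every $i\in[d]$, hence $|X_{n,i}-x_i|\le\epsilon$ for all $i$ simultaneously, and therefore $\|X_n-x\|_\infty\le\epsilon$. Since $\epsilon>0$ was arbitrary and $N\in\MGF$, this establishes $\|X_n-x\|_\infty\MGFto 0$ and the claimed equivalence.

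There is essentially no analytic obstacle here; the entire content is the observation that the randomized entry times $T_i$ can be combined via a finite maximum while remaining inside the light-tailed class $\MGF$. The only point demanding care is ensuring the merged time $N$ inherits membership in $\MGF$, which is supplied by the max-closedness property rather than by any tail estimate.
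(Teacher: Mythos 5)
Your proof is correct and is essentially the same argument as the one in the cited reference (the paper itself defers to Lemma~2 of \citealt{qin2022electronic} rather than reproving it): fix $\epsilon>0$, take the coordinatewise entry times $T_i\in\MGF$, merge them via $N=\max_i T_i$, and invoke closedness of $\MGF$ under finite maxima (Lemma~\ref{lem: closedness of MGF}) to conclude $N\in\MGF$. Your extra care in reducing to non-negative $T_i$ before applying the max-closedness property is a correct and appropriate detail.
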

	
	\begin{lemma}[Lemma~3 in \citealt{qin2022electronic}: Continuous Mapping]
		\label{lem:continuous-mapping} 
		For any  $x$ taking values in a normed vector space and any random sequence $\{X_n \}_{n\in \mathbb{N}}$:
		\begin{enumerate}
			\item If $g$ is continuous at $x$, then $X_n\MGFto x$ implies $g(X_n) \MGFto g(x)$.
			\item  
			If the range of function $g$ belongs to $\mathbb{R}$ and $g(y)\to \infty$ as $y\to \infty$, then $X_n \MGFto \infty$ implies $g(X_n)\MGFto \infty$. 
		\end{enumerate}
	\end{lemma}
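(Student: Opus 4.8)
The plan is to run the elementary $\epsilon$--$\delta$ continuous-mapping argument, but to carry it out at the level of the \emph{random threshold times} that $\MGF$-convergence supplies, so that the conclusion inherits its threshold time directly from the hypothesis without ever being enlarged. The one idea used throughout is that $X_n \MGFto x$ furnishes, for each prescribed accuracy, a single random time $T \in \MGF$ past which $X_n$ stays within that tolerance of $x$; continuity (or divergence) of $g$ then upgrades ``$X_n$ is close to $x$'' into ``$g(X_n)$ is close to $g(x)$'' pointwise in $\omega$ and uniformly in $n \ge T$, and because the \emph{same} $T$ works, its membership in $\MGF$ is preserved for free. Since I never modify $T$, I never have to re-prove any moment bound.

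For Part~1 I would first fix $\epsilon > 0$ and use continuity of $g$ at $x$ to obtain $\delta > 0$ with $\|y - x\| \le \delta \Rightarrow |g(y) - g(x)| \le \epsilon$. Interpreting $X_n$ as $\mathbb{R}^d$-valued and invoking Lemma~\ref{lem:pointwise and maximum norm convergences}, componentwise $\MGF$-convergence yields a $T \in \MGF$ such that $n \ge T$ implies $\|X_n - x\|_\infty \le \delta'$; by equivalence of norms on $\mathbb{R}^d$, taking $\delta'$ a suitable constant multiple of $\delta$ gives $\|X_n - x\| \le \delta$ in the norm in which $g$ is continuous. Then $n \ge T$ forces $|g(X_n) - g(x)| \le \epsilon$, and since $T \in \MGF$ and $\epsilon$ was arbitrary, this is precisely the definition of $g(X_n) \MGFto g(x)$.

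Part~2 is shorter still. Here $X_n$ is real-valued with $X_n \MGFto \infty$. Fixing $c > 0$, the hypothesis $g(y) \to \infty$ as $y \to \infty$ provides $M$ with $y \ge M \Rightarrow g(y) \ge c$, while the definition of $X_n \MGFto \infty$ provides $T \in \MGF$ with $n \ge T \Rightarrow X_n \ge M$. Composing the two, $n \ge T \Rightarrow g(X_n) \ge c$, and as $T \in \MGF$ with $c$ arbitrary, this establishes $g(X_n) \MGFto \infty$.

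I do not anticipate a genuine obstacle, precisely because the reused random time $T$ is never altered, so no fresh $\MGF$-membership argument is required; this is the structural advantage of $\MGF$-convergence that the proof is built around. The only points needing care are cosmetic: the norm bookkeeping in Part~1 (translating the max-norm control of Lemma~\ref{lem:pointwise and maximum norm convergences} into the norm of $g$'s continuity), dispatched by finite-dimensional norm equivalence, and the observation that continuity of $g$ renders it Borel measurable so that $g(X_n)$ is a bona fide random variable. Should one wish to allow a genuinely infinite-dimensional codomain, one would simply state the vector $\MGF$-convergence definition directly in that norm, after which the same three-line arguments go through verbatim.
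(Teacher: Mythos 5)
Your proof is correct. Note that the paper itself does not prove this lemma---it is imported verbatim by citation from the referenced work (Lemma~3 of \citealt{qin2022electronic})---and your argument (reuse the same random threshold time $T\in\MGF$ supplied by the hypothesis, upgraded through continuity or divergence of $g$, so that no new moment bound is ever needed) is exactly the standard one underlying that result; the norm-equivalence bookkeeping and the measurability remark are handled appropriately.
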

	
	\subsubsection{Maximal Inequalities}
	Following \citet{Qin2017} and \citet{Shang2019}, we define 
	\begin{equation*}
		W_1 = \sup_{(t,i)\in\mathbb{N}_0\times [K]} \sqrt{\frac{N_{t,i}+1}{\log(N_{t,i}+e)}}\cdot\frac{|\theta_{t,i}-\theta_i|}{\sigma},
	\end{equation*}
    which is a path-dependent random variable that controls the impact of observation noises.
	In addition, to control the impact of algorithmic randomness, we introduce the other path-dependent random variable:
	\begin{equation}
		\label{eq:W2}
		W_2 = \sup_{(t,i)\in\mathbb{N}_0\times [K]} \frac{|N_{t,i}-\Psi_{t,i}|}{\sqrt{(t+1)\log(t+e^2)}}.
	\end{equation}
	The next lemma ensures that these maximal deviations have light tails.
	\begin{lemma}[Lemma~6 of \citealt{Qin2017} and Lemma~4 of \citealt{Shang2019}]
		\label{lem:W1 and W2}
		For any $\lambda > 0$, 
		\[
			\E[e^{\lambda W_1}] < \infty 
			\quad\text{and}\quad 
			\E[e^{\lambda W_2}] < \infty.
		\]
	\end{lemma}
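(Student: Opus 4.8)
The plan is to prove the stronger statement that both $W_1$ and $W_2$ have sub-Gaussian tails, i.e.\ $\Prob(W_1 > x) \le C_1 e^{-c_1 x^2}$ and $\Prob(W_2 > x) \le C_2 e^{-c_2 x^2}$ for all large $x$. Finiteness of $\E[e^{\lambda W_j}]$ for every $\lambda>0$ then follows at once from $\E[e^{\lambda W_j}] = 1 + \lambda\int_0^{\infty} e^{\lambda x}\Prob(W_j > x)\,dx$, since a Gaussian tail dominates any single exponential. Both tail bounds rest on the same three ingredients: a martingale representation of the deviation being controlled, a dyadic peeling of the index, and a sub-Gaussian maximal inequality on each block.

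For $W_1$ I would first decouple the adaptivity. Reindexing the rewards of arm $i$ in order of arrival as an i.i.d.\ $N(\theta_i,\sigma^2)$ sequence $Y_i^{(1)},Y_i^{(2)},\ldots$---legitimate since each decision $I_\ell$ is $\mathcal{H}_\ell$-measurable and hence independent of the future noise---the posterior mean $\theta_{t,i}$ equals the empirical average $\bar Y_i^{(n)}$ whenever $N_{t,i}=n$. As every realized count $N_{t,i}$ is a nonnegative integer, the supremum over $t$ is dominated by the supremum over the \emph{deterministic} index $n$, so that, after bounding $(n+1)/n\le 2$, it suffices to control $\sup_{n\ge 1}|S_n^{(i)}|/(\sigma\sqrt{n\log(n+e)})$ for the mean-zero Gaussian walk $S_n^{(i)}=\sum_{m=1}^{n}(Y_i^{(m)}-\theta_i)$. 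Partitioning the index into dyadic blocks $n\in[2^k,2^{k+1})$, on which $n\ge 2^k$ and $\log(n+e)\ge \max\{1,k\log 2\}$ (the additive constant $e$ is precisely what forces the floor $1$, so the lowest blocks still decay in $x$), an exceedance forces $\max_{n<2^{k+1}}|S_n^{(i)}|>x\sigma\sqrt{2^k\max\{1,k\log 2\}}$. A sub-Gaussian maximal inequality (Doob applied to the exponential martingale $e^{\eta S_n^{(i)}-\eta^2\sigma^2 n/2}$) bounds each block's probability by $2\exp(-c\,x^2\max\{1,k\log 2\})$, and summing the resulting geometric series over $k$ and over the finitely many arms yields $\Prob(W_1 > x)\le C_1 e^{-c_1 x^2}$.

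For $W_2$ the deviation is already a martingale in $t$. Writing $D_{t,i}=N_{t,i}-\Psi_{t,i}=\sum_{\ell=0}^{t-1}\bigl(\mathds{1}(I_\ell=i)-\psi_{\ell,i}\bigr)$, the increments satisfy $\E[\mathds{1}(I_\ell=i)-\psi_{\ell,i}\mid\mathcal{H}_\ell]=0$ by the very definition $\psi_{\ell,i}=\Prob(I_\ell=i\mid\mathcal{H}_\ell)$, and they lie in an interval of length one, hence are sub-Gaussian with variance proxy $1/4$ by Hoeffding's lemma. Applying exactly the same dyadic peeling over $t\in[2^k,2^{k+1})$, now using $\log(t+e^2)\ge\max\{2,k\log 2\}$, together with the Azuma--Hoeffding maximal inequality, produces a per-block bound of order $\exp(-c\,x^2\max\{2,k\log 2\})$ and, after summing over $k$ and over arms, the Gaussian tail $\Prob(W_2 > x)\le C_2 e^{-c_2 x^2}$.

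The main obstacle is the $W_1$ reduction rather than the concentration itself: one must argue cleanly that the self-normalized supremum taken over the adaptively determined counts $N_{t,i}$ can be replaced by a supremum over a fixed i.i.d.\ index $n$, so that the maximal inequality applies to a walk with genuinely independent increments. Once this decoupling is justified through the $\mathcal{H}_\ell$-measurability of the sampling decisions, the peeling-plus-maximal-inequality scheme is routine and identical for both quantities; the only quantitative care needed is to keep the block-wise exponents growing linearly in $k$ so that the $\sqrt{\log}$ normalization (and the $+e$, $+e^2$ offsets guaranteeing the floors $1$ and $2$) renders the union bound summable and the aggregate tail Gaussian.
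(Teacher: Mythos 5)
Your proof is correct. One thing to note at the outset: the paper does not prove this lemma itself --- it is imported by citation from \citet{Qin2017} (Lemma~6, for $W_1$) and \citet{Shang2019} (Lemma~4, for $W_2$) --- so the relevant comparison is with those proofs. They share your two structural ingredients: the stack-of-rewards decoupling that replaces the adaptively indexed supremum over $(t,i)$ by a supremum over a deterministic count $n$ for each arm (valid exactly for the measurability reason you give), and the reduction of ``MGF finite for all $\lambda$'' to a sub-Gaussian tail bound via tail integration. Where you diverge is the middle step: the cited proofs do \emph{not} peel dyadically or invoke a maximal inequality. They apply a pointwise tail bound at each fixed index --- the exact Gaussian tail for $\bar Y_i^{(n)}-\theta_i$, and Azuma--Hoeffding for $D_{t,i}$ --- and then take a crude union bound over $n$ (resp.\ $t$). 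This suffices precisely because of the normalizations in $W_1$ and $W_2$: the pointwise bound is of order $(n+e)^{-cx^2}$ (resp.\ $(t+e^2)^{-cx^2}$), which is summable in the index once $x$ exceeds a constant, and the sum is itself $\mathcal{O}(e^{-cx^2})$. Your peeling-plus-Doob/Azuma-maximal scheme is the heavier device one would genuinely need for a finer, law-of-iterated-logarithm-scale normalization $\sqrt{n\log\log n}$, where term-by-term union bounds fail; at the $\sqrt{n\log(n+e)}$ scale used here it buys nothing extra, but it is sound, and all the quantitative details you flag (the floors $1$ and $2$ coming from the $+e$ and $+e^2$ offsets, the $\sqrt{(n+1)/n}\le\sqrt2$ absorption, conditional sub-Gaussianity of the selection indicators with variance proxy $1/4$, geometric summability over blocks, and the final union bound over the $K$ arms) are handled correctly.
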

	
	The definition of $W_2$ in Equation~\eqref{eq:W2} implies the following corollary for sufficiently large $t$.
	\begin{corollary}
		\label{cor:W2}
		There exists a deterministic $T$ such that for any $t\geq T$,
		\[
			|N_{t,i} - \Psi_{t,i}| \leq W_2t^{0.6}, \quad \forall  i\in[K].
		\]
	\end{corollary}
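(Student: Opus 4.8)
The plan is to read the bound off directly from the definition of $W_2$ in~\eqref{eq:W2} and then reduce the claim to a purely deterministic comparison between the prefactor $\sqrt{(t+1)\log(t+e^2)}$ and $t^{0.6}$. By construction $W_2$ is a supremum, so for \emph{every} pair $(t,i)\in\mathbb{N}_0\times[K]$ we have the pathwise bound
\[
    |N_{t,i}-\Psi_{t,i}| \leq W_2\sqrt{(t+1)\log(t+e^2)}.
\]
Since the right-hand side factorizes as $W_2$ times a quantity that does not depend on $i$, it suffices to exhibit a deterministic $T$ with $\sqrt{(t+1)\log(t+e^2)}\leq t^{0.6}$ for all $t\geq T$; the conclusion then holds uniformly over $i\in[K]$.

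For this deterministic comparison I would square both sides, reducing the target inequality to $(t+1)\log(t+e^2)\leq t^{1.2}$. As $t\to\infty$ the left-hand side behaves like $t\log t$, whereas the right-hand side is $t\cdot t^{0.2}$; because $\log s = o(s^{\epsilon})$ for every $\epsilon>0$, the logarithmic factor $\log(t+e^2)$ is eventually dominated by $t^{0.2}$, and the bounded factor $(t+1)/t\to 1$ causes no difficulty. Hence the left-hand side is $o(t^{1.2})$, and the inequality holds for all $t$ beyond some explicit threshold --- one may take any $T$ large enough that both $\log(t+e^2)\leq \tfrac12 t^{0.2}$ and $t+1\leq 2t$ hold, so that $(t+1)\log(t+e^2)\leq t^{1.2}$.

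There is no genuine obstacle here: once the supremum bound is invoked the argument is entirely deterministic and elementary. The only point worth flagging is that the exponent $0.6$ is chosen with slack on purpose --- any exponent strictly larger than $1/2$ would suffice, since the requirement $2\alpha>1$ is exactly what lets the sub-polynomial logarithmic factor be absorbed. This slack is precisely what will be exploited downstream when converting the $W_2$-controlled fluctuation $|N_{t,i}-\Psi_{t,i}|$ into a lower-order correction relative to the leading $\Theta(t)$ growth of $N_{t,i}$ and $\Psi_{t,i}$.
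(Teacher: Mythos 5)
Your proof is correct and matches the paper's (implicit) argument exactly: the paper treats Corollary~\ref{cor:W2} as an immediate consequence of the definition of $W_2$ in~\eqref{eq:W2}, which is precisely your reduction to the deterministic inequality $\sqrt{(t+1)\log(t+e^2)}\leq t^{0.6}$ for large $t$. Your closing remark that any exponent in $(0.5,1)$ would do also agrees with the paper's own note following the corollary.
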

	Corollary~\ref{cor:W2} implies that
	\[
		p_{t,i}-w_{t,i}\MGFto 0, \quad \forall  i\in[K].
	\]
    Note that the exponent $0.6$ in Corollary~\ref{cor:W2} is not essential---it can be replaced by any constant in the interval $(0.5,1)$. 
        
	\subsection{A Restatement of Theorem~\ref{thm:main}}
	
	\paragraph{Alternative sufficient conditions for optimality.} Theorem~\ref{thm:sufficient_for_optimality_general} provides a sufficient condition for allocation rules to be optimal. 
    Below, we present alternative but equivalent sufficient conditions.
  
	\begin{proposition}[Alternative sufficient conditions]
		\label{prop:ratio strong convergence}
		The following conditions are equivalent:
		\begin{enumerate}
			\item $\bm p_t\MGFto \bm p^*$;
			\item $\frac{p_{t,j}}{p_{t,I^*}} \MGFto \frac{p_{j}^*}{p_{I^*}^*}$ for any $j\neq I^*$;
			\item $\bm w_t\MGFto \bm p^*$; and
			\item $\frac{w_{t,j}}{w_{t,I^*}} \MGFto \frac{p_{j}^*}{p_{I^*}^*}$ for any $j\neq I^*$.
		\end{enumerate}
	\end{proposition}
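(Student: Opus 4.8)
The plan is to establish the four equivalences through two cheap reductions: passing between the empirical allocation $\bm p_t$ and the probability-based allocation $\bm w_t$ (which links (1)$\Leftrightarrow$(3)), and passing between a probability vector on the simplex and its coordinate ratios (which links (1)$\Leftrightarrow$(2) and, verbatim, (3)$\Leftrightarrow$(4)). Everything rests on the closedness of the class $\MGF$ under finite sums, products, and maxima (Lemma~\ref{lem: closedness of MGF}) together with the continuous mapping theorem for $\MGF$-convergence (Lemma~\ref{lem:continuous-mapping}); I will use repeatedly that if $X_t\MGFto x$ and $Y_t\MGFto y$ then $X_t\pm Y_t\MGFto x\pm y$ and $X_tY_t\MGFto xy$, each of which follows by applying Lemma~\ref{lem:continuous-mapping} to the jointly continuous maps $(a,b)\mapsto a+b$, $(a,b)\mapsto a-b$, and $(a,b)\mapsto ab$, after invoking Lemma~\ref{lem:pointwise and maximum norm convergences} to upgrade coordinatewise convergence to joint convergence.

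For (1)$\Leftrightarrow$(3), I would invoke Corollary~\ref{cor:W2}, which gives $p_{t,i}-w_{t,i}\MGFto 0$ for every $i\in[K]$. Writing $w_{t,i}=p_{t,i}-(p_{t,i}-w_{t,i})$ and using additivity of $\MGF$-convergence, $\bm p_t\MGFto\bm p^*$ forces $\bm w_t\MGFto\bm p^*$, and the reverse implication is symmetric. This reduction is essentially free and isolates the only genuinely probabilistic input, namely the maximal inequality (Lemma~\ref{lem:W1 and W2}) behind Corollary~\ref{cor:W2}.

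For (1)$\Rightarrow$(2), I would apply the continuous mapping theorem to $g(\bm p)=p_j/p_{I^*}$, which is continuous at $\bm p^*$ precisely because $p^*_{I^*}>0$ (component-wise strict positivity of $\bm p^*$ from Theorem~\ref{thm:sufficient_for_fixed_budget_optimality}). The converse (2)$\Rightarrow$(1) is the one place where I must use that $\bm p_t$ lives on the simplex. From $\sum_i p_{t,i}=1$ I would write
\[
    \frac{1}{p_{t,I^*}}=1+\sum_{j\neq I^*}\frac{p_{t,j}}{p_{t,I^*}}\MGFto 1+\sum_{j\neq I^*}\frac{p^*_j}{p^*_{I^*}}=\frac{1}{p^*_{I^*}},
\]
using additivity of $\MGF$-convergence over this finite sum; then $x\mapsto 1/x$ (continuous at $1/p^*_{I^*}>0$) yields $p_{t,I^*}\MGFto p^*_{I^*}$, and finally $p_{t,j}=p_{t,I^*}\cdot(p_{t,j}/p_{t,I^*})\MGFto p^*_j$ by the product rule. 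The argument for (3)$\Leftrightarrow$(4) is word-for-word the same, after noting that $\bm w_t$ is itself a probability vector: $\sum_i\Psi_{t,i}=\sum_{\ell=0}^{t-1}\sum_i\psi_{\ell,i}=t$, so $\sum_i w_{t,i}=1$.

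The only step requiring care, and hence the main (minor) obstacle, is the backward direction (2)$\Rightarrow$(1) (and identically (4)$\Rightarrow$(3)): ratios alone do not pin down a vector, and it is the normalization $\sum_i p_{t,i}=1$ combined with the strict positivity $p^*_{I^*}>0$ that keeps the reciprocal map $x\mapsto 1/x$ well-behaved at the limit and lets me reconstruct each coordinate from its ratio to $p_{t,I^*}$. All other implications are direct invocations of continuity and the algebra of $\MGF$.
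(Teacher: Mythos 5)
Your proposal is correct and follows essentially the same route as the paper's proof: conditions 1 and 3 are linked via Corollary~\ref{cor:W2}, the forward ratio implications follow from continuity/closedness of $\MGF$-convergence, and the backward implications recover $p_{t,I^*}\MGFto p^*_{I^*}$ (resp.\ $w_{t,I^*}\MGFto p^*_{I^*}$) by summing the ratios over the simplex and exploiting $p^*_{I^*}>0$, exactly as in the paper (your identity $1/p_{t,I^*}=1+\sum_{j\neq I^*}p_{t,j}/p_{t,I^*}$ is the same computation as the paper's $(1-p_{t,I^*})/p_{t,I^*}$ step). Your explicit remark that $\bm w_t$ lies on the simplex because $\sum_i \Psi_{t,i}=t$ is a detail the paper leaves implicit, but it is the same argument.
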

	\begin{proof}
		First, by Corollary~\ref{cor:W2}, conditions 1 and 3 are equivalent. 
		
		We now show that conditions 1 and 2 are equivalent. By Lemma~\ref{lem: closedness of MGF}, condition 1 implies condition 2. Conversely,
		applying Lemma~\ref{lem: closedness of MGF} to condition 2 yields
		\[
		\frac{1- p_{t,I^*}}{p_{t,I^*}} = \sum_{j\neq I^*} \frac{p_{t,j}}{p_{t,I^*}} \MGFto \sum_{j\neq I^*} \frac{p^*_{j}}{p^*_{I^*}} = \frac{1- p^*_{I^*}}{p^*_{I^*}},
		\]
		Using Lemma~\ref{lem: closedness of MGF} again, we obtain
		$
		p_{t,I^*}\MGFto p^*_{I^*}.
		$
		Then, by condition 2, it follows that
		$
		p_{t,j} \MGFto p_{j}^*
		$
		for any $j\neq I^*$
		Therefore, condition 2 implies condition 1, establishing their equivalence. By a similar argument, conditions 2 and 4 are also equivalent. This completes the proof.
	\end{proof}
	
	To prove Theorem~\ref{thm:main}, it suffices to show that \name{TTTS-IDS} satisfies condition 4 in Proposition \ref{prop:ratio strong convergence}:

	\begin{proposition}[A restatement of Theorem~\ref{thm:main}]
		\label{prop:sufficient condition}
		Under \name{TTTS-IDS}, 
		\[
			\frac{w_{t,j}}{w_{t,I^*}} \MGFto \frac{p_{j}^*}{p_{I^*}^*}, \quad \forall j\neq I^*.
		\]
	\end{proposition}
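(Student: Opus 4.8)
The plan is to verify condition~4 of Proposition~\ref{prop:ratio strong convergence}, namely $\frac{w_{t,j}}{w_{t,I^*}} \MGFto \frac{p^*_j}{p^*_{I^*}}$ for every $j\neq I^*$; by the equivalences in that proposition this upgrades to $\bm p_t \MGFto \bm p^*$, and by Corollary~\ref{cor:modes_of_convergence} it then implies $\bm p_t \Lto \bm p^*$, which is what Theorem~\ref{thm:sufficient_for_optimality_general} requires. The argument rests on two pillars: a \emph{consistency phase} showing that after a random time in $\MGF$ the leader equals $I^*$ and the challenger law is governed by the generalized Chernoff information $C_j(\bm p_t)$ of~\eqref{eq:C_j_GaussianBAI}; and a \emph{self-correcting dynamics} argument showing that the cumulative effort drives the allocation ratios to the unique fixed point characterized jointly by the information balance~\eqref{eq:information_balance_BAI} and the stationarity condition~\eqref{eq:KKT_equiv_stationarity}. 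Strong convergence is maintained throughout by controlling every error term with the maximal inequalities of Lemma~\ref{lem:W1 and W2} and the closure properties of $\MGF$ in Lemma~\ref{lem: closedness of MGF}.

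First I would establish \emph{sufficient exploration}: there exist $c>0$ and a random time $T\in\MGF$ such that $N_{t,i}\ge c\sqrt{t}$ for every arm $i$ and all $t\ge T$. The mechanism is that an under-sampled arm has an inflated posterior variance $\sigma_{t,i}^2=\sigma^2/N_{t,i}$, so it surfaces as leader or challenger with non-negligible probability and, crucially, the \name{IDS} probability $h^j_i(\bm p_t)$ places mass toward the arm with the larger $\sigma^2/p$, i.e.\ toward the under-sampled arm; this forces its count to grow and precludes starvation. With sufficient exploration in hand, the bound $W_1$ yields $\bm\theta_t\MGFto\bm\theta$ and $\sigma_{t,i}^2\MGFto 0$, so the posterior concentrates and $\alpha_{t,I^*}\MGFto 1$. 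Consequently the leader $I_t^{(1)}$ equals $I^*$ for all $t$ past some time in $\MGF$, reducing the analysis to the conditional law of the challenger.

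Next I would analyze the challenger. Conditioned on the leader being $I^*$, the challenger equals $j$ with probability $\mu_{t,j}=\alpha_{t,j}/(1-\alpha_{t,I^*})$, and by \citet[Proposition~5]{russo_simple_2020} the posterior mass $\alpha_{t,j}$ on a suboptimal arm decays like $\exp(-t\,C_j(\bm p_t))$ up to sub-exponential factors, so the challenger law concentrates (in the strong sense) on $\argmin_{j\neq I^*}C_j(\bm p_t)$. Decomposing the one-step sampling probability through the law of total probability over the challenger and the \name{IDS} coin gives, up to $\MGF$-negligible terms,
\[
\psi_{t,j}\approx \mu_{t,j}\,\bigl(1-h^j_{I^*}(\bm p_t)\bigr),\qquad
\psi_{t,I^*}\approx \sum_{k\neq I^*}\mu_{t,k}\,h^k_{I^*}(\bm p_t),
\]
whose fixed point, once the challenger mass equalizes the $C_j$'s, is exactly~\eqref{eq:information_balance_BAI} together with the relation $\mu^*_j=p^*_j/(1-h^j_{I^*}(\bm p^*))$ underlying the stationarity condition~\eqref{eq:KKT_equiv_stationarity}.

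Finally I would close with a self-correcting (overshoot/undershoot) analysis on the averaged effort $w_{t,i}=\Psi_{t,i}/t$. If some ratio $w_{t,j}/w_{t,I^*}$ exceeds $p^*_j/p^*_{I^*}$, then $C_j(\bm p_t)$ ceases to be the minimum, so arm $j$ is proposed as challenger less frequently and its relative effort decreases; the symmetric correction holds when the ratio is too small, while the \name{IDS} split simultaneously pins the leader's share to satisfy stationarity. Iterating this contraction toward the unique fixed point $\bm p^*$ (unique by Lemma~\ref{lm:equivalent_forms_BAI}) and upgrading each deterministic step to a strong-convergence statement via Lemma~\ref{lem: closedness of MGF} and Lemma~\ref{lem:continuous-mapping} yields condition~4 of Proposition~\ref{prop:ratio strong convergence}. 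The main obstacle is precisely this last step: because \name{IDS} couples the selection probabilities to the instantaneous allocation $\bm p_t$ (unlike fixed-$\beta$ \name{TTTS}), one must show the coupled dynamics neither stall nor oscillate and that every remainder term stays in $\MGF$; quantifying how fast $C_j(\bm p_t)$ reacts to perturbations of $\bm p_t$, while both the leader and challenger remain random, is the delicate technical core.
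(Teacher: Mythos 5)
Your overall architecture does match the paper's (sufficient exploration via the \name{IDS} property of favoring the less-sampled candidate, posterior concentration, challenger concentration on $\argmin_j C_j(\bm p_t)$, then a self-correction argument on the ratios), but the self-correcting step as you state it has a genuine gap. The inference ``if $w_{t,j}/w_{t,I^*}$ exceeds $p_j^*/p_{I^*}^*$ then $C_j(\bm p_t)$ ceases to be the minimum'' is not valid on its own: all suboptimal arms could simultaneously be over-sampled \emph{relative to $I^*$} (equivalently, $w_{t,I^*}$ itself too small), in which case $j$ can perfectly well remain the argmin. To compare $C_{t,j}$ against something, one needs the \emph{existence} of a competitor $A_t\neq I^*$ with $p_{t,A_t}/p_{t,I^*}\le p_{A_t}^*/p_{I^*}^*$; only then do monotonicity of $f_j$ and the information balance~\eqref{eq:information_balance_BAI} at $\bm p^*$ give $C_{t,j}>C_{t,A_t}$ and hence an exponentially small challenger probability for $j$. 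The paper secures this existence through a separate, central result that your proposal never proves or even identifies as a lemma: the \emph{empirical overall balance} $w_{t,I^*}^2-\sum_{j\neq I^*}w_{t,j}^2\MGFto 0$ (Proposition~\ref{prop:empirical overall balance}), established by a telescoping increment bound on $G_t=\Psi_{t,I^*}^2-\sum_{j\neq I^*}\Psi_{t,j}^2$ that exploits the exact algebraic form of the \name{IDS} probabilities together with the maximal inequality for $W_2$. This is precisely the rigorous version of your assertion that ``the \name{IDS} split pins the leader's share to satisfy stationarity''; asserting it is not proving it, and without it the contraction argument stalls exactly at the point you flag as the delicate core.

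The same balance result is load-bearing in two further places that your sketch glosses over. First, it yields the strict boundedness $b_1\le w_{t,I^*}\le b_2$ (Lemma~\ref{lem:strict boundedness}), which is needed both for the Gaussian-tail comparison in the over-sampling lemma and to control terms like $1/\Psi_{t,I^*}$ in the drift argument. Second, the paper proves only the \emph{one-sided} statement that no arm is eventually over-sampled (Lemma~\ref{lem:all arms not over sampled}); the ``symmetric correction when the ratio is too small'' that you invoke is neither proved nor needed, because the lower bounds on the ratios come for free from combining the upper bounds with $\sum_{j\neq I^*}w_{t,j}^2/w_{t,I^*}^2\MGFto 1$ (Corollary~\ref{cor:overall_balance_psi}), i.e.\ with the sum-of-squares condition~\eqref{eq:overall_balance_GaussianBAI} holding empirically. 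A minor further inaccuracy: the leader does not ``equal $I^*$ for all $t$ past some time in $\MGF$''---it remains a fresh posterior draw each round; the paper instead carries the $(1-\alpha_{t,I^*})$ error additively in the bounds on $\psi_{t,i}$ (Lemma~\ref{lem:psi_bounds}) and kills it with the exponential decay of Lemma~\ref{lem:posterior_exponential_convergence}. To repair your proposal, you should state and prove the empirical overall balance first, then run your overshoot argument against the competitor arm it provides.
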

	
	The rest of this appendix is devoted to proving Proposition \ref{prop:sufficient condition}.
	
	\subsection{Empirical Overall Balance under \name{TTTS-IDS}}
	Previous papers \citep{Qin2017,Shang2019,jourdan2022top} establishes $\beta$-optimality by proving either $w_{t,I^*}\MGFto \beta$ or $p_{t,I^*}\MGFto \beta$ where $\beta$ is a fixed tuning parameter taken as an input to the algorithms, while the optimal parameter $\beta^*$ (i.e., $p^*_{I^*}$) is unknown a priori.
	Our information-directed selection rule adapts to the top-two candidates at each time step, which necessitates novel proof techniques. To address this, we first establish that the ``empirical'' version of the overall balance condition~\eqref{eq:overall_balance_GaussianBAI} holds.
	
	\begin{proposition}
		\label{prop:empirical overall balance}
		Under \name{TTTS-IDS}, 
		\label{prop:overall_balance_Psi}
		\[
		w_{t,I^*}^2 - \sum_{j\neq I^*}w_{t,j}^2\MGFto 0
		\quad\text{and}\quad
		p_{t,I^*}^2 - \sum_{j\neq I^*}p_{t,j}^2\MGFto 0.
		\]
	\end{proposition}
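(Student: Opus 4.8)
The plan is to prove the statement for the empirical allocation $p_{t,i}=N_{t,i}/t$ and then transfer it to the probability-based effort $w_{t,i}=\Psi_{t,i}/t$: since Corollary~\ref{cor:W2} gives $p_{t,i}-w_{t,i}\MGFto 0$ and both sequences are bounded, $p_{t,i}^2-w_{t,i}^2=(p_{t,i}-w_{t,i})(p_{t,i}+w_{t,i})\MGFto 0$, so the two displayed quantities agree up to an $\MGF$-null term. It therefore suffices to analyze $\widetilde D_t \triangleq N_{t,I^*}^2-\sum_{j\neq I^*}N_{t,j}^2$ and show $\widetilde D_t/t^2\MGFto 0$, which is the empirical, finite-time analogue of the common-variance overall balance~\eqref{eq:overall_balance_GaussianBAI}. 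I would telescope $\widetilde D_t$ along the realized pulls and split each increment into its compensator and a martingale difference:
\[
\widetilde D_{s+1}-\widetilde D_s = \underbrace{2\Bigl(N_{s,I^*}\psi_{s,I^*}-\sum_{j\neq I^*}N_{s,j}\psi_{s,j}\Bigr)+\bigl(2\psi_{s,I^*}-1\bigr)}_{\mathbb E[\,\cdot\,\mid\mathcal H_s]} \; + \; M_s,
\]
where $\psi_{s,i}=\mathbb P(I_s=i\mid\mathcal H_s)$ and the bracket $2\psi_{s,I^*}-1\in[-1,1]$ contributes only $O(t)$ to $\widetilde D_t$, hence $O(t^{-1})$ after dividing by $t^2$.

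The crux is the exact vanishing of the main compensator term whenever the leader is $I^*$. Conditioning on the leader being $I^*$ and writing $c^{I^*}_{s,j}$ for the probability that $j$ is the challenger, the contribution of the pair $(I^*,j)$ to $\psi_{s,I^*}$ and $\psi_{s,j}$ is $c^{I^*}_{s,j}h_{I^*}^{j}(\bm p_s)$ and $c^{I^*}_{s,j}h_{j}^{j}(\bm p_s)$, respectively (Algorithm~\ref{alg:IDS}). Under common variance the \name{IDS} probabilities satisfy the detailed-balance identity
\[
N_{s,I^*}\,h_{I^*}^{j}(\bm p_s)=\frac{N_{s,I^*}N_{s,j}}{N_{s,I^*}+N_{s,j}}=N_{s,j}\,h_{j}^{j}(\bm p_s),
\]
because $h_{I^*}^{j}(\bm p_s)=\tfrac{1/p_{s,I^*}}{1/p_{s,I^*}+1/p_{s,j}}=\tfrac{N_{s,j}}{N_{s,I^*}+N_{s,j}}$; crucially, the factor $s$ cancels so this holds at the level of the counts without any $\Psi$-versus-$N$ discrepancy. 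Summing over challengers $j$ shows that the leader-$I^*$ parts of $N_{s,I^*}\psi_{s,I^*}$ and $\sum_{j}N_{s,j}\psi_{s,j}$ coincide exactly, so the main compensator term reduces to the contribution of the rare event that the leader differs from $I^*$, which is bounded by $O\!\bigl(s\,(1-\alpha_{s,I^*})\bigr)$ since $N_{s,i}\le s$ and $\sum_{i\neq I^*}\alpha_{s,i}=1-\alpha_{s,I^*}$.

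Combining these, $\mathbb E[\widetilde D_{s+1}-\widetilde D_s\mid\mathcal H_s]=O\!\bigl(s\,(1-\alpha_{s,I^*})\bigr)+O(1)$, so after dividing by $t^2$ the proof reduces to controlling two error streams in the $\MGF$ sense. The martingale part $\sum_{s<t}M_s$ has increments bounded by $O(s)$, so a Burkholder--Davis--Gundy bound gives $\|\sum_{s<t}M_s\|_p=O_p(t^{3/2})$ for every $p$; feeding these high-moment bounds into the maximal-inequality/threshold-time argument behind Lemma~\ref{lem:W1 and W2} (exactly the technology used to build $W_2$) yields $t^{-2}\sum_{s<t}M_s\MGFto 0$. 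The remaining — and genuinely delicate — ingredient is to bound $t^{-2}\sum_{s<t}s\,(1-\alpha_{s,I^*})$, which requires that the posterior mass on suboptimal arms decay fast. This is where a sufficient-exploration guarantee enters: under \name{TTTS-IDS} every arm is pulled at least $\Omega(t^{\alpha})$ times after a time in $\MGF$ (the standard hard step, established as in \citet{Qin2017,Shang2019}), and combined with the concentration of the posterior means controlled by $W_1$ (Lemma~\ref{lem:W1 and W2}) this forces $1-\alpha_{s,I^*}\le e^{-\Theta(s^{\alpha/2})}$ eventually, making $\sum_{s}s\,(1-\alpha_{s,I^*})$ finite and in $\MGF$. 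Hence $\widetilde D_t/t^2\MGFto 0$, and the transfer to the $w$-version completes the argument. I expect the control of $1-\alpha_{t,I^*}$ through sufficient exploration and posterior concentration to be the main obstacle, whereas the exact detailed-balance cancellation is the clean structural payoff of \name{IDS} that removes the error term that would otherwise appear from working with cumulative sampling probabilities.
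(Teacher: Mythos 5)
Your proposal is correct, but it takes a genuinely different route from the paper's proof. The paper tracks the probability-based quantity $G_t=\Psi_{t,I^*}^2-\sum_{j\neq I^*}\Psi_{t,j}^2$, whose increments are $\mathcal{H}_t$-measurable, so no martingale analysis is needed at all: using the sandwich bounds on $\psi_{t,i}$ (Lemma~\ref{lem:psi_bounds}), the dominant part of $G_{t+1}-G_t$ reduces to terms $\bigl(N_{t,j}\Psi_{t,I^*}-N_{t,I^*}\Psi_{t,j}\bigr)/(N_{t,I^*}+N_{t,j})$, which are bounded by $W_2t^{0.6}$ via Corollary~\ref{cor:W2}, giving $|G_{t+1}-G_t|\le 2W_2t^{0.6}+2$ after a time in $\MGF$ and then a direct summation. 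You instead track the count-based quantity $N_{t,I^*}^2-\sum_{j\neq I^*}N_{t,j}^2$ and perform a Doob decomposition; your structural payoff is that the compensator's main term vanishes \emph{exactly}, by the detailed-balance identity $N_{s,I^*}h^{j}_{I^*}(\bm p_s)=N_{s,j}h^{j}_{j}(\bm p_s)$ (valid because \name{IDS} computes its probabilities from the same counts that weight the compensator), leaving only the $O\bigl(s(1-\alpha_{s,I^*})\bigr)+O(1)$ error from rounds whose leader is not $I^*$. This is the same cancellation the paper exploits, but in your version it holds exactly rather than up to the $N$-versus-$\Psi$ discrepancy. The price is the extra martingale term, which you handle by Burkholder--Davis--Gundy plus a tail-sum argument; this is sound (moment bounds of order $t^{3/2}$ against the threshold $\epsilon t^2$ give tails summable at every polynomial order, hence a threshold time in $\MGF$), but note you could shortcut it entirely, since $|N_{t,i}^2-\Psi_{t,i}^2|\le 2W_2t^{1.6}$ by Corollary~\ref{cor:W2} reduces your count-based quantity to the paper's $G_t$. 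Both routes rest, in the same non-circular order as the paper, on the two external ingredients you identify: sufficient exploration for \name{TTTS-IDS} (which, as the paper emphasizes, requires a new argument---Lemma~\ref{lem:explore under-sampled set}---because \name{IDS} is not a uniformly bounded tuning rule, so your ``standard hard step'' attribution slightly understates this) and the resulting exponential decay of $1-\alpha_{t,I^*}$ (Lemma~\ref{lem:posterior_exponential_convergence}). Two immaterial slips: with $\Omega(s^{\alpha})$ pulls per arm the decay should be $e^{-\Theta(s^{\alpha})}$ rather than $e^{-\Theta(s^{\alpha/2})}$, and either rate suffices for the summability you need.
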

	
	To prove this proposition, we need the following supporting result.
	\begin{lemma}
		\label{lem:psi_bounds}
		Under \name{TTTS-IDS}, for any $t\in\mathbb{N}_0$,
		\[
		\alpha_{t,I^*}\left(\sum_{\ell\neq I^*} \frac{\alpha_{t,\ell}}{1-\alpha_{t,I^*}}\frac{p_{t,\ell}} {p_{t,I^*}+p_{t,\ell}}\right)
		\leq \psi_{t,I^*} \leq 
		\alpha_{t,I^*}\left(\sum_{\ell\neq I^*} \frac{\alpha_{t,\ell}}{1-\alpha_{t,I^*}}\frac{p_{t,\ell}} {p_{t,I^*}+p_{t,\ell}}\right) + (1-\alpha_{t,I^*}),
		\]
		and for any $j\neq I^*$,
		\[
		\alpha_{t,I^*}\frac{\alpha_{t,j}}{1-\alpha_{t,I^*}} \frac{p_{t,I^*}}{p_{t,I^*}+p_{t,j}}
		\leq \psi_{t,j} \leq
		\alpha_{t,I^*}\frac{\alpha_{t,j}}{1-\alpha_{t,I^*}} \frac{p_{t,I^*}}{p_{t,I^*}+p_{t,j}} + (1-\alpha_{t,I^*}).
		\]
	\end{lemma}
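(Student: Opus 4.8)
The plan is to write $\psi_{t,i}$ \emph{exactly} via the law of total probability, conditioning on the realized (leader, challenger) pair produced by \name{TTTS}, and then bound the two resulting sums by elementary inequalities. Under the continuous Gaussian posterior, ties in the $\argmax$ occur with probability zero, so the leader equals $\ell$ with probability $\alpha_{t,\ell}$, and conditional on leader $\ell$ the challenger equals $m\neq\ell$ with probability $\alpha_{t,m}/(1-\alpha_{t,\ell})$ (the $\argmax$ law of an independent posterior draw restricted to $\{m:m\neq\ell\}$, which is valid because the repeated draws in Algorithm~\ref{alg:TTTS} are i.i.d.). Specializing the \name{IDS} rule (Algorithm~\ref{alg:IDS}) to the common variance $\sigma^2$, given leader $\ell$ and challenger $m$ the leader is sampled with probability $p_{t,m}/(p_{t,\ell}+p_{t,m})$ and the challenger with probability $p_{t,\ell}/(p_{t,\ell}+p_{t,m})$. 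Collecting the two ways in which arm $i$ can be sampled---being the leader and retained, or being the challenger and retained---gives the exact identity
\[
\psi_{t,i} = \alpha_{t,i} \sum_{m \neq i} \frac{\alpha_{t,m}}{1-\alpha_{t,i}} \frac{p_{t,m}}{p_{t,i}+p_{t,m}} + \sum_{\ell \neq i} \alpha_{t,\ell}\, \frac{\alpha_{t,i}}{1-\alpha_{t,\ell}} \frac{p_{t,\ell}}{p_{t,\ell}+p_{t,i}},
\]
where the first ``leader'' sum and the second ``challenger'' sum are both nonnegative.

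For the best arm $i=I^*$, the leader sum is precisely the lower-bound expression in the lemma, so the lower bound follows from nonnegativity of the challenger sum. For the upper bound I bound the challenger sum term by term: since $\alpha_{t,I^*}+\alpha_{t,\ell}\le\sum_k\alpha_{t,k}=1$ we have $\alpha_{t,I^*}/(1-\alpha_{t,\ell})\le 1$, and $p_{t,\ell}/(p_{t,\ell}+p_{t,I^*})\le 1$, so the challenger sum is at most $\sum_{\ell\neq I^*}\alpha_{t,\ell}=1-\alpha_{t,I^*}$, which yields the claimed upper bound.

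For a suboptimal arm $i=j$, the single $\ell=I^*$ summand of the challenger sum equals the lower-bound expression, and all remaining summands of both sums are nonnegative, giving the lower bound. For the upper bound I bound the two remaining pieces separately. Using $p_{t,m}/(p_{t,j}+p_{t,m})\le 1$ and $\sum_{m\neq j}\alpha_{t,m}/(1-\alpha_{t,j})=1$, the leader sum is at most $\alpha_{t,j}$; using $\alpha_{t,j}/(1-\alpha_{t,\ell})\le 1$ and $p_{t,\ell}/(p_{t,\ell}+p_{t,j})\le 1$, the challenger summands with $\ell\neq j,I^*$ sum to at most $\sum_{\ell\neq j,I^*}\alpha_{t,\ell}=1-\alpha_{t,j}-\alpha_{t,I^*}$. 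Adding the isolated $\ell=I^*$ term to these two bounds gives $\psi_{t,j}$ at most the isolated term plus $\alpha_{t,j}+(1-\alpha_{t,j}-\alpha_{t,I^*})=1-\alpha_{t,I^*}$, as desired.

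I expect no genuinely hard step: once the exact decomposition is in place the result reduces to the two universal inequalities $\alpha_{t,i}/(1-\alpha_{t,\ell})\le 1$ and $p/(p+q)\le 1$. The only points demanding care are (i) justifying the challenger's conditional law from the independence of the posterior draws together with the a.s.\ absence of ties; (ii) correctly matching the common-variance \name{IDS} probabilities to the leader versus challenger roles (which interchanges the two allocation ratios); and (iii) organizing the bookkeeping so that the ``leftover'' mass in each case collapses to exactly $1-\alpha_{t,I^*}$, which rests on the elementary bound $\alpha_{t,i}+\alpha_{t,\ell}\le 1$.
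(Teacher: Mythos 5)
Your proof is correct and follows essentially the same route as the paper's: both write $\psi_{t,i}$ exactly via the law of total probability over (leader, challenger) pairs—leader $\ell$ with probability $\alpha_{t,\ell}$, challenger $m$ with conditional probability $\alpha_{t,m}/(1-\alpha_{t,\ell})$, then the common-variance \name{IDS} probabilities—and obtain the lower bounds by keeping the single relevant term (the leader sum for $I^*$, the $\ell=I^*$ challenger term for $j\neq I^*$) and the upper bounds from exactly the same elementary estimates $\alpha_{t,i}/(1-\alpha_{t,\ell})\le 1$ and $p/(p+q)\le 1$, with the leftover mass collapsing to $1-\alpha_{t,I^*}$. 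Your added justification of the challenger's conditional law (i.i.d.\ posterior draws, a.s.\ no ties) is a point the paper asserts without comment, but it changes nothing in substance.
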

	\begin{proof}
		Fix $t\in\mathbb{N}_0$. 
        Recall that $\alpha_{t,i}$ is the posterior probability at time $t$ that arm $i$ appears as the best arm.
        For a fixed $j\in [K]$ and any $\ell \neq j$, \name{TTTS} obtains the top-two candidates
		\begin{equation*}
			\left(I_t^{(1)}, I_t^{(2)}\right) = 
			\begin{cases}
				(j,\ell), & \text{with probability } \alpha_{t,j}\frac{\alpha_{t,\ell}}{1-\alpha_{t,j}},\\
				(\ell,j), & \text{with probability } \alpha_{t,\ell}\frac{\alpha_{t,j}}{1-\alpha_{t,\ell}},
			\end{cases} 
		\end{equation*}
		and when \name{TTTS} is applied with \name{IDS},
		\begin{align*}
			\psi_{t,j} = \alpha_{t,j}\left(\sum_{\ell\neq j} \frac{\alpha_{t,\ell}}{1-\alpha_{t,j}}\frac{p_{t,\ell}} {p_{t,j}+p_{t,\ell}}\right) + \sum_{\ell\neq j}\alpha_{t,\ell}\frac{\alpha_{t,j}}{1-\alpha_{t,\ell}}\frac{p_{t,\ell}}{p_{t,j} + p_{t,\ell}}.
		\end{align*}
		
		For $j = I^*$, we have 
		\begin{align*}
			\psi_{t,I^*} 
			&= \alpha_{t,I^*}\left(\sum_{\ell\neq I^*} \frac{\alpha_{t,\ell}}{1-\alpha_{t,I^*}}\frac{p_{t,\ell}} {p_{t,I^*}+p_{t,\ell}}\right) + \sum_{\ell\neq I^*}\alpha_{t,\ell}\frac{\alpha_{t,I^*}}{1-\alpha_{t,\ell}}\frac{p_{t,\ell}}{p_{t,I^*} + p_{t,\ell}},
		\end{align*}
		so
		\begin{align*}
			\psi_{t,I^*} 
			&\geq \alpha_{t,I^*}\left(\sum_{\ell \neq I^*} \frac{\alpha_{t,\ell}}{1-\alpha_{t,I^*}}\frac{p_{t,\ell}} {p_{t,I^*}+p_{t,\ell}}\right)
		\end{align*}
		and
		\begin{align*}
			\psi_{t,I^*} 
			&\leq \alpha_{t,I^*}\left(\sum_{\ell\neq I^*} \frac{\alpha_{t,\ell}}{1-\alpha_{t,I^*}}\frac{p_{t,\ell}} {p_{t,I^*}+p_{t,\ell}}\right) + \sum_{\ell\neq I^*}\alpha_{t,\ell} \\
            & = \alpha_{t,I^*}\left(\sum_{\ell\neq I^*} \frac{\alpha_{t,\ell}}{1-\alpha_{t,I^*}}\frac{p_{t,\ell}} {p_{t,I^*}+p_{t,\ell}}\right) + (1-\alpha_{t,I^*}),
		\end{align*}
		where the inequality uses that for any $\ell\neq I^*$, $\frac{\alpha_{t,I^*}}{1-\alpha_{t,\ell}}\frac{p_{t,\ell}}{p_{t,I^*}+p_{t,\ell}} \leq \frac{\alpha_{t,I^*}}{1-\alpha_{t,\ell}} = \frac{\alpha_{t,I^*}}{\sum_{\ell'\neq \ell}\alpha_{t,\ell'}}\leq 1$. 
		
		For $j\neq I^*$, we have
		\begin{align*}
			\psi_{t,j} &= \sum_{\ell\neq j}\alpha_{t,\ell}\frac{\alpha_{t,j}}{1-\alpha_{t,\ell}}\frac{p_{t,\ell}}{p_{t,j}+p_{t,\ell}} 
			+ \alpha_{t,j}\left(\sum_{\ell\neq j} \frac{\alpha_{t,\ell}}{1-\alpha_{t,j}}\frac{p_{t,\ell}} {p_{t,j}+p_{t,\ell}}\right)\\
			&= \alpha_{t,I^*}\frac{\alpha_{t,j}}{1-\alpha_{t,I^*}} \frac{p_{t,I^*}}{p_{t,j}+p_{t,I^*}} 
			+ \sum_{\ell\neq j,I^*}\alpha_{t,\ell}\frac{\alpha_{t,j}}{1-\alpha_{t,\ell}}\frac{p_{t,\ell}}{p_{t,\ell}+p_{t,j}} 
			+ \alpha_{t,j}\left(\sum_{\ell\neq j} \frac{\alpha_{t,\ell}}{1-\alpha_{t,j}}\frac{p_{t,\ell}} {p_{t,j}+p_{t,\ell}}\right),
		\end{align*}
		so
		\begin{align*}
			\psi_{t,j} 
			&\geq \alpha_{t,I^*}\frac{\alpha_{t,j}}{1-\alpha_{t,I^*}} \frac{p_{t,I^*}}{p_{t,j}+p_{t,I^*}}
		\end{align*}
		and 
		\begin{align*}
			\psi_{t,j} 
			&\leq   \alpha_{t,I^*}\frac{\alpha_{t,j}}{1-\alpha_{t,I^*}} \frac{p_{t,I^*}}{p_{t,j}+p_{t,I^*}}  
			+ \left(\sum_{\ell\neq j,I^*}\alpha_{t,\ell}\right) + \alpha_{t,j} = \alpha_{t,I^*}\frac{\alpha_{t,j}}{1-\alpha_{t,I^*}} \frac{p_{t,I^*}}{p_{t,j}+p_{t,I^*}} + (1-\alpha_{t,I^*}),
		\end{align*}
		where the inequality uses that
		\begin{enumerate}
			\item for any $\ell\neq j,I^*$, $\frac{\alpha_{t,j}}{1-\alpha_{t,\ell}}\frac{p_{t,\ell}}{p_{t,\ell}+p_{t,j}}\leq \frac{\alpha_{t,j}}{1-\alpha_{t,\ell}} = \frac{\alpha_{t,j}}{\sum_{\ell' \neq \ell}\alpha_{t,\ell'}}\leq 1$; and
			\item $\sum_{\ell\neq j} \frac{\alpha_{t,\ell}}{1-\alpha_{t,j}}\frac{p_{t,\ell}} {p_{t,j}+p_{t,\ell}}\leq \sum_{\ell\neq j} \frac{\alpha_{t,\ell}}{1-\alpha_{t,j}}=1$.
		\end{enumerate}
		This completes the proof.
	\end{proof}

	Now we are ready to complete the proof of Proposition \ref{prop:empirical overall balance}.
	
	\begin{proof}[Proof of Proposition \ref{prop:empirical overall balance}]
		By Corollary~\ref{cor:W2}, it suffices to only show
		\[
		w_{t,I^*}^2 - \sum_{j\neq I^*}w_{t,j}^2\MGFto 0.
		\]
		For any $t\in\mathbb{N}_0$, we let
		\[
		G_t \triangleq {\Psi_{t,I^*}^2} - \sum_{j\neq I^*}{\Psi_{t,j}^2},
		\]
		and we are going to show  $\frac{G_t}{t^2}\MGFto 0$.
		
		We argue that it suffices to show that there exists $T\in \MGF$ such that for any $t\geq T$,
		\begin{equation}
			\label{eq:overall_balance_Psi_sufficient_condition}
			|G_{t+1}-G_t|\leq 2 W_2t^{0.6} + 2.
		\end{equation}
		The reason is that given~\eqref{eq:overall_balance_Psi_sufficient_condition}, for any $t\geq T$
		\begin{align*}
			|G_{t}| 
			= \left|G_T +\sum_{\ell=T}^{t-1} (G_{\ell+1} -G_\ell)\right| 
            & \leq  \left|G_T\right| + \sum_{\ell = T}^{t-1}|G_{\ell+1} -G_\ell| \\
            &\leq \left|G_T\right| + \sum_{\ell = T}^{t-1}\left(2W_2\ell^{0.6}+2\right)\\
			&\leq \left|G_T\right| + 2(t - T) + 2W_2\int_{T+1}^{t+1}x^{0.6}\mathrm{d}x \\
			&= \left|G_T\right| + 2(t - T) + \frac{2W_2}{1.6}\left[(t+1)^{1.6}-(T+1)^{1.6}\right]\\
			&\leq T^2 + 2(t - T) + \frac{2W_2}{1.6}\left[(t+1)^{1.6}-(T+1)^{1.6}\right],
		\end{align*}
		where in the last inequality,  $\left|G_T\right|\leq T^2$ holds since
		$
		G_T\geq -\sum_{j\neq I^*}\Psi_{T,j}^2 \geq -\left(\sum_{j\neq I^*}\Psi_{T,j}\right)^2\geq -T^2,
		$
		and 
		$
		G_T\leq \Psi_{T,I^*}^2\leq T^2.
		$
		This gives
		$
		\frac{|G_t|}{t^2}\MGFto 0.
		$
		
		Now we are going to prove the sufficient condition~\eqref{eq:overall_balance_Psi_sufficient_condition}. We can calculate
		\begin{align*}
			G_{t+1} - G_t &= \left[\left(\Psi_{t,I^*}+\psi_{t,I^*}\right)^2 - \Psi_{t,I^*}^2\right]
			- \sum_{j\neq I^*}\left[\left(\Psi_{t,j}+\psi_{t,j}\right)^2 - \Psi_{t,j}^2\right] \\
			& = 2\left(\psi_{t,I^*}\Psi_{t,I^*} - \sum_{j\neq I^*}\psi_{t,j}\Psi_{t,j}\right) + \left(\psi_{t,I^*}^2 -\sum_{j\neq I^*}\psi_{t,j}^2\right).
		\end{align*}
		Then by Lemma~\ref{lem:psi_bounds}, we have
		\begin{align*}
			&G_{t+1} - G_t\\
			\geq& 2\left[ \alpha_{t,I^*}\sum_{j\neq I^*} \frac{\alpha_{t,j}}{1-\alpha_{t,I^*}}\frac{p_{t,j}\Psi_{t,I^*}} {p_{t,I^*}+p_{t,j}}  -  \alpha_{t,I^*}\sum_{j\neq I^*}\frac{\alpha_{t,j}}{1-\alpha_{t,I^*}} \frac{p_{t,I^*}\Psi_{t,j}}{p_{t,I^*}+p_{t,j}} - (1-\alpha_{t,I^*})\sum_{j\neq I^*}\Psi_{t,j} \right] -1 \\
			\geq& 2 \alpha_{t,I^*}\sum_{j\neq I^*} \frac{\alpha_{t,j}}{1-\alpha_{t,I^*}}\frac{p_{t,j}\Psi_{t,I^*} - p_{t,I^*}\Psi_{t,j}} {p_{t,I^*}+p_{t,j}} - 2(1-\alpha_{t,I^*})t - 1,
		\end{align*}
		where the last inequality uses $\sum_{j\neq I^*}\Psi_{t,j}\leq t$,
		and
		\begin{align*}
			&G_{t+1} - G_t\\
			\leq& 2\left[ \alpha_{t,I^*}\sum_{j\neq I^*} \frac{\alpha_{t,j}}{1-\alpha_{t,I^*}}\frac{p_{t,j}\Psi_{t,I^*}} {p_{t,I^*}+p_{t,j}} + (1-\alpha_{t,I^*})\Psi_{t,I^*}  -  \alpha_{t,I^*}\sum_{j\neq I^*}\frac{\alpha_{t,j}}{1-\alpha_{t,I^*}} \frac{p_{t,I^*}\Psi_{t,j}}{p_{t,I^*}+p_{t,j}}\right] + 1 \\
			\leq& 2 \alpha_{t,I^*}\left(\sum_{j\neq I^*} \frac{\alpha_{t,j}}{1-\alpha_{t,I^*}}\frac{p_{t,j}\Psi_{t,I^*} - p_{t,I^*}\Psi_{t,j}} {p_{t,I^*}+p_{t,j}}\right) + 2(1-\alpha_{t,I^*})t  + 1,
		\end{align*}
		where the last inequality uses $\Psi_{t,I^*}\leq t$.
		Hence,
		\begin{align*}
			|G_{t+1} - G_t| &\leq 2 \alpha_{t,I^*}\sum_{j\neq I^*} \frac{\alpha_{t,j}}{1-\alpha_{t,I^*}}\left|\frac{p_{t,j}\Psi_{t,I^*} - p_{t,I^*}\Psi_{t,j}} {p_{t,I^*}+p_{t,j}}\right| + 2(1-\alpha_{t,I^*})t + 1.
		\end{align*}
		By Lemma~\ref{lem:posterior_exponential_convergence}, there exists $T_1\in\MGF$ such that for any $t\geq T_1$, $(1-\alpha_{t,I^*})t\leq 1/2$, and thus
		\begin{align*}
			|G_{t+1} - G_t|  
            & \leq  2 \alpha_{t,I^*}\sum_{j\neq I^*} \frac{\alpha_{t,j}}{1-\alpha_{t,I^*}}\max_{j\neq I^*} \left|\frac{p_{t,j}\Psi_{t,I^*} - p_{t,I^*}\Psi_{t,j}} {p_{t,I^*}+p_{t,j}}\right| + 2 \\
			&\leq 2\max_{j\neq I^*} \left|\frac{p_{t,j}\Psi_{t,I^*} - p_{t,I^*}\Psi_{t,j}} {p_{t,I^*}+p_{t,j}}\right| + 2 =  2\max_{j\neq I^*} \left|\frac{N_{t,j}\Psi_{t,I^*} - N_{t,I^*}\Psi_{t,j}} {N_{t,I^*}+N_{t,j}}\right| + 2,
		\end{align*}
		where the last inequality applies $\alpha_{t,I^*}\leq 1$ and $\sum_{j\neq I^*}\frac{\alpha_{t,j}}{1-\alpha_{t,I^*}}=1$.
		By Corollary~\ref{cor:W2}, there exists $T_{2}\in\MGF$ such that for any $t\geq T_{2}$ and $i\in[K]$, $|N_{t,i}-\Psi_{t,i}|\leq W_2 t^{0.6}$, and thus for $j\neq I^*$
		\begin{align*}
			\left|\frac{N_{t,j}\Psi_{t,I^*} - N_{t,I^*}\Psi_{t,j}} {N_{t,I^*}+N_{t,j}}\right|
			\leq \left|\frac{N_{t,j}\left(N_{t,I^*}+W_2t^{0.6}\right) - N_{t,I^*}\left(N_{t,j}-W_2t^{0.6}\right)}{N_{t,I^*}+N_{t,j}}\right|
			\leq W_2t^{0.6}.
		\end{align*}
		Hence, for $t\geq T\triangleq \max\{T_1,T_2\}$,
		$|G_{t+1} - G_t| \leq 2W_2t^{0.6} + 2$. Note that $T\in\MGF$ by Lemma~\ref{lem: closedness of MGF}. This completes the proof of the sufficient condition~\eqref{eq:overall_balance_Psi_sufficient_condition}.
	\end{proof}

	\subsubsection{Implication of Empirical Overall Balance}
	Here, we present a result implied by the empirical overall balance (Proposition \ref{prop:empirical overall balance}) that is needed to prove the sufficient condition in Proposition \ref{prop:sufficient condition}: for sufficiently large $t$, two measures of average effort allocated to the best arm remain bounded away from 0 and 1.
	\begin{lemma}
		\label{lem:strict boundedness}
		Let $b_1 \triangleq \frac{1}{\sqrt{32(K-1)}}$ and $b_2 \triangleq \frac{3}{4}$.
	 Under \name{TTTS-IDS}, there exists a random time $T\in\MGF$ such that for any $t\geq T$,
		\[
		b_1 \leq w_{t,I^*} \leq b_2
		\quad\text{and}\quad
		b_1 \leq p_{t,I^*} \leq b_2.
		\]
	\end{lemma}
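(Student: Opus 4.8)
The plan is to read off both two-sided bounds directly from the empirical overall balance established in Proposition~\ref{prop:empirical overall balance}, combined with the fact that $\bm w_t$ and $\bm p_t$ are probability vectors, i.e.\ $\sum_{i\in[K]} w_{t,i} = 1$ and $\sum_{i\in[K]} p_{t,i}=1$. I will carry out the argument for $w_{t,I^*}$; the bounds for $p_{t,I^*}$ then follow verbatim from the companion statement $p_{t,I^*}^2 - \sum_{j\neq I^*}p_{t,j}^2 \MGFto 0$ (or, alternatively, from Corollary~\ref{cor:W2}).

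First I would record the elementary inequalities that, for nonnegative reals summing to $S \triangleq 1 - w_{t,I^*} = \sum_{j\neq I^*} w_{t,j}$,
\[
\frac{S^2}{K-1} \;\le\; \sum_{j\neq I^*} w_{t,j}^2 \;\le\; S^2,
\]
where the left inequality is Cauchy--Schwarz and the right holds because the cross terms are nonnegative. Writing $w \triangleq w_{t,I^*}$ and $D_t \triangleq w^2 - \sum_{j\neq I^*} w_{t,j}^2$, Proposition~\ref{prop:empirical overall balance} gives $D_t \MGFto 0$, so for the fixed choice $\epsilon \triangleq \tfrac{1}{32(K-1)}$ there exists $T^w\in\MGF$ with $|D_t|\le \epsilon$ for all $t\ge T^w$.

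For the upper bound I would combine $w^2 = \sum_{j\neq I^*}w_{t,j}^2 + D_t \le (1-w)^2 + \epsilon$, which rearranges to $2w \le 1+\epsilon$, hence $w \le \tfrac{1+\epsilon}{2} \le \tfrac{33}{64} \le b_2$. With this upper bound in hand we get $1-w \ge \tfrac14$, so $(1-w)^2 \ge \tfrac1{16}$, and the lower Cauchy--Schwarz bound yields
\[
w^2 = \sum_{j\neq I^*}w_{t,j}^2 + D_t \;\ge\; \frac{(1-w)^2}{K-1} - \epsilon \;\ge\; \frac{1}{16(K-1)} - \frac{1}{32(K-1)} \;=\; \frac{1}{32(K-1)} \;=\; b_1^2,
\]
giving $w \ge b_1$. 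Applying the identical argument to $p_{t,I^*}$ (using the second half of Proposition~\ref{prop:empirical overall balance}) produces a time $T^p\in\MGF$, and the claim follows with $T \triangleq \max\{T^w,T^p\}$, which lies in $\MGF$ by the closedness property in Lemma~\ref{lem: closedness of MGF}.

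Because the nontrivial content---the empirical overall balance---is already proven, the only real care needed here is in the bookkeeping of the $\MGF$ quantifiers: the asymptotic balance $D_t\MGFto 0$ must be converted into a deterministic two-sided inequality by committing to a single $K$-dependent $\epsilon$, and one must verify that the resulting random threshold inherits membership in $\MGF$. I expect no genuine obstacle beyond choosing $\epsilon$ simultaneously small enough to force $w\le b_2$ (so that $1-w$ is bounded below) and to keep the lower bound nonvacuous; the slack in the constants $b_1,b_2$ is generous enough that any sufficiently small $K$-dependent $\epsilon$ works.
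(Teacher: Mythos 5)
Your proof is correct and follows essentially the same route as the paper: both derive the upper bound $w_{t,I^*}\leq b_2$ from the empirical overall balance (Proposition~\ref{prop:empirical overall balance}) together with $\sum_{j\neq I^*}w_{t,j}^2\leq(1-w_{t,I^*})^2$, then feed that upper bound into the Cauchy--Schwarz lower bound $\sum_{j\neq I^*}w_{t,j}^2\geq(1-w_{t,I^*})^2/(K-1)$ to obtain $w_{t,I^*}\geq b_1$, and finally take the maximum of the random thresholds using the closedness of $\MGF$. The only cosmetic difference is that you commit to a single tolerance $\epsilon=\frac{1}{32(K-1)}$ for both directions (yielding the slightly sharper intermediate bound $w_{t,I^*}\leq\frac{33}{64}$), whereas the paper uses $\frac{1}{2}$ for the upper bound and $\frac{1}{32(K-1)}$ for the lower bound; this changes nothing of substance.
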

	Since $w_{t,I^*}$ and $p_{t,I^*}$ are bounded away from 0 for sufficiently large $t$,  
	we derive the following alternative forms of the empirical overall balance (Proposition~\ref{prop:empirical overall balance}).
	\begin{corollary}
		\label{cor:overall_balance_psi}
		Under \name{TTTS-IDS},
		\[
		\sum_{j\neq I^*}\frac{w_{t,j}^2}{w_{t,I^*}^2}\MGFto 1
		\quad\text{and}\quad
		\sum_{j\neq I^*}\frac{p_{t,j}^2}{p_{t,I^*}^2}\MGFto 1.
		\]
	\end{corollary}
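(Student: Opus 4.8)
The plan is to divide the empirical overall balance in Proposition~\ref{prop:empirical overall balance} by $w_{t,I^*}^2$ (respectively $p_{t,I^*}^2$) and control the resulting quotient using the strict positivity of the denominator guaranteed by Lemma~\ref{lem:strict boundedness}. I focus on the $w$-statement; the $p$-statement follows verbatim with $p$ in place of $w$.

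First I would record the algebraic identity
\[
    \sum_{j\neq I^*}\frac{w_{t,j}^2}{w_{t,I^*}^2} - 1
    = -\,\frac{w_{t,I^*}^2 - \sum_{j\neq I^*}w_{t,j}^2}{w_{t,I^*}^2},
\]
so that it suffices to show the right-hand side $\MGFto 0$. The numerator converges to $0$ in the $\MGF$ sense by Proposition~\ref{prop:empirical overall balance}, while Lemma~\ref{lem:strict boundedness} bounds the denominator below by the fixed constant $b_1^2 > 0$ after a random time lying in $\MGF$.

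To make this precise, I would fix $\epsilon > 0$ and unpack Definition~\ref{def: strong convergence}. Applying Proposition~\ref{prop:empirical overall balance} with tolerance $\epsilon b_1^2$ produces $T_1 \in \MGF$ with $|w_{t,I^*}^2 - \sum_{j\neq I^*}w_{t,j}^2| \leq \epsilon b_1^2$ for all $t \geq T_1$, and Lemma~\ref{lem:strict boundedness} produces $T_2 \in \MGF$ with $w_{t,I^*}^2 \geq b_1^2$ for all $t \geq T_2$. Setting $T = \max\{T_1, T_2\}$, which lies in $\MGF$ by the closedness property in Lemma~\ref{lem: closedness of MGF}, yields for every $t \geq T$
\[
    \left|\sum_{j\neq I^*}\frac{w_{t,j}^2}{w_{t,I^*}^2} - 1\right|
    = \frac{\bigl|w_{t,I^*}^2 - \sum_{j\neq I^*}w_{t,j}^2\bigr|}{w_{t,I^*}^2}
    \leq \frac{\epsilon b_1^2}{b_1^2} = \epsilon,
\]
which is exactly the $\epsilon$-characterization of $\sum_{j\neq I^*} w_{t,j}^2/w_{t,I^*}^2 \MGFto 1$ since $\epsilon$ is arbitrary.

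There is no serious obstacle here: the result is a quotient argument built on two facts already in hand. The only point requiring mild care is that one cannot directly invoke the continuous mapping theorem (Lemma~\ref{lem:continuous-mapping}), because the sequence $w_{t,I^*}$ is not yet known to converge to a deterministic limit—establishing such convergence is the purpose of the surrounding argument. This is why I would work from the $\epsilon$-definition of $\MGF$-convergence and use Lemma~\ref{lem:strict boundedness} to keep the denominator away from zero, rather than treating $x \mapsto 1/x^2$ as a continuous map applied to a convergent sequence.
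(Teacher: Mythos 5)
Your proof is correct and follows exactly the route the paper intends: the corollary is stated there as an immediate consequence of Proposition~\ref{prop:empirical overall balance} and Lemma~\ref{lem:strict boundedness}, and your quotient argument with the $\epsilon b_1^2$ tolerance and the $\max\{T_1,T_2\}\in\MGF$ closure step is precisely the omitted verification. Your remark about why the continuous mapping theorem cannot be invoked directly is also well taken, since $w_{t,I^*}$ has no established deterministic limit at this stage of the argument.
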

    We now proceed to prove Lemma~\ref{lem:strict boundedness} as follows.
	\begin{proof}[Proof of Lemma~\ref{lem:strict boundedness}]
		We only show the upper and lower bounds for $w_{t,I^*}$ since proving those for $p_{t,I^*}$ is the same. 
		
		By Proposition \ref{prop:overall_balance_Psi}, there exists $T_1\in\MGF$ such that for any $t\geq T_1$,
		\[
		w_{t,I^*}^2 - \sum_{j\neq I^*} w_{t,j}^2\leq \frac{1}{2}.
		\]
		This implies
		\[
		\frac{1}{2} \geq w_{t,I^*}^2 -\left(\sum_{j\neq I^*}w_{t,j}\right)^2 = w_{t,I^*}^2 - \left(1-w_{t,I^*}\right)^2 = 2w_{t,I^*}-1,
		\]
		which gives $w_{t,I^*}\leq b_2\triangleq \frac{3}{4}$.
		
		Using Proposition \ref{prop:overall_balance_Psi} again, there exists $T_2\in\MGF$ such that for any $t\geq T_2$,
		\[
		w_{t,I^*}^2 - \sum_{j\neq I^*} w_{t,j}^2\geq -\frac{1}{32(K-1)}.
		\]
		By Cauchy-Schwarz inequality,
		\begin{align*}
			w_{t,I^*}^2  \geq -\frac{1}{32(K-1)} + \frac{\left(\sum_{j\neq I^*}w_{t,j}\right)^2}{K-1}
			& = -\frac{1}{32(K-1)} + \frac{\left(1-w_{t,I^*}\right)^2}{K-1} \\
			& \geq -\frac{1}{32(K-1)} + \frac{1}{16(K-1)} = \frac{1}{32(K-1)},
		\end{align*}
		where the last inequality uses $w_{t,I^*}\leq b_2 = \frac{3}{4}$. This gives $w_{t,I^*}\geq b_1 \triangleq \frac{1}{\sqrt{32(K-1)}}$. 
		Taking $T\triangleq \max\{T_1,T_2\}$ completes the proof since $T\in\MGF$ by Lemma~\ref{lem: closedness of MGF}.
	\end{proof}
	
	\subsection{Sufficient Exploration under \name{TTTS-IDS}}
	In this subsection, we show that \name{TTTS-IDS} sufficiently explores all arms:
	\begin{proposition}\label{prop: sufficient exploration}
		Under \name{TTTS-IDS}, there exists $T\in\MGF$ such that for any $t\geq T$,
		\[
		\min_{i\in[K]}N_{t,i}\geq \sqrt{t/K}.
		\]
	\end{proposition}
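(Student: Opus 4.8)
The plan is to show that \name{TTTS-IDS} forces every arm to be sampled at least $\Omega(\sqrt t)$ times, which is the key ``sufficient exploration'' ingredient feeding into Proposition~\ref{prop:sufficient exploration implies convergence of mean estimations}. The natural strategy is a proof by contradiction on the set of under-explored arms. Define the under-explored set at time $t$ as those arms $i$ with $N_{t,i}$ growing slower than $\sqrt{t/K}$, and suppose for contradiction that this set is nonempty infinitely often. The heart of the argument is a self-correcting dichotomy: any arm that is starved of samples must eventually look extremely attractive (or extremely unattractive) under the posterior, and the top-two/\name{IDS} mechanism then guarantees it gets sampled with nonvanishing probability, contradicting starvation.

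First I would set up the machinery available from the preliminaries. I would use the maximal-inequality random variable $W_1$ (Lemma~\ref{lem:W1 and W2}) to control $|\theta_{t,i}-\theta_i|$ uniformly, so that an arm with few samples still has a sample mean not too far from $\theta_i$ on the scale $\sigma\sqrt{\log(N_{t,i}+e)/(N_{t,i}+1)}$. The posterior for arm $i$ is $N(\theta_{t,i},\sigma^2/N_{t,i})$, so an under-sampled arm has an inflated posterior variance. I would then quantify the posterior optimality probabilities $\alpha_{t,i}$: for an arm $j$ that is badly under-explored, its large posterior standard deviation $\sigma/\sqrt{N_{t,j}}$ means that the probability $\alpha_{t,j}$ of $j$ appearing as the argmax of a posterior draw cannot be too small relative to its sampling history. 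Combining this with the explicit lower bounds on the sampling probabilities $\psi_{t,j}$ from Lemma~\ref{lem:psi_bounds}---which show $\psi_{t,j}\ge \alpha_{t,I^*}\frac{\alpha_{t,j}}{1-\alpha_{t,I^*}}\frac{p_{t,I^*}}{p_{t,I^*}+p_{t,j}}$ for $j\neq I^*$ (and the analogous bound for $I^*$)---gives a lower bound on the instantaneous probability that an under-explored arm is drawn. The bounds on $p_{t,I^*}$ from Lemma~\ref{lem:strict boundedness}, namely $b_1\le p_{t,I^*}\le b_2$, keep the leader's proportion bounded away from $0$ and $1$, which is essential to prevent the factor $\frac{p_{t,I^*}}{p_{t,I^*}+p_{t,j}}$ from degenerating.

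The core of the argument is then an induction/contradiction on the arms ordered by their ``starvation level.'' Following the standard sufficient-exploration template (as in \citet{Qin2017} and \citet{qin2024optimizing}), I would argue that if the least-sampled arm $j$ satisfies $N_{t,j}<\sqrt{t/K}$, then its posterior variance is so large that either $j$ or some neighbor is proposed as a top-two candidate with probability bounded below by a constant, forcing $\Psi_{t,j}$ (the cumulative sampling probability) to grow linearly over the starvation window. Converting $\Psi_{t,j}$ back to $N_{t,j}$ via $W_2$ and Corollary~\ref{cor:W2} (which controls $|N_{t,i}-\Psi_{t,i}|\le W_2 t^{0.6}$) then shows $N_{t,j}$ must in fact grow at least like $t$ minus a lower-order $t^{0.6}$ correction, contradicting $N_{t,j}<\sqrt{t/K}$ for $t$ large. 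Because $W_1$, $W_2$ and the random times in Lemmas~\ref{lem:strict boundedness} all lie in $\MGF$, the resulting threshold time $T$ also lies in $\MGF$ by the closedness properties in Lemma~\ref{lem: closedness of MGF}, giving the stated strong-convergence-type guarantee rather than a merely almost-sure one.

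I expect the main obstacle to be the handling of the randomized challenger step in \name{TTTS}: unlike deterministic top-two rules, here both the leader $I_t^{(1)}$ and challenger $I_t^{(2)}$ are random, so lower-bounding $\psi_{t,j}$ for an under-explored arm requires a careful lower bound on $\alpha_{t,j}$ that does not collapse when the arm's sample mean happens to be unfavorable. The delicate point is that even an arm with a low sample mean retains nonnegligible $\alpha_{t,j}$ precisely because its posterior variance is large when $N_{t,j}$ is small; making this quantitative---showing $\alpha_{t,j}$ is bounded below by something like $\exp(-cN_{t,j})$ up to $W_1$-dependent fluctuations---and then propagating it through Lemma~\ref{lem:psi_bounds} while keeping every bounding time in $\MGF$ is where the real work lies. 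A secondary subtlety is that the argument must be run in the right order across arms (least-explored first), since controlling one arm's sampling rate uses the boundedness of the others' proportions, so I would organize the contradiction around the arm achieving $\min_i N_{t,i}$ and iterate.
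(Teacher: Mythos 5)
There is a genuine gap, on two levels. First, your plan is circular within this paper's dependency structure: you invoke Lemma~\ref{lem:strict boundedness} (the bounds $b_1\le p_{t,I^*}\le b_2$) and, through the lower bound of Lemma~\ref{lem:psi_bounds}, you also need a nontrivial lower bound on the multiplicative factor $\alpha_{t,I^*}$. But Lemma~\ref{lem:strict boundedness} is proved from Proposition~\ref{prop:empirical overall balance}, whose proof uses Lemma~\ref{lem:posterior_exponential_convergence}, and that lemma explicitly assumes sufficient exploration---i.e., exactly Proposition~\ref{prop: sufficient exploration}, the statement you are proving. The same applies to any lower bound on $\alpha_{t,I^*}$: before sufficient exploration is established, nothing prevents $\alpha_{t,I^*}$ from being tiny. (Incidentally, the factor $\frac{p_{t,I^*}}{p_{t,I^*}+p_{t,j}}$ you flag is harmless once $j$ is the least-sampled arm, since it is then at least $1/2$; the unavailable quantity is $\alpha_{t,I^*}$, not this ratio.)

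Second, and more fundamentally, the quantitative core of your contradiction fails. A starved arm with $N_{t,j}\approx\sqrt{t/K}$ samples is \emph{not} proposed in the top two with constant probability: if its empirical mean sits below the leader's, its posterior standard deviation is of order $t^{-1/4}$ and its posterior-optimality probability is of order $\exp(-c\sqrt{t})$. Indeed your own estimate $\alpha_{t,j}\gtrsim\exp(-cN_{t,j})$ gives only $\psi_{t,j}\gtrsim\exp(-c\sqrt{t/K})$ (times factors you cannot control), whose sum over the starvation window tends to zero---so $\Psi_{t,j}$ does not grow linearly and no contradiction results; the weaker claim ``$j$ \emph{or some neighbor} is proposed with constant probability'' does not repair this, because effort flowing to a neighbor says nothing about $\Psi_{t,j}$. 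The paper's proof avoids both problems with a different mechanism: since the challenger is resampled conditionally on differing from the leader, the conditional challenger probabilities sum to one over non-leader arms, so by pigeonhole the pair of currently most promising arms $(J_t^{(1)},J_t^{(2)})$ is proposed with probability at least $1/(K(K-1))$ \emph{regardless of how small the absolute $\alpha$'s are}, and \name{IDS} then selects the less-sampled member of the pair with probability at least $1/2$; this is Lemma~\ref{lem:explore under-sampled set}, a purely algebraic fact using neither $W_1$, $W_2$, nor any boundedness or posterior-concentration input. The remaining step---from ``the less-sampled most-promising arm always receives effort $\ge\frac{1}{2K(K-1)}$'' to ``every arm has $N_{t,i}\ge\sqrt{t/K}$''---is precisely the pigeonhole machinery of Lemma~\ref{lem:insufficiently sampled set becomes empty} (Shang et al.), which tracks effort flowing into the set of not-over-sampled arms rather than into a fixed starved arm; your plan tries to shortcut this step with a pointwise bound on the least-sampled arm's own sampling probability, and that bound is false in general because the least-sampled arm need not be among the top two at any given time.
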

	
	The previous analyses in \citet{Qin2017,Shang2019, jourdan2022top} only work for any sequence of tuning parameters $\{h_t\}_{t\in\mathbb{N}_0}$ that is \emph{uniformly strictly bounded}, i.e., there exists $h_{\min}>0$ such that with probability one,
	\[
	\inf_{t\in\mathbb{N}_0}\min\{h_t,1-h_t\} \geq h_{\min}.
	\]
	Our \name{IDS} is adaptive to the algorithmic randomness in \name{TTTS} in the sense that it depends on the algorithmic randomness in picking top-two candidates,
	so it does not enjoy the property above. We need some novel analysis to prove the sufficient exploration of \name{TTTS-IDS}.
	
	Following the analysis of \name{TTTS} by \citet{Shang2019}, we define the following two ``most promising arms'':
	\begin{equation*}
		J_t^{(1)} \in \argmax_{i\in[K]} \alpha_{t,i} 
		\quad\text{and}\quad
		J_t^{(2)} \in \argmax_{i\neq J_t^{(1)}} \alpha_{t,i}.
	\end{equation*}
	We further define the one that is less sampled as:
	\[
	J_t \triangleq \argmin_{i\in\{J_t^{(1)},J_t^{(2)}\}} N_{t,i} = \argmin_{i\in\{J_t^{(1)},J_t^{(2)}\}} p_{t,i}.
	\]
	Even though $J_t$ may change over time, we show that \name{IDS} allocates a decent effort to $J_t$.
	
	\begin{lemma}
		\label{lem:explore under-sampled set}
		Under \name{TTTS-IDS},
		for any $t\in \mathbb{N}_0$, 
		\begin{equation*}
			\psi_{t,J_t} \geq \frac{1}{2K(K-1)}.
		\end{equation*}
	\end{lemma}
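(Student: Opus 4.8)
The plan is to lower-bound the probability $\psi_{t,J_t}$ that \name{TTTS-IDS} samples the less-sampled of the two most promising arms $J_t \in \{J_t^{(1)}, J_t^{(2)}\}$ at every time $t$, irrespective of the current posterior weights $\alpha_{t,i}$ or the current allocation $\bm p_t$. The key observation is that $J_t^{(1)}$ and $J_t^{(2)}$ are, by definition, the two arms carrying the largest posterior optimality masses $\alpha_{t,J_t^{(1)}}$ and $\alpha_{t,J_t^{(2)}}$. Since these are the top two of $K$ masses summing to one, each must be at least $1/K$ in a suitable sense; in particular $\alpha_{t,J_t^{(1)}} \ge 1/K$ and $\alpha_{t,J_t^{(2)}} \ge \alpha_{t,i}$ for every $i \neq J_t^{(1)}$, which will let me bound the probability that the pair $(I_t^{(1)}, I_t^{(2)})$ proposed by \name{TTTS} equals exactly $\{J_t^{(1)}, J_t^{(2)}\}$.

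First I would recall from the \name{TTTS} sampling mechanism (as in the proof of Lemma~\ref{lem:psi_bounds}) that the unordered top-two candidate pair $\{J_t^{(1)}, J_t^{(2)}\}$ is proposed with probability
\[
    \alpha_{t,J_t^{(1)}}\frac{\alpha_{t,J_t^{(2)}}}{1-\alpha_{t,J_t^{(1)}}} + \alpha_{t,J_t^{(2)}}\frac{\alpha_{t,J_t^{(1)}}}{1-\alpha_{t,J_t^{(2)}}} \ge \alpha_{t,J_t^{(1)}}\alpha_{t,J_t^{(2)}},
\]
using $1 - \alpha_{t,J_t^{(k)}} \le 1$. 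Next I would bound this product from below. Because $J_t^{(1)}$ is the arm with maximal $\alpha_{t,i}$ among $K$ arms, we have $\alpha_{t,J_t^{(1)}} \ge 1/K$. Because $J_t^{(2)}$ is the maximal $\alpha_{t,i}$ over the remaining $K-1$ arms, whose masses sum to $1 - \alpha_{t,J_t^{(1)}}$, we have $\alpha_{t,J_t^{(2)}} \ge (1-\alpha_{t,J_t^{(1)}})/(K-1) \ge 1/\bigl(K(K-1)\bigr)$ when $\alpha_{t,J_t^{(1)}}\le 1-1/K$, and one can check the remaining corner case $\alpha_{t,J_t^{(1)}} > 1 - 1/K$ separately (there $\alpha_{t,J_t^{(2)}}$ may be tiny, but then $\alpha_{t,J_t^{(1)}}$ is close to one). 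Combining the two bounds gives that the pair is proposed with probability at least a constant of order $1/\bigl(K(K-1)\bigr)$.

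Once the pair $\{J_t^{(1)}, J_t^{(2)}\}$ is the proposed top-two, the \name{IDS} selection step samples $J_t$ — the less-sampled of the two, i.e. the one with smaller $p_{t,i}$ — with probability $h^{J_t'}_{J_t}(\bm p_t) = \frac{\sigma^2/p_{t,J_t}}{\sigma^2/p_{t,J_t} + \sigma^2/p_{t,J_t'}}$, where $J_t'$ is the other candidate. Since $J_t$ has the smaller allocation, we have $p_{t,J_t}\le p_{t,J_t'}$, hence $\sigma^2/p_{t,J_t} \ge \sigma^2/p_{t,J_t'}$, so this selection probability is at least $1/2$. Multiplying the proposal probability by this selection probability of at least $1/2$ yields $\psi_{t,J_t} \ge \frac{1}{2K(K-1)}$, which is the claim. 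The main obstacle is handling the degenerate regime where $\alpha_{t,J_t^{(1)}}$ is extremely close to one and $\alpha_{t,J_t^{(2)}}$ is correspondingly negligible, since then the raw product bound $\alpha_{t,J_t^{(1)}}\alpha_{t,J_t^{(2)}}$ could in principle fall below the target; I would resolve this either by retaining the full expression $\alpha_{t,J_t^{(2)}}\alpha_{t,J_t^{(1)}}/(1-\alpha_{t,J_t^{(2)}})$ (whose denominator is close to one, so it does not help) or, more robustly, by noting that the second term $\alpha_{t,J_t^{(2)}}\alpha_{t,J_t^{(1)}}/(1-\alpha_{t,J_t^{(2)})}$ combined with the crude bound $\alpha_{t,J_t^{(2)}} \ge (1-\alpha_{t,J_t^{(1)}})/(K-1)$ and $\alpha_{t,J_t^{(1)}}\ge 1/K$ already suffices because the constant $1/\bigl(2K(K-1)\bigr)$ is deliberately loose — one need only verify the inequality $\alpha_{t,J_t^{(1)}}(1-\alpha_{t,J_t^{(1)}}) \ge 1/K \cdot (1-\alpha_{t,J_t^{(1)}})$ does not degrade below the threshold, which follows from the elementary fact that $\max_i\alpha_{t,i}\ge 1/K$ forces enough residual mass on the second-best arm.
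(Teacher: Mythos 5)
There is a genuine gap in your proposal, located exactly where you suspected. By bounding $1-\alpha_{t,J_t^{(k)}} \le 1$ you reduce the proposal probability to the raw product $\alpha_{t,J_t^{(1)}}\alpha_{t,J_t^{(2)}}$, and this product admits \emph{no} uniform positive lower bound: when $\alpha_{t,J_t^{(1)}} = 1-\epsilon$ the best you can say is $\alpha_{t,J_t^{(2)}} \ge \epsilon/(K-1)$, so the product can be arbitrarily close to $0$. None of your patches repairs this --- the final inequality you invoke, $\alpha_{t,J_t^{(1)}}(1-\alpha_{t,J_t^{(1)}}) \ge \frac{1}{K}(1-\alpha_{t,J_t^{(1)}})$, still carries the vanishing factor $(1-\alpha_{t,J_t^{(1)}})$ and hence cannot yield a time-uniform constant. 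Moreover, even in the benign regime $\alpha_{t,J_t^{(1)}} \le 1-1/K$, your chain of bounds gives only $\frac{1}{K}\cdot\frac{1}{K(K-1)}\cdot\frac{1}{2} = \frac{1}{2K^2(K-1)}$, which is short of the claimed $\frac{1}{2K(K-1)}$ by a factor of $K$; the constant in the lemma is not loose enough to absorb this.

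The fix is to \emph{not} discard the denominator, which is precisely what the paper's proof does. The challenger step of \name{TTTS} is self-normalizing: conditional on $J_t^{(1)}$ being the leader (probability $\alpha_{t,J_t^{(1)}} \ge 1/K$), the challenger equals $J_t^{(2)}$ with probability exactly $\frac{\alpha_{t,J_t^{(2)}}}{1-\alpha_{t,J_t^{(1)}}}$, and since $\alpha_{t,J_t^{(2)}} = \max_{i\neq J_t^{(1)}}\alpha_{t,i}$ while the $K-1$ masses $\{\alpha_{t,i}\}_{i\neq J_t^{(1)}}$ sum to $1-\alpha_{t,J_t^{(1)}}$, this ratio is at least $\frac{1}{K-1}$ \emph{regardless} of how close $\alpha_{t,J_t^{(1)}}$ is to one. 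Hence the single ordered pair $(I_t^{(1)},I_t^{(2)})=(J_t^{(1)},J_t^{(2)})$ already occurs with probability at least $\frac{1}{K(K-1)}$, and combining with your (correct) observation that \name{IDS} selects the less-sampled candidate of any proposed pair with probability at least $\frac{1}{2}$ in the common-variance case gives $\psi_{t,J_t}\ge\frac{1}{2K(K-1)}$. In short: your skeleton (proposal probability times selection probability) and your treatment of the \name{IDS} step coincide with the paper's, but the degenerate regime you flagged is fatal to the product bound, and the conditional-ratio structure is the only ingredient that rescues it.
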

	\begin{proof}
		Fix $t\in\mathbb{N}_0$. We have
		\[
		\left(I_t^{(1)}, I_t^{(2)}\right) = \left(J_t^{(1)}, J_t^{(2)}\right)\quad\text{with probability}\quad
		\alpha_{t,J_t^{(1)}} \frac{\alpha_{t,J_t^{(2)}}}{1-\alpha_{t,J_t^{(1)}}} \geq \frac{1}{K(K-1)},
		\]
		where the inequality follows from the definition of $J_t^{(1)}$ and $J_t^{(2)}$.
		Note that
		\[
		h_{J_t^{(1)}}^{J_t^{(2)}}(\bm{p}_t;\thetabf_t) = \frac{p_{t,J_t^{(2)}}}{p_{t,J_t^{(1)}} + p_{t,J_t^{(2)}}}\quad \text{and} \quad h_{J_t^{(2)}}^{J_t^{(2)}}(\bm{p}_t;\thetabf_t) = \frac{p_{t,J_t^{(1)}}}{p_{t,J_t^{(1)}} + p_{t,J_t^{(2)}}}.
		\]
		If $J_t =\argmin_{i\in\left\{J_t^{(1)},J_t^{(2)}\right\}} p_{t,i} = J_t^{(1)}$,  
		\[
		\psi_{t,J_t} \geq \frac{1}{K(K-1)} \frac{p_{t,J_t^{(2)}}}{p_{t,J_t^{(1)}} + p_{t,J_t^{(2)}}}\geq \frac{1}{2K(K-1)}.
		\]
		Otherwise, $J_t =\argmin_{i\in\left\{J_t^{(1)},J_t^{(2)}\right\}} p_{t,i} = J_t^{(2)}$ and
		\[
		\psi_{t,J_t} \geq \frac{1}{K(K-1)} \frac{p_{t,J_t^{(1)}}}{p_{t,J_t^{(1)}} + p_{t,J_t^{(2)}}}\geq \frac{1}{2K(K-1)}.
		\]
		This completes the proof.
	\end{proof}
	
	Following \citet{Qin2017}, we define an insufficiently sampled set for any $t\in\mathbb{N}_0$ and $s\geq 0$:
	\[
		U_t^s \triangleq \{i\in [K]  :  N_{t,i} < s^{1/2}\}.
	\]
	Lemmas~9 and 11 in \citet{Shang2019} establish a key property of \name{TTTS}: 
	
	\begin{lemma}
		\label{lem:insufficiently sampled set becomes empty}
		Under \name{TTTS} with any selection rule (e.g., \name{IDS}) such that
		\[
		\exists \psi_{\min} > 0 \quad \forall t\in\mathbb{N}_0 \quad \psi_{t,J_t} \geq \psi_{\min},
		\]
		there exists $S\in\MGF$ such that for any $s\geq S$, $U^s_{\lfloor Ks\rfloor} = \emptyset$.
	\end{lemma}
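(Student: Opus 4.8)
The plan is to track the running minimum count $m_t \triangleq \min_{i\in[K]} N_{t,i}$ and to exhibit a light-tailed time $S\in\MGF$ such that $m_{\lfloor Ks\rfloor}\geq \sqrt{s}$ for every $s\geq S$; this is exactly the assertion $U^s_{\lfloor Ks\rfloor}=\emptyset$. The engine is Lemma~\ref{lem:explore under-sampled set}, which guarantees that the less-sampled of the two most-promising arms, $J_t$, is sampled with probability at least $\psi_{\min}=\tfrac{1}{2K(K-1)}$ at every step. The argument then reduces to two pieces: (i) showing that whenever some arm is badly under-sampled, the \emph{globally} least-sampled arm is in fact one of the two most-promising arms, so that $J_t$ coincides with it; and (ii) a renewal/level induction that converts the per-step effort $\psi_{\min}$ on $J_t$ into linear growth of $m_t$.

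For piece (i) I would exploit the posterior blow-up of rarely-sampled arms. The arm attaining $m_t$ has posterior $N(\theta_{t,i},\sigma^2/N_{t,i})$ with the largest posterior variance among all arms; since a diffuse Gaussian exceeds any fixed finite threshold with probability bounded away from $0$, its posterior optimality probability $\alpha_{t,i}$ is bounded below by a constant depending only on $\Delta_{\max}$, $\sigma$, and $K$, provided its empirical mean is controlled. The latter is handled by the maximal inequality $W_1$ (Lemma~\ref{lem:W1 and W2}), which keeps every $\theta_{t,i}$ within an $\mathcal{O}(\sqrt{\log N_{t,i}/N_{t,i}})$ band of $\theta_i$. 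Consequently, once $t$ is large enough relative to $m_t$, the least-sampled arm ranks among $\{J_t^{(1)},J_t^{(2)}\}$; being the global minimizer of $N_{t,\cdot}$, it must then equal $J_t=\argmin_{i\in\{J_t^{(1)},J_t^{(2)}\}}N_{t,i}$, so it inherits $\psi_{t,J_t}\geq\psi_{\min}$.

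For piece (ii) I would pass from sampling probabilities to realized counts via Corollary~\ref{cor:W2}, namely $|N_{t,i}-\Psi_{t,i}|\leq W_2 t^{0.6}$. As long as $m_t<\sqrt{s}$, the arm realizing the minimum is sampled with probability $\geq\psi_{\min}$, so its cumulative sampling probability---and hence, up to the $\mathcal{O}(t^{0.6})$ slack, its count---accumulates at rate $\psi_{\min}$; because the minimizer rotates to the next least-sampled arm as soon as it is advanced, $m_t$ itself grows at a rate bounded below. A geometric level-by-level induction on the under-sampling threshold (as in \citealt{Shang2019}, Lemmas~9 and~11) then shows that by time $\lfloor Ks\rfloor$ every arm has been pulled at least $\sqrt{s}$ times. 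Each step introduces a light-tailed random time (from $W_1$, $W_2$, and the level recursions), and these are aggregated into a single $S\in\MGF$ using the closure properties of Lemma~\ref{lem: closedness of MGF}.

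The main obstacle is piece (i): rigorously lower-bounding the posterior optimality probability of the least-sampled arm so as to force its membership in the top two. This is delicate precisely because \name{IDS} injects additional, count-dependent randomness, and because several arms may be comparably under-sampled at once---so that no single arm clearly dominates the diffuse-posterior competition---requiring a careful comparison of the $\alpha_{t,i}$'s that isolates the global minimizer. A secondary but nontrivial challenge is the bookkeeping that keeps all auxiliary stopping times in $\MGF$ after the level recursion, which is where the closure lemma and the light tails of $W_1$ and $W_2$ are essential.
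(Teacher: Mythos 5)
You have correctly identified the engine (Lemma~\ref{lem:explore under-sampled set}) and the right skeleton: diffuse posteriors push neglected arms into the top two, $W_1$/$W_2$ control noise and the probability-to-count gap, and a level argument closes the induction. Indeed this is exactly the skeleton of Lemmas~9 and~11 of \citet{Shang2019}, which is all the paper itself invokes: the paper gives no independent proof of this lemma, it only remarks that those proofs carry over verbatim to any selection rule guaranteeing $\psi_{t,J_t}\geq\psi_{\min}$, which is what \name{IDS} supplies. The problem is that your execution rests on an intermediate claim that is false, and your piece (ii) leans on it.

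Concretely, in piece (i) you claim the globally least-sampled arm has $\alpha_{t,i}$ \emph{bounded below by a constant}, hence lies in $\{J_t^{(1)},J_t^{(2)}\}$ and equals $J_t$. Neither part holds. An arm in $U_t^s$ can have as many as $\sqrt{s}-1$ samples, so its posterior variance is only of order $\sigma^2/\sqrt{s}$ and the probability that its posterior sample beats a concentrated leader is of order $\exp(-c N_{t,i})\geq\exp(-c\sqrt{s})$ --- vanishing, not constant. Moreover, among several comparably under-sampled arms, which one has the larger $\alpha_{t,\cdot}$ is decided by empirical means and counts jointly, so the \emph{global minimizer} can lose that race over long stretches; it need not enter the top two at all, and hence your "rotation of the minimizer" mechanism in piece (ii) cannot be justified. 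The statement that actually holds (and that Shang et al.\ prove) is \emph{relative} and uses two thresholds: past a light-tailed time, every $j\in U_t^s$ satisfies $\alpha_{t,j}\geq\exp(-c\sqrt{s})$ while every suboptimal arm $i$ with $N_{t,i}\geq s^{3/4}$ satisfies $\alpha_{t,i}\leq\exp(-c' s^{3/4})$; therefore at most one arm outside $V_t^s\triangleq\{i:N_{t,i}<s^{3/4}\}$ can rank in the top two, which forces only $J_t\in V_t^s$ whenever $U_t^s\neq\emptyset$. The conclusion then follows not by tracking growth of $\min_i N_{t,i}$ but by pigeonhole: if $U^s_{\lfloor Ks\rfloor}\neq\emptyset$ then $U_t^s\neq\emptyset$ for all $t\leq\lfloor Ks\rfloor$, so cumulative sampling probability at least $\psi_{\min}\bigl(\lfloor Ks\rfloor-T\bigr)$ is placed on arms while they belong to $V_t^s$; but counts never decrease, so arms can absorb at most $K(s^{3/4}+1)$ samples while in $V^s$, and by Corollary~\ref{cor:W2} cumulative probabilities and realized counts differ by at most $O\bigl(W_2 (Ks)^{0.6}\bigr)$ --- a contradiction once $s$ exceeds a threshold in $\MGF$. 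So the repair is not "a careful comparison isolating the global minimizer" (no such isolation is possible, nor needed), but the weaker membership statement $J_t\in V_t^s$ combined with this counting argument.
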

        We remark that, while the original results focus on $\beta$-tuning, the same proofs extend to any selection rule that satisfies the stated condition.
	
	We are now ready to complete the proof of Proposition \ref{prop: sufficient exploration}.
    
	\begin{proof}[Proof of Proposition \ref{prop: sufficient exploration}]
		By Lemma~\ref{lem:explore under-sampled set}, \name{IDS} satisfies the condition in Lemma~\ref{lem:insufficiently sampled set becomes empty}. 
		Take the corresponding $S$ in Lemma~\ref{lem:insufficiently sampled set becomes empty} for \name{IDS}, and let $T \triangleq KS$. For any $t\geq T$, we let $s = t/K\geq S$, and then by Lemma~\ref{lem:insufficiently sampled set becomes empty}, we have $U^s_{\lfloor Ks \rfloor} = U_{t}^{t/K}$ is empty.
	\end{proof}
	
	\subsubsection{Implication of Sufficient Exploration} 
	Here we present some results that are implied by sufficient exploration (Proposition \ref{prop: sufficient exploration}) and needed for proving the sufficient condition for optimality in fixed-confidence setting (Proposition \ref{prop:sufficient condition}).
	With sufficient exploration, the posterior means strongly converge to the unknown true means, and the probability of any sub-optimal arm being the best decays exponentially. 
	\begin{lemma}[Lemmas 6 and 12 in \citealt{Shang2019}]
		\label{lem:posterior_exponential_convergence}
		Under \name{TTTS} with any selection rule that ensures sufficient exploration (e.g., \name{IDS}), i.e.,
		\[
		\exists \widetilde{T}\in\MGF \text{ such that for all } t\geq \widetilde{T} \text{ we have }   \min_{i\in[K]} N_{t,i}\geq \sqrt{t/K},
		\]
		the following properties hold:
		\begin{enumerate}
			\item $\bm\theta_{t}\MGFto \bm\theta$, and
			\item there exists $T\in\MGF$ such that for any $t\geq T$ and $j\neq I^*$,
			\[
			\alpha_{t,j}\leq  \exp\left(-ct^{1/2}\right),
			\quad\text{where}\quad
			c \triangleq \frac{\Delta_{\min}^2}{16\sigma^2 K^{1/2}}.
			\]
		\end{enumerate}
	\end{lemma}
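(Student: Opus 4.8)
The proof establishes the two claims in sequence, relying on two ingredients stated earlier. The first is the maximal inequality $\E[e^{\lambda W_1}]<\infty$ for all $\lambda>0$ (Lemma~\ref{lem:W1 and W2}), which yields $W_1\in\MGF$, since finiteness of the moment generating function forces all moments of $W_1$ to be finite. The second is the standing sufficient-exploration hypothesis $\min_{i\in[K]}N_{t,i}\geq\sqrt{t/K}$ for all $t\geq\widetilde T$, with $\widetilde T\in\MGF$. Throughout, membership in $\MGF$ of the various threshold times is controlled by the closedness properties in Lemma~\ref{lem: closedness of MGF}.

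For the first claim, I would begin from the pathwise envelope implied by the definition of $W_1$,
\[
|\theta_{t,i}-\theta_i|\leq \sigma W_1\sqrt{\frac{\log(N_{t,i}+e)}{N_{t,i}+1}},
\]
and substitute the exploration lower bound $N_{t,i}\geq\sqrt{t/K}$, valid for $t\geq\widetilde T$. Since the deterministic factor $\sqrt{\log(\sqrt{t/K}+e)/(\sqrt{t/K}+1)}$ decays to zero, for each fixed $\epsilon>0$ the bound $|\theta_{t,i}-\theta_i|\leq\epsilon$ holds once $t$ exceeds a threshold $t_0$ that scales polynomially (up to logarithmic corrections) in $W_1$, roughly $t_0\asymp K(\sigma W_1/\epsilon)^4$. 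Setting $T=\max\{\widetilde T, t_0\}$, the power, product, and maximum stability in Lemma~\ref{lem: closedness of MGF} give $T\in\MGF$, so $\theta_{t,i}\MGFto\theta_i$ for every $i$, and hence $\thetabf_t\MGFto\thetabf$.

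For the second claim, I would bound $\alpha_{t,j}$ by the posterior probability of the single pairwise inversion,
\[
\alpha_{t,j}\leq \mathbb{P}_{\widetilde{\bm\theta}\sim\Pi_t}\!\bigl(\widetilde\theta_j\geq\widetilde\theta_{I^*}\bigr),
\]
since $j$ being the posterior argmax requires in particular $\widetilde\theta_j\geq\widetilde\theta_{I^*}$. Under the independent Gaussian posterior, $\widetilde\theta_j-\widetilde\theta_{I^*}$ is normal with mean $-(\theta_{t,I^*}-\theta_{t,j})$ and variance $s_t^2=\sigma^2/N_{t,j}+\sigma^2/N_{t,I^*}$, so the Gaussian tail bound $\Phi(-x)\leq e^{-x^2/2}$ gives $\alpha_{t,j}\leq\exp\!\bigl(-(\theta_{t,I^*}-\theta_{t,j})^2/(2s_t^2)\bigr)$ whenever $\theta_{t,I^*}>\theta_{t,j}$. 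The exploration bound yields $s_t^2\leq 2\sigma^2\sqrt{K/t}$, while the first claim, applied with $\epsilon=\Delta_{\min}/4$, produces a random time in $\MGF$ beyond which $\theta_{t,I^*}-\theta_{t,j}\geq\Delta_{\min}/2$ for every $j\neq I^*$ (using $\theta_{I^*}-\theta_j\geq\Delta_{\min}$). Combining these estimates makes the exponent at least $\Delta_{\min}^2\sqrt t/(16\sigma^2\sqrt K)=c\,t^{1/2}$, the claimed constant; taking $T$ to be the maximum of $\widetilde T$ and this random time, again in $\MGF$ by Lemma~\ref{lem: closedness of MGF}, completes the argument.

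The main obstacle is not the analytic estimates, which are elementary, but the bookkeeping needed to keep every threshold time inside the class $\MGF$. In particular, one must confirm that inverting the decay relation to solve for $t_0$ in terms of $W_1$ preserves membership in $\MGF$; this is precisely where the polynomial-with-logarithmic-correction growth is essential, so that $t_0$ is dominated by a fixed power of $W_1$ and Lemma~\ref{lem: closedness of MGF} applies. Care is also required to ensure the gap lower bound $\theta_{t,I^*}-\theta_{t,j}\geq\Delta_{\min}/2$ holds simultaneously for all $j\neq I^*$, which follows from the uniform maximum-norm form of $\MGF$-convergence in Lemma~\ref{lem:pointwise and maximum norm convergences}.
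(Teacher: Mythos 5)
Your proposal is correct, and it is worth noting that the paper itself never writes out a proof of this lemma: it imports the two claims from Lemmas~6 and~12 of \citet{Shang2019}, adding only the remark that the original arguments (written for $\beta$-tuning) go through verbatim for any selection rule guaranteeing $\min_{i\in[K]}N_{t,i}\geq\sqrt{t/K}$ after a time in $\MGF$. What you have done is reconstruct that imported proof from the ingredients the paper does set up, and your reconstruction is sound: part~1 is exactly the envelope $|\theta_{t,i}-\theta_i|\leq\sigma W_1\sqrt{\log(N_{t,i}+e)/(N_{t,i}+1)}$ combined with the exploration floor (the substitution is legitimate because $x\mapsto\log(x+e)/(x+1)$ is decreasing on $[0,\infty)$), and the resulting threshold $t_0$ is polynomial in $W_1$ up to logarithms, hence in $\MGF$ by the closedness lemma since $\E[e^{\lambda W_1}]<\infty$ gives $W_1\in\MGF$. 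Part~2 is the standard chain $\alpha_{t,j}\leq\Prob_t(\widetilde\theta_j\geq\widetilde\theta_{I^*})=\Phi\bigl(-(\theta_{t,I^*}-\theta_{t,j})/s_t\bigr)\leq\exp\bigl(-(\theta_{t,I^*}-\theta_{t,j})^2/(2s_t^2)\bigr)$, which is the same computation the paper itself performs later in the proof of its over-sampling lemma; with $s_t^2\leq 2\sigma^2\sqrt{K/t}$ and the gap bound $\theta_{t,I^*}-\theta_{t,j}\geq\Delta_{\min}/2$ (uniform in $j$ via the maximum-norm form of $\MGF$-convergence, with $\epsilon=\Delta_{\min}/4$), your exponent evaluates to exactly $\Delta_{\min}^2\sqrt{t}/(16\sigma^2\sqrt{K})=c\,t^{1/2}$, recovering the paper's constant precisely. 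So your argument buys a self-contained appendix-level proof where the paper relies on a citation, at the modest cost of redoing bookkeeping that the cited work already contains.
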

	We remark that, while the original results focus on $\beta$-tuning, the same proofs extend to any selection rule that ensures sufficient exploration.
	
	\subsection{Strong Convergence to Optimal Proportions: Completing the Proof of Proposition \ref{prop:sufficient condition}}
	In this subsection, we complete the proof of Proposition \ref{prop:sufficient condition} by establishing a sequence of supporting results.
	
	\begin{lemma}
		\label{lem:z-score-asymp}
		Under \name{TTTS-IDS}, for any arm $j\neq I^*$,
		\[ 
		\frac{C_{t,j} }{ p_{t,I^*} \cdot f_j\left(\frac{p_{t,j}}{p_{t,I^*}}\right)} \MGFto 1,
		\quad\text{where}\quad 
		f_j(x) \triangleq  \frac{\left(\theta_{I^*} - \theta_j\right)^2   }{2\sigma^2\left(1 + \frac{1}{x}\right) }
		\quad\text{and}\quad
		C_{t,j} = \frac{\left(\theta_{t,I^*} - \theta_{t,j}\right)^2   }{2\sigma^2\left(\frac{1}{p_{t,I^*}} + \frac{1}{p_{t,j}}\right) }.
		\]
	\end{lemma}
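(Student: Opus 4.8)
The plan is to reduce this strong-convergence statement to the already-established convergence of the posterior means $\bm\theta_t \MGFto \bm\theta$ by a direct algebraic cancellation, after which the allocation proportions drop out entirely. First I would put $C_{t,j}$ over a common denominator: using $\frac{1}{p_{t,I^*}} + \frac{1}{p_{t,j}} = \frac{p_{t,I^*} + p_{t,j}}{p_{t,I^*}p_{t,j}}$,
\[
C_{t,j} = \frac{(\theta_{t,I^*} - \theta_{t,j})^2 \, p_{t,I^*} p_{t,j}}{2\sigma^2 (p_{t,I^*} + p_{t,j})}.
\]
On the other side, substituting $x = p_{t,j}/p_{t,I^*}$ into $f_j$ gives $1 + \frac{1}{x} = \frac{p_{t,I^*} + p_{t,j}}{p_{t,j}}$, so that
\[
p_{t,I^*} \cdot f_j\!\left(\frac{p_{t,j}}{p_{t,I^*}}\right) = \frac{(\theta_{I^*} - \theta_j)^2 \, p_{t,I^*} p_{t,j}}{2\sigma^2 (p_{t,I^*} + p_{t,j})}.
\]
Both expressions carry the identical positive factor $\frac{p_{t,I^*}p_{t,j}}{2\sigma^2(p_{t,I^*}+p_{t,j})}$, which cancels (legitimate after the one-sample-per-arm initialization, so that $p_{t,I^*}, p_{t,j} > 0$). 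Hence the ratio of interest equals exactly
\[
\frac{C_{t,j}}{p_{t,I^*} \, f_j(p_{t,j}/p_{t,I^*})} = \frac{(\theta_{t,I^*} - \theta_{t,j})^2}{(\theta_{I^*} - \theta_j)^2}.
\]

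With the allocation terms eliminated, the claim is purely that the squared empirical gap converges strongly to the squared true gap. Next I would invoke sufficient exploration: Proposition~\ref{prop: sufficient exploration} guarantees $\min_{i\in[K]} N_{t,i} \geq \sqrt{t/K}$ for all $t$ beyond a random time in $\MGF$, which is precisely the hypothesis of Lemma~\ref{lem:posterior_exponential_convergence}; part~1 of that lemma then yields $\bm\theta_t \MGFto \bm\theta$. Applying the continuous mapping theorem for strong convergence (Lemma~\ref{lem:continuous-mapping}) to the continuous map $(a,b)\mapsto (a-b)^2$ gives $(\theta_{t,I^*} - \theta_{t,j})^2 \MGFto (\theta_{I^*} - \theta_j)^2$. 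Since the arm means are distinct under the hypothesis of Theorem~\ref{thm:main}, the limiting denominator $(\theta_{I^*}-\theta_j)^2$ is a strictly positive constant, so dividing by it (continuous and bounded away from zero) yields $\MGFto 1$.

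There is essentially no analytic obstacle here: this lemma is a bookkeeping corollary of posterior-mean convergence, and all the genuine difficulty was front-loaded into establishing sufficient exploration (Proposition~\ref{prop: sufficient exploration}) and the exponential posterior convergence (Lemma~\ref{lem:posterior_exponential_convergence}). The only points requiring care are (i) confirming the cancellation is valid, i.e.\ that the empirical allocations are strictly positive, which holds after initialization, and (ii) correctly matching the sufficient-exploration guarantee to the hypothesis of Lemma~\ref{lem:posterior_exponential_convergence} so that the strong convergence $\bm\theta_t \MGFto \bm\theta$ may be invoked.
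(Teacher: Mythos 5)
Your proposal is correct and follows essentially the same route as the paper's proof: the same algebraic identity showing $p_{t,I^*} f_j\bigl(p_{t,j}/p_{t,I^*}\bigr)$ equals $C_{t,j}$ with true means in place of empirical means (so the allocation terms cancel), followed by sufficient exploration (Proposition~\ref{prop: sufficient exploration}), posterior-mean convergence $\bm\theta_t \MGFto \bm\theta$ (Lemma~\ref{lem:posterior_exponential_convergence}), and the continuous mapping theorem (Lemma~\ref{lem:continuous-mapping}). Your write-up merely makes the cancellation and the strict positivity of the limiting gap explicit, which the paper leaves implicit.
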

	\begin{proof}
		Fix $j\neq I^*$. We have
		\[
		p_{t,I^*}f_j\left(\frac{p_{t,j}}{p_{t,I^*}}\right) = p_{t,I^*}\frac{\left(\theta_{I^*} - \theta_j\right)^2   }{2\sigma^2\left(1 + \frac{p_{t,I^*}}{p_{t,j}}\right) }  = \frac{\left(\theta_{I^*} - \theta_{j}\right)^2   }{2\sigma^2\left(\frac{1}{p_{t,I^*}} + \frac{1}{p_{t,j}}\right) }.
		\]
		By Proposition \ref{prop: sufficient exploration}, \name{TTTS-IDS} guarantees sufficient exploration.  Consequently, Lemma~\ref{lem:posterior_exponential_convergence} implies that $\bm\theta_t\MGFto\bm\theta$. Applying the continuous mapping theorem (Lemma~\ref{lem:continuous-mapping}) then yields the desired convergence.
	\end{proof}
	
	\begin{lemma}
		\label{lem:over sampled implies exponentially small}
		Under \name{TTTS-IDS}, for any $\epsilon > 0$, there exists a deterministic constant $c_\epsilon>0$ and a random time $T_\epsilon\in\MGF$ such that for any $t\geq T_\epsilon$ and $j\neq I^*$,
		\[
		\frac{w_{t,j}}{w_{t,I^*}} > \frac{p_j^*+\epsilon}{p_{I^*}^*} \quad\implies\quad  \psi_{t,j}\leq \exp\left(-c_\epsilon t\right) + (K-1)\exp\left(-ct^{1/2}\right),
		\]
		where $c$ is defined in Lemma~\ref{lem:posterior_exponential_convergence}.
	\end{lemma}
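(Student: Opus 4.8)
The plan is to prove the contrapositive-style implication by showing that if arm $j$ is sufficiently \emph{over-sampled} relative to the best arm---in the sense that $\frac{w_{t,j}}{w_{t,I^*}} > \frac{p_j^*+\epsilon}{p_{I^*}^*}$---then the posterior makes it very unlikely that $j$ emerges as a challenger worth sampling. The key structural fact is Lemma~\ref{lem:psi_bounds}, which upper-bounds $\psi_{t,j}$ for $j\neq I^*$ by
\[
    \psi_{t,j} \leq \alpha_{t,I^*}\frac{\alpha_{t,j}}{1-\alpha_{t,I^*}}\frac{p_{t,I^*}}{p_{t,I^*}+p_{t,j}} + (1-\alpha_{t,I^*}).
\]
The second term $(1-\alpha_{t,I^*})$ is controlled directly: by Lemma~\ref{lem:posterior_exponential_convergence}, sufficient exploration (guaranteed by Proposition~\ref{prop: sufficient exploration}) gives $\alpha_{t,\ell}\leq \exp(-ct^{1/2})$ for each $\ell\neq I^*$, so $1-\alpha_{t,I^*} = \sum_{\ell\neq I^*}\alpha_{t,\ell}\leq (K-1)\exp(-ct^{1/2})$, matching the second summand of the claimed bound. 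So the whole burden falls on showing that the first term decays like $\exp(-c_\epsilon t)$.

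For the first term, I would bound it crudely by $\alpha_{t,j}/(1-\alpha_{t,I^*})\cdot\alpha_{t,I^*}\cdot 1 \leq \alpha_{t,j}$, but this only gives $\exp(-ct^{1/2})$, not the stronger $\exp(-c_\epsilon t)$ rate. The key observation is that the \emph{linear} rate comes from a large-deviations estimate on $\alpha_{t,j}$ itself when $j$ is over-sampled. Specifically, $\alpha_{t,j} = \mathbb{P}_{\widetilde{\bm\theta}\sim\Pi_t}(\widetilde\theta_j = \max_k\widetilde\theta_k) \leq \mathbb{P}_{\widetilde{\bm\theta}\sim\Pi_t}(\widetilde\theta_j \geq \widetilde\theta_{I^*})$, and by \citet[Proposition~5]{russo_simple_2020} this posterior crossing probability decays at rate $\exp(-t\,C_{t,j})$ where $C_{t,j}$ is the empirical generalized Chernoff information defined in Lemma~\ref{lem:z-score-asymp}. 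The plan is then to show that the over-sampling hypothesis forces $C_{t,j}$ to be bounded \emph{below} by a strictly positive constant $c_\epsilon$. Using Lemma~\ref{lem:z-score-asymp}, $C_{t,j}\approx p_{t,I^*}f_j(p_{t,j}/p_{t,I^*})$; the function $f_j$ is increasing in its argument, so a larger ratio $p_{t,j}/p_{t,I^*}$ makes $C_{t,j}$ larger. Combined with the lower bound $p_{t,I^*}\geq b_1$ from Lemma~\ref{lem:strict boundedness} and the near-equality $p_{t,i}-w_{t,i}\MGFto 0$ from Corollary~\ref{cor:W2}, the hypothesis $\frac{w_{t,j}}{w_{t,I^*}}>\frac{p_j^*+\epsilon}{p_{I^*}^*}$ translates (for $t$ beyond a random time in $\MGF$) into $\frac{p_{t,j}}{p_{t,I^*}}\geq \frac{p_j^*+\epsilon/2}{p_{I^*}^*}$, which by monotonicity of $f_j$ yields $C_{t,j}\geq c_\epsilon$ for an explicit $c_\epsilon>0$ depending on $\epsilon$, $b_1$, and the gaps.

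Concretely, I would proceed in the following order. First, fix $\epsilon>0$ and $j\neq I^*$, and invoke Corollary~\ref{cor:W2} together with Lemma~\ref{lem:strict boundedness} to find a time $T_1\in\MGF$ past which both $|p_{t,i}-w_{t,i}|$ is small and $p_{t,I^*}\in[b_1,b_2]$, so that the over-sampling hypothesis on the $w$-ratio implies a comparable over-sampling on the $p$-ratio. Second, use Lemma~\ref{lem:z-score-asymp} to find $T_2\in\MGF$ past which $C_{t,j}$ is within a factor of $p_{t,I^*}f_j(p_{t,j}/p_{t,I^*})$, and apply monotonicity of $f_j$ to conclude $C_{t,j}\geq c_\epsilon$ for a deterministic $c_\epsilon>0$. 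Third, apply \citet[Proposition~5]{russo_simple_2020} to get $\alpha_{t,j}\leq \exp(-t\,C_{t,j})\leq\exp(-c_\epsilon t)$ on the event that over-sampling holds, past a time $T_3\in\MGF$. Finally, assemble the two pieces through Lemma~\ref{lem:psi_bounds}, taking $T_\epsilon = \max\{T_1,T_2,T_3\}\in\MGF$ by the closedness of $\MGF$ (Lemma~\ref{lem: closedness of MGF}). The main obstacle I anticipate is making the translation from the $\MGF$-type asymptotic statements (which hold ``for $t$ large, after a random time'') into a clean deterministic exponential bound $\exp(-c_\epsilon t)$ that holds uniformly in $j$; care is needed because the approximation constants in Lemma~\ref{lem:z-score-asymp} and the posterior large-deviation rate must be pinned down to a genuinely $\epsilon$-dependent positive rate $c_\epsilon$, and one must verify the exponent controlling $\alpha_{t,j}$ is the $O(t)$ rate rather than the weaker $O(t^{1/2})$ rate that governs $1-\alpha_{t,I^*}$.
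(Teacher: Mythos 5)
There is a genuine gap at your final ``assembly'' step, and it is precisely the point the paper's proof is built around. The first term in Lemma~\ref{lem:psi_bounds} is $\alpha_{t,I^*}\frac{\alpha_{t,j}}{1-\alpha_{t,I^*}}\frac{p_{t,I^*}}{p_{t,I^*}+p_{t,j}}$, i.e.\ it involves the \emph{ratio} $\frac{\alpha_{t,j}}{1-\alpha_{t,I^*}}$, not $\alpha_{t,j}$ alone. Under \name{TTTS-IDS} the posterior concentrates on $I^*$ exponentially fast, so the denominator $1-\alpha_{t,I^*}$ is itself of order $\exp(-t\,\min_{\ell\neq I^*}C_{t,\ell})$; it is not bounded below by any constant. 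Consequently, your bound $\alpha_{t,j}\leq \exp(-c_\epsilon t)$ with $c_\epsilon$ an \emph{absolute} lower bound on $C_{t,j}$ does not control the ratio: one gets roughly $\exp\bigl(-t(C_{t,j}-\min_{\ell\neq I^*}C_{t,\ell})\bigr)$, and nothing in your argument shows this exponent is positive, let alone bounded away from zero. What is needed is a \emph{comparison} statement: over-sampling of $j$ must force $C_{t,j}$ to exceed the decay rate of $1-\alpha_{t,I^*}$ by a fixed margin. (Your opening ``crude bound'' $\frac{\alpha_{t,j}\alpha_{t,I^*}}{1-\alpha_{t,I^*}}\leq\alpha_{t,j}$ has the same blind spot --- it requires $\alpha_{t,I^*}\leq 1/2$, which fails for large $t$ --- but that is secondary since you discard it.)

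The paper closes exactly this hole with two ingredients you do not use. First, the empirical overall balance (Corollary~\ref{cor:overall_balance_psi}) guarantees that if $j$ is over-sampled then some arm $A_t\neq I^*$ is \emph{under}-sampled, $\frac{p_{t,A_t}}{p_{t,I^*}}\leq\frac{p^*_{A_t}}{p^*_{I^*}}$; this yields the lower bound $1-\alpha_{t,I^*}\geq \Prob_t(\widetilde\theta_{A_t}\geq\widetilde\theta_{I^*})=\Phi\bigl(-\sqrt{2tC_{t,A_t}}\bigr)$. Second, the ratio $\frac{\Phi(-\sqrt{2tC_{t,j}})}{\Phi(-\sqrt{2tC_{t,A_t}})}$ is then bounded using the monotonicity of $f_j$ and $f_{A_t}$ \emph{together with the information balance condition}~\eqref{eq:information_balance_BAI}, which identifies $f_{A_t}\!\left(\frac{p^*_{A_t}}{p^*_{I^*}}\right)=f_j\!\left(\frac{p^*_j}{p^*_{I^*}}\right)$; this converts the over/under-sampling dichotomy into a strict gap $f_j\!\left(\frac{p^*_j+\epsilon/2}{p^*_{I^*}}\right)(1-\delta) > f_j\!\left(\frac{p^*_j}{p^*_{I^*}}\right)(1+\delta)$ between the two exponents, uniformly via $p_{t,I^*}\geq b_1$ (Lemma~\ref{lem:strict boundedness}). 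Without the under-sampled arm $A_t$ and the information-balance identity linking $C_{t,j}$ to $C_{t,A_t}$, the desired $\exp(-c_\epsilon t)$ bound on $\psi_{t,j}$ cannot be reached by your route.
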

	\begin{proof}
		Fix $\epsilon > 0$. It suffices to show that for any $j\neq I^*$, there exist a deterministic constant $c^{(j)}_\epsilon>0$
		and a random time $T^{(j)}_\epsilon\in\MGF$ such that for any $t\geq T^{(j)}_\epsilon$,
		\[
		\frac{w_{t,j}}{w_{t,I^*}} > \frac{p_j^*+\epsilon}{p_{I^*}^*} \quad\implies\quad  \psi_{t,j}\leq \exp\left(-c_\epsilon^{(j)} t\right) + (K-1)\exp\left(-ct^{1/2}\right),
		\]
		since this completes the proof by taking $T_\epsilon \triangleq \max_{j\neq I^*} T^{(j)}_\epsilon$ and $c_\epsilon \triangleq \min_{j \neq I^*} c^{(j)}_\epsilon$.
		
		From now on, we fix $j\neq I^*$.  By Corollary~\ref{cor:W2}, there exists $T_{\epsilon,1}^{(j)}\in\MGF$ such that for any $t\geq T_{\epsilon,1}^{(j)}$,
		\[
		\frac{w_{t,j}}{w_{t,I^*}} = \frac{\Psi_{t,j}}{\Psi_{t,I^*}} > \frac{p_j^*+\epsilon}{p_{I^*}^*}
		\quad\implies\quad \frac{p_{t,j}}{p_{t,I^*}} = \frac{N_{t,j}}{N_{t,I^*}} > \frac{p_j^*+\epsilon/2}{p_{I^*}^*}.
		\]
		Then by Corollary~\ref{cor:overall_balance_psi}, there exists $T_{\epsilon,2}^{(j)}\in\MGF$ such that for any $t\geq T_{\epsilon,2}^{(j)}$,
		\[
		\frac{p_{t,j}}{p_{t,I^*}} > \frac{p_j^*+\epsilon/2}{p_{I^*}^*} 
		\quad\implies\quad
		\exists A_t\neq I^*    :    \frac{p_{t,A_t}}{p_{t,I^*}}   \leq \frac{p_{A_t}^*}{p_{I^*}^*}.
		\]
		
		From now on, we consider $t\geq \max\left\{T_{\epsilon,1}^{(j)},T_{\epsilon,2}^{(j)}\right\}$ and we have
		\begin{equation}
			\label{eq:over_allocated_and_under_allocated}
			\frac{w_{t,j}}{w_{t,I^*}} = \frac{\Psi_{t,j}}{\Psi_{t,I^*}} > \frac{p_j^*+\epsilon}{p_{I^*}^*}
			\implies \frac{p_{t,j}}{p_{t,I^*}} = \frac{N_{t,j}}{N_{t,I^*}} > \frac{p_j^*+\epsilon/2}{p_{I^*}^*}
			 \text{ and } 
			\exists A_t\neq I^*    :    \frac{p_{t,A_t}}{p_{t,I^*}}   \leq \frac{p_{A_t}^*}{p_{I^*}^*}.
		\end{equation}
		By Lemmas \ref{lem:psi_bounds} and \ref{lem:posterior_exponential_convergence},
		\begin{align*}
			\psi_{t,j} 
			\leq
			\alpha_{t,I^*}\frac{\alpha_{t,j}}{1-\alpha_{t,I^*}} \frac{p_{t,I^*}}{p_{t,I^*}+p_{t,j}} + (1-\alpha_{t,I^*})
			\leq  \frac{\alpha_{t,j}}{1-\alpha_{t,I^*}} + (K-1)\exp\left(-ct^{1/2}\right).
		\end{align*}
		We can upper bound the numerator $\alpha_{t,j}$ as follows,
		\begin{align*}
			\alpha_{t,j}  \leq \Prob_t\left(\widetilde{\theta}_{j}\geq \widetilde{\theta}_{I^*}\right) & = \Prob\left(\frac{\widetilde{\theta}_{I^*}-\widetilde{\theta}_{j}-(\theta_{t,I^*}-\theta_{t,j})}{\sigma\sqrt{1/N_{t,I^*}+1/N_{t,j}}} \leq \frac{-(\theta_{t,I^*}-\theta_{t,j})}{\sigma\sqrt{1/N_{t,I^*}+1/N_{t,j}}}\right) =\Phi\left(-\sqrt{2tC_{t,j}}\right),   
		\end{align*}
		where $\widetilde{\bm\theta} = \bigl(\widetilde{\theta}_{1},\ldots, \widetilde{\theta}_{K}\bigr)$ is a sample drawn from the posterior distribution $\Pi_t$ and $\Phi(\cdot)$ is Gaussian cumulative distribution function. Similarly, we can lower bound the denominator $1 - \alpha_{t,I^*}$ as follows,
		\[
		1 - \alpha_{t,I^*} = \Prob_t\left(\exists a\neq I^* :  \widetilde{\theta}_{a} \geq \widetilde{\theta}_{I^*}\right) \geq \Prob_t\left(\widetilde{\theta}_{A_t}\geq \widetilde{\theta}_{I^*}\right)= \Phi\left(-\sqrt{2tC_{t,A_t}}\right).
		\]
		Hence,
		\[
		\psi_{t,j}  \leq  \frac{\Phi\left(-\sqrt{2tC_{t,j}}\right)}{\Phi\left(-\sqrt{2tC_{t,A_t}}\right)} + (K-1)\exp\left(-ct^{1/2}\right).
		\]
		The remaining task is to show that there exist a deterministic constant $c^{(j)}_\epsilon > 0$ and a random time $T^{(j)}_{\epsilon,3}\in\MGF$ such that for any $t\geq T^{(j)}_{\epsilon,3}$,
		\begin{equation}
			\label{eq:tiny probablity}
			\frac{\Phi\left(-\sqrt{2tC_{t,j}}\right)}{\Phi\left(-\sqrt{2tC_{t,A_t}}\right)} \leq \exp\left(-c_\epsilon^{(j)}t\right).
		\end{equation}
		By Lemma~\ref{lem:z-score-asymp}, for any $\delta > 0$,
		\begin{align*}
			\frac{\Phi\left(-\sqrt{2tC_{t,j}}\right)}{\Phi\left(-\sqrt{2tC_{t,A_t}}\right)} 
			\leq \frac{\Phi\left(-\sqrt{2tp_{t,I^*}f_j\left(\frac{p_{t,j}}{p_{t,I^*}}\right)(1-\delta)}\right)}{\Phi\left(-\sqrt{2tp_{t,I^*}f_{A_t}\left(\frac{p_{t,A_t}}{p_{t,I^*}}\right)(1+\delta)}\right)} &\leq \frac{\Phi\left(-\sqrt{2tp_{t,I^*}f_j\left(\frac{p^*_{j}+\epsilon/2}{p^*_{I^*}}\right)(1-\delta)}\right)}{\Phi\left(-\sqrt{2tp_{t,I^*}f_{A_t}\left(\frac{p^*_{A_t}}{p^*_{I^*}}\right)(1+\delta)}\right)} \\
			&= \frac{\Phi\left(-\sqrt{2tp_{t,I^*}f_j\left(\frac{p^*_{j}+\epsilon/2}{p^*_{I^*}}\right)(1-\delta)}\right)}{\Phi\left(-\sqrt{2tp_{t,I^*}f_{j}\left(\frac{p^*_{j}}{p^*_{I^*}}\right)(1+\delta)}\right)},
		\end{align*}
		where the second inequality follows from~\eqref{eq:over_allocated_and_under_allocated} and the monotonicity of $f_j$ and $f_{A_t}$, while the final equality uses the information balance~\eqref{eq:information_balance_BAI}.
		
		We can pick a sufficiently small $\delta$ as a function of $\epsilon$ such that
		\[
		c_1 \triangleq f_j\left(\frac{p^*_{j}+\epsilon/2}{p^*_{I^*}}\right)(1-\delta) > f_{j}\left(\frac{p^*_{j}}{p^*_{I^*}}\right)(1+\delta) \triangleq c_2.
		\]
		Then~\eqref{eq:tiny probablity} follows from Gaussian tail upper and lower bounds (or $\frac{1}{t} \log \Phi\left(-\sqrt{t} x\right) \to - x^2/2$ as $t\to \infty$) and that $p_{t,I^*}$ is bounded away from 0 and 1 for sufficiently large $t$ (Lemma~\ref{lem:strict boundedness}). 
		This completes the proof.
	\end{proof}

    The result above suggests that the proportion ratios tend to self-correct over time. The following result formalizes this intuition.
	
	\begin{lemma}
		\label{lem:all arms not over sampled}
		Under \name{TTTS-IDS}, for any $\epsilon > 0$, there exists $T_\epsilon\in\MGF$ such that for any $t\geq T_\epsilon$,
		\[
		\frac{w_{t,j}}{w_{t,I^*}} \leq  \frac{p_j^*+\epsilon}{p_{I^*}^*},\quad \forall j\neq I^*.
		\]
	\end{lemma}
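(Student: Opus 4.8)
The plan is to prove Lemma~\ref{lem:all arms not over sampled} by a contradiction-and-summation argument that exploits the self-correcting mechanism established in Lemma~\ref{lem:over sampled implies exponentially small}. The intuition is as follows: if arm $j$ ever becomes over-sampled in the sense that $w_{t,j}/w_{t,I^*} > (p_j^* + \epsilon)/p_{I^*}^*$, then Lemma~\ref{lem:over sampled implies exponentially small} guarantees that its instantaneous sampling probability $\psi_{t,j}$ is exponentially small. Consequently, arm $j$ stops accruing cumulative effort $\Psi_{t,j}$ at any appreciable rate, while the denominator $\Psi_{t,I^*}$ continues to grow (since $w_{t,I^*}$ is bounded below by Lemma~\ref{lem:strict boundedness}). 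This forces the ratio $w_{t,j}/w_{t,I^*}$ to decrease back below the threshold, yielding the desired bound after a random time in $\MGF$.

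\textbf{The key steps}, in order, are these. First, I would fix $\epsilon > 0$ and an arbitrary $j \neq I^*$, and invoke Lemma~\ref{lem:over sampled implies exponentially small} to obtain $c_\epsilon > 0$ and $T_\epsilon^{(j)} \in \MGF$ such that the implication ``over-sampled $\implies$ $\psi_{t,j} \leq \exp(-c_\epsilon t) + (K-1)\exp(-ct^{1/2})$'' holds for all $t \geq T_\epsilon^{(j)}$. Second, I would argue that the cumulative effort contributed \emph{after} $T_\epsilon^{(j)}$ during over-sampled periods is negligible: summing the exponentially small per-step probabilities, $\sum_{\ell \geq T_\epsilon^{(j)}} \bigl[\exp(-c_\epsilon \ell) + (K-1)\exp(-c\ell^{1/2})\bigr]$ converges to a random variable that lies in $\MGF$ (by Lemma~\ref{lem: closedness of MGF}, since $T_\epsilon^{(j)} \in \MGF$ and the tail sum is dominated by a constant depending on $T_\epsilon^{(j)}$). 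Third, I would combine this with the lower bound $w_{t,I^*} \geq b_1$ from Lemma~\ref{lem:strict boundedness}: since $\Psi_{t,I^*} = t\, w_{t,I^*} \geq b_1 t \to \infty$, while the over-sampled contribution to $\Psi_{t,j}$ is bounded, the ratio $\Psi_{t,j}/\Psi_{t,I^*}$ cannot remain above the threshold indefinitely. Finally, I would translate this back to the statement by choosing $T_\epsilon \triangleq \max_{j \neq I^*} T_\epsilon^{(j)}$ (a maximum of finitely many $\MGF$ variables, hence in $\MGF$ by Lemma~\ref{lem: closedness of MGF}) and verifying the uniform bound over all $j \neq I^*$.

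\textbf{The main obstacle} I anticipate is making the self-correction argument rigorous at the level of the cumulative quantity $\Psi_{t,j}$ rather than the instantaneous $\psi_{t,j}$. The difficulty is that the over-sampling condition can switch on and off over time, so I cannot simply say ``once over-sampled, always exponentially small.'' I would handle this with a careful bookkeeping argument: partition the time indices $\ell \in [T_\epsilon^{(j)}, t)$ into those where arm $j$ is over-sampled and those where it is not. On the over-sampled indices, $\psi_{t,j}$ is exponentially small and contributes a bounded total to $\Psi_{t,j}$; on the remaining indices, the ratio is by definition below the threshold. The crux is showing that whenever the ratio sits above $(p_j^*+\epsilon)/p_{I^*}^*$ at time $t$, the accumulated excess effort $\Psi_{t,j} - \frac{p_j^*+\epsilon}{p_{I^*}^*}\Psi_{t,I^*}$ must have been built up entirely during over-sampled periods, whose total contribution is a finite $\MGF$ random variable, while $\Psi_{t,I^*} \geq b_1 t$ grows without bound. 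A standard argument bounding the excess by a monotone-increasing-then-absorbing quantity (as in the proof of Proposition~\ref{prop:empirical overall balance}) then forces the threshold violation to cease after a random time in $\MGF$. Tracking the two-sided interplay between $\Psi_{t,j}$ and $\Psi_{t,I^*}$ uniformly over $j$ is where the bulk of the technical care will be required.
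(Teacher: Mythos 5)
Your overall route is the same as the paper's: invoke Lemma~\ref{lem:over sampled implies exponentially small} so that an over-sampled arm receives exponentially small sampling probability, sum those contributions into a finite quantity, use $\Psi_{t,I^*}\geq b_1 t$ from Lemma~\ref{lem:strict boundedness}, and do the bookkeeping through the last time the ratio sat below the threshold (the paper's two cases --- ``above threshold ever since $\widetilde T_\epsilon$'' versus ``a last time $L_t$ below threshold'' --- are exactly your partition/excess argument), finishing with closedness of $\MGF$ and a max over $j\neq I^*$.

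There is, however, one genuine gap: you run the self-correction at the \emph{same} level $\epsilon$ as the target bound. With the over-sampling threshold set at $(p_j^*+\epsilon)/p_{I^*}^*$, your excess bookkeeping yields, for all large $t$,
\[
\frac{w_{t,j}}{w_{t,I^*}} \;\leq\; \frac{p_j^*+\epsilon}{p_{I^*}^*} + \frac{C_j}{b_1 t},
\]
where $C_j\in\MGF$ collects the initial effort $\Psi_{T_\epsilon^{(j)},j}\leq T_\epsilon^{(j)}$, the summed exponential tail, and the one extra step at $L_t$. This is the threshold \emph{plus a vanishing term}, which is strictly weaker than the lemma's conclusion: a trajectory whose ratio equals $(p_j^*+\epsilon)/p_{I^*}^* + 1/t$ for every $t$ is consistent with this bound yet violates the lemma at every time, so your claim that the argument ``forces the threshold violation to cease'' does not actually follow from the argument you describe. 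The fix is precisely what the paper does: apply Lemma~\ref{lem:over sampled implies exponentially small} with the smaller threshold $(p_j^*+\epsilon/2)/p_{I^*}^*$, so the bookkeeping gives $\frac{w_{t,j}}{w_{t,I^*}} \leq \frac{p_j^*+\epsilon/2}{p_{I^*}^*} + \frac{C_j}{b_1 t}$, and this falls below $(p_j^*+\epsilon)/p_{I^*}^*$ once $t \geq 2C_j/(b_1\epsilon)$, a deterministic function of an $\MGF$ random variable and hence itself in $\MGF$ by Lemma~\ref{lem: closedness of MGF}. With that one-line modification (introduce slack $\epsilon/2$ before running your argument), your proof coincides with the paper's.
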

	\begin{proof}
		Fix $\epsilon > 0$ and $j\neq I^*$. By Lemma~\ref{lem:over sampled implies exponentially small}, there exists $\widetilde{T}_\epsilon\in\MGF$ such that for any $t\geq \widetilde{T}_\epsilon$ and $j\neq I^*$,
		\[
		\frac{w_{t,j}}{w_{t,I^*}} > \frac{p_j^*+\epsilon/2}{p_{I^*}^*} \quad\implies\quad  \psi_{t,j}\leq \exp\left(-c_\epsilon t\right) + (K-1)\exp\left(-ct^{1/2}\right).
		\]
		Define 
		\[
		\kappa_{\epsilon} \triangleq \sum_{\ell=0}^{\infty} \left[\exp\left(-c_{\epsilon}\ell\right) + (K-1)\exp\left(-c\ell^{1/2}\right)\right] < \infty.
		\]
		From now on, we consider $t\geq \widetilde{T}_\epsilon$. We consider two cases, and provide upper bounds on $\frac{w_{t,j}}{w_{t,I^*}}$ for each case.
		
		\begin{enumerate}
			\item The first case supposes that 
			\[
			\forall \ell\in \left\{\widetilde{T}_{\epsilon},\widetilde{T}_{\epsilon}+1,\ldots, t-1\right\}:
			\quad
			\frac{w_{\ell,j}}{w_{\ell,I^*}} \geq \frac{p_j^* + \epsilon/2}{p^*_{I^*}}.
			\] 
			We have
			\begin{align*}
				\Psi_{t,j}= \Psi_{\widetilde{T}_{\epsilon},j} + \sum_{\ell = \widetilde{T}_{\epsilon}}^{t-1} \psi_{\ell, j}
				&= \Psi_{\widetilde{T}_{\epsilon},j}
				+ \sum_{\ell = \widetilde{T}_{\epsilon}}^{t-1} \psi_{\ell, j}\mathbf{1}\left(\frac{w_{\ell,j}}{w_{\ell,I^*}} \geq \frac{p_j^* + \epsilon/2}{p^*_{I^*}} \right) \\
				&\leq \Psi_{\widetilde{T}_{\epsilon},j} + \sum_{\ell = \widetilde{T}_{\epsilon}}^{t-1} \left[\exp\left(-c_{\epsilon}\ell\right)+ (K-1)\exp\left(-c\ell^{1/2}\right) \right] 
				\leq \Psi_{\widetilde{T}_{\epsilon},j} + \kappa_{\epsilon}.
			\end{align*}
			Take $T\in \MGF$ in Lemma~\ref{lem:strict boundedness}, so for any $t\geq T$, $\Psi_{t,I^*}\geq b_1 t$, and thus
			\begin{equation}
				\label{eq:case 1}
				\frac{w_{t,j}}{w_{t,I^*}} = \frac{\Psi_{t,j}}{\Psi_{t,I^*}}
				\leq \frac{\Psi_{\widetilde{T}_{\epsilon},j} + \kappa_{\epsilon}}{\Psi_{t,I^*}} \leq \frac{\widetilde{T}_{\epsilon} + \kappa_{\epsilon}}{b_1t}.
			\end{equation}
			\item The alternative case supposes that
			\[
			\exists\ell\in \left\{\widetilde{T}_{\epsilon},\widetilde{T}_{\epsilon}+1,\ldots, t-1\right\}:
			\quad 
			\frac{w_{\ell,j}}{w_{\ell,I^*}} < \frac{p_j^* + \epsilon/2}{p^*_{I^*}}.
			\]
			We define 
			\[
			L_t \triangleq \max \left\{\ell\in \left\{\widetilde{T}_{\epsilon},\widetilde{T}_{\epsilon}+1,\ldots, t-1\right\}  :  \frac{w_{\ell,j}}{w_{\ell,I^*}} < \frac{p_j^* + \epsilon/2}{p^*_{I^*}}\right\}.
			\]
			We have
			\begin{align*}
				\Psi_{t,j} 
				&= \Psi_{L_t, j} + \psi_{L_t,j} + \sum_{\ell = L_t+1}^{t-1} \psi_{\ell, j} 
				= \Psi_{L_t, j} 
				+ \psi_{L_t,j}
				+ \sum_{\ell = L_t + 1}^{t-1} \psi_{\ell, j}\mathbf{1}\left(\frac{w_{\ell,j}}{w_{\ell,I^*}} \geq \frac{p_j^* + \epsilon/2}{p^*_{I^*}}\right) 
				\\
				&\leq \frac{p_j^* + \epsilon/2}{p^*_{I^*}}\Psi_{L_t,I^*} + 1 + \sum_{\ell = L_t + 1}^{t-1} \left[\exp\left(-c_{\epsilon}\ell\right)+ (K-1)\exp\left(-c\ell^{1/2}\right) \right]\\
				&\leq \frac{p_j^* + \epsilon/2}{p^*_{I^*}}\Psi_{L_t,I^*} + \left(\kappa_{\epsilon} + 1\right),
			\end{align*}
			where the first inequality uses $\frac{\Psi_{L_t,j}}{\Psi_{L_t,I^*}} = \frac{w_{L_t,j}}{w_{L_t,I^*}} < \frac{p_j^* + \epsilon/2}{p^*_{I^*}}$. 
			Then by Lemma~\ref{lem:strict boundedness}, for any $t\geq T$,
			\begin{equation}
				\label{eq:case 2}
				\frac{w_{t,j}}{w_{t,I^*}}  \leq \frac{p_j^* + \epsilon/2}{p^*_{I^*}}\frac{\Psi_{L_t,I^*}}{\Psi_{t,I^*}} + \frac{1 + \kappa_{\epsilon} }{\Psi_{t,I^*}} \leq  \frac{p_j^* + \epsilon/2}{p^*_{I^*}} + \frac{1 + \kappa_{\epsilon} }{b_1t}.
			\end{equation}
		\end{enumerate}
		
		Putting~\eqref{eq:case 1} and~\eqref{eq:case 2} together, we find that for any $t\geq T_1$, 
		\[ 
		\frac{w_{t,j}}{w_{t,I^*}} = \frac{\Psi_{t,j}}{\Psi_{t,I^*}} \leq \max\left\{  \frac{\widetilde{T}_{\epsilon} + \kappa_{\epsilon}}{b_1t}   ,  \frac{p_j^* + \epsilon/2}{p^*_{I^*}} + \frac{\kappa_{\epsilon} + 1}{b_1t} \right\} \leq \frac{p_j^* + \epsilon/2}{p^*_{I^*}} +\frac{\widetilde{T}_\epsilon + 2\kappa_{\epsilon} + 1}{t}. 
		\]
		Then we have 
		\[
		t\geq \frac{2\left( \widetilde{T}_\epsilon + 2\kappa_{\epsilon} + 1 \right)}{\epsilon}
		\quad\implies\quad
		\frac{w_{t,j}}{w_{t,I^*}} \leq \frac{p_j^*+ \epsilon}{p^*_{I^*}}.
		\]
		Taking
		$
		T_\epsilon \triangleq  \max\left\{\widetilde{T}_\epsilon, T, \frac{2\left( \widetilde{T}_\epsilon + 2\kappa_{\epsilon} + 1 \right)}{\epsilon}\right\}
		$
		completes the proof since Lemma~\ref{lem: closedness of MGF} implies $T_\epsilon\in\MGF$.
	\end{proof}
	
	Now we are ready to complete the proof of Proposition \ref{prop:sufficient condition}.
 
	\begin{proof}[Proof of Proposition \ref{prop:sufficient condition}]
		Proposition \ref{prop:sufficient condition} follows directly from Lemma~\ref{lem:all arms not over sampled} and Corollary~\ref{cor:overall_balance_psi}.
	\end{proof}

\section{Proof of Theorem~\ref{thm:main_TBP}}\label{app:TBP_optimality}

We overload the notation $\Delta_{\min}$ and $\Delta_{\max}$ for pure-exploration thresholding bandits as follows,
\[
\Delta_{\min} \triangleq \min_{i\in[K]}|\theta_i - \thr|
\quad\text{and}\quad \Delta_{\max} \triangleq \max_{i\in[K]}|\theta_i - \thr|.
\]
The identification assumption ensures that $\Delta_{\min} > 0$.

\subsection{Sufficient Exploration}
We first show that our algorithm sufficiently explore all arms.
	\begin{proposition}\label{prop:sufficient exploration_TBP}
		There exists $T\in\MGF$ such that for any $t\geq T$,
		$
			\min_{i\in[K]}N_{t,i}\geq \sqrt{t/K}.
		$
	\end{proposition}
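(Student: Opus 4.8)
The plan is to exploit the especially simple structure of \name{EB-KKT-IDS} in this problem. Since $C_i(\bm p) = p_i\, d(\theta_i,\thr)$ and the \name{IDS} weights collapse to $h^i_i(\bm p)=1$, the algorithm reduces to a deterministic greedy rule: at each step it samples
\[
    I_t = \argmin_{i\in[K]} p_{t,i}\, d(\theta_{t,i},\thr) = \argmin_{i\in[K]} N_{t,i}\, d(\theta_{t,i},\thr),
\]
where the last equality drops the common factor $1/t$. Writing $g_i(t)\triangleq N_{t,i}\,d(\theta_{t,i},\thr)$, I would follow the template of Lemma~\ref{lem:insufficiently sampled set becomes empty}: define the insufficiently sampled set $U_t^s\triangleq\{i\in[K]: N_{t,i}<\sqrt s\}$ and prove that there is a random time $S\in\MGF$ such that $U^s_{\lfloor Ks\rfloor}=\emptyset$ for all $s\geq S$. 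Taking $s=t/K$ then yields $\min_i N_{t,i}\geq\sqrt{t/K}$ for all $t\geq T\triangleq\lceil KS\rceil\in\MGF$, which is the claim. The proof of the emptiness statement is by contradiction using an \emph{over-sampled/under-sampled} pair.

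Two concentration ingredients feed the argument. First, using a maximal inequality of the $W_1$-type in Lemma~\ref{lem:W1 and W2} (with $|\theta_{t,i}-\theta_i|$ controlled uniformly by $W_1\in\MGF$), I would establish a uniform \emph{upper} bound $d(\theta_{t,i},\thr)\leq D$ valid for every arm with at least one sample, where $D$ is a polynomial function of $W_1$ and hence $D\in\MGF$ by the closedness properties in Lemma~\ref{lem: closedness of MGF}. Second, for any \emph{well-sampled} arm, the same inequality gives $\theta_{t,j}\to\theta_j$, so that $d(\theta_{t,j},\thr)$ is bounded below: since $\theta_j\neq\thr$ (because $\Delta_{\min}>0$), setting $\underline d\triangleq\min_i d(\theta_i,\thr)>0$ and using local Lipschitzness of the exponential-family KL near $\theta_j$, there is $S_0\in\MGF$ such that whenever $s\geq S_0$ and $N_{t,j}>s-1$ one has $d(\theta_{t,j},\thr)\geq \underline d/2$.

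The core step runs as follows. Suppose $U^s_{\lfloor Ks\rfloor}\neq\emptyset$ at $t=\lfloor Ks\rfloor$, so some arm $i$ has $N_{t,i}<\sqrt s$. A pigeonhole count (the remaining $K-1$ arms share at least $Ks-1-\sqrt s$ samples) produces an over-sampled arm $j$ with $N_{t,j}>s$ for $s$ not too small. Let $\tau<t$ be the last time $j$ was sampled; then $N_{\tau,j}=N_{t,j}-1>s-1$ while $N_{\tau,i}\leq N_{t,i}<\sqrt s$. Because $j$ was selected at $\tau$, the greedy rule gives $g_j(\tau)\leq g_i(\tau)$, i.e.
\[
    (s-1)\,\tfrac{\underline d}{2}\;\leq\; N_{\tau,j}\,d(\theta_{\tau,j},\thr)\;\leq\; N_{\tau,i}\,d(\theta_{\tau,i},\thr)\;<\;\sqrt s\,D,
\]
using the lower bound on $d(\theta_{\tau,j},\thr)$ and the upper bound $D$ on $d(\theta_{\tau,i},\thr)$. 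Rearranging yields $\sqrt s - 1/\sqrt s < 2D/\underline d$, hence $s<(2D/\underline d+1)^2$. Contrapositively, for $s\geq S\triangleq\max\{S_0,(2D/\underline d+1)^2,\,3\}$ we have $U^s_{\lfloor Ks\rfloor}=\emptyset$; and $S\in\MGF$ since $D,S_0\in\MGF$ and $\MGF$ is closed under the relevant operations (Lemma~\ref{lem: closedness of MGF}).

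The main obstacle is the coupling between exploration and estimation: an under-explored arm has an unreliable empirical mean, so $d(\theta_{t,i},\thr)$ cannot be bounded away from zero for such arms, and a naive lower bound on $g_i$ is unavailable. The argument circumvents this by using the estimate only where it is trustworthy (the heavily-sampled arm $j$ at its last selection time, for which $N_{\tau,j}$ is large), while requiring only the crude uniform upper bound $D$ on the unreliable side. Care is also needed to verify that $D$ and the threshold $S_0$ genuinely lie in $\MGF$—this is where the closedness results of Lemma~\ref{lem: closedness of MGF} and the light-tailed control of $W_1$ are indispensable—and to confirm that the scalar argument is valid for non-integer $s=t/K$ (every inequality above, including the pigeonhole step, holds for real $s$), so that the final conversion to arbitrary integer horizons $t\geq T$ goes through.
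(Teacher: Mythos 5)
Your proposal is correct, but its core step is genuinely different from the paper's. You share the same analytical ingredients as the paper — the time-uniform maximal inequality $W_1$ (Lemma~\ref{lem:W1 and W2}) used both to get a crude upper bound $D\in\MGF$ on the empirical divergence of an arbitrary arm and a lower bound $\underline d/2$ for heavily sampled arms, the closure properties of $\MGF$ (Lemma~\ref{lem: closedness of MGF}), and the reduction to showing $U^s_{\lfloor Ks\rfloor}=\emptyset$ for all $s\geq S$ with $S\in\MGF$ — but the combinatorial skeleton is not the paper's. The paper introduces the intermediate set $V_t^s=\{i: N_{t,i}<s^{3/4}\}$, proves that whenever $U_t^s\neq\emptyset$ the deterministic greedy rule must select an arm in $V_t^s$, and then invokes the sample-counting argument of Lemma~11 of \citet{Shang2019}: since counts are nondecreasing, $U^s_{\lfloor Ks\rfloor}\neq\emptyset$ would force all $\lfloor Ks\rfloor$ samples into arms while they lie in $V^s$, which can absorb at most $K(s^{3/4}+1)$ samples, a contradiction. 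You avoid both the two-tier set and the counting lemma: pigeonhole yields an over-sampled arm $j$ with $N_{t,j}>s$, and comparing the greedy indices of $j$ and the under-sampled arm $i$ at the last time $\tau$ at which $j$ was selected gives $(s-1)\,\underline d/2<\sqrt s\,D$, hence $s<(2D/\underline d+1)^2$. Both routes rest on the determinism of \name{EB-KKT-IDS} and on the fact that the $W_1$ bound holds uniformly over all times and arms (so no union bound over the random time $\tau$ is needed — worth stating explicitly); yours is more self-contained and elementary, while the paper's reuses the standard $U$/$V$ template from the top-two literature, which is the form that recurs in the \name{TTTS} analysis. Two housekeeping points: your upper bound $D$ applies only when $N_{\tau,i}\geq 1$, which is harmless since either the algorithm is initialized with one sample per arm, or $N_{\tau,i}=0$ forces $C_i(\bm p_\tau)=0<C_j(\bm p_\tau)$, contradicting the selection of $j$ outright; and the lower bound on $d(\theta_{\tau,j},\thr)$ must be phrased exactly as you did — triggered by $N_{\tau,j}>s-1$ with $s\geq S_0(W_1)$ — rather than by calendar time, since $\tau$ is random.
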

 	Following \citet{Qin2017} and \citet{Shang2019}, we define the following sets for any $t\in\mathbb{N}_0$ and $s\geq 0$:
	\[
	U_t^s \triangleq \{i\in [K]  :  N_{t,i} < s^{1/2}\} \quad \text{and} \quad
        V_t^s \triangleq \{i\in [K]  :  N_{t,i} < s^{3/4}\}.
        \]
        We let $\overline{U_t^s} = [K]\setminus U_t^s$ and $\overline{V_t^s} = [K]\setminus V_t^s$.
        
The following auxiliary result is needed to establish Proposition~\ref{prop:sufficient exploration_TBP}. 
        \begin{lemma}
        There exists $S\in\MGF$ such that for any $s\geq S$, if $U_t^s$ is nonempty, then $I_t\in V_t^s$.
        \end{lemma}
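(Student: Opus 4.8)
The plan is to exploit the fact that, for the pure-exploration thresholding bandit, the \name{EB-KKT-IDS} allocation rule is \emph{deterministic}. Since $\mathcal{X}=[K]$, $C_i(\bm p)=p_i\, d(\theta_i,\thr)$, and $h^i_i(\bm p)=1$, the rule first detects the principal pitfall $x=\argmin_{i\in[K]} p_{t,i}\, d(\theta_{t,i},\thr)$ and then samples exactly that arm, so that
\[
    I_t = \argmin_{i\in[K]} N_{t,i}\, d(\theta_{t,i},\thr),
\]
where $\theta_{t,i}$ is the empirical mean (scaling the objective by $t$ does not change the minimizer). I would prove the lemma by contraposition: fix $t$ and $s$, assume $U_t^s\neq\emptyset$ so that some arm $a$ has $N_{t,a}<s^{1/2}$, and suppose toward a contradiction that $I_t\notin V_t^s$, i.e.\ $N_{t,I_t}\ge s^{3/4}$. (If some arm is unsampled, the uniform-allocation convention makes its $C_i$ smallest, so $I_t$ is that arm with $N_{t,I_t}=0<s^{3/4}$ and the conclusion is immediate; I therefore assume $N_{t,i}\ge 1$ for all $i$, which holds after the one-sample-per-arm initialization.)

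The minimality of $I_t$ gives $N_{t,I_t}\,d(\theta_{t,I_t},\thr)\le N_{t,a}\,d(\theta_{t,a},\thr)$, and combining this with $N_{t,I_t}\ge s^{3/4}$ and $N_{t,a}<s^{1/2}$ yields
\[
    d(\theta_{t,a},\thr) > s^{1/4}\, d(\theta_{t,I_t},\thr).
\]
I then bound the two divergences using the maximal inequality $W_1$ (Lemma~\ref{lem:W1 and W2}), which gives $|\theta_{t,i}-\theta_i|\le \sigma W_1\sqrt{\log(N_{t,i}+e)/(N_{t,i}+1)}$ uniformly in $(t,i)$. For the \emph{well-sampled} arm $I_t$, $N_{t,I_t}\ge s^{3/4}$ forces $\theta_{t,I_t}$ within $o(1)$ of $\theta_{I_t}$ as $s\to\infty$, so by continuity of the KL divergence $d(\theta_{t,I_t},\thr)\ge \tfrac12 d_{\min}$ for $s$ past a threshold, where $d_{\min}\triangleq\min_{i} d(\theta_i,\thr)>0$ (positive since $\Delta_{\min}>0$ rules out any $\theta_i=\thr$). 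For the \emph{under-sampled} arm $a$, the same inequality with $N_{t,a}\ge 1$ gives $|\theta_{t,a}-\theta_a|\le \sigma W_1\sqrt{\log(s^{1/2}+e)}$, so, using $d(\theta,\thr)=(\theta-\thr)^2/(2\sigma^2)$ in the Gaussian case (and the local-quadratic growth of the KL more generally), $d(\theta_{t,a},\thr)\le c_1\bigl(\Delta_{\max}+\sigma W_1\sqrt{\log(s^{1/2}+e)}\bigr)^2 = O\bigl(W_1^2\log s\bigr)$.

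Substituting both bounds into the displayed inequality forces $O(W_1^2\log s) > \tfrac12 d_{\min}\, s^{1/4}$, which is impossible once $s$ exceeds a threshold $S$ depending only on $W_1,\,d_{\min},\,\Delta_{\max}$ and $\sigma$, since $s^{1/4}$ eventually dominates $W_1^2\log s$. This contradiction establishes $I_t\in V_t^s$ whenever $U_t^s\neq\emptyset$ and $s\ge S$, and the argument is uniform in $t$ because $W_1$ is a supremum over all $(t,i)$. Finally, $S$ can be taken as a fixed polynomial-with-logarithm function of $W_1$; since Lemma~\ref{lem:W1 and W2} gives $\E[e^{\lambda W_1}]<\infty$ for every $\lambda>0$, we have $W_1\in\MGF$, and hence $S\in\MGF$ by the closedness properties of Lemma~\ref{lem: closedness of MGF}.

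The main obstacle is controlling $d(\theta_{t,a},\thr)$ for the under-sampled arm, whose empirical mean can be poor: the key content is that this inaccuracy grows only polylogarithmically in $s$ through $W_1$, which is dominated by the $s^{1/4}$ gap generated by the mismatch in sample counts. Reconciling this growth with the requirement $S\in\MGF$ is precisely what the exponential-moment control of $W_1$ delivers.
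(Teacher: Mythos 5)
Your proof is correct and follows essentially the same route as the paper's: both compare the deterministic index $N_{t,i}\,d(\theta_{t,i},\thr)$ of an under-sampled arm against that of a well-sampled arm, using the maximal inequality for $W_1$ to bound the divergence below by a constant on $\overline{V_t^s}$ and above on $U_t^s$, so that the $s^{3/4}$ versus $s^{1/2}$ mismatch forces $I_t\in V_t^s$, with $S\in\MGF$ following from closedness of $\MGF$ under polynomial-with-logarithm functions of $W_1$. The only cosmetic difference is that your upper bound for the under-sampled arm carries an extra $\sqrt{\log s}$ factor (the paper instead uses $|\theta_{t,a}-\thr|\le\Delta_{\max}+\sigma W_1$, exploiting $\sqrt{\log(N+e)/(N+1)}\le 1$), which is harmless since $s^{1/4}$ dominates $\log s$.
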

        \begin{proof}
        By the definition of $W_1$ and Lemma~\ref{lem:W1 and W2}, there exists $S_1\in\MGF$ such that for any $s\geq S_1$ and any $t\in\mathbb{N}_0$, the condition $i\in \overline{V_t^s}$ implies $|\theta_{t,i} - \thr| \geq \Delta_{\min}/2$. On the other hand, by the definition of $W_1$, the condition $j\in U_t^s$ implies $|\theta_{t,j} - \thr| \leq  \Delta_{\max} + \sigma W_1$.
        Hence, there exists $S_2\in\MGF$ such that for $s\geq S_1$,
        $p_{t,j}d(\theta_{t,j},\thr) < p_{t,i}d(\theta_{t,i},\thr)$,
        which implies $I_t\notin \overline{V_t^s}$. This completes the proof.
        \end{proof}

We now complete the proof of Proposition \ref{prop:sufficient exploration_TBP}.
\begin{proof}[Proof of Proposition \ref{prop:sufficient exploration_TBP}]
        Proposition \ref{prop:sufficient exploration_TBP} follows from the property of our algorithm, established in the preceding lemma, together with the arguments used in the proof of Lemma~11 in \citet{Shang2019}.
\end{proof}
\subsection{Strong Convergence to Optimal Proportions: Completing the Proof of Theorem~\ref{thm:main_TBP}}

Given the sufficient exploration ensured by our algorithm, we can establish the following auxiliary result.
\begin{lemma}
For any arm $i\in [K]$,
    \[ 
	\frac{C_{t, i} }{ D_{t, i}} \MGFto 1
	\quad\text{where}\quad 
    C_{t,i} = p_{t,i}d(\theta_{t,i},\thr)
    \text{ and }
	D_{t,i} \triangleq p_{t,i}d(\theta_i,\thr).
	\]
\end{lemma}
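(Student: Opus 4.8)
The plan is to exploit the fact that the allocation proportion $p_{t,i}$ appears identically in the numerator and the denominator and therefore cancels. Indeed, on the event (which holds for all large $t$ once every arm has been sampled, guaranteed by Proposition~\ref{prop:sufficient exploration_TBP}) that $p_{t,i}>0$,
\[
\frac{C_{t,i}}{D_{t,i}} = \frac{p_{t,i}\, d(\theta_{t,i},\thr)}{p_{t,i}\, d(\theta_i,\thr)} = \frac{d(\theta_{t,i},\thr)}{d(\theta_i,\thr)}.
\]
Thus the claim reduces to showing that the ratio of the empirical divergence $d(\theta_{t,i},\thr)$ to the population divergence $d(\theta_i,\thr)$ strongly converges to $1$, which is purely a statement about the convergence of the sample mean $\theta_{t,i}$ to the true mean $\theta_i$.

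First I would establish that $\theta_{t,i}\MGFto\theta_i$ for every $i\in[K]$. This is the direct analogue of item~1 of Lemma~\ref{lem:posterior_exponential_convergence}: the sufficient-exploration guarantee of Proposition~\ref{prop:sufficient exploration_TBP}, namely that there exists $T\in\MGF$ with $\min_{i\in[K]}N_{t,i}\geq\sqrt{t/K}$ for all $t\geq T$, forces each count $N_{t,i}$ to grow, while the path-dependent deviation $W_1$ that controls $|\theta_{t,i}-\theta_i|$ is light-tailed by the maximal inequality of Lemma~\ref{lem:W1 and W2}. Combining the two, the deviation $|\theta_{t,i}-\theta_i|$ is dominated by a deterministically vanishing multiple of $W_1\in\MGF$, whence $\theta_{t,i}\MGFto\theta_i$; the argument is identical to the one underlying Lemma~\ref{lem:posterior_exponential_convergence} and transfers verbatim to the thresholding setting.

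Finally I would invoke the continuous mapping theorem for $\MGF$-convergence (Lemma~\ref{lem:continuous-mapping}). The identification assumption $\Delta_{\min}>0$ guarantees $|\theta_i-\thr|\geq\Delta_{\min}>0$, hence $d(\theta_i,\thr)>0$, so the function $g(x)\triangleq d(x,\thr)/d(\theta_i,\thr)$ is continuous at $x=\theta_i$ with $g(\theta_i)=1$. Applying Lemma~\ref{lem:continuous-mapping} to $g(\theta_{t,i})$ then yields $C_{t,i}/D_{t,i}=g(\theta_{t,i})\MGFto 1$. The only substantive step is the strong convergence of the sample means, which is inherited directly from sufficient exploration and the light-tailed deviation bound; I therefore expect no genuine obstacle here, the remainder being merely the continuity of the KL divergence away from the threshold $\thr$.
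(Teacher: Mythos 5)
Your proof is correct and follows essentially the same route as the paper: the paper's own proof is a one-line deferral to the argument of Lemma~\ref{lem:z-score-asymp}, which is exactly what you spell out --- cancel $p_{t,i}$, obtain $\theta_{t,i}\MGFto\theta_i$ from sufficient exploration (Proposition~\ref{prop:sufficient exploration_TBP}) together with the light-tailed maximal deviation $W_1$, and conclude via the continuous mapping theorem (Lemma~\ref{lem:continuous-mapping}), using $\Delta_{\min}>0$ so that $d(\theta_i,\thr)>0$. Your version is in fact slightly more careful than the paper's, since you note explicitly that the strong convergence of the sample means must be re-derived for \name{EB-KKT-IDS} rather than quoted from the \name{TTTS} lemma.
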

\begin{proof}
This result follows directly from the same arguments used in the proof of Lemma~\ref{lem:z-score-asymp}.
\end{proof}

To complete the proof of Theorem~\ref{thm:main_TBP}, we define the over-sampled set at time $t\in\mathbb{N}_0$ with $\epsilon > 0$ as
\[
O_{t,\epsilon} \triangleq \left\{j\in[K]  :  w_{t,i} > p_i^*+\epsilon \right\},
\]
where $w_{t,i} = \frac{\Psi_{t,i}}{t}$ denotes the average sampling effort, based on the sampling probabilities.
The next result establishes that if a suboptimal arm is over-sampled at some sufficiently large time step, it will have zero probability of being sampled at that time.
\begin{lemma}
For any $\epsilon > 0$, there exists a random time $T_\epsilon\in\MGF$ such that for any $t\geq T_\epsilon$ and $i\in[K]$,
\[
i\in O_{t,\epsilon} \quad\implies\quad  \psi_{t,i}=0.
\]
\end{lemma}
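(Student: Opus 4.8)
The plan is to show that once an arm $i$ is over-sampled (i.e., $w_{t,i} > p_i^* + \epsilon$), it cannot be the current \name{EB-KKT-IDS} selection, so $\psi_{t,i} = 0$. Recall that for thresholding bandits the \name{IDS} rule is degenerate: $h^j_j(\bm p) = 1$ and $h^j_i(\bm p) = 0$ for $i \neq j$, and the \name{KKT} detection rule selects the pitfall $x_t = \argmin_{i} C_{t,i}$ where $C_{t,i} = p_{t,i}d(\theta_{t,i},\thr)$. Consequently the arm actually sampled is exactly $\argmin_i C_{t,i}$, so $\psi_{t,i} > 0$ if and only if $i$ attains the minimum of $C_{t,i}$. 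Thus it suffices to show that an over-sampled arm does \emph{not} attain this minimum for $t$ large.

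First I would invoke sufficient exploration (Proposition~\ref{prop:sufficient exploration_TBP}) so that $\min_i N_{t,i} \geq \sqrt{t/K}$ on a random time in $\MGF$, which by the preceding lemma gives $C_{t,i}/D_{t,i} \MGFto 1$ with $D_{t,i} = p_{t,i}d(\theta_i,\thr)$. Because $\bm p_t - \bm w_t \MGFto \bm 0$ by Corollary~\ref{cor:W2}, over-sampling in $\bm w_t$ transfers to over-sampling in $\bm p_t$: for $t$ beyond some $\MGF$ time, $i \in O_{t,\epsilon}$ forces $p_{t,i} > p_i^* + \epsilon/2$. The key structural fact is the KKT information-balance characterization of $\bm p^*$ specialized to this problem: at optimality all $C_i(\bm p^*) = p_i^* d(\theta_i,\thr)$ are equal to $\Gamma^*_{\thetabf}$ (this is the crude information balance of Corollary~\ref{cor:crude_information_balance} together with strict positivity, since here $\mathcal{X}_i = \{i\}$). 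Hence $p_i^* d(\theta_i,\thr) = \Gamma^*$ for every $i$.

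Next I would exploit a simplex-counting argument. Since $\sum_i p_{t,i} = 1 = \sum_i p_i^*$, if arm $i$ satisfies $p_{t,i} > p_i^* + \epsilon/2$, then there must exist an under-sampled arm $a_t$ with $p_{t,a_t} < p^*_{a_t}$. For the over-sampled arm, $D_{t,i} = p_{t,i}d(\theta_i,\thr) > (p_i^* + \epsilon/2)d(\theta_i,\thr) = \Gamma^* + (\epsilon/2)d(\theta_i,\thr) \geq \Gamma^* + (\epsilon/2)d(\Delta_{\min}\text{-bound})$, strictly exceeding $\Gamma^*$ by a deterministic margin; for the under-sampled arm, $D_{t,a_t} = p_{t,a_t}d(\theta_{a_t},\thr) < p^*_{a_t}d(\theta_{a_t},\thr) = \Gamma^*$. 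Translating $D_{t,\cdot}$ back to $C_{t,\cdot}$ via the $\MGF$-convergence $C_{t,i}/D_{t,i}\MGFto 1$ (choosing the multiplicative error small relative to the deterministic margin), I obtain $C_{t,i} > C_{t,a_t}$ on a random time in $\MGF$. Therefore $i \neq \argmin_k C_{t,k}$, so the \name{KKT} rule never selects $i$ and $\psi_{t,i} = 0$.

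The main obstacle is handling the multiplicative nature of the convergence $C_{t,i}/D_{t,i}\MGFto 1$ uniformly across arms while preserving the strict gap between the over-sampled and under-sampled values; I would use Lemma~\ref{lem:pointwise and maximum norm convergences} to get a single $\MGF$ time after which all ratios lie in $(1-\delta, 1+\delta)$, pick $\delta$ small enough (as a function of $\epsilon$, $\Delta_{\min}$, and $\Gamma^*$) that the perturbed inequality $C_{t,i} > C_{t,a_t}$ still holds, and then take the maximum of the finitely many $\MGF$ times via Lemma~\ref{lem: closedness of MGF} to produce the desired $T_\epsilon\in\MGF$.
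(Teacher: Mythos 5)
Your proof is correct and follows essentially the same route as the paper's: transfer over-sampling from $\bm w_t$ to $\bm p_t$ via Corollary~\ref{cor:W2}, locate an under-sampled arm by the simplex argument, and combine the $\MGF$-convergence $C_{t,i}/D_{t,i}\MGFto 1$ with the balance $p_i^*\,d(\theta_i,\thr)=\Gamma^*_{\thetabf}$ at optimality to conclude that an over-sampled arm cannot minimize $C_{t,\cdot}$, hence $\psi_{t,i}=0$. The only cosmetic differences are that the paper phrases the final comparison as the ratio $C_{t,i}/C_{t,J_t}>1$ while you argue additively through a deterministic margin above $\Gamma^*_{\thetabf}$, and that you make explicit the degenerate-\name{IDS} step (sampled arm $=$ detected pitfall $=\argmin_k C_{t,k}$) that the paper leaves implicit.
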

\begin{proof}
There exists $T_1\in\MGF$ such that for any $t\geq T_1$ and $i\in [K]$,
\[
i\in O_{t,\epsilon}\quad\implies\quad p_{t,i}> p_i^* + \epsilon/2 \quad\text{and thus}\quad \exists J_t\in[K]    \text{ s.t. }    p_{t,J_t} \leq p^*_{J_t}.
\]
Hence,
\begin{align*}
\frac{C_{t,i}}{C_{t,J_t}} 
= \frac{p_{t,i}\left(\theta_{t,i} - \thr\right)^2}{p_{t,J_t}\left(\theta_{t,J_t} - \thr\right)^2} 
&\geq \frac{p_{t,i}\left(\theta_{i} - \thr\right)^2(1-\delta)}{p_{t,J_t}\left(\theta_{J_t} - \thr\right)^2(1+\delta)}
\geq \frac{\left(p_{i}^* + \frac{\epsilon}{2}\right)\left(\theta_{i} - \thr\right)^2(1-\delta)}{p_{J_t}^*\left(\theta_{J_t} - \thr\right)^2(1+\delta)} > 1,
\end{align*}
where the first inequality applies the preceding lemma, along with choosing a sufficient small $\delta$ as a function of $\epsilon$, so that the last inequality holds.
\end{proof}

The result above suggests that the allocation tends to self-correct over time. The following result formalizes this intuition.
\begin{lemma}
	For any $\epsilon > 0$, there exists $T_\epsilon\in\MGF$ such that for any $t\geq T_\epsilon$,
	\[
        w_{t,i} \leq  p_i^* + \epsilon, \quad \forall i\in[K].
	\]
\end{lemma}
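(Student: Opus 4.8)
The plan is to follow the same self-correcting argument used for \name{TTTS-IDS} in the proof of Lemma~\ref{lem:all arms not over sampled}, but in a substantially simpler form. The key simplification is that the immediately preceding lemma already shows that an over-sampled arm receives \emph{zero} sampling probability once $t$ is large, rather than merely an exponentially small one; consequently the convergent error series $\kappa_\epsilon$ appearing in the BAI argument collapses, and the only residual contribution is from a single time step.

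First I would fix $\epsilon > 0$ and invoke the preceding lemma with parameter $\epsilon/2$ to obtain a random time $\widetilde{T}_\epsilon \in \MGF$ such that, for every $t \geq \widetilde{T}_\epsilon$ and every $i \in [K]$, the over-sampling event $w_{t,i} > p_i^* + \epsilon/2$ forces $\psi_{t,i} = 0$. I would then fix an arm $i$, write $w_{t,i} = \Psi_{t,i}/t$, and bound $\Psi_{t,i}$ by a dichotomy on the most recent time in $\{\widetilde{T}_\epsilon,\dots,t-1\}$ at which arm $i$ is \emph{not} over-sampled, i.e. at which $w_{\ell,i} < p_i^* + \epsilon/2$.

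In the first case no such time exists, so arm $i$ is over-sampled throughout $\{\widetilde{T}_\epsilon,\dots,t-1\}$; then $\psi_{\ell,i}=0$ for all those $\ell$, the cumulative effort is frozen, $\Psi_{t,i} = \Psi_{\widetilde{T}_\epsilon,i} \leq \widetilde{T}_\epsilon$, and hence $w_{t,i} \leq \widetilde{T}_\epsilon/t$. In the second case, letting $L_t$ be the last under-sampled time, arm $i$ is over-sampled on $\{L_t+1,\dots,t-1\}$ (so $\psi_{\ell,i}=0$ there) and at most one sample is added at time $L_t$; thus $\Psi_{t,i} \leq \Psi_{L_t,i} + 1 = w_{L_t,i}\,L_t + 1 < (p_i^* + \epsilon/2)\,t + 1$, giving $w_{t,i} \leq p_i^* + \epsilon/2 + 1/t$. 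Combining the two cases and choosing $T_\epsilon^{(i)} \triangleq \max\{\widetilde{T}_\epsilon,\ \widetilde{T}_\epsilon/\epsilon,\ 2/\epsilon\}$ yields $w_{t,i} \leq p_i^* + \epsilon$ for all $t \geq T_\epsilon^{(i)}$, and taking $T_\epsilon \triangleq \max_{i\in[K]} T_\epsilon^{(i)}$ produces a single threshold valid for all arms.

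The final bookkeeping step is to confirm $T_\epsilon \in \MGF$: since $\widetilde{T}_\epsilon \in \MGF$ and $[K]$ is finite, this follows from closedness of $\MGF$ under affine combinations, scaling, and finite maxima (Lemma~\ref{lem: closedness of MGF}). I do not expect a genuine obstacle here, since the substantive work — sufficient exploration (Proposition~\ref{prop:sufficient exploration_TBP}) and the over-sampling implication of the preceding lemma — has already been done. The only mild care needed is in tracking the ``$+1$'' from the single possible sample at the switching time $L_t$ and in verifying that the pieced-together threshold stays in $\MGF$.
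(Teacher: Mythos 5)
Your proposal is correct and takes essentially the same route as the paper's proof: the same dichotomy on the last time in $\{\widetilde{T}_\epsilon,\dots,t-1\}$ at which arm $i$ is not over-sampled, combined with the preceding lemma that over-sampling forces $\psi_{t,i}=0$, and the same $\MGF$-closure bookkeeping at the end. In fact your version is slightly more careful than the paper's: by working with $\epsilon/2$ and explicitly tracking the ``$+1$'' from $\psi_{L_t,i}\leq 1$ at the switching time, you fix a minor sloppiness in the paper, which writes $\Psi_{t,i}=\Psi_{L_t,i}$ and thereby silently drops the (possibly positive) sampling probability at time $L_t$.
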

\begin{proof}
The preceding result establishes that there exists $\widetilde{T}_\epsilon\in\MGF$ such that for any $t\geq \widetilde{T}_\epsilon$ and $i\in[K]$,
\[
i\in O_{t,{\epsilon}} \quad\implies\quad  \psi_{t,i}=0.
\]

Fix an arm $i$. There are two possible cases.

\begin{enumerate}
\item 
$\exists\ell\in \left\{\widetilde{T}_{\epsilon},\widetilde{T}_{\epsilon}+1,\ldots, t-1\right\},  w_{\ell,i} \leq  p_i^* + \epsilon$.

Define
\[
L_t \triangleq \max \left\{\ell\in \left\{\widetilde{T}_{\epsilon},\widetilde{T}_{\epsilon}+1,\ldots, t-1\right\}  :  w_{\ell,i} \leq p_i^* + \epsilon\right\}.
\]
Then,
\begin{align*}
    w_{t,i} = \frac{\Psi_{t,i}}{t} = \frac{\Psi_{L_t,i}}{t} \leq \frac{\Psi_{L_t,i}}{L_t} = w_{L_t,i} \leq  p_i^* + \epsilon.
\end{align*}

\item $\forall \ell\in \left\{\widetilde{T}_{\epsilon},\widetilde{T}_{\epsilon}+1,\ldots, t-1\right\}, w_{\ell,i} >  p_i^* + \epsilon$.

We have
\begin{align*}
    w_{t,i} = \frac{\Psi_{t,i}}{t} = \frac{\Psi_{\widetilde{T}_{\epsilon},i}}{t}.
\end{align*}
If $t\geq T_{\epsilon,i} \triangleq \max\left\{\widetilde{T}_\epsilon, \frac{\Psi_{\widetilde{T}_{\epsilon},i}}{p_i^* + \epsilon}\right\}$, then $w_{t,i} \leq p_i^* + \epsilon$. 
\end{enumerate}
Taking $T_{\epsilon} = \max_{i\in[K]}T_{\epsilon,i}$ completes the proof.
\end{proof}

We now complete the proof of Theorem~\ref{thm:main_TBP}.
\begin{proof}[Proof of Theorem~\ref{thm:main_TBP}]
The preceding result establishes that no arm is over-sampled for sufficiently large $t$. Since the sampling proportions must sum to one, this also ensures that no arm is under-sampled for sufficiently large $t$. This completes the proof.
\end{proof}

\section{Miscellaneous Results and Proofs}\label{app:add_proofs}

In this appendix, we collect auxiliary proofs and useful tools used in our discussion.

\subsection{Facts about the Exponential Family}\label{app:facts}

We restrict our attention to reward distribution following a one-dimensional natural exponential family. In particular, a natural exponential family has identity natural statistic in canonical form, i.e.,
\begin{equation*}
	p(y \mid \eta) = b(y) \exp\{\eta y - A(\eta)\}, \quad \eta\in \mathcal{T}=\left\{\eta: \int b(y)e^{\eta y}\mathrm{d}y < \infty\right\}
\end{equation*}
where $\eta$ is the natural parameter and $A(\eta)$ is assumed to be twice differentiable. The distribution is called non-singular if $\text{Var}_{\eta}(Y_{t, i})>0 $ for all $\eta\in \mathcal{T}^o$, where $\mathcal{T}^o$ denotes the interior of the parameter space $\mathcal{T}$. Denote by $\theta$ the mean reward  $\theta(\eta)=\int y p(y \mid \eta) dy$. 
By the properties of the exponential family, we have, for any $\eta \in \mathcal{T}^o$, 
\[
\theta(\eta) = \mathbb{E}_\eta[Y_{t, i}] = A'(\eta), \quad\text{and}\quad 
A''(\eta)=\text{Var}_\eta(Y_{t,i}) > 0.
\]
We immediately have the following lemma.
\begin{lemma}\label{lm:theta_eta}
	At any $\eta\in \mathcal{T}^o$, 
	$\theta(\eta)$ is strictly increasing and
	$A(\eta)$ is strictly convex. 
\end{lemma}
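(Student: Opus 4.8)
The plan is to read off both claims directly from the two exponential-family identities already recorded immediately before the lemma, namely $\theta(\eta) = A'(\eta)$ and $A''(\eta) = \mathrm{Var}_\eta(Y_{t,i}) > 0$ for every $\eta \in \mathcal{T}^o$. These identities are the only inputs needed, so the argument is genuinely short and splits cleanly into the two assertions of the lemma.

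First I would establish strict convexity of $A$. Since $A$ is assumed twice differentiable on the open set $\mathcal{T}^o$, and since the non-singularity assumption guarantees $A''(\eta) = \mathrm{Var}_\eta(Y_{t,i}) > 0$ throughout $\mathcal{T}^o$, the standard second-derivative characterization of convexity yields that $A$ is strictly convex on $\mathcal{T}^o$. Next I would establish strict monotonicity of $\theta$: using $\theta(\eta) = A'(\eta)$ and differentiating once more gives $\theta'(\eta) = A''(\eta) = \mathrm{Var}_\eta(Y_{t,i}) > 0$ for all $\eta \in \mathcal{T}^o$, and a function with strictly positive derivative on an interval is strictly increasing, which is the claim.

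The argument has no real obstacle, since both assertions collapse to the single positivity statement $A'' > 0$, which is exactly the non-singularity hypothesis. The only points that warrant a word of care are that the conclusions are asserted on the interior $\mathcal{T}^o$, where twice-differentiability and the variance identity are valid, and that if one wished to be fully self-contained one would first derive $A'(\eta) = \mathbb{E}_\eta[Y_{t,i}]$ and $A''(\eta) = \mathrm{Var}_\eta(Y_{t,i})$ by differentiating the normalization identity $\int b(y)\, e^{\eta y}\,\mathrm{d}y = e^{A(\eta)}$ twice under the integral sign, with the interchange of differentiation and integration justified by the smoothness of exponential-family integrals on $\mathcal{T}^o$. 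Given that the excerpt already states these identities as known, however, I would simply cite them and conclude in two lines.
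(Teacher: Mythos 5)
Your proof is correct and matches the paper's own reasoning exactly: the paper records the identities $\theta(\eta) = A'(\eta)$ and $A''(\eta) = \mathrm{Var}_\eta(Y_{t,i}) > 0$ on $\mathcal{T}^o$ and then states that the lemma follows immediately, which is precisely your two-line argument via $\theta'(\eta) = A''(\eta) > 0$ and the second-derivative criterion for strict convexity. No gaps; your added remark about justifying differentiation under the integral sign is a reasonable aside but not needed, since the paper (like you) takes those identities as given.
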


Lemma~\ref{lm:theta_eta} implies that $\theta(\cdot)$ is a one-to-one mapping.
Consequently, we can use $\theta$ to denote the distribution $p(\cdot \mid \eta)$ for convenience. 
Let $\theta_1$ and $\theta_2$ be the mean reward of the distribution $p(\cdot \mid \eta_1)$ and $p(\cdot \mid \eta_2)$. 
Denote by $d(\theta_1, \theta_2)$ the Kullback-Leibler (KL) divergence from $p(\cdot \mid \eta_1)$ to $p(\cdot \mid \eta_2)$. We have convenient closed-form expressions of $d(\cdot, \cdot)$ and its partial derivative.
\begin{lemma}\label{lm:partial_derivatives_KL}
	The partial derivatives of the KL divergense of a one-dimensional natural exponential family satisfies
	\begin{align*}
		& d(\theta_1, \theta_2) = A(\eta_2) - A(\eta_1)  - \theta_1 (\eta_2 - \eta_1),\\
		& \frac{\partial d(\theta_1, \theta_2)}{\partial \theta_1} = \eta_1 - \eta_2, \quad 
		\frac{\partial d(\theta_1, \theta_2)}{\partial \theta_2} = (\theta_2 - \theta_1) \frac{\mathrm{d} \eta(\theta_2)}{\mathrm{d} \theta_2}.
	\end{align*}
\end{lemma}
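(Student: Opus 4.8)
The plan is to compute the Kullback--Leibler divergence directly from its definition and then differentiate, relying on the two structural facts recalled just before the lemma: the mean-parametrization identity $\theta(\eta) = A'(\eta)$ and, via Lemma~\ref{lm:theta_eta}, the strict monotonicity of $\theta(\cdot)$ (equivalently, the strict convexity of $A$). The latter guarantees that the mean map admits a differentiable inverse $\eta(\cdot)$ with $\frac{\mathrm{d}\eta}{\mathrm{d}\theta} = 1/A''(\eta) > 0$ by the inverse function theorem, which is what allows the divergence---naturally expressed through the natural parameters---to be viewed and differentiated as a function of the means.

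First I would substitute the canonical density into the log-likelihood ratio. Writing $\eta_i = \eta(\theta_i)$, the base measure $b(y)$ cancels and
\[
\log\frac{p(y\mid\eta_1)}{p(y\mid\eta_2)} = (\eta_1 - \eta_2)y - A(\eta_1) + A(\eta_2).
\]
Taking the expectation under $p(\cdot\mid\eta_1)$ and using $\mathbb{E}_{\eta_1}[Y_{t,i}] = \theta_1$ yields the first claimed identity,
\[
d(\theta_1,\theta_2) = A(\eta_2) - A(\eta_1) - \theta_1(\eta_2 - \eta_1).
\]

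Next I would differentiate this expression, treating $d$ as a function of the means $\theta_1,\theta_2$ and applying the chain rule through $\eta_i = \eta(\theta_i)$. For $\partial/\partial\theta_1$ only $\eta_1$ varies; the two chain-rule contributions $-A'(\eta_1)\frac{\mathrm{d}\eta_1}{\mathrm{d}\theta_1}$ and $\theta_1\frac{\mathrm{d}\eta_1}{\mathrm{d}\theta_1}$ cancel because $A'(\eta_1)=\theta_1$, leaving exactly $\eta_1 - \eta_2$. For $\partial/\partial\theta_2$ only $\eta_2$ varies, and the analogous computation collapses to $\bigl(A'(\eta_2) - \theta_1\bigr)\frac{\mathrm{d}\eta_2}{\mathrm{d}\theta_2} = (\theta_2 - \theta_1)\frac{\mathrm{d}\eta(\theta_2)}{\mathrm{d}\theta_2}$, again using $A'(\eta_2)=\theta_2$.

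There is no substantive obstacle in this argument; it is a routine application of the chain rule. The only points requiring care are bookkeeping---tracking which argument is held fixed in each partial derivative---and making the cancellation $A'(\eta_i)=\theta_i$ explicit, which is precisely the mean-parametrization identity. The invertibility and differentiability of the map $\theta\mapsto\eta$ needed even to state the partial derivatives in the $\theta$-coordinates is supplied entirely by Lemma~\ref{lm:theta_eta}.
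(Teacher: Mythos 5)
Your proof is correct: the closed form follows from taking the expectation of the log-likelihood ratio under $p(\cdot\mid\eta_1)$, and both partial derivatives follow from the chain rule together with the cancellation $A'(\eta_i)=\theta_i$, with the differentiability of $\theta\mapsto\eta$ justified by Lemma~\ref{lm:theta_eta} and the inverse function theorem. The paper states this lemma without proof as a standard exponential-family fact, and your direct computation is exactly the standard derivation it implicitly relies on.
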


\subsection{Proof of Lemma~\ref{lm:smoothness_C_x}}

\begin{proof}
    When the reward distribution follows a single-parameter exponential family, the KL divergence $d(m,m')$ is continuous in $(m,m')$ and strictly convex in $m'$. 
    A direct application of \citet[Theorem~4 and Lemma~7]{wang2021fast} yields a unique $\bm\vartheta^x$ that achieves the infimum in \eqref{eq:def_C_x}, as well as the differentiability of $C_x$.
    For the concavity, note that $C_x$ is the infimum over a family of linear functions in $\bm p$. This directly implies the concavity of $C_x$ in $\bm p$; see \citet[Chapter 3.2]{boyd2004convex}.
    
    We say a function $f$ is \textit{homogeneous of degree one} if it satisfies $f(s\bm p) = s f(\bm p)$ for any $s \ge 0$.
    The fact that $C_{x}(\bm p)$ is homogeneous of degree one follows directly from its definition in \eqref{eq:def_C_x}.
    Combined with differentiability, the PDE characterization follows from Euler's theorem for homogeneous functions; see \citet[p. 287]{apostol1969calculus2} for example.
\end{proof}

\subsection{Properties of \texorpdfstring{$C_{x}(\cdot)$}{Cx} in BAI and Best-\texorpdfstring{$k$}{k}-Arm Identification}\label{app:property_C_ij}
We provide useful properties of the functions $C_j(\bm p)$ for BAI and the functions $C_{i,j}(\bm p)$ for best-$k$ identification; see Appendix \ref{app:alg_ex} for explicit formula of these functions.

\begin{lemma}\label{lm:C_ij_derivative}
    For BAI and any component-wise strictly positive $\bm{p}$, the minimizer $\bm\vartheta^j$ for \eqref{eq:def_C_x} is given by
    \[\vartheta_j^j = \vartheta_{I^*}^j = \bar{\theta}_{I^*,j}(\bm p) \triangleq \frac{p_{I^*} \theta_{I^*} + p_j \theta_j}{p_{I^*} + p_j}, \quad \forall j \neq I^*, \quad \text{and} \quad \vartheta_k^j = 0, \quad \forall k \neq I^*,j.\]
    In particular, $C_{j}(\bm p)$ is concave in $\bm p$, strictly increasing in $p_{I^*}$ and $p_j$, and
	\[
	\frac{\partial C_{j}(\bm p)}{\partial p_{I^*}} = d(\theta_{I^*}, \bar{\theta}_{I^*,j})
	\quad\text{and}\quad 
	\frac{\partial C_{j}(\bm p)}{\partial p_j} = d(\theta_j, \bar{\theta}_{I^*,j}).
	\]
    For best-$k$ identification, the minimizer $\bm\vartheta^{(i,j)}$ for \eqref{eq:def_C_x} is given by
    \[\vartheta_i^{(i,j)} = \vartheta_j^{(i,j)} = \bar{\theta}_{i,j}(\bm p) \triangleq \frac{p_{i} \theta_{i} + p_j \theta_j}{p_{i} + p_j},  \quad \forall (i,j)\in\I\times\I^c, \quad \text{and} \quad \vartheta_k^{(i,j)} = 0, \quad \forall k \neq i,j. \]
	In particular, $C_{i,j}(\bm p)$ is concave in $\bm p$, strictly increasing in $p_i$ and $p_j$, and
	\[
	\frac{\partial C_{i,j}(\bm p)}{\partial p_i} = d(\theta_i, \bar{\theta}_{i,j})
	\quad\text{and}\quad 
	\frac{\partial C_{i,j}(\bm p)}{\partial p_j} = d(\theta_j, \bar{\theta}_{i,j}).
	\]
\end{lemma}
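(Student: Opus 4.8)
The plan is to solve the defining optimization $C_j(\bm p) = \inf_{\bm\vartheta \in \mathrm{Alt}_j(\bm\theta)} \sum_{i\in[K]} p_i d(\theta_i, \vartheta_i)$ explicitly, where $\mathrm{Alt}_j(\bm\theta) = \{\bm\vartheta : \vartheta_j > \vartheta_{I^*}\}$, and then read off the partial derivatives from the envelope identity $\partial C_x/\partial p_i = d(\theta_i, \vartheta_i^x)$ already supplied by Lemma~\ref{lm:smoothness_C_x}. First I would observe that the objective decouples across coordinates: the coordinates $\vartheta_k$ with $k \neq I^*, j$ are unconstrained, so each term $p_k d(\theta_k, \vartheta_k)$ is minimized (to zero) by $\vartheta_k^j = \theta_k$, leaving the two-variable problem of minimizing $p_{I^*} d(\theta_{I^*}, \vartheta_{I^*}) + p_j d(\theta_j, \vartheta_j)$ over $\vartheta_j \geq \vartheta_{I^*}$ (the closure, on which the continuous objective attains the same value as its infimum over the open set).

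Next I would argue the constraint is active. Since $I^*$ is the unique best arm, $\theta_{I^*} > \theta_j$, so the unconstrained minimizer $(\vartheta_{I^*}, \vartheta_j) = (\theta_{I^*}, \theta_j)$ violates $\vartheta_j \geq \vartheta_{I^*}$; hence the optimum lies on the boundary $\vartheta_{I^*} = \vartheta_j =: v$. It then remains to minimize the one-dimensional map $g(v) = p_{I^*} d(\theta_{I^*}, v) + p_j d(\theta_j, v)$. Using the closed form $\partial d(\theta_1, v)/\partial v = (v - \theta_1)\,\eta'(v)$ from Lemma~\ref{lm:partial_derivatives_KL}, together with $\eta'(v) = 1/A''(\eta) > 0$ from Lemma~\ref{lm:theta_eta}, I get $g'(v) = \eta'(v)\,[p_{I^*}(v - \theta_{I^*}) + p_j(v - \theta_j)]$, whose unique zero is the convex combination $v = \bar\theta_{I^*,j} = (p_{I^*}\theta_{I^*} + p_j\theta_j)/(p_{I^*}+p_j)$; strict convexity of $d(\theta_1, \cdot)$ (from non-singularity of the family) makes $g$ strictly convex, so this critical point is the unique minimizer. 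This establishes $\vartheta_{I^*}^j = \vartheta_j^j = \bar\theta_{I^*,j}$.

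With the minimizer in hand, the partial-derivative formulas follow by substituting $\bm\vartheta^j$ into the envelope identity of Lemma~\ref{lm:smoothness_C_x}: $\partial C_j/\partial p_{I^*} = d(\theta_{I^*}, \bar\theta_{I^*,j})$, $\partial C_j/\partial p_j = d(\theta_j, \bar\theta_{I^*,j})$, and $\partial C_j/\partial p_k = d(\theta_k, \theta_k) = 0$ for the remaining coordinates. Strict monotonicity in $p_{I^*}$ and $p_j$ then follows because $\bar\theta_{I^*,j}$ is a \emph{strict} convex combination (both weights positive), hence distinct from both $\theta_{I^*}$ and $\theta_j$, making both KL derivatives strictly positive; concavity is inherited from Lemma~\ref{lm:smoothness_C_x}, $C_j$ being an infimum of functions linear in $\bm p$.

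The best-$k$ case is verbatim identical after replacing $(I^*, j)$ by a pair $(i,j) \in \I \times \I^c$ and noting that unique optimality of the top-$k$ set gives $\theta_i > \theta_j$, which is exactly the ordering that forces the constraint $\vartheta_j \geq \vartheta_i$ to bind. The only delicate point---and hence the main thing to get right---is the reduction to the active boundary: I must verify that the infimum over the open alternative set coincides with the minimum over its closure and that the ``free'' coordinates cannot be exploited to lower the objective, both of which hinge on $\theta_{I^*} > \theta_j$ (respectively $\theta_i > \theta_j$) and on the coordinatewise separability of the KL sum.
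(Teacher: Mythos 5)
Your proof is correct and follows essentially the same route as the paper's: set the free coordinates to $\vartheta_k = \theta_k$, argue the ordering constraint binds, solve the resulting one-dimensional problem to obtain the weighted mean $\bar{\theta}$, and read the partial derivatives off the envelope identity in Lemma~\ref{lm:smoothness_C_x}. (Incidentally, your value $\vartheta^j_k = \theta_k$ for $k \neq I^*, j$ is the right one; the ``$\vartheta^j_k = 0$'' in the lemma statement is evidently a typo, as the paper's own proof also sets $\vartheta_k = \theta_k$.) The one justification you should repair is the claim that non-singularity of the family makes $v \mapsto d(\theta_1, v)$ strictly convex: in the \emph{mean} parametrization this is false in general. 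For the exponential family with mean $\mu_1$, $d(\mu_1, v) = \log(v/\mu_1) + \mu_1/v - 1$ has second derivative $(2\mu_1 - v)/v^3$, which is negative for $v > 2\mu_1$, and one can choose $(p_{I^*}, p_j)$ so that $g$ fails to be convex even on the relevant interval between $\theta_j$ and $\theta_{I^*}$. Fortunately the claim is not needed: your own computation $g'(v) = \eta'(v)\bigl[p_{I^*}(v - \theta_{I^*}) + p_j(v - \theta_j)\bigr]$ with $\eta'(v) > 0$ shows $g' < 0$ to the left of $\bar{\theta}_{I^*,j}$ and $g' > 0$ to the right, so $\bar{\theta}_{I^*,j}$ is the unique global minimizer by the sign of the first derivative alone; replace the convexity appeal with this monotonicity argument (the paper's proof is loose on the same point, calling the one-dimensional problem ``convex''). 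Beyond that, your write-up is slightly more complete than the paper's: you verify that the infimum over the open alternative set equals the minimum over its closure, and you deduce strict monotonicity of $C_j$ in $p_{I^*}$ and $p_j$ from the fact that $\bar{\theta}_{I^*,j}$ is a strict convex combination, both of which the paper leaves implicit.
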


\begin{proof}
    Note that $C_j$ for BAI is equivalent to $C_{I^*,j}$ for best-$k$ identification, where $I^*$ is the best arm. To avoid redundancy, we focus on the functions $C_{i,j}$.
    To find the minimizer, we note that 
    \begin{align}
    	C_{i,j}(\bm p) 
             \triangleq \inf_{\bm\vartheta \in \mathrm{Alt}_{i,j}} \sum_{i} p_i d(\theta_i, \vartheta_i) \notag 
            & = \inf_{\vartheta_j > \vartheta_i} \left\{
    			p_i d(\theta_i, \vartheta_i) + p_j d(\theta_j, \vartheta_j) \right\} \notag\\
            & = \inf_{\widetilde{\theta}} \left\{
    			p_i d(\theta_i, \widetilde{\theta}) + p_j d(\theta_j, \widetilde{\theta}) \right\} \notag \\
            & = p_i d(\theta_i, \bar{\theta}_{i,j}) + p_j d(\theta_j, \bar{\theta}_{i,j}), \quad \hbox{where} \quad \bar{\theta}_{i,j} \triangleq \frac{p_i \theta_i + p_j \theta_j}{p_i + p_j}. 
    	\label{eq:def_C_ij}
    \end{align} 
    In the second line, we set $\vartheta_{k} = \theta_k$ for all $k \neq i,j$; the third line follows from the monotonicity of the KL divergence, and the last equality results from analytically solving the one-dimensional convex optimization.
    The partial derivatives then follows directly from Lemma~\ref{lm:smoothness_C_x}.

    Alternatively, one may calculate the derivatives directly using Lemma~\ref{lm:partial_derivatives_KL} and the definition of $C_{i,j}$ in~\eqref{eq:def_C_ij}, as follows
	\begin{equation*}
		\begin{aligned}
			\frac{\partial C_{i,j}(\bm p)}{\partial p_i} 
			&= d(\theta_i, \bar{\theta}_{i,j}) + p_i   \frac{\partial d(\theta_i, \bar{\theta})}{\partial p_i} + p_j   \frac{\partial d(\theta_j, \bar{\theta})}{\partial p_i}\\
			&=d(\theta_i, \bar{\theta}_{i,j}) + \frac{\partial \bar{\theta}}{\partial p_i} \cdot\left[p_i  \frac{\partial d(\theta_i, \bar{\theta})}{\partial \bar{\theta}} + p_j   \frac{\partial d(\theta_j, \bar{\theta})}{\partial \bar{\theta}}\right] = d(\theta_i, \bar{\theta}_{i,j}).
		\end{aligned}
	\end{equation*}
	The last equality follows from the first-order optimality condition for solving the convex optimization problem in the third line of~\eqref{eq:def_C_ij}.
	The calculation of $\frac{\partial C_{i,j}(\bm p)}{\partial p_j}$ proceeds similarly. 
\end{proof}

\subsection{Proof of Corollary~\ref{cor:crude_information_balance}}\label{app:proof_crude_information_balance}

Assuming strictly positiveness of $\bm p^*$, recall from~\eqref{eq:stationarity_org} that 
\[\lambda - \sum_{x\in \mathcal{X}} \mu_{x}\frac{\partial C_{x}(\bm p)}{\partial p_i} = 0, \quad\forall  i\in [K], \]
where we used the fact that $\iota_i = 0$ due to complementary slackness.
By the definition of 
\[\mathcal{X}_{i} \triangleq \left\{x \in \mathcal{X}(\bm\theta): \frac{\partial C_{x}(\bm p)}{\partial p_{i}}>0\right\},\]
we have 
\begin{equation*}
    \lambda - \sum_{x\in \mathcal{X}_{i}} \mu_{x}\frac{\partial C_{x}(\bm p)}{\partial p_i} = 0, \quad\forall  i\in [K].
\end{equation*}
Notice that $\lambda = \phi = \Gamma^*_{\bm\theta} > 0$ from the proof of Theorem~\ref{thm:KKT_general} and that $\frac{\partial C_{x}(\bm p)}{\partial p_i} > 0$, there must exist a $x_0\in \mathcal{X}_{i}$ such that $\mu_x > 0$.
The complementary-slackness condition then implies that
$C_{x_0}(\bm p^*) = \phi = \Gamma^*_{\bm\theta}$.
Consequently, 
\[\Gamma_{\bm\theta}^* = \min_{x\in \mathcal{X}_i} C_x(\bm p^*), \quad \hbox{for all } i \in [K].\]

\section{Existing Algorithms in the Literature}\label{app:existing_algs}
In this appendix, we briefly review existing algorithms designed for the best-$k$ identification problem in the fixed-confidence settings.
These methods serves as benchmarks for our experiments.

\paragraph{UGapE \citep{gabillon2012best}.}  At each round $t$, the algorithm computes the upper confidence bound $U_i(t)$ and lower confidence bound $L_i(t)$ of the mean reward for each arm $i$: 
\begin{equation}\label{KL-LUCB}
    \begin{aligned}    
	   U_i(t) &= \max\{q\in[\theta_{t,i}, \infty): T_{t,i} d(\theta_{t,i}, q) \leq \beta_{t, \delta}\},\\
	   L_i(t) &= \min\{q\in(-\infty, \theta_{t,i}]: T_{t,i} d(\theta_{t,i}, q) \leq \beta_{t, \delta}\},
    \end{aligned}
\end{equation}
where $\beta_{t, \delta}$ is the exploration rate we specify to be $ \log \left((\log t + 1)/\delta\right)$. Then it computes the gap index $B_i(t)=\max_{i'\neq i}^k U_{i'}(t) - L_i(t)$ for each arm $i$, where $\max_{}^k$ denotes the operator returning the $k$-th largest value. Based on $B_i(t)$, it evaluates a set of $k$ arms: $J(t) = \{j:\argmin^j B_j(t)\}$. From $J(t)$, it computes two critical arms $u_t = \argmax_{i\notin J_t} U_i(t)$ and $l_t = \argmin_{i\in  J_t} L_i(t),$ and samples the one with less allocation. The algorithm stops when $\max_{i\in J(t)} B_i(t) > 0$.

\paragraph{KL-LUCB \citep{kaufmann2013information}.} At each round $t$, the algorithm first computes the upper and lower confidence bounds $U_i(t)$ and $L_i(t)$ as in \eqref{KL-LUCB}. It then samples two arms, defined by
\[
    u_t = \arg\max_{i\notin \widehat{\I}_t} U_i(t),
    \quad
    l_t = \arg\min_{i\in \widehat{\I}_t} L_i(t).
\]
The algorithm stops when $U_{u_t}(t) > L_{l_t}(t).$

\paragraph{Lazy-Mirror-Ascent \citep{menard2019gradient}.} The algorithm treats the lower-bound problem as a generic maximization problem $\max_{\bm w}F(\bm w)$. It uses an allocation rule based on online lazy mirror ascent. At each round $t$, it maintains allocation weights $\bm w(t)$ and updates them by
\[
    \widetilde{\bm w}(t+1)
    = \arg\max_{\bm w}\Bigl[ 
        \eta_{t+1}\sum_{s=K}^t \bm w\cdot\nabla F\bigl(\widetilde{\bm w}(s);\bm\mu(s)\bigr)
        - \mathrm{KL}(\bm w,\bm\pi)
    \Bigr],
\]
and
\[
    \bm w(t+1)
    = (1-\gamma_t) \widetilde{\bm w}(t+1) + \gamma_t \bm\pi,
\]
where $\bm\mu(s)$ is the vector of empirical mean estimates and $\bm\pi$ is the uniform distribution. There are two tuning parameters, $\eta_t$ and $\gamma_t$. Based on the code by Wouter M. Koolen (\url{https://bitbucket.org/wmkoolen/tidnabbil}) and the suggestion in \cite{menard2019gradient}, we choose
\[
    \eta_t = \frac{\sqrt{\log(K/t)}}{F^*},
    \quad
    \gamma_t = \frac{1}{4\sqrt{t}}.
\]
Wouter M. Koolen comments that, although setting $F^*$ requires knowledge of the true arm means, it is critical for performance. In the appendix of \cite{wang2021fast}, the authors also note that this method is very sensitive to the learning rate $\eta_t$; the current choice is only for experimental comparison and cannot be used in real-world scenarios.

\paragraph{$m$-LinGapE \citep{reda2021top}.} At each round $t$, the algorithm first computes the same upper and lower confidence bounds $U_i(t)$ and $L_i(t)$ as in \eqref{KL-LUCB}. It then computes the gap index
\[
    B_{i,j}(t) = U_i(t) - L_j(t)
\]
for each pair of arms $(i,j)$. It chooses $J(t)$ as the set of the $k$ arms with the largest empirical means. From $J(t)$, it identifies two critical arms,
\[
    u_t = \argmax_{j\in J(t)} \max_{i\notin J(t)} B_{i,j}(t),
    \quad
    l_t = \argmax_{i\notin J(t)} B_{i,u_t}(t),
\]
and samples the arm that would most reduce the variance of the empirical mean difference between them (the “greedy selection rule” in \citealt{reda2021top}). The algorithm stops when $B_{l_t,u_t}(t) > 0$.

\paragraph{MisLid \citep{reda2021dealing}.} The algorithm is derived from the lower-bound maximin problem and achieves asymptotic optimality. It maintains a learner over the probability simplex of arms. At each round $t$, it computes a gain function and updates the learner, then applies a tracking rule to sample an arm. The algorithm adopts AdaHedge as the learner and defines the empirical gain function
\[
    g_t = \argmin_{\bm\lambda\in\mathrm{Alt}(\bm\mu^t)}
    \sum_{i=1}^K p^t_i \bigl(\mu^t_i - \lambda\bigr)^2,
\]
where $\bm p^t$ is the probability distribution returned by AdaHedge, $\bm\mu^t$ is the vector of empirical mean estimates, and $\mathrm{Alt}(\bm\mu^t)$ is the set of parameters under which the top-$k$ arm sets differ from that of $\bm\mu^t$. As suggested by \cite{reda2021dealing}, the empirical gain function reduces the sample complexity in practice compared to the optimistic gain function proposed by theory. For the tracking rule, the algorithm uses C-Tracking. For the stopping rule, it uses the GLRT statistic with threshold $\log\bigl((\log t + 1)/\delta\bigr).$

\paragraph{FWS \citep{wang2021fast}.} The algorithm utilizes the lower-bound maximin problem. At each round $t$, it performs either forced exploration or a Frank--Wolfe update. Due to the nonsmoothness of the objective function, it constructs an $r_t$-subdifferential subspace and then solves a zero-sum game. Here, $r_t$ is a tuning parameter; we choose $r_t = t^{-0.9}/K$
as suggested in \cite{wang2021fast}. For the stopping rule, we adopt the GLRT statistic with threshold $\log\bigl((\log t + 1)/\delta\bigr)$.

\section{Additional Numerical Experiments}\label{app:add_numerical}

In this appendix, we provide additional numerical experiments, comparing our algorithms with the following algorithms from the literature: UGapE \citep{gabillon2012best}, SAR \citep{bubeck2013multiple}, OCBAss \citep{gao2015note} and OCBASS \citep{gao2016new} in the fixed-budget setting.
We consider the best-$5$-arm identification problem for a Gaussian bandit with $\theta_i = 1-0.05\times(i-1)$ and $\sigma_i^2 = 1/4$ for $i = [20]$.

\begin{figure}[ht]
    \centering
    \includegraphics[width=0.7\linewidth]{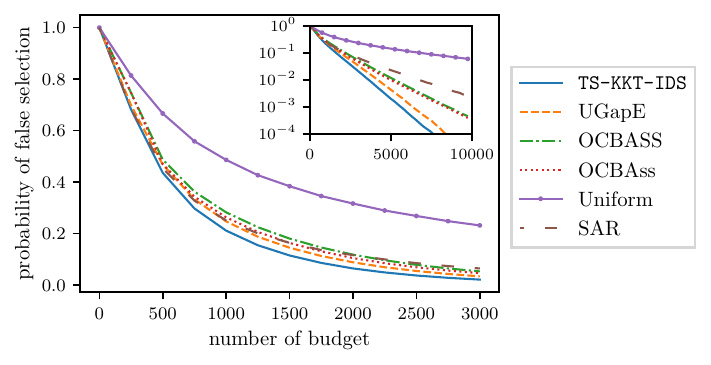}
    \caption{Fix-budget performance estimated over $10^7$ independent replications.} \label{fig:fixed_budget}
\end{figure}
In Figure \ref{fig:fixed_budget}, we present the fixed-budget performance of our proposed \name{TS-KKT-IDS} algorithm, along with benchmark methods.
In the subplot of Figure \ref{fig:fixed_budget}, we provide a zoomed-out illustration on a logarithmic scale, which suggests a superior convergence rate of the PICS under our algorithm.

\begin{figure}[ht]
    \centering
    \includegraphics[width=0.7\linewidth]{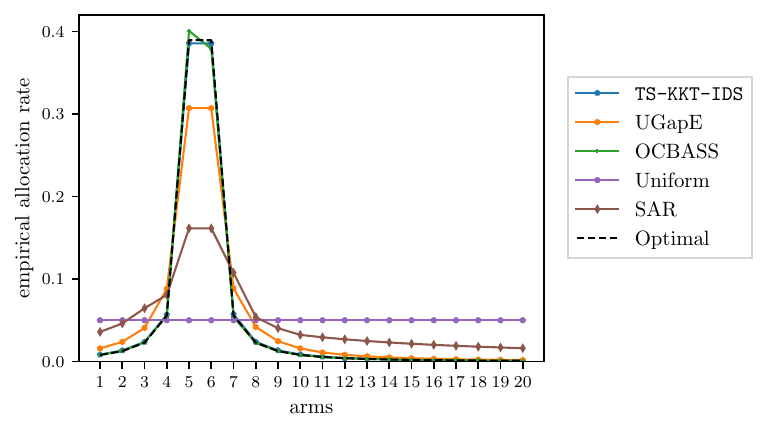}
    \caption{Compare the optimal allocation rate with the empirical allocation rate of the algorithms.} \label{fig:allocation}
\end{figure}

In Figure \ref{fig:allocation}, we compare the optimal allocation rate with the empirical allocation rate of the algorithms, given a budget of $10^6$. 
We observe that both \name{TS-KKT-IDS} and \name{OCBAss} (omitted) allocate nearly exactly at the optimal rate. In this simpler case, \name{OCBAss} satisfies the correct sufficient condition and thus also converges to the optimal rate.
In contrast, \name{OCBASS}, lacking the key step of balancing the sum-of-squared allocations, does not converge to the optimal allocation rate, though it remains sufficiently close. 
The \name{UGapE} and \name{SAR} algorithms deviate more significantly from the optimal allocation rate, although their empirical performance in terms of PICS is still respectable.

\begin{figure}[ht]
    \centering
    \includegraphics[width=0.7\linewidth]{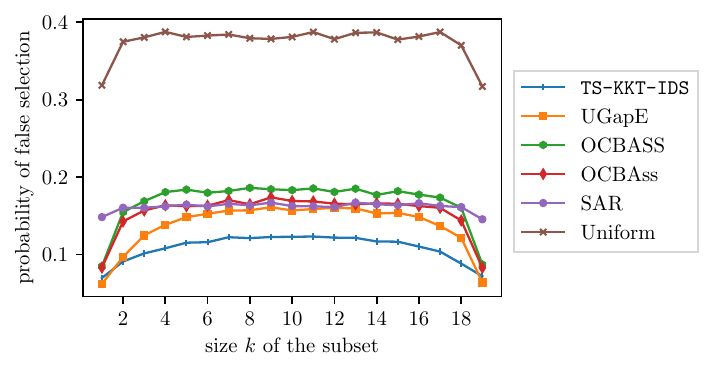}
    \caption{The probability of false selection for varying values of $k$ with a fixed budget of $1500$, estimated over $30000$ independent replications.} \label{fig:varyingk}
\end{figure}

In Figure \ref{fig:varyingk}, we consider the case where the number of best arms to be identified varies from $k = 1$ to $19$, displaying the PICS for each $k$ with a fixed budget of $1500$ samples.
We observed the our \name{TS-KKT-IDS} algorithm consistently outperforms the alternatives across the range of $k$.
\end{document}